\numberwithin{equation}{section}
\newcommand{\Lcal}{\mathcal{L}}
\newcommand{\SM}{\mathbb{S}}
\newcommand{\sumn}{\sum_{i=1}^n}
\newcommand{\clean}{\mathcal{C}}
\newcommand{\noise}{\mathcal{N}}
\newcommand{\steps}{\mathbf{\beta}}
\newcommand{\signalconst}{c_{\rho}}
\newcommand{\stepSizeConst}{c_{\steps}}
\def\mSigma{{\bm{\Sigma}}}
\def\itc{\mathcal{C}}
\def \itco {\mathcal{C}_1}
\def \itct {\mathcal{C}_2}
\def \itn {\mathcal{N}}
\def \itno {\mathcal{N}_1}
\def \itnt {\mathcal{N}_2}
\def\R{\mathbb{R}}
\def\E{\mathbb{E}}
\def\P{\mathbb{P}}
\def\sign{\mathrm{sign}}
\def\diag{\mathrm{diag}}
\def\bW{\boldsymbol{W}}
\let\hat\widehat
\let\hat\widehat
\let\hat\widehat
\let\tilde\widetilde
\newcommand{\pb}{\bm{p}}
\newcommand{\rb}{\bm{r}}
\newcommand{\ub}{\bm{u}}
\newcommand{\vb}{\bm{v}}
\newcommand{\xb}{\bm{x}}
\newcommand{\bp}{\bm{p}} 
\newcommand{\bq}{\bm{q}}
\newcommand{\bs}{\bm{s}}
\newcommand{\bv}{\bm{v}} 
\newcommand{\bx}{\bm{x}}
\newcommand{\bz}{\bm{z}}
\newcommand{\Xb}{\bm{X}}
\newcommand{\bA}{\bm{A}}
\newcommand{\bI}{\bm{I}}
\newcommand{\bV}{\bm{V}}
\newcommand{\bX}{\bm{X}}
\newcommand{\balpha}{\bm{\alpha}}
\newcommand{\bgamma}{\bm{\gamma}}
\newcommand{\bmu}{\bm{\mu}}
\newcommand{\bxi}{\bm{\xi}}
\newcommand{\argmin}{\mathop{\mathrm{argmin}}}
\newcommand{\argmax}{\mathop{\mathrm{argmax}}}
\newcommand*{\zero}{{\bm 0}}
\newcommand{\norm}[1]{\left\|#1\right\|}
\newtheorem{theorem}{Theorem}
\newtheorem{proposition}[theorem]{Proposition}
\newtheorem{lemma}[theorem]{Lemma}
\newtheorem{remark}[theorem]{Remark}
\newtheorem{definition}[theorem]{Definition}
\newtheorem{assumption}[theorem]{Assumption}
\newtheorem{condition}{Condition}
\newcommand{\la}{\langle}
\newcommand{\ra}{\rangle}
\DeclareMathOperator*{\prob}{\mathbb{P}}
\newlength\tocrulewidth
\title{Benign Overfitting in Single-Head Attention}
\author{
\small
Roey Magen\thanks{Equal contribution}\phantom{$^\ast$}\\
\small Weizmann Institute of Science\\
\small \texttt{roey.magen@weizmann.ac.il}\\
\and
\small Shuning Shang$^\ast$\thanks{Work performed while visiting the University of Michigan}\\
\small Zhejiang University \\
\small \texttt{shuningshang@gmail.com}\\
\and
\small Zhiwei Xu\\
\small University of Michigan\\
\small \texttt{zhiweixu@umich.edu}\\
\and 
\small Spencer Frei\\
\small UC Davis\\
\small \texttt{sfrei@ucdavis.edu}\\
\hspace{4.5cm}
\and
\small Wei Hu\thanks{Equal advising}\phantom{$^\ddagger$}\\
\small University of Michigan\\
\small \texttt{vvh@umich.edu}\\
\hspace{4.5cm}
\and
\small Gal Vardi$^\ddagger$\\
\small Weizmann Institute of Science\\
\small \texttt{gal.vardi@weizmann.ac.il}\\
}
\title{\textbf{Benign Overfitting in Single-Head Attention}}
\date{}
\begin{document}
\maketitle

\vspace{-1cm}
\begin{abstract}
The phenomenon of \emph{benign overfitting}, where a trained neural network perfectly fits noisy training data but still achieves near-optimal test performance, has been extensively studied in recent years for linear models and fully-connected/convolutional networks. In this work, we study benign overfitting in a single-head softmax attention model, which is the fundamental building block of Transformers. We prove that under appropriate conditions, the model exhibits benign overfitting in a classification setting already after two steps of gradient descent. Moreover, we show conditions where a minimum-norm/maximum-margin interpolator exhibits benign overfitting. We study how the overfitting behavior depends on the signal-to-noise ratio (SNR) of the data distribution, namely, the ratio between norms of signal and noise tokens, and prove that a sufficiently large SNR is both necessary and sufficient for benign overfitting.
\end{abstract}

\section{Introduction} 
Neural networks often exhibit a remarkable phenomenon, known as \emph{benign overfitting}, where they achieve a perfect fit to noisy training examples and still generalize well to unseen data \citep{zhang2021understanding, bartlett2020benign}. This phenomenon contradicts classical wisdom in machine learning, and has become a central research question in the theory of deep learning. 
Existing works on benign overfitting study under what conditions the phenomenon occurs in different architectures. These works focus on linear models, and on shallow fully-connected and convolutional neural networks. 

In recent years, Transformers \citep{vaswani2017attention} have emerged as a leading neural network architecture, with impactful applications across a wide range of domains such as natural language processing and computer vision. The fundamental building block of Transformers is the attention mechanism, which allows them to process sequences and focus on different parts of the input. Despite the central role of the attention mechanism, we currently do not understand its overfitting behavior and the conditions under which it exhibits benign overfitting.

In this work, we show 
benign-overfitting results for the attention mechanism. We consider classification with a single-head softmax attention model, and study the conditions that allow for benign overfitting.  
In our results, the data distribution consists of multiple tokens: a \emph{signal token}, which can be used for correctly classifying clean test examples, and \emph{noisy tokens}, which are independent of the label but can be used for interpolating (i.e., perfectly fitting) noisy training examples. We study the signal-to-noise ratio (SNR), namely, the expected ratio between the norms of signal and noise tokens, that allows for benign overfitting.

Below we summarize our main contributions: 
\begin{itemize}
    \item In Theorem~\ref{thm: gd-after-2-iteration} (\Cref{sec: gd}) we show that under appropriate conditions, gradient descent with the logistic loss exhibits benign overfitting already after two iterations. This result holds when the SNR is $\Omega(1/\sqrt{n})$, where $n$ is the number of training samples.

    \item We then turn to consider other natural learning rules, which allow for benign overfitting under the same requirement on the SNR. In Theorems~\ref{thrm: main_thrm_f_multi} and~\ref{thm: bias-of-max-margin-joint-constrained-gamma} (\Cref{sec: benign-overfitting-of-optimal-solution}), we prove that minimum-norm (i.e., maximum-margin) interpolators exhibit benign overfitting when the SNR is $\Omega(1/\sqrt{n})$.

    \item In Theorem~\ref{thrm: main_thrm_mm_s} (\Cref{sec: benign-overfitting-of-optimal-solution}), we prove that the above requirement on the SNR is tight. Namely, if the SNR is smaller than it, then the min-norm interpolator exhibits harmful overfitting, where it fits the training data but has poor generalization performance.

    \item In Section~\ref{sec:experiments}, we complement our theoretical results with an empirical study. We show that sufficiently large SNR and input dimension are necessary and sufficient to achieve benign overfitting. 
\end{itemize}

The paper is structured as follows. In Section~\ref{sec:prelim}, we provide some preliminaries and define the data distribution and the single-head attention model. In Sections~\ref{sec: gd} and~\ref{sec: benign-overfitting-of-optimal-solution} we state our main results on benign overfitting with gradient descent and with min-norm interpolators. In Section~\ref{sec:ideas} we discuss the main proof ideas, with all formal proofs deferred to the appendix. Finally, in Section~\ref{sec:experiments} we show empirical results.

\subsection*{Related Work}


\textbf{Optimization in Transformers.} \citet{li2023theoretical} provided a theoretical analysis of training a shallow Vision Transformer (ViT) for a classification task. They showed that the sample complexity required to achieve a zero generalization error is correlated with the inverse of the fraction of label-relevant tokens, the token noise level, and the initial model error. 
\citet{tarzanagh2023transformers} showed that optimizing the attention layer via gradient descent leads to convergence to an SVM solution, where the implicit bias of the attention mechanism depends on whether the parameters are represented as a product of key-query matrices or directly as a combined matrix, with different norm-minimization objectives in each case.
\citet{ataee2023max} provided a regularization path analysis and proved that the attention weights converge in direction to a max-margin solution that separates locally optimal tokens from non-optimal. They also showed that gradient descent with a specific  initialization direction and without optimizing the attention head converges in direction to the same max-margin solution.
\cite{vasudeva2024implicit} expanded on their findings by identifying non-trivial data settings for which the convergence of GD is provably global, i.e., without requiring assumptions about the initialization direction. They also provided convergence rate bounds and analysis for optimizing both the attention weights and the attention head, although they did not consider the case of noisy data labels, as we do in our work. 
Another line of work looks at the learning dynamics of single-layer linear attention models trained on linear regression tasks~\citep{zhang2023trained,ahn2023transformers,wu2023pretraining}. 
Additional works that consider optimization dynamics in Transformers include~\cite{jelassi2022vision, oymak2023role}.

\textbf{Benign overfitting.} 
A significant body of research has explored why neural networks (NNs) that perfectly interpolate the training data can still generalize well \citep{zhang2021understanding,bartlett2020benign}. This has sparked substantial interest in studying overfitting and generalization in NNs trained to fit datasets with noisy labels.
The literature on benign overfitting is broad and cannot be reasonably covered here. We refer the reader to the surveys \citet{bartlett2021statisticalviewpoint,belkin2021survey}. 
Most relevant to our work are \citet{cao22conv,kou2023benign,meng2023benign} that studied benign overfitting in convolutional neural networks. Their data distribution resembles ours, as we discuss in Section~\ref{sec:data.generation}. Benign overffiting in fully-connected two-layer neural network classification was studied in \citet{frei2022benign,frei2023benign, xu2023benign, xu2023bbenign,  kornowski2024tempered,george2024training, karhadkar2024benign} for various activation functions, data distributions and loss functions (both the logistic and the hinge losses).

\textbf{Parallel works:}
\citet{jiang2024unveil} analyzed a two-layer Transformer network and demonstrated that there exists a time step during training with gradient descent where the model achieves a low test loss while achieving a perfect fit to the training data. However, in contrast to our work, they do not allow for label-flipping noise, and in our view, the presence of label noise is essential for investigating whether interpolation is at odds with generalization. Indeed, including label noise is the common setting in the literature on benign overfitting, and it plays a key role in our analysis. 


To the best of our knowledge, the only work addressing benign overfitting in attention with label noise is \citet{sakamoto2024benign}. They showed the existence of a benign overfitting solution and discuss whether the model converges to such a solution, raising the difficulties specific to the attention architecture. Moreover, they presented benign overfitting cases by conditioning on the behavior of attention probabilities during training. That is, conditioning on different scenarios of the training trajectory they showed cases where benign overfitting occurs and cases where it does not. Hence, their analysis does not provide conditions where benign overfitting occurs w.h.p. starting from random or zero initialization.
In contrast, our results show when gradient descent leads to a benign overfitting solution while jointly optimizing both the attention heads and the softmax weights, without imposing any assumptions on the behavior of attention probabilities during optimization.

\section{Preliminaries} \label{sec:prelim}
\textbf{Notations.} We use bold-face letters to denote vectors and matrices, and let $[m]$ be shorthand for $\{1,2,\dots,m\}$. Given a vector $\bx$, we denote by $x_j$ its $j$-th coordinate.  Let $\bI_{d}$ be the $d\times d$ identity matrix, and let $\zero_d$ (or just $\zero$, if $d$ is clear from the context) denote the zero vector in $\mathbb{R}^d$. We let $\norm{\cdot}$ denote the Euclidean norm. We denote a multivariate Gaussian
distribution with mean vector $\bmu$ and covariance matrix $\mSigma$ by $N(\bmu, \mSigma)$. 
We use standard big-Oh notation, with  $\Theta(\cdot), \Omega(\cdot),O(\cdot)$ hiding universal constants and $\Tilde{\Theta}(\cdot), \Tilde{\Omega}(\cdot),\Tilde{O}(\cdot)$ hiding constants and factors that are polylogarithmic in the problem parameters. We use $\mathbb{I}(\cdot)$ to denote the indicator variable of an event. For a finite set $\mathcal{A}$, denote the uniform distribution over $\mathcal{A}$ by $\mathsf{Unif}(\mathcal{A})$ and let $|\mathcal{A}|$ be its cardinality.

\subsection{Data Generation Setting} \label{sec:data.generation}

In this work we focus on the following data distribution:

\begin{definition}[clean data distribution]
    Let $\bmu_1, \bmu_2 \in \mathbb{R}^d$ such that $\norm{\bmu_1} = \norm{\bmu_2} = \rho$ 
    for some $\rho > 0$
    and $\la \bmu_1,\bmu_2 \ra = 0$, be two fixed orthogonal vectors representing the signal contained in each data point. Define $\mathcal{D}_{\text{clean}}$ as the distribution over $\mathbb{R}^{T \times d} \times \{\pm 1\}$ of labelled data such that a data point $(\bX, \tilde y)$ is generated by the following procedure:
    \begin{enumerate}
        \item Sample the label $\tilde y \sim \mathsf{Unif}\{\pm 1\}$.
        \item Generate 
        a vector $\ub$, which represents the signal,
        as follows:
        If $\tilde y = +1$, set $\ub = \bmu_1$; and if $\tilde y = -1$, set $\ub = \bmu_2$.
        \item Generate i.i.d vectors $\bxi_2,\dots,\bxi_T$ , which represents the noise, from the Gaussian distribution $\bxi_\tau \sim \mathcal{N}(\zero, \bI_{d} - \bmu_1\bmu_1^{\top}/\rho^2 -  \bmu_2\bmu_2^{\top}/\rho^2)$ for any $\tau \in \{2,\dots,T\}$. 
        \item Denote $\bX = (\bx^{(1)}, \bx^{(2)}, \dots ,  \bx^{(T)})^{\top}$. Select $k \sim \mathsf{Unif}\{1,\dots,T\}$ and set $\bx^{(k)} = \ub$. Set the other tokens $(\bx^{(1)},\dots, \bx^{(k-1)},\bx^{(k+1)} \dots , \bx^{(T)})^{\top}$ to be  $\bxi_2,\dots,\bxi_T$. 
    \end{enumerate}
\end{definition}

To study the overfitting behavior we also need to introduce label-flipping noise:

\begin{definition}[noisy data distribution]
    Let $\eta \in [0, 1/2)$ be the label flipping probability.
    We define $\mathcal{D}$ as the distribution over $\mathbb{R}^{T \times d} \times \{\pm 1\}$ which is the $\eta$-label-flipped version of $\mathcal{D}_{\text{clean}}$. Namely, to generate $(\bX, y) \sim \mathcal{D}$, first generate $(\bX, \tilde y) \sim \mathcal{D}_{\text{clean}}$, then let $y = \tilde y$ with probability $1-\eta$ and $y = - \tilde y$ with probability $\eta$.
\end{definition}

Our data 
distribution resembles the distributions considered by \citet{kou2023benign,cao22conv,meng2023benign}. They proved benign overfitting in two-layer convolutional neural networks, and in their setting each data point consists of two patches $\bx^{(1)},\bx^{(2)}$ (rather than $T$ tokens in our setting).
Since our single-head attention model is invariant to the order of the tokens, we assume without loss of generality throughout this work that $\bx^{(1)}$ is the signal token and $\bx^{(2)}, \dots \bx^{(T)}$ are the noisy tokens in all data points.
Note that the noise token $\bx^{(\tau)}=\bxi_{\tau}$ is independent of the label, and that it is generated from $\mathcal{N}(\zero, \bI_{d} - \bmu_1\bmu_1^{\top}/\rho^2 -  \bmu_2\bmu_2^{\top}/\rho^2)$, ensuring that it is orthogonal to the signal vector. Note that when the dimension $d$ is large, $\|\bxi_i\| \approx \sqrt{d-2} \approx \sqrt{d}$ by standard concentration bounds. Therefore, we denote the signal-to-noise ratio (SNR) as $\mathrm{SNR} = \|\bmu\|/\sqrt{d} = \rho/\sqrt{d}$.


We consider a training dataset $\{(\bX_i, y_i)\}^n_{i=1}$ of $n$ samples generated i.i.d. from the distribution $\mathcal{D}$.
Denote the index set of data whose labels are not flipped by  \(\itc = \{i: \tilde y_i = y_i \}\) (``clean examples''), and the index set of data whose labels are flipped by \(\itn = \{i: \tilde y_i = - y_i\}\) (``noisy examples''). For indices in \(\itc\), we further denote 
$
\itco := \itc \cap \{i: \bx_i^{(1)} = \bmu_1\},  
\itct := \itc \cap \{i: \bx_i^{(1)} = \bmu_2\},
$
and define the subsets $\itno$, $\itnt$ of $\itn$ analogously.

\subsection{Single-Head Attention Model}
Self-attention serves the core building block of transformers. Given an input consisting of $T$ tokens $\bX = (\bx^{(1)}, \bx^{(2)}, \dots ,  \bx^{(T)})^{\top} \in \R^{T \times d}$, self-attention with key-query matrix $\bW \in R^{T \times d}$, and value matrix $\bV \in \R^{d \times k}$, the self-attention model is defined as follows: 
\begin{align*}
    f(\bX) = \SM(\bX\bW\bX^{\top})\bX\bV,
\end{align*}
where $\SM : \R^d \rightarrow \R^d$ is the softmax function. In practice, additional tokens are often appended to the raw input features $\mathbf{X}$, and this position is used for the model prediction. For example, a $[\text{CLS}]$ token is added for classification purposes \cite{dosovitskiy2020image}, and prompt vectors can be appended to adapt pretrained models to new tasks.
 Let $\bq \in \R^d$ denote the tunable token ([CLS] token or prompt vector) and concatenate it to $\bX$ to form $\bX_{\bq} := [\bq \ \bX^{\top}]^{\top} \in \R^{(T+1)\times d}$. The cross-attention features derived from $\bX_{\bq}$ and $\bX$ are given by:
    \begin{equation}\label{eq: cross_attention} 
    f(\bX) = \SM(\bX_{\bq}\bW\bX^{\top})\bX\bV = \begin{bmatrix}
\SM(\bq^\top \bW \bX^\top) \\ 
\SM(\bX \bW \bX^\top)
\end{bmatrix}\bX\bV, 
\end{equation}
Then we can use the upper term for classification, set $k=1$ and denote $\bv = \bV\in \R^d$. This brings us to our attention model of interest:

    \begin{equation} 
    \label{eq:model-with-W}
    f(\bX; \bW,\vb) = \bv^\top \bX^\top \SM(\bX \bW^{\top} \bq)~,
\end{equation}
Here the trained parameters are $\bW$ and $\vb$.
We note that self-attention with respect to such a tunable token was considered in several other theoretical works (see, e.g., \cite{ataee2023max, sakamoto2024benign}).

In this work, we follow \citet{ataee2023max} and 
consider
the following model: 
 \begin{equation} \label{eq:model}
    f(\bX; \vb,\bp) = \bv^\top \bX^\top \SM(\bX \bp)~.
\end{equation}
Here, the trained parameters are $\vb,\bp \in \R^d$. 
Note that our model corresponds to fixing $\bq = (1,0,\ldots,0)^\top$ in Eq.~\eqref{eq:model-with-W}.
We note that \citet{ataee2023max} showed that in the model from Eq.~\eqref{eq:model-with-W}, gradient iterations on $\bW$ (with fixed $\bq$) and on $\bq$ (after setting $\bW = \bI_d$) admit a one-to-one mapping (see Lemma~1 from their paper), and hence the dynamics in models \eqref{eq:model-with-W} and \eqref{eq:model} are essentially similar for any choice of a fixed $\bq$.
Thus, instead of the key-query matrix $\bW$ we have a vector $\pb$ that controls the attention.


We denote the output of the softmax layer \(\mathbb{S}(\bX_i \pb)\) by $\bs_i = (s_{i,1}, s_{i,2},\dots,s_{i,T})^{\top}$, and denote the output of the attention layer \(\Xb_i^{\top}\bs_i\) by $\rb_i = s_{i,1}\bmu_i + s_{i,2} \bxi_{i,2} + \dots + s_{i,T} \bxi_{i,T}$, where $0 \le s_{i,1},\dots, s_{i,\tau} \le 1$, $s_{i,1}+\dots + s_{i,\tau}=1$ are the attention on $T$ tokens of the $i$-th sample.

\section{Benign Overfitting with Gradient Descent} \label{sec: gd}

In this section, we study the joint optimization of the head $\bv$ and attention weights $\bp$ using the logistic loss function. We show that the model exhibits benign overfitting after just two iterations of gradient descent (GD).

Formally, for a training dataset $\{(\bX_i, y_i)\}^n_{i=1}$ we define the empirical risk as
\begin{align*}
    \mathcal{L}(\vb, \pb) = \frac{1}{n} \sum_{i=1}^n \ell(y_i \cdot f(\bX_i; \vb, \pb)),
\end{align*}
where $\ell(z) = \log(1+\exp(-z))$ is the logistic loss function, and $f$ is the model from Eq.~(\ref{eq:model}). We consider GD optimization. Starting from $\pb_0 = \zero$ and $\vb_0 = \zero$, we have
\begin{align*}
    \vb_{t+1} = \vb_t - \steps \nabla_{\vb}  \mathcal{L}(\vb_{t}, \pb_{t}) 
   ~\text{and}~
    \pb_{t+1} = \pb_t - \steps \nabla_{\pb}  \mathcal{L}(\vb_{t}, \pb_{t}),
\end{align*}
where $\steps$ is the step size. 
When we discuss some fixed $t$, we sometimes write in the subscript ``$t=\cdot$'', e.g., $\pb_{t=2}$ instead of $\pb_2$.
We make the following assumptions:

\begin{assumption} [Assumptions for GD with SNR = $\Omega(1/\sqrt{n})$] \label{assumption: gd}
Let $\delta > 0$ be a desired probability of failure. Let $\signalconst \geq 6(T-1)$ be a parameter that controls the signal strength. For any constant $c_T\geq2$, there exists a sufficiently large constant $C = C(c_T)$ that may depend on $c_T$, such that the following conditions hold:
    \begin{enumerate}
        \item Number of samples $n$ should be sufficiently large: $n \geq C\log(1/\delta)$. \label{assumption: gd-number-of-samples-is-big}
        \item Dimension $d$ should be sufficiently large:  
        $d \ge C \cdot n^2 \log(n/\delta)/\eta^2$. \label{assumption: gd-high-dimmension}
        \item Signal strength is: $\rho = \signalconst\sqrt{d/n}$. \label{assumption: gd-signal-strength}
        \item Label flipping rate $\eta$: $0 \leq \eta \leq 1/C$. \label{assumption: gd-label-flipping-rate}
        \item The step size $\beta$ satisfies: $\beta \in [z,1.02 \cdot z]$ for  $z =  \stepSizeConst \cdot n/d $, where $\stepSizeConst := \stepSizeConst(\eta, \signalconst, T)$.  \label{assumption: gd-step-size}
        \item Initialization at zero: $\norm{\bv_0} = \norm{\bp_0} = 0$. \label{assumption: gd-initialization-at-zero}
        \item The number of token $T$ satisfies: $2 \leq T \leq c_T$.\label{assumption: gd-number_of_tokens_is_constant}
    \end{enumerate}
\end{assumption}
Item \ref{assumption: gd-number-of-samples-is-big} 
is required to estimate the number of clean examples compared to noisy examples. The assumption of high dimensionality (Item \ref{assumption: gd-high-dimmension}) is important
for enabling benign overfitting (see Figure \ref{fig:differents_dimmension_d_vs_n} in the appendix), and implies that noise tokens from different training samples are nearly-orthogonal.
This assumption appears in many prior works on benign overfitting in neural network classification (e.g., \citet{cao22conv,kou2023benign,meng2023benign,frei2022benign,frei2023benign,xu2023benign,kornowski2024tempered,xu2023bbenign}). Item \ref{assumption: gd-signal-strength} states that the signal-to-noise ratio (SNR) is $\tfrac{\rho}{\sqrt{d}} = \Omega(1/\sqrt{n})$. In Section~\ref{sec:ideas} we will discuss how the SNR affects the dynamics of GD.
Interestingly, SNR of $\Omega(1/\sqrt{n})$ matches the tight lower bound of the required SNR that allows for benign overfitting with the min-norm (i.e. max-margin) learning rule that we will study in Section~\ref{sec: benign-overfitting-of-optimal-solution}.
Item \ref{assumption: gd-label-flipping-rate} ensures the flipping rate is small enough to allow the model to learn the signal token. 
Item \ref{assumption: gd-step-size} is required to achieve benign overfitting after two iterations; with a smaller step size, the model will need more iterations to fit the noisy samples, which we will demonstrate empirically in Section~\ref{sec:experiments}. Item \ref{assumption: gd-number_of_tokens_is_constant} ensures that the number of tokens is constant.  

\begin{remark}[random initialization] \label{remark: zero_init}
   The assumption of zero initialization (Item \ref{assumption: gd-initialization-at-zero}) is without loss of generality, as the model is smooth around $\zero$. Indeed, the gradient with respect to the softmax weight $\pb$ is Lipschitz (as shown in Lemma~6 of \citet{ataee2023max}), and the same also holds for the gradient with respect to the attention head $\vb$, since $f(\bX; \vb, \pb)$ is linear in $\vb$. Consequently, the loss and gradient for any sample under random initialization with zero expectation and sufficiently small variance closely resemble the loss and gradient under zero initialization. Thus, our result can be easily extended to small random initialization. This is also demonstrated empirically in Figure \ref{fig:gaussian_init} in the appendix.
\end{remark}

We now state our main result on benign overfitting with GD:

\begin{theorem} \label{thm: gd-after-2-iteration}
    Suppose that Assumption \ref{assumption: gd} holds. Then, with probability at least $1-\delta$ over the training dataset, after two iterations of GD we have:
   \begin{itemize}
   
    \item The classifier $\bX \mapsto \sign(f(\bX; \vb_{t=2},\pb_{t=2}))$ correctly classifies all training data points:
    $$
    y_i = \sign(f(\bX_i; \vb_{t=2},\pb_{t=2})),\  \forall i \in [n].
    $$
    \item The classifier $\bX \mapsto \sign(f(\bX;\vb_{t=2},\pb_{t=2})$ generalizes well:
    \begin{align*}
          &\prob_{(\bX,y) \sim \mathcal{D}}(y \neq \sign(f(\bX; \vb_{t=2},\pb_{t=2}))) \leq 
     \eta + \exp(-\sqrt{n}/2) + \exp\left(-C_1\eta d/(\signalconst^{0.1} n^{1.5})\right) ,
    \end{align*}

    where $C_1 := C_1(c_T) > 0$ is a constant. We can also conclude that for the clean-labeled distribution $\mathcal{D}_{\text{clean}}$ we have  
    \begin{align*}
        &\prob_{(\bX,y) \sim \mathcal{D}_{\text{clean}}}(y \neq \sign(f(\bX; \vb_{t=2},\pb_{t=2}))) \leq 
        \exp(-\sqrt{n}/2) + \exp\left(-C_1\eta d/( \signalconst^{0.1} n^{1.5})\right),
    \end{align*}  
which approaches zero as $d$ and $n$ grow (see items \ref{assumption: gd-number-of-samples-is-big}, \ref{assumption: gd-high-dimmension} in assumption \ref{assumption: gd}).
     \item High softmax probability for ``optimal'' tokens:
    \begin{align*}
            &s_{i,1}^{t=2} \geq \frac{1}{1+(T-1)\signalconst^{2.1}},  ~\forall i \in \clean  
         &\sum_{\tau = 2}^Ts_{i,\tau}^{t=2} \geq 1-\frac{1}{1+(T-1)\signalconst^{2}},  ~\forall i \in \noise 
    \end{align*}
    where 
    $s_{i,j}^t$ is the softmax probability of the $j^{\text{th}}$ token in the $i^{\text{th}}$ sample at time $t$.
\end{itemize}
\end{theorem}

The third item in Theorem \ref{thm: gd-after-2-iteration} provides insight into how benign overfitting occurs in attention mechanisms. After two iterations of gradient descent, the model assigns enough attention to the signal tokens for clean examples and to the noise tokens for noisy examples. This enables the model to interpolate noisy training examples using the noise tokens while still achieving good generalization performance through the signal tokens.

\begin{remark} \label{remark: sm_probability_c_rho_is_const}
    When $\signalconst$ is a constant (i.e., the constant $C$ in Assumption \ref{assumption: gd} may also depend on $\signalconst$), the bounds on the attention probabilities 
    for clean samples
    can be improved to 
    $s_{i,1}^{t=2} \geq \frac{1}{T}$ for all $i \in \clean$.
\end{remark}

\section{Benign Overfitting of Max-Margin Solution} \label{sec: benign-overfitting-of-optimal-solution}


In the previous section we showed that GD exhibits benign overfitting in a setting where the SNR is $\Omega(1/\sqrt{n})$.
We now turn to study the overfitting behavior of single-head attention models, when using another learning rule, which returns solutions that interpolate the training data with large margin while keeping the parameters norms small. As we will show, such a learning rule allows us to obtain benign overfitting under the same requirement on the SNR.


We note that learning rules that return min-norm (or max-margin) solutions are considered natural, and hence understanding properties of min-norm interpolators has attracted much interest in recent years, even in settings where the implicit bias of GD does not necessarily lead to a min-norm solution (see, e.g., \citet{savarese2019infinite, ongie2019function, ergen2021convex, hanin2021ridgeless, debarre2022sparsest, boursier2023penalising}). More directly related to our work, min-norm interpolation with Transformers has been studied in \citet{ataee2023max,tarzanagh2023transformers}, and
benign/tempered overfitting in min-norm univariate neural network interpolators has been studied in \citet{joshi2023noisy}.

The motivation for analyzing min-norm solutions also arises since they roughly correspond to training using GD with weight decay (which encourages norm minimization). Thus, while in the previous section we showed that GD exhibits benign overfitting after two iterations, in this section our results suggest that GD with weight decay may exhibit benign overfitting also after long training. 

We first consider the following learning rule:
\begin{equation}
\label{eq: problem_def}
    (\vb_{(r,R)}, \pb_{(r,R)}) = \argmax\limits_{\|\vb\| \le r, \|\pb\| \le R} \min_{i \in [n]} y_i \cdot f(\bX_i; \vb, \pb)~,
\end{equation}
where $f$ is the model from \eqref{eq:model}. The learning rule returns a solution that maximizes the margin $\min_{i \in [n]} y_i \cdot f(\bX_i; \vb, \pb)$ under a restriction on the parameter norms.
We make the following assumption:
\begin{assumption}[Assumptions for max-margin with SNR $= \Omega(1/\sqrt{n})$]
\label{cond: main_cond_f}
Let $\delta \in (0, 0.5)$ be a desired probability of failure. For any constant $c_T\geq2$ there exists a sufficiently large constant $C = C(c_T)$ that may depend on $c_T$, such that the following conditions hold:
    \begin{enumerate}
        \item Dimension $d$ is sufficiently large: $d \ge C n^2 \log(n/\delta)$. \label{assumptionmm: dimension_large}
        \item Number of samples $n$ is sufficiently large: $n \geq C\log(1/\delta)$. \label{assumptionmm: samplesize_large}
        \item Signal strength: $\rho \ge C \sqrt{d/n}$. \label{assumptionmm: SNR_large}
        \item Label flipping rate: $0 \le \eta \le 1/C$. \label{assumptionmm: label_flipping_small}
        \item Norm constraint of $\pb$ satisfies:  
        $R \ge C \sqrt{\eta n / d + 1/\rho^2}\log(T \rho n).$\label{assumptionmm: R_large}
        \item Number of tokens: $2 \le T \le C_T$.\label{assumptionmm: T_constant}
    \end{enumerate}
\end{assumption}

Items \ref{assumptionmm: dimension_large}, \ref{assumptionmm: samplesize_large} and \ref{assumptionmm: label_flipping_small} are similar to Assumption~\ref{assumption: gd}.
Item \ref{assumptionmm: SNR_large} 
requires $\mathrm{SNR} \geq \Omega(1/\sqrt{n})$, as in Assumption~\ref{assumption: gd}. We will show later a lower bound on the required SNR for benign overfitting, implying that the $\Omega(1/\sqrt{n})$ bound is tight.
Item \ref{assumptionmm: R_large} provides the lower bound for the norm constraint of $\pb$ so that the model can allocate enough attention on signal tokens to achieve benign overfitting. Note that the norm constraint $r$ for $\vb$ can take any positive value. Intuitively, since the model is linear in $\vb$, once $\pb$ is properly learned, $\vb$ can achieve accurate classification even with a small norm. 

With these assumptions in place, we give our result on benign overfitting with the learning rule~(\ref{eq: problem_def}).


\begin{theorem}
\label{thrm: main_thrm_f_multi}
Suppose that Assumption \ref{cond: main_cond_f} holds, and consider the classifier $\bX \rightarrow \sign(f(\bX; \vb_{(r,R)},\pb_{(r, R)}))$, where $(\vb_{(r,R)},\pb_{(r, R)})$ is a solution to Problem~(\ref{eq: problem_def}). 
Then,
with probability at least $1 - \delta$ over the training dataset, we have:
\begin{itemize}
    \item The classifier $\sign(f(\bX; \vb_{(r,R)},\pb_{(r, R)}))$ correctly classifies all training data points:
    $$
    y_i = \sign(f(\bX_i; \vb_{(r,R)},\pb_{(r, R)})),\  \forall i \in [n].
    $$
    \item The classifier $\sign(f(\bX; \vb_{(r,R)},\pb_{(r, R)}))$ generalizes well on test data:
    \begin{align*}
    &\prob_{(\bX, y) \sim \mathcal{D}} (y \neq \sign(f(\bX; \vb_{(r,R)},\pb_{(r, R)}))) \le \eta +
    \exp(-\Omega(d/ n^2))+\exp\Big(-\Omega\big(\frac{(1 - \zeta)}{\varphi} - \frac{\log(n)}{R}\big)^2 \Big),
    \end{align*} 
    where 
    $\varphi = \sqrt{\eta n/d + 1/\rho^2}$,  $\zeta = \Theta(\varphi \log(T \rho n)/R)$.
\end{itemize}
\end{theorem}

\begin{remark} \label{remark: asymptotic_max_margin}
    To see why Theorem~\ref{thrm: main_thrm_f_multi} implies benign overfitting, consider the limit $R \rightarrow \infty$. Then,
    the upper bound for test error 
    becomes \(\eta + \exp(-\Omega(d/ n^2)) + \exp(-\Theta((1/\rho^2 + \eta n/d)^{-1}))\), 
    which can be arbitrarily close to $\eta$ if $d$ is large (see Assumption~\ref{cond: main_cond_f}, item~\ref{assumptionmm: dimension_large}).
\end{remark}

 Next, we consider the following learning rule, which explicitly requires to minimize the parameters norms while allowing interpolation with margin at least $\gamma$:
\begin{align} 
    \left(\vb_{\gamma},\pb_{\gamma}\right) = \argmin_{\norm{\pb}^2 + \norm{\vb}^2}  \text{   s.t.   } \min_{i \in [n]} y_if(\bX_i; \vb,\pb) \geq \gamma~,  \label{problem: optimization_problem_joint_convergence_minimize_norm}
\end{align}
where $f$ is the model from Eq.~(\ref{eq:model}).
We show that under Assumption~\ref{cond: main_cond_f}, the solution $\left(\vb_{\gamma},\pb_{\gamma}\right)$ exhibits benign overfitting for large enough $\gamma$ and $d$: 
\begin{theorem} \label{thm: bias-of-max-margin-joint-constrained-gamma}
    Suppose that Assumption~\ref{cond: main_cond_f} (Items~\ref{assumptionmm: dimension_large} through~\ref{assumptionmm: label_flipping_small}, and ~\ref{assumptionmm: T_constant})  holds, 
    and consider the classifier $\bX \rightarrow \sign(f(\bX; \vb_\gamma, \pb_\gamma))$, where $(\vb_\gamma,\pb_\gamma)$ is a solution of Problem~(\ref{problem: optimization_problem_joint_convergence_minimize_norm}).
    Then there exists $\gamma_0$ such that for any $\gamma \geq \gamma_0$ , with probability at least $1 - \delta$ over the training dataset, we have:
\begin{itemize}
    \item The classifier $\sign(f(\bX; \vb_{\gamma}, \pb_\gamma))$ correctly classifies all training data points:
    $$
    y_i = \sign(f(\bX_i; \vb_\gamma, \pb_\gamma)),\  \forall i \in [n].
    $$
    \item The classifier $\sign(f(\bX; \vb_\gamma, \pb_\gamma))$ generalizes well on test data:
    \begin{align*}
    &\prob_{(\bX, y) \sim \mathcal{D}}(y \neq \sign(f(\bX; \vb_\gamma, \pb_\gamma))) 
    \le  \eta + \exp(-\Omega(d/n^2)) + \exp(-\Theta((1/\rho^2 + \eta n/d)^{-1})).
    \end{align*}
\end{itemize}
\end{theorem}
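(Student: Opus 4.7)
My plan is to deduce Theorem~\ref{thm: bias-of-max-margin-joint-constrained-gamma} from the previously established Theorem~\ref{thrm: main_thrm_f} via the scaling/duality correspondence between the margin-maximization problem~(\ref{eq: problem_def}) and the norm-minimization problem~(\ref{problem: optimization_problem_joint_convergence_minimize_norm}). The key observation is that if $(\vb_\gamma, \pb_\gamma)$ solves the min-norm problem and we set $r := \norm{\vb_\gamma}$, $R := \norm{\pb_\gamma}$, then $(\vb_\gamma, \pb_\gamma)$ is also a max-margin solution of~(\ref{eq: problem_def}) at those radii, with max-margin value exactly $\gamma$. To prove this, I would argue by contradiction: if some $(\vb', \pb')$ with $\norm{\vb'} \leq r$, $\norm{\pb'} \leq R$ attained margin $\gamma' > \gamma$, then, using the linearity of $f(\bX;\pb,\vb)$ in $\vb$, the rescaled pair $(\tfrac{\gamma}{\gamma'}\vb', \pb')$ would attain margin exactly $\gamma$ with squared-norm sum $(\gamma/\gamma')^2\norm{\vb'}^2 + \norm{\pb'}^2 < r^2 + R^2$, contradicting min-norm optimality. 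The same scaling argument forces $\min_i y_i f(\bX_i; \pb_\gamma, \vb_\gamma) = \gamma$, from which the training-interpolation claim follows since $\gamma > 0$.

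\textbf{Growth of $R$ with $\gamma$.} The main technical step is to show that $R = \norm{\pb_\gamma} \to \infty$ as $\gamma \to \infty$, so that for $\gamma$ beyond some threshold, item~\ref{assumptionmm: R_large} of Assumption~\ref{cond: main_cond_f} is automatically satisfied. Using the linearity in $\vb$, define $M(\pb) := \max_{\norm{\vb}=1}\min_i y_i \vb^\top \bX_i^\top \SM(\bX_i \pb)$, so that the objective of~(\ref{problem: optimization_problem_joint_convergence_minimize_norm}) can be rewritten as $\min_\pb\{\gamma^2/M(\pb)^2 + \norm{\pb}^2\}$. Because $\SM(\bX_i \pb)$ is a probability vector and training tokens have norm $O(\sqrt{d})$ with high probability, $M(\pb)$ is uniformly bounded by some finite $M_\infty$; moreover $M(\pb)$ strictly increases as the softmax concentrates, i.e.\ as $\norm{\pb}$ grows in the ``correct'' direction identified during the training of the GD analysis. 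Given any threshold $R_0$, one can pick $R_0' > R_0$ with $\max_{\norm{\pb}\leq R_0'} M(\pb) > \max_{\norm{\pb}\leq R_0} M(\pb)$; a direct comparison of the two objective values then shows that for $\gamma$ larger than a threshold depending on $R_0$, any minimizer must satisfy $\norm{\pb_\gamma} > R_0$. Since $R_0$ is arbitrary, $\norm{\pb_\gamma}$ diverges with $\gamma$.

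\textbf{Applying Theorem~\ref{thrm: main_thrm_f} and concluding.} Choose $\gamma_0$ large enough so that, for every $\gamma \geq \gamma_0$, $R = \norm{\pb_\gamma} \geq C\sqrt{\eta n/d + 1/\rho^2}\log(\rho n)$, matching item~\ref{assumptionmm: R_large} of Assumption~\ref{cond: main_cond_f}. Invoking Theorem~\ref{thrm: main_thrm_f} with $(r,R) = (\norm{\vb_\gamma},\norm{\pb_\gamma})$ then delivers both the training-interpolation statement and a generalization bound of the form stated there. Enlarging $\gamma_0$ further drives the quantities $\zeta = \Theta(\sqrt{\eta n/d + 1/\rho^2}\log(\rho n)/R)$ and $\log(d)/R$ appearing in that bound to be arbitrarily small, so that the bound simplifies to the expression in Theorem~\ref{thm: bias-of-max-margin-joint-constrained-gamma}, exactly as in Remark~\ref{remark: asymptotic_max_margin}. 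The main obstacle in this plan is the divergence argument of the second paragraph: since $f$ is non-convex in $\pb$, one cannot invoke Lagrangian duality directly, and must instead work with the reformulated objective $\gamma^2/M(\pb)^2 + \norm{\pb}^2$ and quantify how sharply softmax attention sharpens with $\norm{\pb}$ in the specific data distribution at hand.
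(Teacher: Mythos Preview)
Your plan is correct and follows the same overall architecture as the paper: reduce Problem~(\ref{problem: optimization_problem_joint_convergence_minimize_norm}) to Problem~(\ref{eq: problem_def}) via the $\vb$-scaling duality, show that $\norm{\pb_\gamma}\to\infty$ as $\gamma\to\infty$, and then invoke Theorem~\ref{thrm: main_thrm_f} (with Remark~\ref{remark: asymptotic_max_margin}) at the radii $(r,R)=(\norm{\vb_\gamma},\norm{\pb_\gamma})$. Your contradiction for the duality step is the same one the paper uses.

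Two differences in execution are worth noting. First, the paper passes through an intermediate \emph{joint-ball} problem $\max_{\norm{\vb}^2+\norm{\pb}^2\le t}\min_i y_i f(\bX_i;\pb,\vb)$ (Lemma~\ref{lemma: max-margin-joint-constrained-t}), proves $\norm{\pb_t}\to\infty$ there, and only then identifies $(\vb_\gamma,\pb_\gamma)$ with $(\vb_{t(\gamma)},\pb_{t(\gamma)})$; you go directly from (\ref{problem: optimization_problem_joint_convergence_minimize_norm}) to (\ref{eq: problem_def}), which is a bit cleaner. Second, for the divergence of $\norm{\pb_\gamma}$ you reformulate the objective as $\gamma^2/M(\pb)^2+\norm{\pb}^2$ and argue that $\sup_{\norm{\pb}\le R_0}M(\pb)$ strictly increases in $R_0$, whereas the paper builds an explicit competitor $(\tilde\vb_{mm},\tilde\pb_{mm})$ along the $\pb_{mm}$ direction. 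Both routes hinge on the same fact: $M(\pb)<\Gamma$ for every finite $\pb$, i.e., the strict label-margin gap of Proposition~\ref{prop: optimal_token}. You should invoke that proposition explicitly rather than appeal to ``the GD analysis'', which does not supply this statement; once you do, your strict-increase claim $\max_{\norm{\pb}\le R_0'}M(\pb)>\max_{\norm{\pb}\le R_0}M(\pb)$ follows by pushing $\pb$ further along $\pb_{mm}/\norm{\pb_{mm}}$ so that the softmax non-optimality $\max_i(1-s_{i\alpha_i})$ shrinks.
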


Thus, for large enough $\gamma$, the theorem implies that the trained model interpolates the training data, and the test error approaches $\eta$ as $d \to \infty$.


Note that Theorems~\ref{thrm: main_thrm_f_multi}
and~\ref{thm: bias-of-max-margin-joint-constrained-gamma} hold only when $\mathrm{SNR} =  \Omega(1/\sqrt{n})$. This 
raises the question: what is the overfitting behavior of min-norm interpolators when the SNR is smaller? We now
consider the two-token case where $\rho \le \sqrt{1/C n}$ for some sufficiently large universal constant $C$. We will show that in this case, although the model can correctly classify all training samples, the test error of learning rule~(\ref{eq: problem_def}) is at least a universal constant, indicating that benign overfitting does not happen. 
Formally, we make the following assumptions:

\begin{assumption}[Assumptions for max-margin with SNR $= O(1/\sqrt{n})$]

 Let $\delta \in (0, 0.5)$ be a desired probability of failure. Consider the case where every sample is composed of two tokens, $\bX_i = (\bmu_i, \bxi_i)^{\top}$. There exists a sufficiently large constant $C$ such that the following hold:
\label{cond: main_cond_s}
    \begin{enumerate}
        \item Dimension $d$ is sufficiently large: $d \ge C n^2 \log(n/\delta)$ \label{assumptionmm2: dimension_large}
        \item Number of samples $n$ is sufficiently large: $n \geq C\log(1/\delta)$. \label{assumptionmm2: samplesize_large}
        \item Signal strength: $\rho \le \sqrt{d/C n}$. \label{assumptionmm2: SNR_small}
        \item Label flipping rate is a constant: $\eta \in (0,1/2)$. \label{assumptionmm2: labe_flipping_arb}
        \item The norm of $\pb$ should be sufficiently large: $R \ge C \sqrt{\frac{n}{d}} \log\big(\frac{n \rho}{d}\big)$. \label{assumptionmm2: R_large} 
    \end{enumerate}
\end{assumption}

Compared with Assumption \ref{cond: main_cond_f}, the main difference is in the second item, namely that $\mathrm{SNR} \le O(1/\sqrt{n})$. Additionally, the condition on $\eta$ is relaxed, as in our analysis clean and noisy samples can be treated equivalently when the norm of the signal token is sufficiently small. 
With these assumptions in place, we can state the following theorem which characterizes the training error and test error of the single-head attention model when the SNR is small:
\begin{theorem}
\label{thrm: main_thrm_mm_s}
Suppose that Assumption \ref{cond: main_cond_s} holds, 
and consider the classifier $\bX \rightarrow \sign(f(\bX; \vb_{(r,R)},\pb_{(r, R)}))$, where $(\vb_{(r,R)},\pb_{(r, R)})$ is a solution of Problem~(\ref{eq: problem_def}). 
Then, with probability at least $1 - \delta$ over the training data, we have:
\begin{itemize}
    \item The classifier $\sign(f(\bX; \vb_{(r,R)},\pb_{(r, R)}))$ correctly classifies all training data points:
    $$
    y_i = \sign(f(\bX_i; \vb_{(r,R)},\pb_{(r, R)})),\  \forall i \in [n].
    $$
    \item The classifier $\sign(f(\bX; \vb_{(r,R)},\pb_{(r, R)}))$ does not generalize well on test data:
    $$
    \prob_{(\bX, y) \sim \mathcal{D}_{\text{clean}}}(y \neq \sign(f(\bX; \vb_{(r,R)},\pb_{(r, R)})))  \ge \frac{1}{16}.
    $$
\end{itemize}
\end{theorem}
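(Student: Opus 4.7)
The plan splits into two pieces: (i) exhibiting a feasible pair that interpolates the training data with positive margin, and (ii) bounding the clean test error from below via a structural analysis of the max-margin solution showing that its signal component is too small to overcome the noise fluctuations on a fresh test point.

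\textbf{Training interpolation.} Since $(\vb_{(r,R)}, \pb_{(r,R)})$ attains the maximum margin over the feasible set, it suffices to exhibit any feasible $(\tilde{\vb}, \tilde{\pb})$ with positive margin. A natural choice is $\tilde{\pb} = R \sum_{i=1}^n \bxi_i / \|\sum_{i=1}^n \bxi_i\|$ and $\tilde{\vb} = r \sum_{i=1}^n y_i \bxi_i / \|\sum_{i=1}^n y_i \bxi_i\|$. Because the distribution of $\bxi_i$ is supported on $\mathrm{span}(\bmu_1, \bmu_2)^\perp$, we have $\bmu_k^\top \tilde{\pb} = \bmu_k^\top \tilde{\vb} = 0$ deterministically. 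Under Item~\ref{assumptionmm2: dimension_large}, standard subgaussian concentration gives $\|\bxi_i\|^2 = (1 \pm o(1)) d$ and $|\bxi_i^\top \bxi_j| = \tilde O(\sqrt{d})$ for $i \ne j$, so $\bxi_j^\top \tilde{\pb} \approx R\sqrt{d/n}$ with the same positive sign for every $j$, and $y_j \tilde{\vb}^\top \bxi_j \approx r\sqrt{d/n} > 0$. Under Item~\ref{assumptionmm2: R_large} the attention-logit gap $R\sqrt{d/n}$ is large enough that the softmax puts mass $1-o(1)$ on each training noise token, yielding training margin $\tilde\Omega(r\sqrt{d/n}) > 0$, which $(\vb_{(r,R)}, \pb_{(r,R)})$ must match or exceed.

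\textbf{Test error lower bound.} Decompose $\vb_{(r,R)} = (\alpha_1/\rho^2)\bmu_1 + (\alpha_2/\rho^2)\bmu_2 + \vb^\perp$ and $\pb_{(r,R)} = (\gamma_1/\rho^2)\bmu_1 + (\gamma_2/\rho^2)\bmu_2 + \pb^\perp$ with $\vb^\perp, \pb^\perp \perp \bmu_1,\bmu_2$. For a fresh clean test sample $(\bX, y) \sim \mathcal{D}_{\text{clean}}$ with signal token $\bmu_y$ and noise token $\bxi_{\text{test}}$,
\[
y \cdot f(\bX;\pb_{(r,R)},\vb_{(r,R)}) = s_1\, y \alpha_y + s_2\, y\, \vb^{\perp\top}\bxi_{\text{test}},
\]
where $s_1, s_2$ depend on $\bxi_{\text{test}}$ only through $\pb^{\perp\top}\bxi_{\text{test}}$. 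Since $\vb^\perp \perp \bmu_1,\bmu_2$ and $\bxi_{\text{test}}$ is Gaussian on this orthogonal complement, $\vb^{\perp\top}\bxi_{\text{test}} \sim \mathcal{N}(0,\|\vb^\perp\|^2)$, and by symmetry $y\, \vb^{\perp\top}\bxi_{\text{test}}$ has the same law. I would establish the structural claim that, for the max-margin solution, $s_1 |\alpha_y| = O(s_2 \|\vb^\perp\|)$ on a constant-probability event over $\bxi_{\text{test}}$ (using in particular that $\gamma_y = O(\|\pb^\perp\|)$ ensures $s_2 \ge 1/2$ with at least constant probability via $\pb^{\perp\top}\bxi_{\text{test}} \ge \gamma_y$). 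A standard Gaussian lower-tail bound $\Pr\!\big(Z \le -c\big) \ge 1/16 + \varepsilon$ for an appropriate absolute constant $c$ then yields the target test-error bound after a union bound over the event controlling $s_2$.

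\textbf{Main obstacle.} The crux is the structural ratio bound $|\alpha_y|/\|\vb^\perp\| = O(1)$ for the max-margin solution in the small-SNR regime, and the analogous bound for $\pb$. The intuition is an exchange argument: a positive signal component $\alpha_y$ equally helps clean samples of label $y$ and hurts noisy samples of true class $y$, of which there are $\Omega(\eta n)$ with high probability under Item~\ref{assumptionmm2: labe_flipping_arb}. Because noisy samples are the binding max-margin constraints, and because one unit of $\|\vb\|$-norm spent in the signal direction buys only $\rho = O(\sqrt{d/(Cn)})$ of margin per sample while the same norm spent in the noise directions buys $\approx \sqrt{d/n}$ per sample (via $\|\bxi_i\|^2 \approx d$ and near-orthogonality), the max-margin optimizer devotes nearly all norm to noise. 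Making this rigorous requires a careful KKT or direct exchange argument that handles the coupling between $\vb$, $\pb$, and the attention weights $s_{i,1}, s_{i,2}$, together with high-probability control of the $n\times n$ Gram matrix of $\{\bxi_i\}$ under Item~\ref{assumptionmm2: dimension_large} to rule out cross-correlation artifacts; the parallel analysis for $\pb_{(r,R)}$ that produces the $s_2$ lower bound should follow by a symmetric reasoning.
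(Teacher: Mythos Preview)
Your training-interpolation sketch is fine and close in spirit to the paper's (the paper derives positivity of the training margin from its convergence theorem rather than from an explicit witness, but either route works).

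The test-error argument, however, has two genuine gaps.

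\textbf{Correlation between $\pb^\perp$ and $\vb^\perp$.} You condition on an event involving $\pb^{\perp\top}\bxi_{\text{test}}$ to control $s_2$, and then apply a Gaussian tail bound to $\vb^{\perp\top}\bxi_{\text{test}}$. But these two linear functionals of the same Gaussian $\bxi_{\text{test}}$ are correlated, with correlation $\langle \pb^\perp/\|\pb^\perp\|,\vb^\perp/\|\vb^\perp\|\rangle$, and a ``union bound over the event controlling $s_2$'' does not handle this: conditioning on $\pb^{\perp\top}\bxi_{\text{test}}$ being large can shift $\vb^{\perp\top}\bxi_{\text{test}}$ by an amount comparable to its standard deviation. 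The paper proves a dedicated near-orthogonality lemma showing $|\langle \pb_R,\vb_\bxi\rangle|\le c<1$ (after normalization), and then uses a conditional-Gaussian computation to get the $1/16$ bound. Nothing in your exchange argument yields this orthogonality, and it is not automatic: both $\pb_R$ and $\vb_r$ are built from the same noise vectors $\{\bxi_i\}$, so their noise parts could a priori be highly aligned.

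\textbf{The structural ratio bounds.} The claim $\gamma_y=O(\|\pb^\perp\|)$ is stronger than it looks. A priori $|\gamma_y|=|\pb_R^\top\bmu_j|$ can be as large as $R\rho$, and under Assumption~\ref{cond: main_cond_s} one only has $\rho\le\sqrt{d/(Cn)}$, so $R\rho$ can be $\Theta(R\sqrt{d/n})\gg R\ge\|\pb^\perp\|$. With $\gamma_y$ this large, the event $\{\pb^{\perp\top}\bxi_{\text{test}}\ge\gamma_y\}$ has exponentially small probability, not constant probability, and your route to $s_2\ge 1/2$ collapses. Your exchange intuition (``signal helps clean, hurts noisy'') gives the right picture for $\vb_{mm}$ and $\pb_{mm}$ in the limit, but does not by itself control the signal components of the finite-$R$ solution $\pb_R$ to the precision you need. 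The paper avoids this by first proving that in the low-SNR regime the optimal token for \emph{every} sample is the noise token, then showing $\pb_R$ achieves near-maximal $p$-SVM margin, and finally bounding $\exp(\langle\pb_R,\bmu'\rangle)$ not directly but by comparing to the attention weights on a \emph{training} sample sharing the same signal token (where the softmax is provably concentrated on noise). This comparison trick is what makes the signal term small enough without ever needing $\gamma_y=O(R)$.
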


\section{Proof ideas}\label{sec:ideas}

In this section we briefly discuss the main proof ideas. The formal proofs are deferred to the appendix.

\subsection{Proof ideas for Section~\ref{sec: gd}}

In this subsection we discuss the main proof idea of Theorem~\ref{thm: gd-after-2-iteration}.
Since the initialization is at zero, $\bv_t$ is a linear combination of the training data tokens. Specifically, we can express $\bv_{t=1}$ as $\lambda_1^{t=1} \bmu_1 + \lambda_2^{t=1} \bmu_2 + \sumn y_i \theta_i^{t=1} \sum_{\tau=2}^T\bxi_{i,\tau}$, where $\lambda_1^{t=1} > 0, \lambda_2^{t=1} < 0$. Note that $\lambda_1^{t} > 0, \lambda_2^{t} < 0$ holds since $|\clean| > |\noise|$. We begin by analyzing the first step of GD. We show that after one step, the coefficients of  $\bv_{t=1}$ can be estimated as $|\lambda_k^{t=1}| \approx \frac{\steps }{4T}(1 - 2\eta), k\in [2]$ and $\theta_i^{t=1} = \frac{\beta}{2Tn}, i\in[n]$. Moreover, we have $\bp_{t=1} = 0$, and hence for a training sample $(\bX_j = (\bmu_k,\bxi_2,\dots,\bxi_T ), y_j)$, the margin is:
\begin{align*}
    &y_jf(\bX_j;\vb_{t=1},\pb_{t=1}) = \frac{1}{T}y_j\vb_{t=1}^{\top}(\bx_{j}^{(1)}+\dots + \bx_{j}^{(T)}) \\
    &\approx \frac{1}{T}y_j\lambda_k^{t=1}\norm{\bmu_k}^2 +  \frac{1}{T}\theta_j^{t=1}\sum_{\tau=2}^{T}\norm{\bxi_{j,\tau}}^2,
\end{align*}
where in the last approximate equality we use the high dimensional setting (i.e. by item~\ref{assumption: gd-high-dimmension} in our assumption $d \gg n^2\log(n)$) to neglect the $\sum_{i,\tau,\tau': (i,\tau) \neq (j,\tau')} y_iy_j \theta_j^{t=1}\bxi_{i,\tau}^{\top}\bxi_{j,\tau'}$ term, since it is much smaller (in absolute value) than the other terms. 
Indeed, we have w.h.p. that 
$|\bxi_{i,\tau}^{\top}\bxi_{j,\tau'}| 
\leq
\sqrt{d}\log(n)$, $\norm{\bxi_{j,\tau}}^2 \approx d$ and recall that $\norm{\bmu_k}^2 = c_{\rho}^2(d/n)$ (item~\ref{assumption: gd-signal-strength} in our assumption).  Therefore, for a clean sample $j \in \clean$ and large enough $ c_{\rho}$, the margin is $y_jf(\bX_j;\vb_{t=1},\pb_{t=1}) \approx \stepSizeConst>0,$
where $\stepSizeConst$ is a parameter that controls the step size $\beta$. On the other hand, for a noisy sample $j \in \noise$, we have  $y_jf(\bX_j;\vb_{t=1},\pb_{t=1}) \approx -\stepSizeConst< 0$.
This implies that after one iteration of GD the classifier $\sign(f(\bX;\vb_{t=1},\pb_{t=1}))$  does not correctly classify noisy training samples, but still correctly classifies clean training samples. Together with $\bp_{t=1} = \zero$, the classifier $\sign(f(\bX;\vb_{t=1},\pb_{t=1}))$ will also correctly classify, with high probability, a clean test sample. 

Moreover, since the loss function $\ell$ is decreasing, the loss of noisy samples, denoted $\ell_{t=1,j}, j \in \noise$, dominates the loss of clean samples $\ell_{t=1,i}, i \in \clean$. This implies that after two iterations, the coefficients $|\theta_j^{t=2}|, j \in \noise$, of the noisy tokens in $\vb_{t=2}$, corresponding to noisy samples, grow faster than the coefficients $|\lambda_i^{t=2}|$ of the first (signal) tokens. 
This property is important to allow for interpolation of noisy examples.
We also show that
$\pb_{t=2}$ 
focuses on optimal tokens, 
namely, on the 
noisy tokens
for noisy samples (i.e. $\sum_{\tau = 2}^T s_{i,\tau}^{t=2} \geq 1/1+(T-1)\signalconst^2, \forall i \in \noise$), and on the signal token for 
clean training and test samples. Using this property we conclude that the model parameterized by $(\vb_{t=2},\pb_{t=2})$ exhibits benign overfitting.


\begin{remark} \label{rem:GD.behavior}
Note that our proof implies the following behavior of GD.
After the first iteration, 
the model correctly classifies only the clean training samples, resulting in an expected training accuracy of $1-\eta$. Additionally, the model successfully classifies a clean test sample w.h.p., leading to the same expected test accuracy. After the second iteration, the model interpolates the training data, achieving 
a
training accuracy of $1$. This is shown empirically in Figure \ref{fig:gd_two_steps}. 
When using a smaller step size, we empirically observe a similar trend: after the first iteration, the model learns the signal tokens, and with more iterations, it captures the noisy tokens of the noisy samples and fits the entire dataset. This behavior  
is shown
in Figure \ref{fig:accuracies_small_step_size}. 
\end{remark}


\subsection{Proof ideas for Section~\ref{sec: benign-overfitting-of-optimal-solution}}

    

In this subsection we provide the proof sketch for Theorem \ref{thrm: main_thrm_f_multi}. 
Our key proposition is that $\pb_{(r, R)}$ will converge to a direction that focuses on the signal token for clean samples, and $\vb_{(r, R)}$ will converge to the corresponding max-margin solution. 
To begin, consider the output of the attention layer \(\rb_i = \bX_i^{\top} \mathbb{S}(\bX_i \pb)\) which is a combination of signal and noise tokens. This can be considered as a ``token selection'' based on softmax probabilities. Consider the following \textbf{second-token selection} rule:
\begin{align*}
        \rb_i^{\text{sec}} &= \bx_i^{(1)} = \bmu_k,\  i \in \itc_k, k \in \{1, 2\}\nonumber\\
        \rb_i^{\text{sec}} &= \bx_i^{(2)} = \bxi_{i,2},\  i \in \itn.
\end{align*}
This selects the signal token for all clean samples and the first noise token for all noisy samples. Following this token selection rule, we define the corresponding max-margin solution as $\pb_{\text{sec}}$ and $\vb_{\text{sec}}$:

\begin{definition}[p-SVM for Second-token Selection]
\label{def: p-SVM_idea_multi_proof}
\begin{align*}
    \pb_{\text{sec}} = \argmin\limits_{\pb \in \mathbb{R}^d} \|\pb\| \;\;\;\;
    \text{subject to:}\quad \quad&\\
    \pb^{\top}(\bx_i^{(\alpha_i)} - \bx_i^{(t)}) \ge 1,\;\; \alpha_i = 1 \;\;\text{for}\;\; i \in \itc \;\;&\\
    \text{and} \;\; \alpha_i = 2 \;\;\text{for}\;\; i \in \itn,\;\; t \in \{1,\dots, T\} \setminus \{\alpha_i\}.& \label{eq: p_sec_SVM_cond_multi_proof}
\end{align*} 
Let $\Xi := 1/\|\pb_{\text{sec}}\|$ be the margin induced by \(\pb_{\text{sec}}\).
\end{definition}


\begin{definition}[v-SVM for Second-token Selection]
\label{def: labelmargin_idea_multi_proof}
\begin{equation*}
    \vb_{\text{sec}} := \argmin_{\vb \in \mathbb{R}^d} \|\vb\|  \text{ s.t. } y_i\cdot \vb^{\top}\rb_i^{\text{sec}} \geq 1, \text{ for all } i \in [n]~, 
\end{equation*}
Let \(\Gamma_{\text{sec}} := 1/\|\vb_{\text{sec}}\|\)
be the label margin induced by $\vb_{\text{sec}}$.
\end{definition}

We prove that $\pb_{(r, R)}$ 
has a direction that selects the signal token for every clean sample, since otherwise it will induce a max-margin at most $\Gamma_{\text{sec}} - \frac{C}{\|\vb_{\text{sec}}\|_2^3 n \rho^2} \cdot \max\limits_{i \in \itc} (1 - s_{i,1})$, where $s_{i,1}$ is the attention probability on signal tokens.
This is strictly smaller than $\Gamma_{\text{sec}}$. 



Then, we show that when jointly optimizing $\pb$ and $\vb$ for \eqref{eq: problem_def}, we obtain solutions that induce similar max-margin as $\pb_{\text{sec}}$ and $\vb_{\text{sec}}$ as $R, r \rightarrow \infty$. To be specific, we have
\begin{itemize}
    \item $\min\limits_{\tau \in [2, T]} \pb_{(r, R)}^{\top} (\bmu_k -  \bxi_{i,\tau}) \ge (1 - \zeta)\Xi R$ for all $i \in \itc_k, k \in [2]$, where $\Xi$ is the margin induced by $\pb_{\text{sec}}$.
    \item The label margin for clean samples induced by $\vb_{(r, R)}/r$ in \textit{SVM} is at least $(1-\gamma)\Gamma_{\text{sec}}$.
\end{itemize}



Here, $\zeta, \gamma$ are some small value quantifying the difference between $(\pb_{(r, R)}, \vb_{(r, R)})$ and $(\pb_{\text{sec}}, \vb_{\text{sec}})$. As $R \rightarrow \infty$, both $\zeta$ and $\gamma$ converge to 0. Thus, for sufficiently large $R$, we conclude that $\pb_{(r, R)}^{\top} (\bmu_k - \bxi_i)$ 
becomes large for $i \in \itc_k$. 

This ensures that $\pb_{(r, R)}$ captures sufficient information about signal tokens, which enhances the accuracy of test sample predictions. 
%
Since the signal token remains invariant between training and test data, for a given test sample \((\bX, y)\) with \(\bX = (\bmu^{\star}, \bxi^{\star}_2, ..., \bxi^{\star}_T)\), w.h.p. the attention layer $\pb_{(r, R)}$ will focus on $\bmu^{\star}$ when $R$ is sufficiently large. As a result, the signal token $\bmu^{\star}$ will dominate the attention layer's output. As $\vb_{(r, R)}$ converges to the corresponding max-margin solution, it can make accurate predictions on $(\bmu^{\star}, y)$. Thus, the component induced by the signal token $y \cdot \la \vb_{(r, R)}, \bmu^{\star} \ra$ is large enough to eliminate the randomness introduced by the noise token (denoted by $\Delta(\bxi^{\star})$ here) and the model will make an accurate prediction with high probability:
$y \cdot f(\vb_{(r, R)}, \pb_{(r, R)}; \bX) \ge y \cdot \vb_{(r, R)}^{\top} \bmu^{\star} - \Delta(\bxi^{\star}) \ge 0$.

\section{Experiments} \label{sec:experiments}

\begin{figure}[ht]
\begin{subfigure}[b]{0.49\textwidth}
 \centering
    \includegraphics[width=\linewidth]{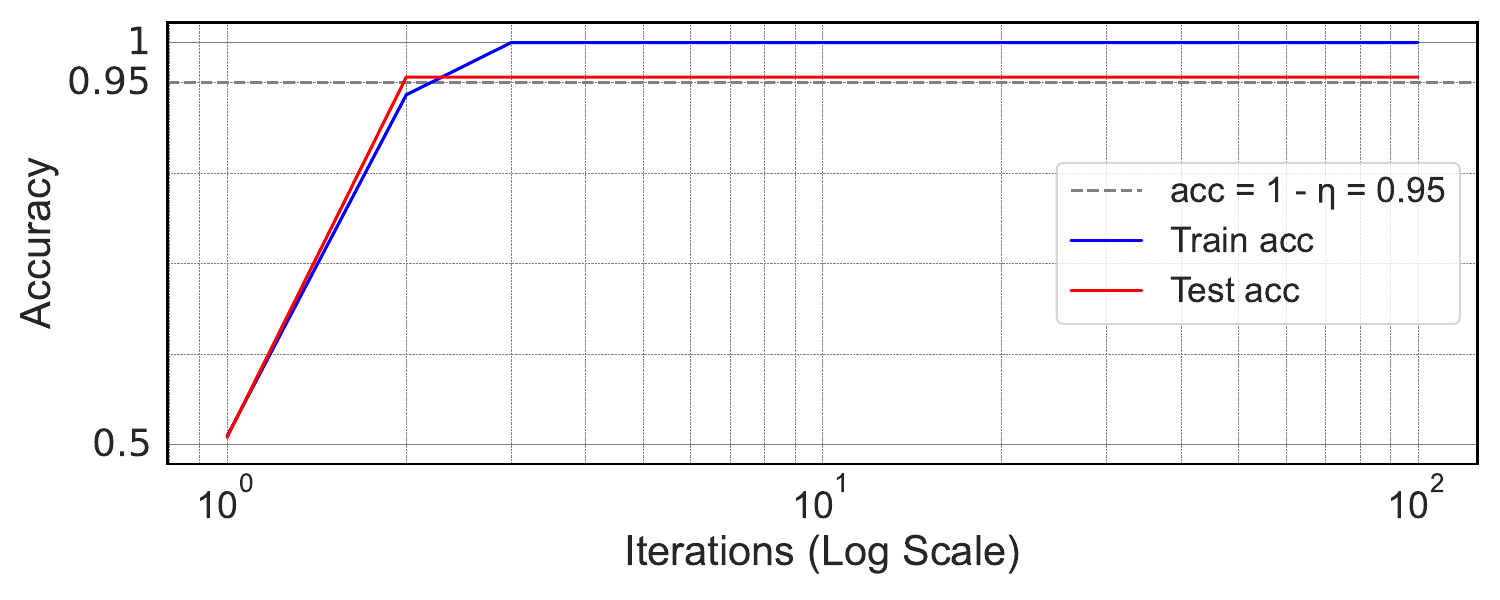}
    \caption{train and test accuracy}
\end{subfigure}
 \hfill
 \begin{subfigure}[b]{0.49\textwidth}
 \centering
     \includegraphics[width=\linewidth]{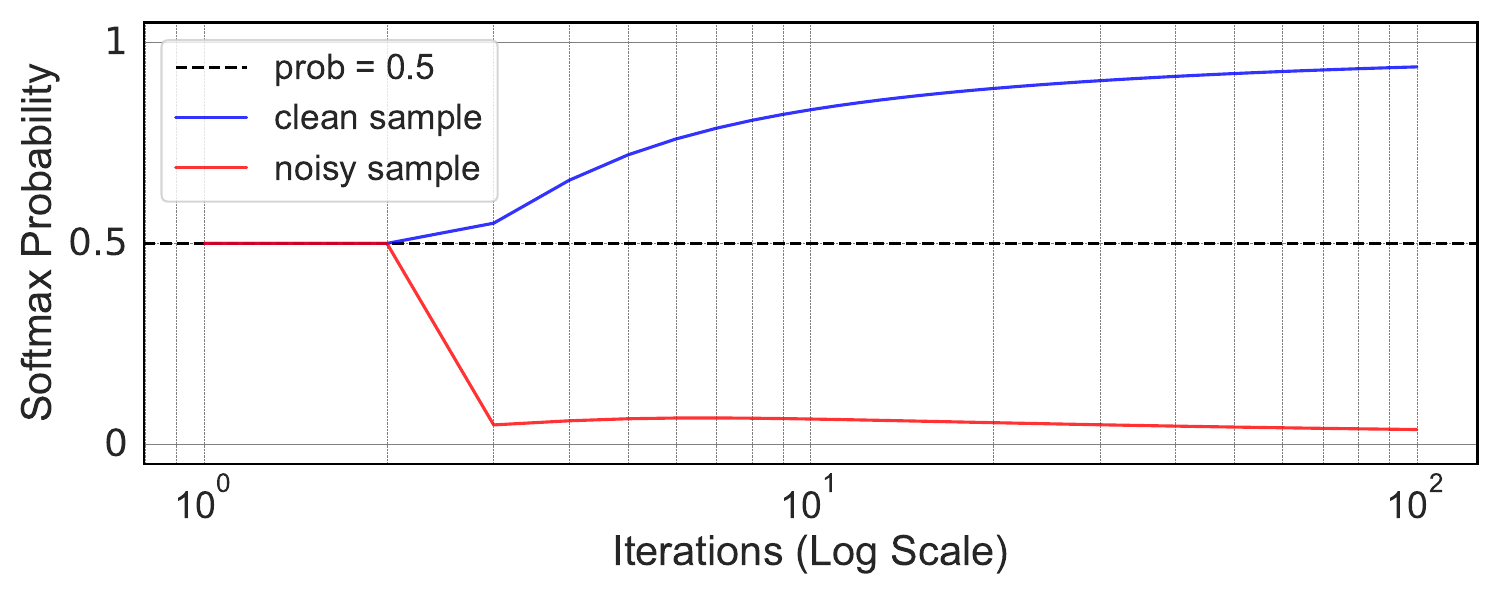}
        \caption{attention weights on signal token}
    \end{subfigure}
    \vspace{5pt}
\caption{ The left panel shows the train and test accuracies during training. It shows that benign overfitting occurs after $2$ iterations. After the first iteration, the model correctly classifies the clean training examples, but not the noisy ones. In the right panel, we show the softmax probability of the signal token for clean and noisy samples (average of the softmax probabilities $s_{j,1}^{t}$ over $\clean$ and $\noise$ respectively). We see that after $2$ iterations, the attention focuses on signal tokens for clean examples, and on noise tokens for noisy examples. This aligns with Theorem~\ref{thm: gd-after-2-iteration} and Remark \ref{remark: sm_probability_c_rho_is_const}. 
Parameters: $n=200, d=40000, T=2, \beta = 0.025, \rho = 30, \eta = 0.05, \text{test sample size} = 2000$.}
\label{fig:gd_two_steps}
\end{figure}

\begin{figure}[ht]
        \centering
        \includegraphics[width=0.7\textwidth]{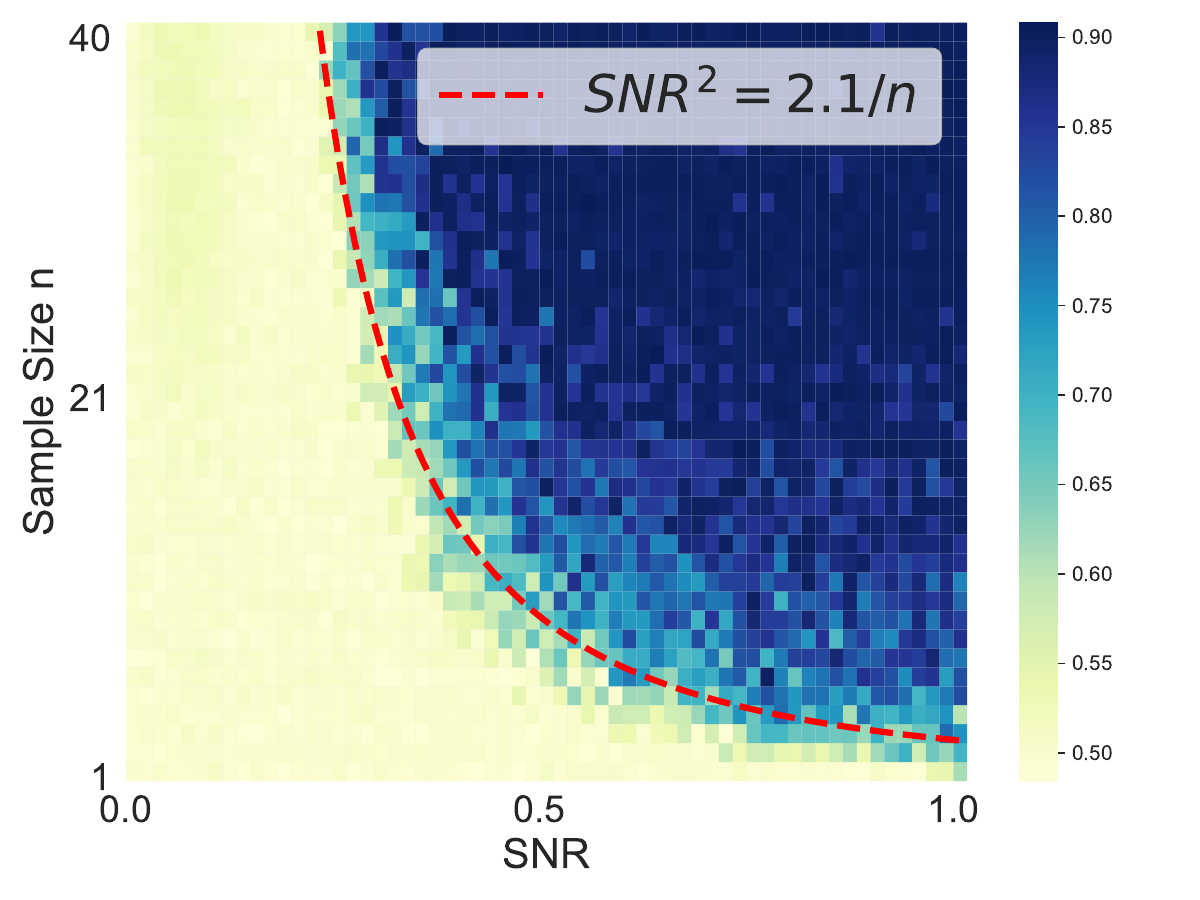}
        \vspace{1pt}
   \caption{A heatmap of the test accuracy (averaged over $5$ runs) after achieving training accuracy $100\%$, plotted across varying signal-to-noise ratios (SNR) and sample sizes ($n$). Yellow indicates small test acc, while blue represents high test acc. The red curves represent the expression $\text{SNR}^2 = 2.1/n$.  This validates our tight bound of SNR = $\Omega(1/\sqrt{n})$ to achieve benign overfitting, and with a smaller SNR the model exhibits harmful overfitting. Parameters: $d=900, T=5, \beta = 0.015, \eta = 0.1, \text{test sample size} = 2000$.}
    \label{fig:snr_heatmap}
    \vspace{1pt}
\end{figure}

We complement our theoretical results with an empirical study on benign overfitting in single-head softmax attention.
We trained single-head softmax attention models (Eq.~(\ref{eq:model})) on data generated as specified in Section~\ref{sec:data.generation} using GD with a fixed step size and the logistic loss function. In all figures, the x-axis corresponds to the time and has a log scale. We add $1$ to the time so that the initialization $t = 0$ can be shown in the log scale (i.e. iteration $10^0$ is the initialization). 

In Figure \ref{fig:gd_two_steps}, we consider a setting similar to Theorem~\ref{thm: gd-after-2-iteration}, and demonstrate that benign overfitting occurs after two iterations, and that the behavior of GD aligns with our discussion in Remark~\ref{rem:GD.behavior}. We also plot how the softmax probabilities evolve during training, and see after two iterations a behavior similar to the last item of Theorem~\ref{thm: gd-after-2-iteration} and Remark \ref{remark: sm_probability_c_rho_is_const}. In Figure~\ref{fig:accuracies_small_step_size} (Appendix~\ref{sec: additional_experiments}), we consider a similar setting, but with a smaller step size. Here, benign overfitting occurs after about $150$ iterations. In Figure \ref{fig:snr_heatmap}, we present a heatmap of the test accuracy across varying SNR and sample sizes, which validates the SNR threshold of $\Theta(1/\sqrt{n})$ established in this work. 

Additional experiments are provided in Section~\ref{sec: additional_experiments}, including investigation of the overfitting behavior for different dimensions, self-attention w.r.t. the first token, multi-layer transformers, and GD with weight decay (which encourages norm minimization, as in our learning rule from Section~\ref{sec: benign-overfitting-of-optimal-solution}). These experiments demonstrate that our results capture the overfitting behavior also in more complex settings.

\section{Conclusion}

This paper took an initial step in establishing the benign overfitting phenomenon in a single-head softmax attention model.
Our results open up several future directions, including analyzing gradient descent for more than 2 steps (and without assuming norm minimization), more complex data distributions, and more complex architectures such as deep attention.

\subsection*{Acknowledgements}
WH was supported by the Google Research Scholar program.
GV was supported by a research grant from Mortimer Zuckerman, the Zuckerman STEM Leadership Program, and by research grants from the Center for New Scientists at the Weizmann Institute of Science, and the Shimon and Golde Picker -- Weizmann Annual Grant.

\printbibliography

\newpage

\clearpage
\appendix

\section{Appendix} \label{sec: appendix}
\begingroup
\parindent=0em
\etocsettocstyle{\rule{\linewidth}{\tocrulewidth}\vskip0.5\baselineskip}{\rule{\linewidth}{\tocrulewidth}}
\localtableofcontents 
\endgroup

\begin{remark} \label{remark: original_data_setting}
    Throughout our proofs, we assume without loss of generality that $\bmu_1 = (\rho, 0, 0, ..., 0)^{\top}$, $\bmu_2 = (0, \rho, 0, ..., 0)^{\top}$ and $\bxi_i = (0, 0, \bxi^{\top})$ for $\bxi \sim \mathcal{N}(\zero, \bI_{d-2})$. Indeed, since $\bmu_1$ and $\bmu_2$ are orthogonal, we can find orthogonal matrix $\bA \in \R^{d \times d}$ such that $\bA\bmu_1 =  (\rho, 0, 0, ..., 0)^{\top}, \bA\bmu_2 = (0, \rho, 0, ..., 0)^{\top}$ and $\bA\bxi_i \sim \mathcal{N}(\bA\zero, \bA(\bI_{d} - \bmu_1\bmu_1^{\top}/\rho^2 -  \bmu_2\bmu_2^{\top}/\rho^2)\bA^{\top} )$, which mean that $\bA\bxi_i = (0, 0, \bxi^{\top})$ for $\bxi \sim \mathcal{N}(\zero, \bI_{d-2})$. 
We emphasize that an orthogonal 
transformation does not affect our results.
\end{remark}


\subsection{Proofs for Sec. \ref{sec: gd}}

\subsubsection{Notations for Sec. \ref{sec: gd}.}  Given $a,b,c \in \R$, we denote by $c(a \pm b)$ the close segment $[c(a-b),c(a+b)]$. Given vector $\bx$, we denote by $\bx[i]$ the $i^{\text{th}}$ coordinate of $\bx$, and $\bx[i:j]$ denotes the subvector containing the elements from the $i^{\text{th}}$ to the $j^{\text{th}}$, inclusive.
We also list some key notations used in this section for convenience.

\begin{table}[h!]
\caption{Usefull notation.}
\label{table:1}
\vskip 0.15in
\begin{center} 
\begin{tabular}{ c c }
 $\bx_{i,j}$ & $j^{\text{th}}$ token in the $i^{\text{th}}$ sample \\   
  $\bgamma_{i,j}^{t}$    & $y_i\vb_t^{\top}\bx_{i,j}$ i.e. $j^{\text{th}}$ token score in time $t$ \\  
 $\alpha_{i,j}^t$  & softmax probability of the $j^{\text{th}}$ token in the $i^{\text{th}}$ sample in time $t$ \\
 $\ell_{t,i}$ & $\ell(\bX_i;\vb_t,\pb_t)$
\end{tabular}
\end{center}
\end{table}

We remind that $\clean, \noise \subseteq [n]$ denotes the indices of clean and noisy training examples, and  $\clean_k, \noise_k$ denotes the clean and noisy examples from cluster $k \in \{1,2\}$. For example if $i \in \clean_1$, then $x_{i,1} = \mu_1$ and $y_1 = 1$, and for $j \in \noise_1$  we have that $x_{j,1} = \mu_1$ and $y_1 = -1$. Let $\SM'(\vb) := \nabla \SM (\vb) = \diag(\SM(\vb)) - \SM(\vb)\SM(\vb)^{\top}$ denote the Jacobian of the softmax function $\SM(\vb)$ at $\vb \in \R^d$.

\subsubsection{Additional Lemmas \& Definitions for Sec \ref{sec: gd}.}
The following equations will be useful throughout the proof:
\begin{align}
    &\nabla_{\vb} \Lcal(\vb,\pb) = \frac{1}{n} \sumn \ell_i' \cdot  y_i\bX_i^{\top} \SM (X_i\pb)  \label{eq: loss_gradient_by_v} \\
    &\nabla_{\pb} \Lcal(\vb,\pb) = \frac{1}{n} \sumn \ell_i' \cdot  \bX_i^{\top} \SM' (X_i\pb)  \bgamma_i, \text{ \ where \ } \bgamma_i = y_i\vb^{\top}\bX_{i} \label{eq: loss_gradient_by_p}\\
    & \ell'(x) = -1/(1+\exp(x)) \label{eq: loss_derevative}\\
    & \SM'(\vb) = \diag(\SM(\vb)) - \SM(\vb)\SM(\vb)^{\top} \label{eq: softmax_gradient}
\end{align} 

\begin{definition} [Good Training Set] \label{def: good_training_set_mt}
We say that a training set $(\bX_1,\dots,\bX_n)$ is \emph{good} if exsists some universal constant $c_D$ (that may depends just on the number of tokens $T$) s.t.
\begin{itemize}
    \item $\|\bxi_{i,\tau}\|_2^2 \in (1 \pm o_n(1))d$, for all $i \in [n], \tau \in \{2,\dots, T\}$.
    \item $|\la \bxi_{i,\tau}, \bxi_{j,\tau'} \ra | \le c_D \cdot \sqrt{d \log(n/\delta)}$, for any $i,j \in [n], \tau,\tau'\in\{2,\dots,T\}$ such that $(i,\tau) \neq (j,\tau')$. 
     \item $|\noise_k| \in \frac{n}{2} (\eta \pm o_n(1))$  and $|\clean_k| = \frac{n}{2} (1-\eta \pm o_n(1))$, for $k \in \{1,2\}$. 
\end{itemize}
\end{definition}

\begin{definition} [Good Test Sample] \label{def: good_test_sample_mt}
We say that a test sample $(\bX = (\bx_1,\bx_2,\dots,\bx_T), y)$ is \emph{good} w.r.t. a training set $(\bX_1,\dots,\bX_n)$ and $C_1$ if 
\begin{align*}
    |\la \bx_{i,\tau}, \bx_{\tau'} \ra | \le  \frac{d}{C_1 \sqrt{n}}, \ \ \ \forall i \in [n], \tau,\tau'\in\{2,\dots,T\} \ \ \text{s.t.} \ \ \tau \neq \tau'
\end{align*}
\end{definition}

Next we write Lemma \ref{lem: concen_samplesize} slightly different, and also add a formal proof for completeness:
\begin{lemma} \label{lemma: estimate-number-of-noisy/clean-samples}
    Let $\delta > 0$ and $C>0$. Suppose that Assumption ~\ref{assumption: gd} (item \ref{assumption: gd-number-of-samples-is-big}) holds with constant $C$, then with probability at least $1-\delta/2$ we have that 
    \begin{align*}
        \left|\clean_k\right| \in \frac{n}{2}(1-\eta \pm \sqrt{2/C}), \ \ \  \left|\noise_k\right| \in \frac{n}{2}(\eta \pm \sqrt{2/C}), \ \ \  \forall k \in \{1,2\}.
    \end{align*}
Moreover, 
we have
\begin{align*}
       \left|\clean_k\right| \in \frac{n}{2}(1-\eta \pm o_n(1)), \ \ \  \left|\noise_k\right| \in \frac{n}{2}(\eta \pm o_n(1)), \ \ \  \forall k \in \{1,2\}~.
\end{align*}
\end{lemma}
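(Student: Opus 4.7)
The plan is to recognize each of $|\itco|, |\itct|, |\itno|, |\itnt|$ as a binomial random variable and apply Hoeffding's inequality. Indeed, for each sample index $i$, the events $\{i \in \itco\}, \{i \in \itct\}, \{i \in \itno\}, \{i \in \itnt\}$ are mutually exclusive and partition the sample space. By the generative process, $\tilde y_i \sim \unif\{\pm 1\}$ determines the signal cluster (with $\tilde y_i = +1$ giving $\mu_1$ and $\tilde y_i = -1$ giving $\mu_2$), and the label-flip event is an independent Bernoulli$(\eta)$. Consequently
\[
\Pr[i \in \itc_k] = \tfrac{1}{2}(1-\eta), \qquad \Pr[i \in \itn_k] = \tfrac{1}{2}\eta, \qquad k \in \{1,2\},
\]
so each $|\itc_k|$ is $\mathrm{Binomial}(n, (1-\eta)/2)$ with mean $n(1-\eta)/2$, and each $|\itn_k|$ is $\mathrm{Binomial}(n, \eta/2)$ with mean $n\eta/2$.

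Next I would apply Hoeffding's inequality to each of the four counts. For the first bound, setting $t = (n/2)\sqrt{2/C}$ gives
\[
\Pr\!\left[\big||\itc_k| - \tfrac{n}{2}(1-\eta)\big| \ge t\right] \le 2\exp(-2t^2/n) = 2\exp(-n/C),
\]
and likewise for $|\itn_k|$. A union bound over the four events produces a failure probability at most $8\exp(-n/C)$, which by Assumption~\ref{assumption: gd_main} item~\ref{assumption: gd-number-of-samples-is-big_main} (with $C$ chosen sufficiently large relative to universal constants) is at most $\delta/2$, giving the first claim. For the ``Moreover'' statement, I would run Hoeffding with the larger deviation $t = n \cdot h(n,\delta)$ where $h(n,\delta) = \sqrt{\log(16 n/\delta)/n}$; this yields failure probability $2\exp(-2\log(16n/\delta))$ per binomial, and a union bound over the four quantities gives total failure probability at most $\delta/2$ while the deviation $h(n,\delta) \to 0$ as $n \to \infty$ (with $\delta$ fixed or shrinking at most polynomially), which is exactly $o_n(1)$.

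The argument has no serious obstacles; the only mild subtlety is that the four counts are not independent (they are constrained to sum to $n$), so one cannot multiply probabilities and must instead control each binomial separately and then union-bound. Beyond that the proof is a routine concentration argument, reflecting the fact that this lemma is essentially a restatement of \textrm{Lemma}~\ref{lem: concen_samplesize} with the deviation written explicitly in terms of the constant $C$ appearing in the sample-size assumption.
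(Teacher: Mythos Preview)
Your proposal is correct and takes essentially the same approach as the paper: recognize each count as a binomial, apply Hoeffding's inequality to each, and union-bound over the four events. The only cosmetic difference is that the paper handles both conclusions with a single Hoeffding application---the deviation $c_n = \sqrt{2\log(16/\delta)/n}$ is simultaneously $\leq \sqrt{2/C}$ (using $n \geq C\log(16/\delta)$, which is the precise form of the assumption you should cite) and $o_n(1)$---whereas you run Hoeffding twice with different deviation parameters.
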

\begin{proof}
By Hoeffding's inequality, 
    \begin{align*}
         \mathbb{P} \left(\left| |\clean_j| -\frac{n}{2}(1-\eta) \right| \geq \sqrt{n \log(16/\delta) / 2}  \right) \leq \delta / 8, 
    \end{align*}
which means that with probability at least $1-\delta/8$ we have that $|\clean_j| \in \frac{n}{2} (1 - \eta \pm c_n)$, where $c_n = \sqrt{2n \log(16/\delta)}/n$. Hence, if $n \geq C\log(16/\delta)$, then $c_n = \sqrt{2 \log(16/\delta)}/\sqrt{n} \leq \sqrt{2/C}$. Similarly, we can estimate $|\noise_k|$ for $k \in \{1,2\}$, and by union bound, the result follows.
\end{proof}


The next lemma \ref{lemma: soft-max-derivative_mt} allows us to analyze $\nabla_{\bp} \Lcal$ as a function of the score gap.
\begin{lemma} \label{lemma: soft-max-derivative_mt}
Let $\bz,\bgamma,\pb \in \R^{T}$ and let $ \balpha = \SM(\pb)$. Define $\gamma_{min} := \min_{\tau \geq 2} \gamma_{\tau}$, $\gamma_{max} := \max_{\tau \geq 2} \gamma_{\tau}$, $\gamma := (\gamma_{min} + \gamma_{max})/2 $ and $\epsilon : = (\gamma_{max} - \gamma_{min})/2$. Then 
 \begin{align*}
         \bz^T \SM'(\pb)\bgamma \in  (\gamma_1 - \gamma)(1-\alpha_1)\alpha_1\left(z_1 - \frac{\sum_{i = 2}^Tz_i\alpha_i}{1-\alpha_1}\right) \pm \epsilon \left(2\sum_{i = 2}^Tz_i\alpha_i + \alpha_1  \sum_{i = 2}^Tz_i\alpha_i +(1-\alpha_1)\alpha_1 z_1\right) \\
     \end{align*}
     \begin{proof}
Observe that $ \sum_{i = 1}^T \alpha_i = 1$. Therefore,
     \begin{align*}
         \bz^T \SM'(\pb)\bgamma &= \bz^T \diag(\balpha)\bgamma - \bz^T \balpha \balpha^{\top} \bgamma = \sum_{i=1}^T z_i\gamma_i\alpha_i - \sum_{i=1}^T z_i\alpha_i \sum_{i=1}^T \gamma_i\alpha_i  \\
         &\in  z_1\gamma_1\alpha_1 + (\gamma \pm \epsilon)\sum_{i = 2}^T z_i\alpha_i - \left (z_1\alpha_1 + \sum_{i = 2}^Tz_i\alpha_i \right) \left(\gamma_1\alpha_1 + (\gamma \pm \epsilon)\sum_{i = 2}^T\alpha_i\right) \\
         &=   \left( (\gamma \pm \epsilon) - \left(\alpha_1\gamma_1 + (\gamma \pm \epsilon)\sum_{i = 2}^T\alpha_i\right) \right) \sum_{i = 2}^Tz_i\alpha_i +  \left(\gamma_1 -\left(\alpha_1\gamma_1 + (\gamma \pm \epsilon)\sum_{i = 2}^T\alpha_i\right) \right)\alpha_1z_1 \\
        &=   \left( (\gamma \pm \epsilon) - \left(\alpha_1\gamma_1 + (\gamma \pm \epsilon)(1-\alpha_1) \right) \right) \sum_{i = 2}^Tz_i\alpha_i +  \left(\gamma_1 -\left(\alpha_1\gamma_1 + (\gamma \pm \epsilon)(1-\alpha_1)\right) \right)\alpha_1z_1 \\
          &=   \left( \alpha_1(\gamma \pm \epsilon) - \alpha_1\gamma_1 \pm 2\epsilon \right) \sum_{i = 2}^Tz_i\alpha_i +  (1-\alpha_1)(\gamma_1 - \gamma \pm \epsilon) \alpha_1z_1 \\
            &=   \alpha_1\left( \gamma  - \gamma_1 \right) \sum_{i = 2}^Tz_i\alpha_i +  (1-\alpha_1)(\gamma_1 - \gamma) \alpha_1z_1 \pm \epsilon \left(2\sum_{i = 2}^Tz_i\alpha_i + \alpha_1  \sum_{i = 2}^Tz_i\alpha_i +(1-\alpha_1)\alpha_1 z_1\right) \\
            &=  (\gamma_1 - \gamma)(1-\alpha_1)\alpha_1\left(z_1 - \frac{\sum_{i = 2}^Tz_i\alpha_i}{1-\alpha_1}\right) \pm \epsilon \left(2\sum_{i = 2}^Tz_i\alpha_i + \alpha_1  \sum_{i = 2}^Tz_i\alpha_i +(1-\alpha_1)\alpha_1 z_1\right)\end{align*}
\end{proof}
\end{lemma}
 We will show that in our setting the score difference between noisy tokens (i.e. $\epsilon$ from Lemma \ref{lemma: soft-max-derivative_mt}) is relatively small and thus the second term in Lemma \ref{lemma: soft-max-derivative_mt} is negligible compare to the first term. 

 \begin{remark}
    To prove Thm. \ref{thm: gd-after-2-iteration}, we demonstrate that $\nabla_{\bp} \Lcal$ can be expressed as a function of the score gap between the optimal token to the noisy tokens. Specifically as a function of $\gamma_{i,1} - \gamma_{i,\tau}$, where $\bgamma_i := y_i\vb^{\top}\bX_{i}$ is the vector score of the $i$ sample, and with some additive term that depends on the score gap between any two distinct noisy tokens, defined as $\epsilon_i := \max_{\tau \neq \tau'} |\gamma_{i,\tau} - \gamma_{i,\tau'}|$ (see Lemma \ref{lemma: soft-max-derivative_mt}). We establish that in our case $\epsilon_i$ is relatively small. This technique is noteworthy on its own, as it enables the analysis of softmax weights in a multiple-token setting without relying on potentially unnatural assumptions, such as all non-optimal tokens having identical scores \citep{ataee2023max, vasudeva2024implicit} or the presence of a single noisy token with a larger norm compared to other noisy tokens \citep{jiang2024unveil}.
\end{remark}

\begin{lemma} \label{lemma: sum_of_zero_mean_rv}
    Let $x_1,x_2,\dots, x_n$ be independent random variables such that $\E[x_i] = 0$ and $x_i \in [-b,b]$ almost surely. Consider the sum of these random variable $S_n = x_1 + \dots+x_n$. Then Hoeffding's theorem states that 
    \begin{align*}
    \Pr [S_n \geq n^{0.75}b] \leq \exp(-2n^{1.5}b^2/4b^2n) = \exp(-n^{0.5}/2)
    \end{align*}
\end{lemma}

\subsubsection{Proof of Thm. \ref{thm: gd-after-2-iteration}}

\begin{proof}

To simplify the proof, we express the step size $\beta$ in an alternative form:
\begin{align}
    &\stepSizeConst := C_1 \sqrt{\log(\signalconst)/\eta} \ \ \text{where} \ \ C_1 \in \left[\sqrt{\frac{16T^3}{0.998(T-1)}}, 1.02 \cdot \sqrt{\frac{16T^3}{0.998(T-1)}} \right]  \notag \\
    &\beta = \stepSizeConst \cdot n/ (\signalconst^2 \cdot d), \label{eq: gd-step-size} 
\end{align} 
which is equivalent to Item \ref{assumption: gd-step-size}. 
We emphasize that $\stepSizeConst$ can be arbitrarily larger than any constant whenever $\eta$ is small enough i.e. $C$ from Assumption \ref{assumption: gd} is large enough.  

Next, under Assumption \ref{assumption: gd}, we argue that with probability at least $1 - \delta$ the training set is good (Def. \ref{def: good_training_set_mt}) i.e.:
\begin{itemize}
    \item $|\clean_k| \in \frac{n}{2} (\eta \pm o_n(1))$  and $\noise_k \in \frac{n}{2} (1-\eta \pm o_n(1))$, for $k \in \{1,2\}$.
    \item  $\|\bxi_{i,\tau}\|_2^2 \in (1 \pm o_n(1))d$, for any $i \in [n], \tau \in \{2,\dots,T\}$.
    \item $|\la \bxi_{i,\tau}, \bxi_{j,\tau'} \ra | \le c_D \cdot \sqrt{d \log(n/\delta)}$, for any $i,j \in [n], \tau,\tau'\in\{2,\dots,T\}$ such that $(i,\tau) \neq (j,\tau')$, 
\end{itemize}
where $c_D$ is some universal constant.
Indeed, this holds by Lemma~\ref{lemma: noise_balance}, Lemma~\ref{lemma: estimate-number-of-noisy/clean-samples}, and the union bound. 
We emphasize that the notation $o_n(1)$ represents a term that becomes arbitrarily small as $n$ increases, and thus it can be bounded by a small constant if $C$ from Assumption~\ref{assumption: gd-number-of-samples-is-big} is large enough.

Next, we show that under a good training set, the model exhibits benign overfitting, already after two iterations. See Remark \ref{remark: original_data_setting} for the data setting used throughout the proof.

\textbf{GD after 1 iteration.}
We start by analyzing the first coordinate of $\bv_1$ (i.e. $\bv$ after one iteration of GD). By assumption \ref{assumption: gd} (item \ref{assumption: gd-initialization-at-zero}), we have that $\bp_0 = \bv_0 = \zero$, which implies that $\ell_{0,i}' = -1/2$, for any $i \in [n]$. Hence
\begin{align*}
    -\steps \nabla_{\vb} \Lcal(\vb_0,\pb_0)[1] &= -\frac{\steps }{Tn} \sumn \ell_{0,i}' \cdot  y_i\bx_{i,1}[1] = \frac{\steps }{2Tn} \sum_{i\in \clean_1} y_i \rho + \frac{\steps }{2Tn} \sum_{i\in \noise_1} y_i \rho \\
     &= \frac{\steps }{2Tn} (|\clean_1| - |\noise_1| )\rho  \\
     &\in  \frac{\steps }{4T} (1-2\eta \pm o_n(1))\rho &&\text{``good'' training set}
\end{align*}
In the same way, we can estimate the second coordinate of $\vb_{t=1}$:
\begin{align*}
    \vb_{t=1}[2] = \frac{\steps }{2Tn} \sum_{i\in \clean_2} y_i \rho + \frac{\steps }{2Tn} \sum_{i\in \noise_2} y_i \rho \in  -\frac{\steps }{4T} (1-2\eta\pm o_n(1))\rho,
\end{align*}
where we remind that $y_i = -1$, when $i \in \clean_2$, hence $\vb_{t=1}[2]$ has the same bounds as $\vb_{t=1}[1]$, just with opposite sign.
We move to analyze the rest of the coordinates of $\vb_{t=1}$:
\begin{align*}
    \vb_{t}[3:d] =  \frac{\steps }{2Tn} \sumn y_i \sum_{\tau=2}^{T}\bxi_{i,\tau}.
\end{align*}
Overall, we can write $\vb_{t=1}$ as $\lambda_1^{t=1} \bmu_1 + \lambda_2^{t=1} \bmu_2 + \sumn y_i\theta_i^{t=1}\sum_{\tau=2}^{T}\bxi_{i,\tau}$ with
\begin{align} 
    &\lambda_1^{t=1} \in  \frac{\steps }{4T} (1-2\eta\pm o_n(1)), \ \ \lambda_2^{t=1} \in -\frac{\steps }{4T} (1-2\eta\pm o_n(1)), \ \ \theta_i^{t=1} =  \frac{\steps }{2Tn}. \label{eq: gd_v1_coefficients_mt}
\end{align}
Moreover, since $\bgamma_{i}^{t=0} = \zero$ for every $i \in [n]$, we have that $ \bp_1 = \zero$ (see Eq. \ref{eq: loss_gradient_by_p}). 

 \textbf{Preparation for next iteration.} To estimate $(\vb_{t=2},\pb_{t=2})$, we first need to estimate the loss for clean/noisy samples and the score $\gamma_{i,\tau}$ (see Table \ref{table:1}). 

We remind that $\norm{\bmu_j}^2 = \rho^2 = \signalconst^2 d / n$ (Assumption \ref{assumption: gd} (item \ref{assumption: gd-signal-strength})).
For $j \in \clean_k$, where $k \in \{1,2\}$ we have that
\begin{align}
    y_jf(\bX_j;\vb_{t=1},\pb_{t=1}) &= \frac{1}{T}\cdot y_j\vb_{t=1}^{\top}\sum_{\tau=1}^T\bx_{j,\tau} \notag &&\text{since \ } \bp_1 = \zero\\
    &\in  \frac{1}{T}|\lambda_k^{t=1}|\norm{\bmu_k}^2 +  \frac{1}{T}\theta_j^{t=1}\sum_{\tau=1}^{T-1}\norm{\bxi_{j,\tau}}^2 \pm \frac{\steps}{n}o_n(d) &&y_j\lambda_k^{t=1} > 0 
\end{align}
where the last inequality holds since the training set is ``good'' and  $T$ is a constant i.e. $$\sum_{i,\tau,\tau':(i,\tau)\neq(j,\tau')}\bxi_{i,\tau}^{\top}\bxi_{j,\tau'} \in 
\pm o_n(1)\cdot d.$$ 
Since the training set is ``good'' then by Eq.~\ref{eq: gd_v1_coefficients_mt}, we can bound $ y_jf(\bX_j;\vb_{t=1},\pb_{t=1})$ as follows:
\begin{align}
    y_jf(\bX_j;\vb_{t=1},\pb_{t=1}) &\leq  \frac{\steps }{4T^2} (1-2\eta + o_n(1)) \cdot \signalconst^2 \cdot \frac{d}{n} + \frac{\steps(T-1) }{2T^2n} d (1 + o_n(1)) + \frac{\steps}{n}\cdot o_n(d)\notag \\
    &\leq  \left(\frac{\signalconst^2 (1-2\eta) +2(T-1) + o_n(1)}{4T^2}\right) \cdot \frac{\beta d}{n}  \ \ \ \ \ \ \ \ \ \ \ \ \ \text{Assumption \ref{assumption: gd} (item \ref{assumption: gd-high-dimmension})} \notag \\
    &=    \stepSizeConst \cdot \left(\frac{(1-2\eta) +2(T-1)/\signalconst^2 + o_n(1)}{4T^2}\right) \notag  \ \ \ \ \ \ \ \ \ \ \ \ \ \text{Eq. \ref{eq: gd-step-size}} \\
    &\leq  \frac{1.1 \stepSizeConst }{4T^2}, \label{eq: gd_margin_for_clean_sampe_t=1_upper_bound_mt}
\end{align}
 where the last inequality holds since $\signalconst \geq 5(T-1)$, which implies that $2(T-1)/\signalconst^2 + o_n(1) \leq 0.1$. Similarly, we have that
\begin{align}
    y_jf(\bX_j;\vb_{t=1},\pb_{t=1}) &\geq  \frac{\steps }{4T^2} (1-2\eta - o_n(1)) \cdot \signalconst^2 \cdot \frac{d}{n} + \frac{\steps }{2T^2n} d (1 - o_n(1)) - \frac{\steps }{n} o(d)\notag \\
    &\geq  \left(\frac{\signalconst^2 (1-2\eta) +2(T-1) - o_n(1)}{4T^2}\right) \cdot \frac{\beta d}{n} \notag \\
    &= \stepSizeConst \cdot \left(\frac{ (1-2\eta) +2(T-1)/\signalconst^2 - o_n(1)}{4T^2}\right)  \notag \\
    &\geq  \frac{0.9 \stepSizeConst }{4T^2} \label{eq: gd_margin_for_clean_sampe_t=1_lower_bound_mt} 
\end{align}

For $j \in \noise_k$, where $k \in \{1,2\}$ we have that
\begin{align}
    y_jf(\bX_j;\vb_{t=1},\pb_{t=1}) &= \frac{1}{T}\cdot y_j\vb_{t=1}^{\top}\sum_{\tau=1}^T\bx_{j,\tau} \notag &&\text{since \ } \bp_1 = \zero\\
    &\in  -\frac{1}{T}|\lambda_k^{t=1}|\norm{\bmu_k}^2 +  \frac{1}{T}\theta_j^{t=1}\sum_{\tau=1}^{T-1}\norm{\bxi_{j,\tau}}^2 \pm \frac{\steps}{n}o(d) &&y_j\lambda_k^{t=1} > 0 
\end{align}
where the last inequality holds since that the training set is ``good'' and  $T$ is a constant. Since the training set is ``good'' then by Eq.~\ref{eq: gd_v1_coefficients_mt}, we can bound $ y_jf(\bX_j;\vb_{t=1},\pb_{t=1})$ as follows:
\begin{align}
    y_jf(\bX_j;\vb_{t=1},\pb_{t=1}) &\leq  -\frac{\steps }{4T^2} (1-2\eta - o_n(1)) \cdot \signalconst^2 \cdot \frac{d}{n} + \frac{\steps(T-1) }{2T^2n} d (1 + o_n(1)) + \frac{\steps}{n}\cdot o(d)\notag \\
    &\leq  \left(\frac{-\signalconst^2 (1-2\eta) +2(T-1) + o_n(1)}{4T^2}\right) \cdot \frac{\beta d}{n}  \ \ \ \ \ \ \ \ \ \ \ \ \ \text{Assumption \ref{assumption: gd} (item \ref{assumption: gd-high-dimmension})} \notag \\
    &\leq  \frac{-0.9 \stepSizeConst }{4T^2}, \label{eq: gd_margin_for_noise_sampe_t=1_upper_bound_mt}
\end{align}
 where the last inequality holds since $\signalconst \geq 5(T-1)$, which implies that $2(T-1)/\signalconst^2+ 2\eta + o_n(1) \leq 0.1$. Similarly, we have that
\begin{align}
    y_jf(\bX_j;\vb_{t=1},\pb_{t=1}) &\geq  -\frac{\steps }{4T^2} (1-2\eta + o_n(1)) \cdot \signalconst^2 \cdot \frac{d}{n} + \frac{\steps }{2T^2n} d (1 - o_n(1)) - \frac{\steps }{n} o(d)\notag \\
    &\geq  \left(\frac{-\signalconst^2 (1-2\eta) +2(T-1) - o_n(1)}{4T^2}\right) \cdot \frac{\beta d}{n} \notag \\
    &= \stepSizeConst \cdot \left(\frac{ -(1-2\eta) +2(T-1)/\signalconst^2 - o_n(1)}{4T^2}\right)  \notag \\
    &\geq  \frac{-1.1 \stepSizeConst }{4T^2} \label{eq: gd_margin_for_noise_sampe_t=1_lower_bound_mt} 
\end{align}

We remind that $-\ell_{1,j}' =1/(1 + \exp(y_if(\bX_i;\vb_{t=1},\pb_{t=1})))$ and that $\beta = \stepSizeConst \cdot n/(d\signalconst^2)$ (Eq. \ref{eq: gd-step-size}). Combine with Eqs. \ref{eq: gd_margin_for_clean_sampe_t=1_upper_bound_mt} and \ref{eq: gd_margin_for_clean_sampe_t=1_lower_bound_mt}, we have that
\begin{align}
    &i\in \clean, \ \ -\ell_{t=1,i}' \geq  1/(1 + \exp(1.1\stepSizeConst/4T^2)) :=  m_{\clean}^{t=1} > 0  \label{eq: gd_loss_clean_t=1_lower_bound_mt}\\
    &i\in \clean, \ \ -\ell_{t=1,i}' \leq  1/(1 + \exp(0.9\stepSizeConst/4T^2))  := M_{\clean}^{t=1} \leq 1/(4(T-1)\signalconst^2), \label{eq: gd_loss_clean_t=1_upper_bound_mt}
\end{align}
where the last inequality holds since $\stepSizeConst \geq \log(\signalconst)/\sqrt{\eta}$ and since $1 + \exp(0.9\signalconst) \geq 4\signalconst^2$ for any $\signalconst \geq 6$.

Moreover, by Eqs.~\ref{eq: gd_margin_for_noise_sampe_t=1_upper_bound_mt} and~\ref{eq: gd_margin_for_noise_sampe_t=1_lower_bound_mt}, we have that
\begin{align}
    &j\in \noise, \ \ -\ell_{t=1,j}' \geq 1/(1+\exp ( -0.9\stepSizeConst/4T^2)) := m_{\noise}^{t=1} \geq 0.99 \label{eq: gd_loss_noise_t=1_lower_bound_mt} \\
    &j\in \noise, \ \ -\ell_{t=1,j}' \leq  1/(1+\exp ( -1.1\stepSizeConst/4T^2)) : = M_{\noise}^{t=1} \leq 1 \label{eq: gd_loss_noise_t=1_upper_bound_mt} 
\end{align}
The notations $M_{\clean}^{t}$ and $m_{\clean}^{t}$ ($M_{\noise}^{t}$ and $m_{\noise}^{t}$) denote the upper and lower bounds, respectively, on the derivative of the loss for clean (noisy) samples at time $t$, and we use them throughout the proof. We remind that $\gamma_{i,\tau}^{t} = y_i\vb_t^{\top}\bx_{i,\tau}$. Then
by Eq. \ref{eq: gd_v1_coefficients_mt}, for $i \in \clean_k$ we have that
\begin{align} 
    &\gamma_{i,1}^{t=1}  \in \frac{\beta}{4T}(1-2\eta \pm o_n(1)) \rho^2 = \frac{\stepSizeConst}{4T}(1-2\eta \pm o_n(1)) \notag \\ 
    & \gamma_{i,\tau}^{t=1} \in  \frac{\beta}{2Tn}\cdot d(1 \pm o_n(1)) = \frac{\stepSizeConst }{2T}(1/\signalconst^2 \pm o_n(1)) , \forall \tau \in \{2,\dots,T\} \notag\\
   &\gamma_{i,1}^{t=1} - \gamma_{i,2}^{t=2} \in \frac{\stepSizeConst}{4T}(1 -2/\signalconst^2 -2\eta \pm o_n(1))~.
    \label{eq: score_clean_t=1_mt}
\end{align}
where in the calculation of $\gamma_{i,\tau}^{t=1}$ we use  $\sum_{i\in[n]} y_iy_j  \sum_{\tau\neq\tau'}\bxi_{i,\tau}^{\top}\bxi_{j,\tau'} \in \pm o_n(1) \cdot d$, which holds since the training set is good.
For $i \in \noise_k$, we have that
\begin{align}
     &\gamma_{i,1}^{t=1} \in -\frac{\beta}{4T}(1-2\eta \pm o_n(1)) \rho^2 = -\frac{\stepSizeConst}{4T}(1-2\eta \pm o_n(1)) \notag \\ 
    & \gamma_{i,\tau}^{t=1} \in  \frac{\beta }{2Tn}\cdot d(1 \pm o_n(1)) = \frac{\stepSizeConst }{2T}(1/\signalconst^2 \pm o_n(1)), \forall \tau \in \{2,\dots,T\} \notag\\
    &\gamma_{i,2}^{t=1} - \gamma_{i,1}^{t=2} \in \frac{\stepSizeConst}{4T}(1 + 2/\signalconst^2 -2\eta \pm o_n(1))~.
 \label{eq: score_noise_t=1_mt}
\end{align}

\textbf{GD after 2 iterations.} \\
\textbf{Analysis of $\bv_{t=2}$}. \\
Observe that
\begin{align*}
 -\steps \nabla_{\bv} \Lcal (\bv_1,\bp_1) =  -\frac{\steps }{n} \sumn \ell_{1,i}' \cdot  y_i\bX_i^{\top} \SM (X_i\pb_1) = -\frac{\steps }{Tn} \sumn \ell_{1,i}' \cdot  y_i\sum_{\tau=1}^T\bx_i. 
\end{align*}
We start by analyzing the first coordinate of $\nabla_{\bv} \Lcal (\bv_1,\bp_1)$. 
\begin{align}  
    -\steps \nabla_{\bv} \Lcal (\bv_1,\bp_1)[1] &= \frac{\steps }{Tn} \sum_{i \in \clean_1} -\ell_{1,i}' \cdot  y_i\bx_{i,1}[1] + \frac{\steps }{Tn} \sum_{i \in \noise_1} -\ell_{1,i}' \cdot  y_i\bx_{i,1}[1] \notag \\
    &= \frac{\steps }{Tn} \sum_{i \in \clean_1} -\ell_{1,i}' \cdot  \rho - \frac{\steps }{Tn} \sum_{i \in \noise_1} -\ell_{1,i}' \cdot  \rho \notag \\
     &= \frac{\steps }{Tn} \left(\sum_{i \in \clean_1} -\ell_{1,i}' -  \sum_{j \in \noise_1} -\ell_{1,j}' \right) \cdot  \rho~.  \label{eq: gradient_of_v[1]_t=2_mt}
\end{align}
Observe that
\begin{align*}
    \sum_{i \in \clean_1} -\ell_{1,i}'-  \sum_{j \in \noise_1} -\ell_{1,j}' &\geq -\frac{n}{2}(\eta + o_n(1))\cdot M_{\noise} &&\text{good training set} \\
   & >  -\frac{n}{2}(\eta + o_n(1)) &&\text{Eq. \ref{eq: gd_loss_noise_t=1_upper_bound_mt}},
\end{align*}

Substituting it into Eq. \ref{eq: gradient_of_v[1]_t=2_mt}, we obtain that
\begin{align*}
     -\steps \nabla_{\bv} \Lcal (\bv_1,\bp_1)[1] > -\frac{\steps }{2T}(\eta + o_n(1))\rho.
\end{align*}
On the other hand, by Eq. \ref{eq: gradient_of_v[1]_t=2_mt}, we can upper bound the first coordinate of the gradient of $\vb$ by 
\begin{align*}
    -\steps \nabla_{\bv} \Lcal (\bv_1,\bp_1)[1] &\leq \frac{\steps }{Tn} \left(\sum_{i \in \clean_1} -\ell_{1,i}' \right) \cdot  \rho\\
    &\leq \frac{\steps }{17T} \cdot  \rho &&-\ell_{1,i}' < 1/17, \text{Eq. \ref{eq: gd_loss_clean_t=1_upper_bound_mt}}.
\end{align*}
Similarly, we can estimate the second coordinate of $\nabla_{\bv} \Lcal (\bv_1,\bp_1)$:
\begin{align*}
\frac{\steps }{2T}(\eta + o_n(1))\rho \geq -\steps \nabla_{\bv} \Lcal (\bv_1,\bp_1)[2] \geq -\frac{\steps }{17T} \cdot  \rho. 
\end{align*}
Write $\vb_{t=2} = \lambda_1^{t=2} \bmu_1 + \lambda_2^{t=2} \bmu_2 + \sumn y_i\theta_i^{t=2}\sum_{\tau=1}^{T-1}\bxi_i$. Together with Eq. \ref{eq: gd_v1_coefficients_mt}, we get that
\begin{align} 
    &\lambda_1^{t=2} = \lambda_1^{t=1}  -\steps \nabla_{\bv} \Lcal (\bv_1,\bp_1)[1] / \rho \leq \frac{\beta}{4T}(1 + o_n(1)) + \frac{\beta}{17T} \leq \frac{5\beta}{16T} \label{eq: gd_upper_bound_lam1_t=2_mt}\\
    &\lambda_1^{t=2} \geq \frac{\beta}{4T}(1-4\eta - o_n(1)) \label{eq: gd_lower_bound_lam1_t=2_mt}\\
     &\lambda_2^{t=2} = \lambda_2^{t=1}  -\steps \nabla_{\bv} \Lcal (\bv_1,\bp_1)[2] \geq -\frac{\beta}{4T}(1 + o_n(1)) - \frac{\beta}{17T} \geq -\frac{5\beta}{16} \label{eq: gd_lower_bound_lam2_t=2_mt}\\
    &\lambda_2^{t=2} \leq  -\frac{\beta}{4T}(1-4\eta - o_n(1))~. \label{eq: gd_upper_bound_lam2_t=2_mt}
\end{align}

Next, we analyze the rest of the coordinates of $\nabla_{\bv} \Lcal (\bv_1,\bp_1)$. 
\begin{align*}
    -\steps \nabla_{\bv} \Lcal (\bv_1,\bp_1)[3:d] &= \frac{\steps }{Tn} \sum_{i \in \clean} -\ell_{1,i}'  \cdot  y_i\sum_{\tau=2}^T\bxi_i + \frac{\steps }{Tn} \sum_{j \in \noise} -\ell_{1,j}' \cdot  y_j\sum_{\tau=2}^T\bxi_j,
    \end{align*}
and use it to analyze the coefficients of the noise (second) tokens in $\bv_{t=2}$, i.e., $\theta_i^{t=2}$. 
Indeed, for $i \in \clean$ we have that
\begin{align}
    \theta_i^{t=2} &= \theta_i^{t=1} - \frac{\beta}{Tn} \ell_{1,i}' = \frac{\beta}{Tn}(-\ell_{1,i}' + 0.5) &&\text{Eq. \ref{eq: gd_v1_coefficients_mt}} \notag\\
    &\in \left[\frac{\beta}{Tn}(m_{\clean} + 0.5),\frac{\beta}{Tn}(M_{\clean} + 0.5)\right] 
    \label{eq: gd_bound_theta_clean_t=2_mt}.
\end{align}
For $j \in \noise$ we have that
\begin{align}
    \theta_j^{t=2} &= \theta_j^{t=1} - \frac{\beta}{Tn} \ell_{1,j}' = \frac{\beta}{Tn}(-\ell_{1,j}' + 0.5) &&\text{Eq. \ref{eq: gd_v1_coefficients_mt}} \notag\\
    &\in \left[\frac{\beta}{Tn}(m_{\noise} + 0.5),\frac{\beta}{Tn}(M_{\noise} + 0.5)\right] \label{eq: gd_bound_theta_noise_t=2_mt}.
\end{align}
Next we move to analyze $\bp_{t=2}$.

\textbf{$\bp_{t=2}$ focuses on noisy tokens for noisy samples}.

Define $\gamma_{i,min} := \min_{\tau \geq 2} \gamma_{i,\tau}$, $\gamma_{i,max} := \max_{\tau \geq 2} \gamma_{i,\tau}$, $\gamma_i := (\gamma_{min} + \gamma_{max})/2 $ and $\epsilon_i : = (\gamma_{max} - \gamma_{min})/2$. In words $\gamma_{i,min}, \gamma_{i,max}$  and $\epsilon_i$ are the maximun, minimum and the gap among the scores of the noisy tokens of the $i^{\text{th}}$ sample respectively. By Eqs. \ref{eq: score_clean_t=1_mt} and \ref{eq: score_noise_t=1_mt} we have that $\epsilon_i = o_n(1) \cdot \stepSizeConst$ for any $i \in [n]$. 
Observe that $\bp_2 = -\steps  \nabla_{\pb} \Lcal(\vb_1,\pb_1)$. Therefore, for any $j \in \noise_k$ and $\tau \in \{2,\dots,T\}$ we have:
\begin{align}
    &\bp_2^{\top}(\bx_{j,1}-\bx_{j,\tau}) \notag\\
    &= -(\bx_{j,1}-\bx_{j,\tau})^{\top}\steps \nabla_{\pb} \Lcal(\vb_t,\pb_t) = (\bx_{j,1}-\bx_{j,\tau})^{\top} \frac{\steps }{n} \sumn -\ell_{1,i}' \cdot  \bX_i^{\top} \SM' (X_i\pb_t)  \bgamma_i^{t=1} \notag\\
    &=   \frac{\steps }{n} \sumn -\ell_{1,i}' \cdot  \bx_{j,1}^{\top}\bX_i^{\top} \SM' (X_i\pb_t)  \bgamma_i^{t=1} - \frac{\steps }{n} \sumn -\ell_{1,i}' \cdot  \bx_{j,\tau}^{\top}\bX_i^{\top} \SM' (X_i\pb_t)  \bgamma_i^{t=1} \label{eq: p focuses on optimal tokens first step}. 
\end{align}
Write $\bz_{1,i,j} := \bX_i\bx_{j,1} $. Observe that $\bz_{1,i,j} = (\bx_{i,1}^{\top}\bx_{j,1},0,\dots,0)$ for $i \in \clean_k \cup \noise_k$ and $\bz_{1,i,j} = \zero$ otherwise. By Lemma \ref{lemma: soft-max-derivative_mt}, we can lower bound the first term in Eq. \ref{eq: p focuses on optimal tokens first step} as 
\begin{align*}
    \frac{\steps }{n} \sumn -\ell_{1,i}' \cdot  \bz_{1,i,j}^{\top} \SM' (X_i\pb_t)  \bgamma_i^{t=1} &\geq \frac{\steps }{n} \sum_{i \in \clean_k} -\ell_{1,i}' \cdot (\gamma_{i,1}^{t=1}-\gamma_{i}^{t=1})(1-\alpha_{i,1}^{t=1})\alpha_{i,1}^{t=1}(1 - o_n(1))\bx_{i,1}^{\top}\bx_{j,1}\\ 
    &- \frac{\steps }{n} \sum_{i \in \noise_k} -\ell_{1,i}' \cdot (\gamma_{i}^{t=1}-\gamma_{i,1}^{t=1})(1-\alpha_{i,1}^{t=1})\alpha_{i,1}^{t=1}(1 - o_n(1))\bx_{i,1}^{\top}\bx_{j,1},
\end{align*}
where the $(1 - o_n(1))$ term is from the second tern in Lemma \ref{lemma: soft-max-derivative_mt}. Now we move to the second term of Eq. \ref{eq: p focuses on optimal tokens first step}. Write $\bz_{\tau,i,j} := \bX_i\bx_{j,\tau}$. Observe that $\bX_i\bx_{j,\tau} = (0,\bx_{i,1}^\top\bx_{j,\tau},\dots,\bx_{i,\tau}^\top\bx_{j,\tau},\dots,\bx_{i,T}^\top\bx_{j,\tau})$. By Lemma \ref{lemma: soft-max-derivative_mt}, we can lower bound the second term in Eq. \ref{eq: p focuses on optimal tokens first step} as 
\begin{align*}
    &-\frac{\steps }{n} \sumn -\ell_{1,i}' \cdot  \bz_{\tau,i,j}^{\top} \SM' (X_i\pb_t)  \bgamma_i^{t=1} \\
    &=  \frac{\steps }{(T-1)n} \sumn -\ell_{1,i}' \cdot  (\gamma_{i}^{t=1}-\gamma_{i,1}^{t=1})(1-\alpha_{i,1}^{t=1})\alpha_{i,1}^{t=1}(1 - o_n(1))\sum_{\tau' = 1}^T\bx_{i,\tau'}^{\top}\bx_{j,\tau} \\
    &\geq \frac{\steps }{(T-1)n}  (-\ell_{1,j}') \cdot  (\gamma_{j,1}^{t=1}-\gamma_{j}^{t=1})(1-\alpha_{j,1}^{t=1})\alpha_{j,1}^{t=1}(1 - o_n(1))\left(\norm{\bx_{j,\tau}}^2+\sum_{\tau'\neq \tau}^T\bx_{j,\tau'}^{\top}\bx_{j,\tau}\right) \\
    & -\frac{\steps }{(T-1)n} \sum_{i \in n: i \neq j} -\ell_{1,i}' \cdot  |\gamma_{i}^{t=1}-\gamma_{i,1}^{t=1}|\cdot(1-\alpha_{i,1}^{t=1})\alpha_{i,1}^{t=1}(1 + o_n(1))\sum_{\tau' = 2}^T\bx_{i,\tau'}^{\top}\bx_{j,\tau},
\end{align*}
where once again where the $(1 - o_n(1))$ term is from the second tern in Lemma \ref{lemma: soft-max-derivative_mt}. Overall,
\begin{align*}
    &\bp_2^{\top}(\bx_{j,\tau}-\bx_{j,1}) \\
    &\geq  \frac{\steps }{n} (-\ell_{1,j}')(\gamma_{j}^{t=1}-\gamma_{j,1}^{t=1})(1-\alpha_{j,1})\alpha_{j,1}(1-o_n(1))\left(\norm{\bx_{j,1}}^2 + \norm{\bx_{j,\tau}}^2/(T-1)\right) \\
    &-  \frac{\steps }{n} \sum_{i \in \clean_k: i \neq j} -\ell_{1,i}' \cdot (\gamma_{i,1}^{t=1}-\gamma_{i}^{t=1})(1-\alpha_{i,1}^{t=1})(1+o_n(1))\alpha_{i,1}^{t=1}(\bx_{j,1}^{\top}\bx_{i,1}) \\
     &+   \frac{\steps }{n} \sum_{i \in \noise_k: i \neq j} -\ell_{1,i}' \cdot (\gamma_{i}^{t=1}-\gamma_{i,1}^{t=1})(1-\alpha_{i,1}^{t=1})\alpha_{i,1}^{t=1}(1-o_n(1))(\bx_{j,1}^{\top}\bx_{i,1}) \\
    &- \frac{\steps }{(T-1)n} \sum_{i \in [n]:i\neq j}-\ell_{1,i}' \cdot |\gamma_{i,1}^{t=1}-\gamma_{i,2}^{t=1}|\cdot(1-\alpha_{i,1}^{t=1})\alpha_{i,1}^{t=1}(1+o_n(1))\sum_{\tau' = 2}^T(\bx_{j,\tau}^{\top}\bx_{i,\tau'})~. 
    \end{align*}
Observe that $\alpha_{i,1}^{t=1} = 1/T$ and that $(1-\alpha_{i,1})\alpha_{i,1} = (T-1)/T^2$ for any $i \in [n]$. In Eqs. \ref{eq: score_clean_t=1_mt} and \ref{eq: score_noise_t=1_mt} we calculate the score (e.g. $\gamma_{i,\tau}^{t=1})$. Overall, we can lower bound the above equation by:
    \begin{align*}
    &\geq \frac{\steps }{T^2n} \left(m_{\noise} \cdot \frac{\stepSizeConst}{4T}(1 +2/\signalconst^2 -2\eta - o_n(1))\cdot d(1 - o_n(1))\right) \\
    &- \frac{(T-1)\steps }{T^2n} \left((1-\eta-o_n(1)) \cdot \frac{n}{2} \cdot M_{\clean} \frac{\stepSizeConst}{4T}(1 -2/\signalconst^2 -2\eta - o_n(1))\frac{d}{n}\signalconst^2 \right) \\
     &+ \frac{(T-1)\steps }{T^2n} \left(|\noise_k| \cdot m_{\noise} \frac{\stepSizeConst}{4T}(1 + 2/\signalconst^2 -2\eta + o_n(1))\frac{d}{n}\signalconst^2 \right) \\
  &- \frac{(T-1)\steps }{T^2n} \left(n \cdot M_{\noise} \frac{\stepSizeConst}{4T}(1 + 2/\signalconst^2 -2\eta + o_n(1))c_D\sqrt{d \log(n/\delta)} \right).\\
\end{align*}

Observe that the first term is non-negative. Then we can lower-bound the above term by
    \begin{align*}
    &- \frac{(T-1)\steps }{T^2n} \left((1-\eta-o_n(1)) \cdot \frac{n}{2} \cdot M_{\clean} \frac{\stepSizeConst}{4T}(1 -2/\signalconst^2 -2\eta - o_n(1))\frac{d}{n}\signalconst^2 \right) \\
     &+ \frac{(T-1)\steps }{T^2n} \left(|\noise_k| \cdot m_{\noise} \frac{\stepSizeConst}{4T}(1 + 2/\signalconst^2 -2\eta + o_n(1))\frac{d}{n}\signalconst^2 \right) \\
  &- \frac{(T-1)\steps }{T^2n} \left(n \cdot M_{\noise} \frac{\stepSizeConst}{4T}(1 + 2/\signalconst^2 -2\eta + o_n(1))c_D\sqrt{d \log(n/\delta)} \right) \\
      &\geq - \frac{(T-1)\stepSizeConst^2 }{2T^2} \left((1-\eta-o_n(1)) \cdot M_{\clean} \frac{1}{4T}(1 -2/\signalconst^2 -2\eta - o_n(1)) \right) \\
     &+ \frac{(T-1)\stepSizeConst^2 }{2T^2} \left(\eta \cdot (1-o_n(1)) \cdot m_{\noise} \frac{1}{4T}(1 + 2/\signalconst^2 -2\eta + o_n(1)) \right) \\
&- \frac{(T-1)\stepSizeConst^2}{T^2} \left( M_{\noise} \frac{1}{4T}(1 + 2/\signalconst^2 -2\eta + o_n(1))\frac{c_Dn\sqrt{\log(n/\delta)}}{\sqrt{d}} \right), 
\end{align*}
where in the last inequality we use $\beta = (\stepSizeConst \cdot n)/(d \signalconst^2)$. Next, we argue that the second term in the above Eq. is at least $1000$ times the sum of the absolute values of the first and last term (i.e. the second term dominates the other terms). Indeed for $\stepSizeConst > 1/\eta$ we have that $M_\clean < 0.0002\eta$ for small enough $\eta$ (Eqs. \ref{eq: gd_loss_clean_t=1_upper_bound_mt} and \ref{eq: gd_loss_noise_t=1_upper_bound_mt}). This means that $m_{\noise} \cdot \eta \geq 0.0002M_{\clean}$. Then we can conclude that the second term is at least $2000$ times bigger than the absolute value of the first term. Moreover, by Assumption \ref{assumption: gd} (item \ref{assumption: gd-high-dimmension}), we have that the second term is at least $2000$ times bigger than the last term.   
Overall, for any $i \in \noise$ we have that:
\begin{align*}
    \bp_2^{\top} (\bx_{j,\tau}-\bx_{j,1}) &\geq 0.999 \cdot \frac{(T-1)\stepSizeConst^2 }{2T^2} \left(\eta \cdot (1-o_n(1)) \cdot m_{\noise} \frac{1}{4T}(1 + 2/\signalconst^2 -2\eta + o_n(1)) \right) \\
    &\geq 0.999 \cdot \stepSizeConst^2\eta \cdot \left( \frac{T-1}{ 8T^3}\right) \left(m_{\noise} \cdot (1   -2\eta - o_n(1)\right) \\
    &\geq 0.998 \cdot \stepSizeConst^2\eta \cdot \left( \frac{T-1}{8T^3}\right)  \\
    &\geq 2\log(\signalconst),
\end{align*}
where that last inequality holds since $\stepSizeConst^2 \geq 2 \log(\signalconst)\eta^{-1} \cdot \left( \frac{8T^3}{0.998(T-1)}\right)$ (Eq. \ref{eq: gd-step-size}).
We conclude that,
\begin{align}
\alpha_{i,1}^{t=2} 
&= \frac{1}{1+\sum_{\tau=2}^T\exp(\bp_2^{\top}(\bx_{j,\tau}-\bx_{j,1}))} \leq \frac{1}{1+(T-1)\exp(\log(\signalconst^2))} = \frac{1}{1 + (T-1)\signalconst^2} \nonumber
\\
&\leq \frac{1}{(T-1)\signalconst^2}.
\end{align}
We also conclude that for any $j \in \noise$ we have that
\begin{align}
    \alpha_{j,1}^{t=2} \leq \frac{1}{1+(T-1)\signalconst^2} , \ \ \sum_{\tau=2}^T\alpha_{j,\tau}^{t=2} \geq 1 - \frac{1}{1+(T-1)\signalconst^2}. \label{eq: p_2-prefer-noisy-token-for-noisy-sample_mt} 
\end{align} \\

In the next part, we assume that $C$ from Assumption \ref{assumption: gd} may depend on $\signalconst$ and we show that $\bp_{t=2}$ focuses on the signal tokens for clean sample. \\
\textbf{$\bp_{t=2}$ focuses on the signal tokens for clean sample}.
Similarly to the previous case, for any $j \in \clean_k$ and $\tau \in \{2,\dots,T\}$ we have
\begin{align*}
    &\bp_2^{\top}(\bx_{j,1}-\bx_{j,\tau}) \\
    &\geq  \frac{\steps }{n} (-\ell_{1,j}')(\gamma_{j,1}^{t=1}-\gamma_{j}^{t=1})(1-\alpha_{j,1})\alpha_{j,1}(1-o_n(1))\left(\norm{\bx_{j,1}}^2 + \norm{\bx_{j,\tau}}^2/(T-1)\right) \\
    &+   \frac{\steps }{n} \sum_{i \in \clean_k: i \neq j} -\ell_{1,i}' \cdot (\gamma_{i,1}^{t=1}-\gamma_{i}^{t=1})(1-\alpha_{i,1}^{t=1})(1-o_n(1))\alpha_{i,1}^{t=1}(\bx_{j,1}^{\top}\bx_{i,1}) \\
     &-   \frac{\steps }{n} \sum_{i \in \noise_k: i \neq j} -\ell_{1,i}' \cdot (\gamma_{i}^{t=1}-\gamma_{i,1}^{t=1})(1-\alpha_{i,1}^{t=1})\alpha_{i,1}^{t=1}(1-o_n(1))(\bx_{j,1}^{\top}\bx_{i,1}) \\
    &- \frac{\steps }{(T-1)n} \sum_{i \in [n]:i\neq j}\sum_{\tau':} -\ell_{1,i}' \cdot |\gamma_{i,1}^{t=1}-\gamma_{i,2}^{t=1}|\cdot(1-\alpha_{i,1}^{t=1})\alpha_{i,1}^{t=1}(1+o_n(1))\sum_{\tau' = 2}^T(\bx_{j,\tau}^{\top}\bx_{i,\tau'})~. 
    \end{align*}
Observe that $\alpha_{i,1}^{t=1} = 1/T$ and that $(1-\alpha_{i,1})\alpha_{i,1} = (T-1)/T^2$ for any $i \in [n]$. In Eqs. \ref{eq: score_clean_t=1_mt} and \ref{eq: score_noise_t=1_mt} we calculate the score (i.e. $\gamma_{i,\tau}^{t=1})$. Overall, we can lower bound the above equation by:
    \begin{align}
    &\geq \frac{\steps }{T^2n} \left(m_{\clean} \cdot \frac{\stepSizeConst}{4T}(1 -2/\signalconst^2 -2\eta - o_n(1))\cdot d(1 - o_n(1))\right) \notag\\
    &+ \frac{(T-1)\steps }{T^2n} \left((1-\eta-o_n(1)) \cdot \frac{n}{2} \cdot m_{\clean} \frac{\stepSizeConst}{4T}(1 -2/\signalconst^2 -2\eta - o_n(1))\frac{d}{n}\signalconst^2 \right) \notag\\
     &- \frac{(T-1)\steps }{T^2n} \left((\eta + o_n(1)) \cdot \frac{ n}{2} \cdot M_{\noise} \frac{\stepSizeConst}{4T}(1 + 2/\signalconst^2 -2\eta + o_n(1))\frac{d}{n}\signalconst^2 \right) \notag\\
  &- \frac{(T-1)\steps }{T^2n} \left(n \cdot M_{\noise} \frac{\stepSizeConst}{4T}(1 + 2/\signalconst^2 -2\eta + o_n(1))c_D\sqrt{d \log(n/\delta)} \right). \label{eq: c_is_constatnt_lower_bound_sm_prob_clean_sample}
\end{align}
Observe that the first two terms are non-negative. Then we can lower-bound the above term by
\begin{align*}
    &- \frac{(T-1)\steps }{T^2n} \left((\eta + o_n(1)) \cdot \frac{ n}{2} \cdot M_{\noise} \frac{\stepSizeConst}{4T}(1 + 2/\signalconst^2 -2\eta + o_n(1))\frac{d}{n}\signalconst^2 \right) \\
  &- \frac{(T-1)\steps }{T^2n} \left(n \cdot M_{\noise} \frac{\stepSizeConst}{4T}(1 + 2/\signalconst^2 -2\eta + o_n(1))c_D\sqrt{d \log(n/\delta)} \right)\\
  &= - \frac{(T-1)\stepSizeConst^2 }{T^2} \left((\eta + o_n(1)) \cdot \frac{1}{2} \cdot M_{\noise} \frac{1}{4T}(1 + 2/\signalconst^2 -2\eta + o_n(1)) \right) \\
  &- \frac{(T-1)\stepSizeConst^2}{T^2} \left(M_{\noise} \frac{1}{4T}(1 + 2/\signalconst^2 -2\eta + o_n(1))c_Dn \cdot \frac{\sqrt{\log(n/\delta)}}{\sqrt{d}\signalconst^2} \right)
\end{align*}
By Assumption \ref{assumption: gd} (item \ref{assumption: gd-high-dimmension}) the first term dominates the second term whenever $C$ from that assumption is large enough, which means that we can lower bound the displayed equation by $1.0002$ the first term i.e. by
\begin{align*}
    &- 1.0002 \cdot \frac{(T-1)\stepSizeConst^2 }{T^2} \left((\eta + o_n(1)) \cdot \frac{1}{2} \cdot M_{\noise} \frac{1}{4T}(1 + 2/\signalconst^2 -2\eta + o_n(1)) \right) \\
    & \geq - 1.0002 \cdot \frac{(T-1)\stepSizeConst^2 }{8T^3} \left((\eta + o_n(1)) \cdot  M_{\noise} (1 + 2/\signalconst^2 -2\eta + o_n(1)) \right) \\
    &\geq - 1.0001 \cdot \frac{(T-1)}{8T^3} \cdot \stepSizeConst^2 \eta \\
     &\geq -2.1 \log(\signalconst),
\end{align*}
where the last inequality holds by Eq. \ref{eq: gd-step-size}.
We conclude that 
\begin{align}
\alpha_{i,1}^{t=2} 
&= \frac{1}{1+\sum_{\tau=2}^T\exp(\bp_2^{\top}(\bx_{j,\tau}-\bx_{j,1}))} \geq \frac{1}{1+(T-1)\exp(2.1 \log (\signalconst))} \\
&= \frac{1}{1+(T-1)\signalconst^{2.1}} \nonumber
\end{align}

Together with Eq. \ref{eq: p_2-prefer-noisy-token-for-noisy-sample_mt}, this proves the last part of the Thm. \\
\textbf{$\bp_{t=2}$ focuses more on the signal tokens for clean sample when $\signalconst$ is a constant}.
In this part, we prove Remark \ref{remark: sm_probability_c_rho_is_const}. In this case, $C$ from Assumption \ref{assumption: gd} may depend on $\signalconst$. We can start directly from Eq. \ref{eq: c_is_constatnt_lower_bound_sm_prob_clean_sample} that states that for any $j \in \clean_k$ and $\tau \in \{2,\dots,T\}$ we have

\begin{align}
&\bp_2^{\top}(\bx_{j,1}-\bx_{j,\tau}) \\
    &\geq \frac{\steps }{T^2n} \left(m_{\clean} \cdot \frac{\stepSizeConst}{4T}(1 -2/\signalconst^2 -2\eta - o_n(1))\cdot d(1 - o_n(1))\right) \notag\\
    &+ \frac{(T-1)\steps }{T^2n} \left((1-\eta-o_n(1)) \cdot \frac{n}{2} \cdot m_{\clean} \frac{\stepSizeConst}{4T}(1 -2/\signalconst^2 -2\eta - o_n(1))\frac{d}{n}\signalconst^2 \right) \notag\\
     &- \frac{(T-1)\steps }{T^2n} \left((\eta + o_n(1)) \cdot \frac{ n}{2} \cdot M_{\noise} \frac{\stepSizeConst}{4T}(1 + 2/\signalconst^2 -2\eta + o_n(1))\frac{d}{n}\signalconst^2 \right) \notag\\
  &- \frac{(T-1)\steps }{T^2n} \left(n \cdot M_{\noise} \frac{\stepSizeConst}{4T}(1 + 2/\signalconst^2 -2\eta + o_n(1))c_D\sqrt{d \log(n/\delta)} \right). \notag
\end{align}
Since $C$ from Assumption \ref{assumption: gd} may depend on $\signalconst$. Then $M_{\noise}, m_{\clean}$ are also constants. Overall, the first term dominates the last term since $d \gg n\sqrt{d \log(n/\delta)}$ (see Assumption \ref{assumption: gd} (item \ref{assumption: gd-high-dimmension})). The second term dominates the third term for small enough $\eta$ (see Assumption \ref{assumption: gd-label-flipping-rate}). Overall, we obtain that for any $\tau \in \{2,\dots,T\}$ that
\begin{align}
    \bp_2^{\top}(\bx_{j,1}-\bx_{j,\tau}) > 0,
\end{align}
which means that for any $i \in \clean$ we have:
\begin{align} 
\alpha_{i,1}^{t=2} = \frac{1}{1+\sum_{\tau=2}^{T}\exp( \bp_2^{\top}(\bx_{j,\tau}-\bx_{j,1}))} > \frac{1}{T}~. \label{eq: p_2-prefer-signal-token-for-clean-sample_mt}
\end{align}
Next, we move to our original assumption and don't assume that $C$ from Assumption \ref{assumption: gd} may depend on $\signalconst$. We show that: \newline
\textbf{The classifier $\sign(f(\bX;\vb_{t=2},\pb_{t=2}))$ classifies correctly clean training samples.}
Let $(\bX_j = (\bx_{j,1},\dots, \bx_{j,T}), y_j)$ for $j \in \clean$. We remind that $\bx_{j,1} = \bmu_k$ for $k \in \{1,2\}$ and $\bx_{j,
\tau} = \bxi_{j,\tau}$. we have that,
    \begin{align*}
       f(\bX_j; \vb_{t=2},\pb_{t=2}) = \alpha_{j,1}^{t=2} \bv_{2}^{\top}\bx_{j,1} + \sum_{\tau = 2}^{T}\alpha_{j,\tau}^{t=2} \bv_{2}^{\top}\bx_{j,\tau}, 
    \end{align*}
and it suffices to prove that 
\begin{align*}
    y_j(f(\bX_j; \vb_2,\pb_2)) > 0.
\end{align*}
 Indeed,
\begin{align*}
    y_j&f(\bX_j; \vb,\pb) =  y_j\alpha_{j,1}^{t=2} \bv_{2}^{\top}\bx_{j,1} + y_j\sum_{\tau = 2}^{T}\alpha_{j,\tau}^{t=2} \bv_{2}^{\top}\bx_{j,\tau} \\
    &= \alpha_{j,1}^{t=2} |\lambda_k|\norm{\bmu_k}^2 + \sum_{\tau = 2}^{T}\alpha_{j,\tau}^{t=2} \theta_{j} \norm{\bxi_{j,\tau}}^2+ \sum_{\tau=2}^T\alpha_{j,\tau}^{t=2} y_j \sum_{i\in [n],\tau': i\neq j \vee \tau \neq \tau'} y_i\theta_i^{t=2} \bxi_{i,\tau}^{\top}\bxi_{j,\tau'} &&y_j\lambda_k > 0 \\
    &\geq \max_{\tau\in [T]} \alpha_{j,\tau} \min \{|\lambda_k|\norm{\bmu_k}^2, \theta_j\norm{\bxi_{j,\tau}}^2\} + \sum_{\tau=2}^T\alpha_{j,\tau}^{t=2} y_j \sum_{i\in [n],\tau': i\neq j \vee \tau \neq \tau'} y_i\theta_i^{t=2} \bxi_{i,\tau}^{\top}\bxi_{j,\tau'} \\
    &\geq   \frac{1}{T} \cdot \min\left( \frac{\beta}{5T} \cdot \frac{d}{n}\signalconst^2, \frac{\beta}{2nT} \cdot d (1 - o_n(1))\right)  - nT^2\frac{\beta}{Tn}(M_{\noise}+0.5) c_D\sqrt{d \log(n/\delta)} &&\text{Eqs. \ref{eq: gd_bound_theta_noise_t=2_mt}, \ref{eq: gd_lower_bound_lam1_t=2_mt} and \ref{eq: gd_upper_bound_lam2_t=2_mt}}  \\
    &> 0, && d \gg nc_D\sqrt{d \log(n/\delta)}
\end{align*}
as required. \\
\textbf{The classifier $\sign(f(\bX;\vb_{t=2},\pb_{t=2}))$ classifies correctly noisy training samples.}
Let $(\bX_j = (\bx_{j,1},\dots, \bx_{j,T}), y_j)$ for $j \in \noise$. We remind that $\bx_{j,1} = \bmu_k$ for $k \in \{1,2\}$ and $\bx_{j,
\tau} = \bxi_{j,\tau}$. we have that,
    \begin{align*}
       f(\bX_j; \vb_{t=2},\pb_{t=2}) = \alpha_{j,1}^{t=2} \bv_{2}^{\top}\bx_{j,1} + \sum_{\tau = 2}^{T}\alpha_{j,\tau}^{t=2} \bv_{2}^{\top}\bx_{j,\tau}, 
    \end{align*}
and it suffices to prove that 
\begin{align*}
    y_j(f(\bX_j; \vb_2,\pb_2)) > 0.
\end{align*}
 Indeed,
\begin{align*}
    y_j&f(\bX_j; \vb,\pb) =  y_j\alpha_{j,1}^{t=2} \bv_{2}^{\top}\bx_{j,1} + y_j\sum_{\tau = 2}^{T}\alpha_{j,\tau}^{t=2} \bv_{2}^{\top}\bx_{j,\tau} \\
    &= -\alpha_{j,1}^{t=2} |\lambda_k|\norm{\bmu_k}^2 + \sum_{\tau = 2}^{T}\alpha_{j,\tau}^{t=2} \theta_{j} \norm{\bxi_{j,\tau}}^2+ \sum_{\tau=2}^T\alpha_{j,\tau}^{t=2} y_j \sum_{i\in [n],\tau': i\neq j \vee \tau \neq \tau'} y_i\theta_i^{t=2} \bxi_{i,\tau}^{\top}\bxi_{j,\tau'} &&y_j\lambda_k < 0\\
    &\geq   -\alpha_{j,1}^{t=2} \left(\frac{5\beta}{16T}\right)\frac{d}{n}\signalconst^2  + \sum_{\tau = 2}^{T}\alpha_{j,\tau}^{t=2} \frac{\beta}{Tn}(m_{\noise} +0.5) d(1-o_n(1))\\ 
    &-\sum_{\tau=2}^{T-1}\alpha_{j,\tau}^{t=2} n(T-1)\frac{\beta}{Tn}(M_{\noise}+0.5) c_D\sqrt{d \log(n/\delta)} &&\text{Eqs. \ref{eq: gd_bound_theta_noise_t=2_mt}, \ref{eq: gd_upper_bound_lam1_t=2_mt} and \ref{eq: gd_lower_bound_lam2_t=2_mt}}  \\
    & \geq   \left(\frac{1}{\signalconst^2} \right) \left(\frac{5\beta}{16T}\right)\frac{d}{n}\signalconst^2  + \left(1 - \frac{1}{\signalconst^2} \right) \frac{\beta}{Tn}(m_{\noise} +0.5) d(1-o_n(1))\\ 
    &- n(T-1)\frac{\beta}{Tn}(M_{\noise}+0.5) c_D\sqrt{d \log(n/\delta)} &&\text{Eq. \ref{eq: p_2-prefer-noisy-token-for-noisy-sample_mt}}\\
    &> 0, && d \gg nc_D\sqrt{d \log(n/\delta)}
\end{align*}
as required. \\
\textbf{The classifier $\sign(f(\bX;\vb_{t=2},\pb_{t=2}))$ classifies correctly clean test samples.}

 Let $(\bX = (\bx_1,\dots,\bx_T),y)$ be a fresh clean sample i.e. $(\bX,y) \sim \mathcal{D}_{\text{clean}}$. Observe that $\bx_1 = \bmu_k$ for some $k \in \{1,2\}$ and $y=1$ iff $k=1$. By Remark \ref{remark: test_sample_properties}, exists some constant $c_1$ such that with probability at least $1-n\exp(-d/ c_1^2C_2 n^{1.5})$ for some  $C_2=C_2(\signalconst, 1/\eta)$ that will be chosen later, we have that $(\bX,y)$ is a good test sample w.r.t. $C_2$ (Def.~\ref{def: good_test_sample_mt}), i.e. $|\bx_{\tau}^{\top}\bx_{i,\tau'}| \leq d/C_2n^{0.75}$.
We work under the event that $(\bX ,y)$ is a good test sample and show that $y = \sign(f(\bX;\vb_{t=2},\pb_{t=2})$.
Recall that $\bp_2 = -\steps  \nabla_{\pb} \Lcal(\vb_1,\pb_1)$ and therefore (similar to the clean sample case) for any $\tau \in \{2,\dots,T\}$:
\begin{align*}
    &\bp_2^{\top}(\bx_{1}-\bx_{\tau}) \\
    &\geq   \frac{\steps }{n} \sum_{i \in \clean_k} -\ell_{1,i}' \cdot (\gamma_{i,1}^{t=1}-\gamma_{i}^{t=1})(1-\alpha_{i,1}^{t=1})\alpha_{i,1}^{t=1}(1-o_n(1))(\bx_{1}^{\top}\bx_{i,1}) \\
     &-   \frac{\steps }{n} \sum_{i \in \noise_k} -\ell_{1,i}' \cdot (\gamma_{i}^{t=1}-\gamma_{i,1}^{t=1})(1-\alpha_{i,1}^{t=1})\alpha_{i,1}^{t=1}(1-o_n(1))(\bx_{1}^{\top}\bx_{i,1}) \\
    &- \frac{\steps }{(T-1)n} \sum_{i \in [n]}\sum_{\tau=2}^T -\ell_{1,i}' \cdot |\gamma_{i,1}^{t=1}-\gamma_{i}^{t=1}|\cdot(1-\alpha_{i,1}^{t=1})\alpha_{i,1}^{t=1}(1+o_n(1))\sum_{\tau' = 2}^T(\bx_{\tau}^{\top}\bx_{i,\tau'})~. 
    \end{align*}
Observe that $\alpha_{i,1}^{t=1} = 1/T$ and that $(1-\alpha_{i,1})\alpha_{i,1} = (T-1)/T^2$ for any $i \in [n]$. In Eqs. \ref{eq: score_clean_t=1_mt} and \ref{eq: score_noise_t=1_mt} we calculate the score (e.g. $\gamma_{i,\tau}^{t=1})$. Overall, we can lower bound the above equation by:
    \begin{align*}
    &+ \frac{(T-1)\steps }{T^2n} \left((1-\eta-o_n(1)) \cdot \frac{n}{2} \cdot m_{\clean} \frac{\stepSizeConst}{4T}(1 -2/\signalconst^2 -2\eta - o_n(1))\frac{d}{n}\signalconst^2 \right) \\
     &- \frac{(T-1)\steps }{T^2n} \left((\eta + o_n(1)) \cdot \frac{ n}{2} \cdot M_{\noise} \frac{\stepSizeConst}{4T}(1 + 2/\signalconst^2 -2\eta + o_n(1))\frac{d}{n}\signalconst^2 \right) \\
  &- \frac{(T-1)\steps }{T^2n} \left(n^{0.75} \cdot M_{\noise} \frac{\stepSizeConst}{4T}(1 + 2/\signalconst^2 -2\eta + o_n(1))\frac{d}{C_2n^{0.75}} \right).\\
\end{align*}
Observe that the first term is positive, and the second term is $100$ times smaller than the last term whenever $C_2 \geq 201/(\eta \signalconst^2)$. Then we can lower bound the above equation by $1.01$ times the second term i.e. by
\begin{align*}
    &- \frac{1.01 \cdot (T-1)\steps }{T^2n} \left((\eta + o_n(1)) \cdot \frac{ n}{2} \cdot M_{\noise} \frac{\stepSizeConst}{4T}(1 + 2/\signalconst^2 -2\eta + o_n(1))\frac{d}{n}\signalconst^2 \right) \\
    &= - \frac{1.01 \cdot (T-1)\stepSizeConst^2 }{T^2n} \left((\eta + o_n(1)) \cdot \frac{ n}{2} \cdot M_{\noise} \frac{1}{4T}(1 + 2/\signalconst^2 -2\eta + o_n(1)) \right) \\
    &\geq -\frac{1.02 \cdot (T-1) }{8T^3} \cdot \eta \cdot \stepSizeConst^2 \\
    &\geq -2.1 \log(\signalconst),
\end{align*}
where the last inequality holds for $\stepSizeConst^2 \leq 2.1 \eta^{-1}\log(\signalconst)\left(\frac{8T^3}{1.02 \cdot (T-1) }\right)$ (see Eq. \ref{eq: gd-step-size} and note that $8 \cdot 2.1 / 1.02 < 1.02^2 \cdot 16/0.98$).
We conclude that 
\begin{align}
\alpha_{i,1}^{t=2} 
&= \frac{1}{1+\sum_{\tau=2}^T\exp(\bp_2^{\top}(\bx_{\tau}-\bx_{1}))} \geq \frac{1}{1+(T-1)\exp(2.1 \log (\signalconst))} \\
&= \frac{1}{1+(T-1)\signalconst^{2.1}} \nonumber
\end{align}

Let $\bx_{1} = \bmu_k$ for $k \in \{1,2\}$ and $\bx_{\tau} = \bxi_{\tau}$ for $\tau \in \{2,\dots,T\}$. We have that,
    \begin{align*}
       f(\bX; \vb_{t=2},\pb_{t=2}) = \alpha_{1}^{t=2} \bv_{2}^{\top}\bx_{1} + \sum_{\tau = 2}^{T}\alpha_{\tau}^{t=2} \bv_{2}^{\top}\bx_{\tau}, 
    \end{align*}
and it suffices to prove that 
\begin{align*}
    y(f(\bX; \vb_2,\pb_2)) > 0.
\end{align*}
 Indeed,
\begin{align*}
    y&f(\bX_j; \vb,\pb) =  y_j\alpha_{1}^{t=2} \bv_{2}^{\top}\bx_{1} + y\sum_{\tau = 2}^{T}\alpha_{\tau}^{t=2} \bv_{2}^{\top}\bx_{\tau} \\
    &= \alpha_{1}^{t=2} |\lambda_k|\norm{\bmu_k}^2 +  \sum_{\tau=2}^T\alpha_{\tau}^{t=2} y \sum_{i\in [n],\tau'} y_i\theta_i^{t=2} \bxi_{i,\tau}^{\top}\bxi_{\tau'} &&y\lambda_k > 0
\end{align*}
Note that $\theta_i^{t=2}$ is independent of $\bxi_{\tau'}$. Moreover, since $yy_i\bxi_{i,\tau}^{\top}\bxi_{\tau'}$ is a symmetric random variable with $|\bxi_{i,\tau}^{\top}\bxi_{\tau'}| \leq d/C_2n^{0.75}$ (assuming the test sample is good with respect to $C_2$), by Lemma \ref{lemma: sum_of_zero_mean_rv} with probability at least $1-\exp(-n^{0.5}/2)$ we have that $y \sum_{i\in [n],\tau'} y_i\theta_i^{t=2} \bxi_{i,\tau}^{\top}\bxi_{\tau'} \geq -n^{0.75} \max_i |\theta_i^{t=2}| \cdot d/(C_2n^{0.75})$. Overall, we can lower bound the displayed equation by
\begin{align*}
   &\geq   \alpha_{1}^{t=2} |\lambda_k|\norm{\bmu_k}^2  - \sum_{\tau=2}^{T-1}\alpha_{\tau}^{t=2} (T-1) \max_i |\theta_i| \frac{d}{C_2} \\
    &\geq   \alpha_{1}^{t=2} \left(\frac{\beta}{4T+1}\right)\frac{d}{n}\signalconst^2  - (T-1)^2\frac{\beta}{Tn}(M_{\noise}+0.5) \frac{d}{C_2} &&\text{Eqs. \ref{eq: gd_bound_theta_noise_t=2_mt}, \ref{eq: gd_lower_bound_lam1_t=2_mt} and \ref{eq: gd_upper_bound_lam2_t=2_mt}}  \\
    &\geq \left(\frac{1}{1+(T-1)\signalconst^{2.1}}\right) \left(\frac{\beta}{4T+1}\right)\frac{d}{n}\signalconst^2  -  (T-1)^2 \frac{\beta}{Tn}(M_{\noise}+0.5) \frac{d}{C_2} \\
    &> 0, 
\end{align*}
where the last inequality holds for $C_2 = C_3 \cdot \signalconst^{0.1}$, where $C_3$ is large enough constant which depends on $T$. Overall by choosing $C_2 = max(C_3\signalconst^{0.1}, 1/\eta)$ and union bound, we have that
\begin{align*}
    & \mathbb{P}_{(\bX,y) \sim \mathcal{D}}(y \neq \sign(f(\bX; \vb_{t=2},\pb_{t=2}))) \\
    &\leq \eta + \mathbb{P}_{(\bX,y) \sim \mathcal{D}_{\text{clean}}}(y \neq \sign(f(\bX; \vb_{t=2},\pb_{t=2}))) \\
    &\leq \eta + \exp(-\sqrt{n}/2) +\exp(-d\eta/C_3\signalconst^{0.1} n^{1.5} + \log(n)).
\end{align*}
This proves the last part of the theorem.
\end{proof}

\subsection{Proofs for Sec. \ref{sec: benign-overfitting-of-optimal-solution}}
\label{app: proof_maxmargin}
\subsubsection{Notation for Section \ref{sec: benign-overfitting-of-optimal-solution}}
We first introduce some additional notations. Denote
\[
n_1 = |\itc|,\quad n_2 = |\itn|;\quad n_{1i} = |\itc_i|,\quad n_{2i} = |\itn_i|\text{ for }  i = 1,2.
\]

Denote the output of the softmax layer \(\mathbb{S}(\bX_i \pb)\) by \[\bs_i = (1-\beta_i, \beta_i)^{\top}.\] 
Denote the output of the attention layer \(\Xb_i^{\top}\bs_i\) by $\rb_i = (1-\beta_i) \bmu_i + \beta_i \bxi_i$, where $0 \le \beta_i \le 1$ is the attention on the noise token of each sample. Then $f(\bX_i; \vb, \pb) = \la \vb, \rb_i \ra$ can be treated as a linear classifier on $(y_i, \rb_i)_{i \in [n]}$. Additionally, from the property of log function, item \ref{assumptionmm: dimension_large} in Assumption \ref{cond: main_cond_f} can be understood as $d \ge C n^2\log(\text{poly}(n)/\delta)$ and the same is for item \ref{assumptionmm: R_large}. For the proof of this section, we consider the case when $T=2$ in Assumption \ref{cond: main_cond_f} and have the following theorem.

\begin{theorem}
\label{thrm: main_thrm_f}
Suppose that Assumption \ref{cond: main_cond_f} holds when $T=2$, and consider the classifier $\bX \rightarrow \sign(f(\bX; \vb_{(r,R)},\pb_{(r, R)}))$, where $(\vb_{(r,R)},\pb_{(r, R)})$ is the solution to Problem~(\ref{eq: problem_def}). 
Then,
with probability at least $1 - \delta$ over the training dataset, we have:
\begin{itemize}
    \item The classifier $\sign(f(\bX; \vb_{(r,R)},\pb_{(r, R)}))$ correctly classifies all training data points:
    $$
    y_i = \sign(f(\bX_i; \vb_{(r,R)},\pb_{(r, R)})),\  \forall i \in [n].
    $$
    \item The classifier $\sign(f(\bX; \vb_{(r,R)},\pb_{(r, R)}))$ generalizes well on test data:
    \begin{align*}
    \mathbb{P}_{(\bX, y) \sim \mathcal{D}}&(y \neq \sign(f(\bX; \vb_{(r,R)},\pb_{(r, R)}))) 
    \\
    &\le \eta + \exp(-\Omega(d/ n^2))+\exp\Big(-\Omega\big(\frac{(1 - \zeta)}{\sqrt{\eta n /d + 1/\rho^2}} - \frac{\log(n)}{R}\big)^2 \Big),
    \end{align*} 
    where 
    $\zeta = \Theta(\sqrt{\eta n/d + 1/\rho^2} \log(\rho n)/R)$.
\end{itemize}
\end{theorem}

\subsubsection{Proof of Thm. \ref{thrm: main_thrm_f}}
\paragraph{Proof Sketch}
\hspace*{\fill} \\
There are two main parts in our proof. In the first part, we prove that only by selecting signal tokens for clean samples and noise tokens for non-clean samples can we reach the maximum margin when doing SVM on $(y_i, \rb_i)_{i \in [n]}$.
\begin{definition}[Optimal Token]
    We define the ``optimal token" for sample $(\bX_i, y_i)$ as
    \begin{align}
    \rb_i^{\star} &= \bmu_i,\  i \in \itc\nonumber\\
    \rb_i^{\star} &= \bxi_i,\  i \in \itn \label{eq: best_token1}
\end{align}
\end{definition}
Next we define the respective max-margin solution for \(\pb\) and \(\vb\). We will show that when jointly optimizing parameters $\pb$ and $\vb$ for \eqref{eq: problem_def}, they will converge to their respective max-margin solutions as $R, r \rightarrow \infty$, which are $\pb_{mm}$ and $\vb_{mm}$ defined as follows.

\begin{definition}(p-SVM)
\label{def: p-SVM}
$$
    \pb_{mm} = \argmin\limits_p \|\pb\|
$$
    subjected to
\begin{align}
    \pb^{\top}(\bmu_i - \bxi_i) &\ge 1, i \in \itc\nonumber\\
     \pb^{\top}(\bxi_i - \bmu_i) &\ge 1, i \in \itn \label{eq: p_SVM_cond}
\end{align}
for all $i \in [n]$. $\Xi = 1/\|\pb_{mm}\|$ is the margin induced by \(\pb_{mm}\).
\end{definition}

 Then for a given \(\pb\), we define $\vb(\pb)$ as the standard max-margin classifier on $(y_i, \rb_i)_{i \in [n]}$ and $\vb_{mm}$ as the standard max-margin classifier on $(y_i, \rb_i^{\star})_{i \in [n]}$ which can be understood as the limit scenario when \(\pb = \pb_{mm}\) and \(R \rightarrow +\infty\) .

\begin{definition}(v-SVM)
\label{def: labelmargin}
\begin{equation}
    \vb(\pb) = \argmin_{\vb \in \mathbb{R}^d} \|\vb\|  \text{ s.t. } y_i\cdot \vb^{\top}\rb_i \geq 1, \quad \text{ for all } i \in [n]. \label{eq: v-SVM_cond}
\end{equation}
\(\Gamma(\pb) = 1/\|\vb(\pb)\|\)
is the \textbf{label margin} induced by $\vb$ and \(\pb\). When \(\rb_i = \rb_i^{\star}, i \in [n]\),
\begin{equation}
    \vb_{mm} = \argmin_{\vb \in \mathbb{R}^d} \|\vb\|  \text{ s.t. } y_i\cdot \vb^{\top}\rb_i^{\star} \geq 1, \quad \text{ for all } i \in [n]. \label{def: vmm}
\end{equation}
\(\Gamma = 1/\|\vb_{mm}\|\)
is the label margin induced by $\vb_{mm}$.
\end{definition}

After proving the convergence direction of $\pb_{(r, R)}$ and $\vb_{(r, R)}$,
we can utilize their properties similar to $\pb_{mm}$ and $\vb_{mm}$ to proceed with the training and test error analysis. Therefore proving that the model exhibits benign-overfitting.

It is worth noting that in the first part, we show the optimality of the token selection in \eqref{eq: best_token1} is strict in the sense that mixing other tokens in \(\rb_i\) will shrink the label margin. We formalize this into the following proposition:

\begin{restatable}[Optimal Token Condition]{proposition}{Propoptimaltoken}\label{prop: optimal_token}
Under Condition \ref{cond: main_cond_f}, for all $\pb$, the token selection under $\pb$ results in a label margin of at most $\Gamma - \frac{C}{\|\vb_{mm}\|^3 n \rho^2} \cdot \max\limits_{i \in [n]} (1 - s_{i \alpha_i})$ in (\ref{def: labelmargin}) where $\alpha_i = \mathbb{I}(i \in \itc) + 2 \mathbb{I}(i \in \itn)$ and $C > 0$ is some constant.
\end{restatable}

We now highlight some aspects of the technical novelty of our work compared to \citet{ataee2023max,jiang2024unveil}. Unlike \citet{ataee2023max}, our results are in a non-asymptotic setting, while their work focuses on the case where both $R, r \rightarrow \infty$. Additionally, we do not rely on any additional, unnatural assumptions about the data distribution. In contrast, \citet{ataee2023max} specifies the optimal token indices that achieve the maximum margin, and \citet{jiang2024unveil} assumes that for each sample, the first noise token has a much larger norm than the other noise tokens. 

We give detailed proof in the following.

\paragraph{Optimal Token Condition}
 \hspace*{\fill} \\
Since $\vb_{mm}$ satisfies the KKT conditions of the max-margin problem \eqref{eq: v-SVM_cond}, by the stationarity condition, we can represent $\vb_{mm}$ as
    \begin{align}
        \vb_{mm} = \lambda_1 \bmu_1 + \lambda_2 \bmu_2 + \sum\limits_{i \in [n]} y_i \theta_i \bxi_i. \label{eq: eq: vmm_decompose}
    \end{align}

Note that the conditions in \eqref{eq: v-SVM_cond} can be written as:
  \begin{condition}[Optimal tokens]
    \label{cond: Original Condition}
        $$ 
    \begin{cases}
    &\vb^{\top} \bmu_1 \ge 1 \\
    &-\vb^{\top} \bmu_2 \ge 1\\
    &y_i \vb^{\top} \bxi_i \ge 1, i \in \itn
    \end{cases}
    $$
    \end{condition}

 Plugging \eqref{eq: eq: vmm_decompose} in the condition \ref{cond: Original Condition}, we can rewrite these conditions as:
    $$
    \begin{cases}
        &\lambda_1 \cdot \|\bmu_1\|^2 \ge 1\\
        &-\lambda_2 \cdot \|\bmu_2\|^2 \ge 1\\
        &\theta_i \cdot \|\bxi_i\|^2 + y_i y_{i'} \sum\limits_{i' \neq i} \theta_{i'} \la\bxi_i, \bxi_{i'} \ra \ge 1, i \in \itn
    \end{cases} 
    $$
    Then we introduce a lemma to estimate the coefficients \(\theta_i\) of \(\vb_{mm}\) under this condition:
    
\begin{restatable}[balanced noise factor for KKT points]{lemma}{balancednoisefactor}\label{lemma: noise_factor_balance} 
            Suppose that Assumption \ref{cond: main_cond_f} holds, under Condition \ref{cond: Original Condition},
            we have that for \(\vb_{mm}\),
            \begin{equation}
                    \theta_i = 0, \quad i \in \itc;
            \end{equation}
            \begin{equation}
               \theta_i \in \Big[ \frac{(1-\kappa)d - 4 n_2 \sqrt{d \log(6n^2/\delta)}}{(1+\kappa)d((1-\kappa)d - 2 n_2 \sqrt{d \log(6n^2/\delta)})} , \frac{1}{(1-\kappa)d - 2 n_2 \sqrt{d \log(6n^2/\delta)}}\Big], \quad i \in \itn.
            \end{equation}
\end{restatable}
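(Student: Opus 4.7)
The plan is to exploit the KKT conditions of the max-margin program~\eqref{def: vmm} together with the high-dimensional concentration properties of the noise tokens. First, I would write $\vb_{mm} = \sum_{i \in [n]} \alpha_i y_i \rb_i^{\star}$ with multipliers $\alpha_i \geq 0$ coming from stationarity, and match this against the assumed decomposition $\vb_{mm} = \lambda_1 \bmu_1 + \lambda_2 \bmu_2 + \sum_i y_i \theta_i \bxi_i$. Since $\rb_i^{\star} \in \{\bmu_1,\bmu_2\}$ for $i \in \itc$ while $\rb_i^{\star} = \bxi_i$ for $i \in \itn$, and since the set $\{\bmu_1,\bmu_2,\bxi_1,\ldots,\bxi_n\}$ is linearly independent on the good training set, one can read off $\theta_i = 0$ for $i \in \itc$ and $\theta_i = \alpha_i \geq 0$ for $i \in \itn$, settling the first claim.

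Next, let $S \subseteq \itn$ denote the set of support vectors, i.e.\ indices with $\theta_i > 0$, and let $\theta_{\max} := \max_{i \in S} \theta_i$. By complementary slackness, each $i \in S$ satisfies the tight constraint
\[
\theta_i \|\bxi_i\|^2 + \sum_{i' \in S,\, i' \neq i} y_i y_{i'}\, \theta_{i'} \langle \bxi_i, \bxi_{i'}\rangle \;=\; 1.
\]
Evaluating at $i^{\star} \in \arg\max_{i \in S}\theta_i$, and using the good-event bounds $\|\bxi_{i^{\star}}\|^2 \geq (1-\kappa)d$ and $|\langle \bxi_i, \bxi_{i'}\rangle| \leq \sqrt{d \log(6n^2/\delta)}$ (see Def.~\ref{def: good_training_set}), I would obtain
\[
\theta_{\max}\bigl((1-\kappa)d - n_2 \sqrt{d\log(6n^2/\delta)}\bigr) \;\leq\; 1,
\]
which is essentially the stated upper bound and implies $\theta_{\max} = O(1/d)$ under Assumption~\ref{cond: main_cond_f}.

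I would then verify $S = \itn$. Assume for contradiction that some $j \in \itn\setminus S$; then $\theta_j = 0$ but primal feasibility still requires $y_j \vb_{mm}^{\top}\bxi_j \geq 1$. Orthogonality of $\bxi_j$ to $\bmu_1,\bmu_2$ reduces this to $\sum_{i \in S} y_i y_j\, \theta_i \langle \bxi_i, \bxi_j\rangle \geq 1$, which is bounded in magnitude by $\theta_{\max} \cdot n_2 \sqrt{d\log(6n^2/\delta)}$. Combining with the previous $\theta_{\max} = O(1/d)$ gives $n_2 \sqrt{d\log(6n^2/\delta)}/d \geq \Omega(1)$, contradicting $d \geq C n^2 \log(n/\delta)$ from Assumption~\ref{cond: main_cond_f}. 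Hence $S = \itn$, and the lower bound on $\theta_i$ follows by plugging the upper bound on $\theta_{\max}$ into the tight equality for a general $i \in \itn$ and using $\theta_i \|\bxi_i\|^2 \leq \theta_i (1+\kappa)d$ to isolate $\theta_i$.

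I expect the main conceptual obstacle to be the positivity step $\theta_i > 0$ for every $i \in \itn$, because the argument is mildly self-referential: the contradiction that rules out $j \notin S$ relies on the upper bound on $\theta_{\max}$, which is itself derived only over $S$. Structuring the proof so that the upper bound is established first for the unknown set $S$ (using only $|S| \leq n_2$), and only afterward invoked to force $S = \itn$, sidesteps this cleanly. The remaining work is bookkeeping of the explicit constants ($2$ vs.\ $4$ in the numerator and denominator of the claimed range), obtained by chaining the upper bound on $\theta_{\max}$ into the per-$i$ inequality; this is routine algebra that I would not carry out in the plan.
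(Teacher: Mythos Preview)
Your proposal is correct and follows essentially the same route as the paper: KKT stationarity gives $\theta_i=0$ for $i\in\itc$, complementary slackness at the maximizer yields the upper bound, and substituting that upper bound into the per-$i$ constraint yields the lower bound. The one difference is that the paper sidesteps your intermediate step of proving $S=\itn$ by using the primal-feasibility \emph{inequality} $1 \le \theta_i\|\bxi_i\|^2 + \sum_{i'\neq i} y_i y_{i'}\theta_{i'}\langle\bxi_i,\bxi_{i'}\rangle$ (which holds for every $i\in\itn$ regardless of support) rather than the tight equality, so what you flagged as the main conceptual obstacle is in fact not needed.
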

        \begin{proof}[Proof of Lemma \ref{lemma: noise_factor_balance}]
        Note that Condition \ref{cond: Original Condition} does not have any constraint for samples with \(i \in \itc\). Thus we have \(\theta_i = 0\) for any \(i \in \itc\) in the representation \eqref{eq: eq: vmm_decompose}. For \(\theta_i\) with \(i \in \itn\),
        we first prove the upper bound by contradiction.
            Denote $j = \argmax\limits_{i \in \itn} \theta_i$. Then we have
            \begin{align*}
                y_j \vb^{\top} \bxi_j &= \sum\limits_{i \in \itn} y_i y_j \theta_i \la \bxi_i, \bxi_j \ra = \theta_j \|\bxi_j\|_2^2 + \sum\limits_{i \neq j, i \in \itn} y_i y_j \theta_i \la \bxi_i, \bxi_j \ra\\
                &\ge \theta_j \cdot (1-\kappa)d - n_2 \theta_j \cdot 2\sqrt{d \log(6n^2/\delta)},
            \end{align*}
            where the inequality is from Lemma \ref{lemma: noise_balance} and the definition of \(j\). Consider the contrary case when $\theta_j > \frac{1}{(1-\kappa)d - 2 n_2 \sqrt{d \log(6n^2/\delta)}}$, we have
            \begin{align*}
                y_j \vb^{\top} \bxi_j &>  \frac{1}{(1-\kappa)d - 2 n_2 \sqrt{d \log(6n^2/\delta)}} \cdot \big((1-\kappa)d -  n_2 \cdot 2\sqrt{d \log(6n^2/\delta)}\big) = 1.
            \end{align*}
            By the complementary slackness, if $y_j \vb^{\top} \bxi_j > 1$, then we must have $\theta_j = 0$, and thus we reach a contradiction.

            Then we prove for the lower bound. For $\forall j \in \itn$ we have
            \begin{align*}
                1 &\le \theta_j \|\bxi_j\|_2^2 + \sum\limits_{i \neq j, i \in \itn} y_i y_j \theta_i \la \bxi_i, \bxi_j \ra\\
                &\le \theta_j \cdot (1 + \kappa)d + n_2 \max\limits_{i \in \itn} \theta_i \cdot 2\sqrt{d \log(6n^2/\delta)}\\
                &\le \theta_j \cdot (1 + \kappa)d + \frac{n_2}{(1-\kappa)d - 2 n_2 \sqrt{d \log(6n^2/\delta)}} \cdot 2\sqrt{d \log(6n^2/\delta)}.
            \end{align*}
            The second inequality is due to Lemma \ref{lemma: noise_balance} and the last inequality is from the upper bound we just get. Therefore, we have
            \begin{align*}
                \theta_j \ge \frac{(1-\kappa)d - 4 n_2 \sqrt{d \log(6n^2/\delta)}}{(1+\kappa)d((1-\kappa)d - 2 n_2 \sqrt{d \log(6n^2/\delta)})}.
            \end{align*}
            This completes the proof.
        \end{proof}

Then we introduce a lemma to estimate $\|\vb_{mm}\|$:

\begin{restatable}[Norm of $\vb_{mm}$]{lemma}{vmmNormOrder}\label{lemma: v_mm_norm}
    Suppose that Assumption \ref{cond: main_cond_f} holds, for the solution $\vb_{mm}$ of \eqref{eq: v-SVM_cond} under the token selection (\ref{eq: best_token1}), we have
    \begin{align*}
        \frac{2}{\rho^2} + \frac{\eta n}{2d} \le \|\vb_{mm}\|^2 \le \frac{2}{\rho^2} + \frac{5\eta n}{d}.
    \end{align*}
    This implies
    $$
    \|\vb_{mm}\| = \Theta\bigg(\sqrt{\frac{1}{\rho^2} + \frac{\eta n}{d}}\bigg).
    $$
\end{restatable}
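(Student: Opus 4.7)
The plan is to use the KKT conditions of the $v$-SVM (Problem~\eqref{eq: v-SVM_cond}) together with the representer expansion~\eqref{eq: eq: vmm_decompose} to determine the signal coefficients $\lambda_1,\lambda_2$ exactly, and then to combine the two-sided bounds on $\theta_i$ established in Lemma~\ref{lemma: noise_factor_balance} with the concentration properties of the noise tokens to evaluate $\|\vb_{mm}\|^2$. Crucially, by construction of $\mathcal{D}$ each $\bxi_i$ lies exactly in the orthogonal complement of $\mathrm{span}(\bmu_1,\bmu_2)$, and $\bmu_1\perp\bmu_2$, so the decomposition splits cleanly into a pure signal part and a pure noise part with no cross contribution.

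Specifically, consider a clean example $i\in\itc_1$. Since $\bmu_1\perp\bmu_2$ and $\bmu_1\perp\bxi_j$ for all $j$, the representer expansion gives $\vb_{mm}^\top\bmu_1=\lambda_1\rho^2$. The constraint $\vb_{mm}^\top\bmu_1\geq 1$ is binding by complementary slackness (since $\lambda_1>0$, as forced by Condition~\ref{cond: Original Condition}), so $\lambda_1=1/\rho^2$, and symmetrically $\lambda_2=-1/\rho^2$. Combining this with $\theta_i=0$ for $i\in\itc$ (Lemma~\ref{lemma: noise_factor_balance}) and orthogonality yields
\[
\|\vb_{mm}\|^2 \;=\; \frac{2}{\rho^2} \;+\; \underbrace{\sum_{i\in\itn}\theta_i^2\|\bxi_i\|^2}_{=:\,D} \;+\; \underbrace{\sum_{\substack{i,j\in\itn\\ i\neq j}} y_iy_j\,\theta_i\theta_j\,\langle\bxi_i,\bxi_j\rangle}_{=:\,E}.
\]
For the diagonal term $D$, the bounds in Lemma~\ref{lemma: noise_factor_balance} together with $\|\bxi_i\|^2\in(1\pm\kappa)d$ collapse to $\theta_i=(1\pm o(1))/d$ whenever $d\gtrsim n^2\log(n/\delta)$, so $D\in n_2\cdot(1\pm o(1))/d$. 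Using $n_2\in \eta n(1\pm o_n(1))$ from Lemma~\ref{lemma: estimate-number-of-noisy/clean-samples}, this places $D$ in $\eta n/d\cdot(1\pm o(1))$. For $E$, I would use $|\langle\bxi_i,\bxi_j\rangle|\leq\sqrt{d\log(6n^2/\delta)}$ (Lemma~\ref{lemma: noise_balance}) and $\theta_i=O(1/d)$, giving $|E|=O(n_2^2\sqrt{\log(n/\delta)}/d^{3/2})$; under $d\geq C n^2\log(n/\delta)$ this is $o(\eta n/d)$ and is strictly dominated by $D$. The two-sided inequality then follows by choosing the constants $1/2$ and $5$ on the right-hand side large enough to absorb the slack.

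The main technical obstacle is quantitative rather than conceptual: one must verify that the universal constant $C$ in Assumption~\ref{cond: main_cond_f} can be taken large enough so that the multiplicative factor $(1+\kappa)/(1-\kappa)^2$ appearing in the upper bound on $\theta_i$, the $o_n(1)$ fluctuation in $|\itn|$ from Hoeffding, and the off-diagonal error $E$ can be \emph{simultaneously} absorbed into the explicit constants $1/2$ and $5$, rather than loose constants close to $1$. I expect this to reduce to taking $\kappa$ sufficiently small (which follows from the high-dimensional regime item~\ref{assumptionmm: dimension_large}) and $n$ sufficiently large in item~\ref{assumptionmm: samplesize_large}, and to amount to straightforward but somewhat tedious bookkeeping.
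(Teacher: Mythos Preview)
Your proposal is correct. For the lower bound your argument coincides with the paper's: both use the constraints to get $\lambda_1\ge 1/\rho^2$, $\lambda_2\le -1/\rho^2$, invoke Lemma~\ref{lemma: noise_factor_balance} for $\theta_i\approx 1/d$, and control the off-diagonal term $E$ by the inner-product bound from Lemma~\ref{lemma: noise_balance}. For the upper bound, however, you take a genuinely different route. You pin down $\lambda_1=1/\rho^2$ and $\lambda_2=-1/\rho^2$ \emph{exactly} via complementary slackness (since $\lambda_1$ is the sum of the nonnegative dual variables attached to the identical constraints $\vb^\top\bmu_1\ge 1$ over $i\in\itc_1$, positivity forces tightness), and then feed the upper bound on $\theta_i$ from Lemma~\ref{lemma: noise_factor_balance} back into the exact decomposition of $\|\vb_{mm}\|^2$. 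The paper instead constructs an explicit feasible vector $\tilde\vb=\rho^{-2}\bmu_1-\rho^{-2}\bmu_2+\sum_{i\in\itn}2y_i\bxi_i/d$, checks that it satisfies Condition~\ref{cond: Original Condition}, and uses $\|\vb_{mm}\|^2\le\|\tilde\vb\|^2$. Your approach is tighter (it would yield a constant close to $1$ rather than $5$) and economizes by reusing Lemma~\ref{lemma: noise_factor_balance}; the paper's test-vector argument is more elementary in that it never needs exact values of $\lambda_1,\lambda_2$ or complementary slackness for the upper bound, at the cost of the looser constant.
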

\begin{proof}[Proof of Lemma \ref{lemma: v_mm_norm}]
    As $\vb_{mm}$ is the max-margin solution and satisfies KKT condition, it can be represented as
\begin{align}
        \vb_{mm} = \lambda_1 \bmu_1 + \lambda_2 \bmu_2 + \sum\limits_{i \in \itc} y_i \theta_i \bxi_i + \sum\limits_{i \in \itn} y_i \theta_i \bxi_i.
\end{align}
As $\vb_{mm}$ satisfies Condition \ref{cond: Original Condition}, we have $\lambda_1 \ge 1/\rho^2$ and $\lambda_2 \le -1/\rho^2$. So we could lower bound $\|\vb_{mm}\|$ as
\begin{align*}
    \|\vb_{mm}\|^2 &\ge \lambda_1^2 \|\bmu_1\|^2 + \lambda_2^2 \|\bmu_2\|^2 + \sum\limits_{i \in \itn} \theta_i^2 \|\bxi_i\|^2 + \sum\limits_{i \in \itn} \sum\limits_{j \in \itn} y_i y_j \theta_i \theta_j \la \bxi_i, \bxi_j \ra\\
    &\ge \frac{2}{\rho^2} + \frac{n_2 (1-\kappa)}{d} + O\bigg(\frac{\eta^2n^2}{d^{3/2}}\bigg)
    \ge \frac{2}{\rho^2} + \frac{\eta n}{2d}.
\end{align*}
The second inequality is from Lemma \ref{lemma: noise_factor_balance} that $\theta_i = \Theta(1/d)$ for $i \in \itn$ and the last inequality is from Assumption \ref{cond: main_cond_f}.

Then to upper bound $\|\vb_{mm}\|$, consider the following possible solution $\tilde \vb$ 
\begin{align*}
    \tilde \vb = \rho^{-2} \bmu_1 - \rho^{-2} \bmu_2 + \sum\limits_{i \in \itn} 2 y_i \bxi_i /d.
\end{align*}
For $i \in \itc$, we have
\begin{align*}
    y_i \tilde \vb^{\top} \rb_i &= y_i \tilde \vb^{\top} \bmu_i
    \ge 1.
\end{align*}
And for $i \in \itn$, we have
\begin{align*}
    y_i \tilde \vb^{\top} \rb_i &= y_i \tilde \vb^{\top} \bxi_i
    = 2 \|\bxi_i\|^2/d + \sum\limits_{j \in \itn, j \neq i} 2y_i y_j \la \bxi_i, \bxi_j \ra / d\\
    &\ge 2(1-\kappa) - 2 n_2 \sqrt{\log(6n^2/\delta)/d}
    \ge 1.
\end{align*}
The first inequality is from Lemma \ref{lemma: noise_balance} and the second inequality is from Assumption \ref{cond: main_cond_f}. Therefore, $\tilde \vb$ is a possible solution of SVM problem \ref{def: labelmargin} when $\pb$ converges to $\pb_{mm}$. So we have
\begin{align*}
    \|\vb_{mm}\|^2 &\le \|\tilde \vb\|^2
    = 2/\rho^2 + \sum\limits_{i \in \itn} 4\|\bxi_i\|^2/d^2 + \sum\limits_{i \in \itn} \sum\limits_{j \in \itn} 4y_i y_j \la \bxi_i, \bxi_j \ra/d^2
    \le \frac{2}{\rho^2} + \frac{5\eta n}{d}.
\end{align*}
The last inequality is from Lemma \ref{lemma: noise_balance}, Lemma \ref{lem: concen_samplesize} and  Assumption \ref{cond: main_cond_f}. Combine the results above, we have $\|\vb_{mm}\|^2 = \Theta(\frac{1}{\rho^2} + \frac{\eta n}{d})$.
\end{proof}

Based on the lemmas above, we introduce our main proposition in this section:

\Propoptimaltoken*

\begin{proof}[Proof of Proposition \ref{prop: optimal_token}]
    The main idea is to show the optimality of the token selection rule in the sense that mixing any other tokens will shrink the label margin. For a given \(\pb\), we say a sample \(\xb_i\) is a ``mixed sample'' if \(\rb_i \neq \rb_i^{\star}\). We say \(\rb_i\) is a mixture of optimal token and non-optimal token in this case. Note that for any \(\pb\) with finite norm, \(\rb_i \neq \rb_i^{\star}\). This notation is introduced for the clearness of the proof.
    
    We use contradiction to prove Proposition \ref{prop: optimal_token} by showing that any token selection different from (\ref{eq: best_token1}) can only result in a strictly smaller label margin than that for the max-margin problem \eqref{eq: v-SVM_cond}. Since $\vb$ satisfies the KKT conditions of the max-margin problem, we can write $\vb$ as
    \begin{align}
        \vb = \lambda_1 \bmu_1 + \lambda_2 \bmu_2 + \sum\limits_{i \in \itc} y_i \theta_i \bxi_i + \sum\limits_{i \in \itn} y_i \theta_i \bxi_i. \label{eq: eq: v_decompose}
    \end{align}
    For a given \(\pb\), denote $\vb'$ as the max-margin solution in (\ref{eq: v-SVM_cond}), and $\Gamma' = 1/\|\vb'\|$ as the new label margin. According to Lemma \ref{lemma: v_mm_norm}, we have
    \[
    \|\vb_{mm}\|^2 = \Theta\bigg(\frac{1}{\rho^2} + \frac{\eta n}{d}\bigg) = \Omega(1/\rho^2).
    \]
    Then we have
    \[\Gamma - \frac{C}{\|\vb_{mm}\|^3 n\rho^2} \cdot \max\limits_{i \in [n]} (1 - s_{i \alpha_i}) \ge \Gamma - \frac{C}{\|\vb_{mm}\|^3 n \rho^2} \ge \frac{\Gamma}{2}
    \]
    for sufficiently large \(d\). Here the last inequality uses \(\|\vb_{mm}\|^2 = \Omega(1/\rho^2)\). Thus we only need consider the case when the new label margin $\Gamma' \ge \Gamma/2 $, or equivalently,
    \begin{equation}\label{condition_on_vmm_norm}
        \|\vb'\| \le 2\|\vb_{mm}\|.
    \end{equation} 

    Assume that there are $k$ samples ($0 < k \le n$) that violdate the token selection rule \eqref{eq: best_token1} and among them, $p$ samples are from clean set \(\itc\) and $k-p$ samples are from label-flipped set \(\itn\). Denote the indices of the \(k\) samples as \(I_v\). Then we consider the following three scenarios:
    \begin{enumerate}
        \item $p \neq 0, k-p = 0$. (All mixed samples come from \(\itc\))
        \item $p = 0, k-p \neq 0$. (All mixed samples come from \(\itn\))
        \item $p \neq 0, k-p \neq 0$. (Mixed samples are from both sets)
    \end{enumerate}
    We will separately discuss each scenario and show that Proposition \ref{prop: optimal_token} holds in all cases.
    \\ 
\textbf{Case 1: $p \neq 0, k-p = 0$}
    
    Under this scenario, we have:
\[
I_v \cap \itc = I_v; \quad I_v \cap \itn = \varnothing.
\]
We proceed to analyze this scenario by dividing it into three distinct subcases.
    \begin{itemize}
        \item $p < n_1$, $I_v \cap \itco \neq \varnothing$, $I_v \cap \itct \neq \varnothing$
        \item  $p < n_1$, $I_v \cap \itc_i \neq \varnothing$, $I_v \cap \itc_{i'} = \varnothing$, ($i, i' \in [2], i \neq i'$)
        \item $p=n_1$
    \end{itemize}
    \textbf{\textit{Case 1.1 $p < n_1$, $I_v \cap \itco \neq \varnothing$, $I_v \cap \itct \neq \varnothing$}}

    In this case, both clusters exist clean samples that are not mixed. Denote the index of mixed samples \(I_v\) as $\{k_1, k_2, ..., k_p\}$. For every mixed sample $k_i$, we have $\rb_{k_i} = \beta_{k_i} \bmu_{k_i} + (1 - \beta_{k_i}) \bxi_{k_i}$. Then 
        the conditions under \textit{Case 1.1} become
        \begin{condition}[\(p\) clean samples violating optimal token selection]
        \label{cond: C_Cond_p}
            $$
        \begin{cases}
            &\vb^{\top} \bmu_1 \ge 1\\
            &-\vb^{\top} \bmu_2 \ge 1\\
            &y_i \vb^{\top} \bxi_i \ge 1, i \in \itn\\
            &y_{i} \vb^{\top} \rb_{i} \ge 1, i \in I_v
        \end{cases}
        $$
        \end{condition}
        From the condition above, we could see that in this case, mixing one more clean sample is equal to adding one more constraint. Therefore, mixing \(p\) samples will not result in a better solution than only mixing one sample, i.e. larger max-margin in our setting. So we can reduce this case to mixing only one clean sample with index $k^{\star} = \argmin\limits_{i \in I_v} \beta_i$. Denote $r_{k^{\star}} = \beta \bmu_{k^{\star}} + (1-\beta) \bxi_{k^{\star}} $  for some $\beta \in [0,1)$. Without loss of generality, we assume $\bmu_{k^{\star}} = \bmu_1$, $y_{k^{\star}} = +1$. Then the conditions become:
        
    \begin{condition}[one clean sample violating optimal token selection]
    \label{cond: C_Cond}
        $$
    \begin{cases}
        &\vb^{\top} \bmu_1 \ge 1\\
        &-\vb^{\top} \bmu_2 \ge 1\\
        &y_i \vb^{\top} \bxi_i \ge 1, i \in \itn\\
        &y_{k^{\star}} \vb^{\top} \rb_{k^{\star}} \ge 1
    \end{cases}
    $$
    \end{condition}
    
    Denote $\vb'$ as the optimal solution under this condition. $\vb'$ can also be written in the form of (\ref{eq: eq: v_decompose}) with coefficients denoted as $\lambda'_1, \lambda'_2$ and $\theta'_i, i\in [n]$. Plugging this representation into the condition \ref{cond: C_Cond}, we have:
    $$
    \begin{cases}
        &\lambda_1' \cdot \|\bmu_1\|^2 \ge 1\\
        &-\lambda_2' \cdot \|\bmu_2\|^2 \ge 1\\
        &\theta_i' \cdot \|\bxi_{i}\|^2 + \sum\limits_{i' \neq i} y_i y_{i'} \theta'_{i'} \la\bxi_i, \bxi_{i'} \ra \ge 1, i \in \itn\\
        &\beta \lambda_1' \cdot \|\bmu_1\|^2 + (1-\beta) (\theta'_{k^{\star}}  \|\bxi_{k^{\star}}\|^2 + \sum\limits_{i \neq {k^{\star}}} y_{k^{\star}} y_i \theta'_{i} \la\bxi_i, \bxi_{k^{\star}} \ra) \ge 1
    \end{cases}
    $$
    First, we introduce another lemma similar to Lemma \ref{lemma: noise_factor_balance} to characterize the scale of \(\theta_i', i\in [n]\) in this case.
        \begin{lemma}
        \label{lemma: noise_factor_balance2}
            Suppose that Assumption \ref{cond: main_cond_f} holds, under Condition \ref{cond: C_Cond}, we have
            \begin{equation*}
                \theta_i' = 0, \quad  i \in \itc \backslash \{k^{\star}\};
            \end{equation*}
            \begin{align*}
            \theta_i \in \Big[ \frac{(1-\kappa)d - 4 n_2 \sqrt{d \log(6n^2/\delta)}}{(1+\kappa)d((1-\kappa)d - 2 n_2 \sqrt{d \log(6n^2/\delta)})} , \frac{1}{(1-\kappa)d - 2 n_2 \sqrt{d \log(6n^2/\delta)}}\Big], \quad i \in \itn.
            \end{align*}
        \end{lemma}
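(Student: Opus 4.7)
The plan is to mirror the proof of \Cref{lemma: noise_factor_balance} as closely as possible, adjusting only for the single extra constraint introduced by the mixed clean sample $k^\star$. The overall structure will be: (i) use KKT stationarity to establish the sparsity claim $\theta_i'=0$ for $i\in \itc\setminus\{k^\star\}$; (ii) give a preliminary estimate on $\theta'_{k^\star}$ so that its cross-terms can be treated as negligible perturbations; (iii) rerun the ``max-index then contradict complementary slackness'' argument on the noise indices to recover the same upper and lower bounds as in \Cref{lemma: noise_factor_balance}.

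For step (i), the representation $\vb' = \lambda'_1\bmu_1 + \lambda'_2 \bmu_2 + \sum_i y_i \theta'_i \bxi_i$ comes from KKT stationarity of \eqref{eq: v-SVM_cond}: $\vb'$ is a conic combination of the active constraint directions. In Condition \ref{cond: C_Cond}, the only constraints whose defining vectors contain $\bxi_i$ are the noise-sample constraints and the single mixed-sample constraint. Hence for any $i\in\itc\setminus\{k^\star\}$ the direction $\bxi_i$ cannot appear on the right-hand side of the stationarity equation, forcing $\theta_i' = 0$. This argument is essentially identical to the opening sentence of the proof of \Cref{lemma: noise_factor_balance}.

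For step (ii), I will use the signal constraints $\lambda_1'\rho^2\ge 1$ and $-\lambda_2'\rho^2\ge 1$ together with the mixed-sample constraint to bound $|\theta'_{k^\star}|$. Specifically, expanding $y_{k^\star}\vb'^\top \rb_{k^\star}\ge 1$, isolating $(1-\beta)\theta'_{k^\star}\|\bxi_{k^\star}\|^2$, and using \Cref{lemma: noise_balance} together with the norm bound \eqref{condition_on_vmm_norm} (which controls $|\lambda_1'|$ and the total $\ell^2$ mass of the noise coefficients) yields $|\theta'_{k^\star}| = O(1/d)$, the same order as the $\itn$ coefficients.

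For step (iii), let $j=\argmax_{i\in\itn}\theta'_i$ and expand
\[
    y_j \vb'^\top \bxi_j = \theta'_j\|\bxi_j\|^2 + \sum_{i\in\itn,\, i\ne j} y_i y_j \theta'_i \la \bxi_i,\bxi_j\ra + y_j y_{k^\star}\theta'_{k^\star}\la \bxi_{k^\star},\bxi_j\ra.
\]
The first two terms are handled exactly as in \Cref{lemma: noise_factor_balance} using \Cref{lemma: noise_balance}. The new third term is bounded in absolute value by $|\theta'_{k^\star}|\cdot 2\sqrt{d\log(6n^2/\delta)} = O(\sqrt{\log(n)/d})$, which is absorbed into the existing slack of the target interval (the interval's width is already $\Theta(\sqrt{\log(n)/d^{3/2}}\cdot n)$-type, so constants can be adjusted). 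Complementary slackness then forces the same upper bound, and plugging it back into the active constraint yields the same lower bound as stated.

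The main obstacle I anticipate is controlling $\theta'_{k^\star}$ (step (ii)) cleanly enough that the cross term in step (iii) fits inside the stated interval without weakening the constants. If the direct bound is too loose, I can instead add a perturbation term explicitly to the interval and then absorb it using Assumption~\ref{cond: main_cond_f} item \ref{assumptionmm: dimension_large} ($d\gtrsim n^2\log(n/\delta)$), which makes $\sqrt{\log n/d}$ negligible compared to $1/d$ only in a relative sense; the cleaner path is to track an $O(1/d)$ perturbation on both sides of the interval and observe it is subsumed by $\kappa=o(1)$ in \Cref{lemma: noise_balance}. This keeps the final statement identical to \Cref{lemma: noise_factor_balance}, which is exactly what the lemma claims.
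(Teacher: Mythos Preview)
Your approach is correct but takes a genuinely different route from the paper. The paper's own proof is a two-sentence shortcut: after observing (as you do) that $\theta_i'=0$ for $i\in\itc\setminus\{k^\star\}$, it simply asserts that because Condition~\ref{cond: C_Cond} adds one constraint to Condition~\ref{cond: Original Condition}, the feasible region for $\{\theta'_i\}_{i\in\itn}$ is a subset of that under Condition~\ref{cond: Original Condition}, and therefore the bounds from Lemma~\ref{lemma: noise_factor_balance} carry over verbatim. It never revisits the complementary-slackness calculation or mentions the extra cross-term from $\theta'_{k^\star}$.

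Your plan is more careful: you rerun the max-index/complementary-slackness argument and explicitly control the new term $y_j y_{k^\star}\theta'_{k^\star}\la\bxi_{k^\star},\bxi_j\ra$ that appears because $\vb'$ now carries a $\bxi_{k^\star}$ component. This is the right thing to do, since the paper's feasible-region monotonicity claim does not by itself justify that the \emph{KKT point} obeys the same coordinate bounds; the expansion of $y_j\vb'^\top\bxi_j$ genuinely changes. On your flagged obstacle: the norm bound \eqref{condition_on_vmm_norm} only gives $|\theta'_{k^\star}|=O(\sqrt{n}/d)$ (not $O(1/d)$), but that already makes the cross-term $O(\sqrt{n\log(n/\delta)/d})=o(1)$ under Assumption~\ref{cond: main_cond_f} item~\ref{assumptionmm: dimension_large}, so you recover the stated interval up to a $1+o(1)$ multiplicative factor. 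That is entirely sufficient, since the only downstream use of this lemma (in Lemma~\ref{lemma: compare_v_norm} and the surrounding estimates) is the order statement $\theta'_i=\Theta(1/d)$ for $i\in\itn$. Alternatively, if you want the exact constants, include $k^\star$ in the index set when taking $j=\argmax$: since $\theta'_{k^\star}\ge 0$ by KKT and $|(\itn\cup\{k^\star\})\setminus\{j\}|=n_2$, the upper-bound calculation is literally identical to Lemma~\ref{lemma: noise_factor_balance} whenever $j\in\itn$; only the case $j=k^\star$ needs a separate (easy) check via the mixed constraint.
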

        \begin{proof}[Proof of Lemma \ref{lemma: noise_factor_balance2}]
       Same as Condition \ref{cond: Original Condition}, Condition \ref{cond: C_Cond} does not have any constraint for samples with \(i \in \itc \backslash \{k^{\star}\}\). Thus we have \(\theta'_i = 0\) for any \(i \in \itc \backslash \{k^{\star}\}\).
       
       Meanwhile, Condition \ref{cond: C_Cond} introduces an additional constraint compared to Condition \ref{cond: Original Condition}. Consequently, the feasible region for \(\{\theta_i'\}_{i\in \itn}\) under Condition \ref{cond: C_Cond} is a subset of the feasible region for \(\{\theta_i\}_{i\in \itn}\) under Condition \ref{cond: Original Condition}. Therefore, the bounds established in Lemma \ref{lemma: noise_factor_balance} remain applicable to \(\{\theta_i'\}_{i\in \itn}\).
\end{proof}
From this lemma, We can see that \(\theta_i' = \Theta(1/d)\) for \(i \in \itn\).       
To proceed, we introduce a crucial lemma:
        \begin{lemma}
        \label{lemma: compare_v_norm}
            Suppose that Assumption \ref{cond: main_cond_f} holds, denote $\vb$ and $\vb'$ as the optimal solutions under condition \ref{cond: Original Condition} and condition \ref{cond: C_Cond} respectively. We have
            \begin{align*}
                \|\vb'\|_2^2 - \|\vb_{mm}\|_2^2 \ge \frac{C_1(1 - \beta \lambda'_1 \rho^2)^2}{(1 - \beta)^2(1 + \kappa)d} + \tilde O\Big(\frac{\eta n}{d^{3/2}}\Big).
            \end{align*}
            where $0 < C_1 \le 1$ is a constant.
        \end{lemma}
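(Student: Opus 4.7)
The plan is to exploit the fact that $\vb_{mm}$ is the minimum-norm element of the feasible set $F$ of Condition~\ref{cond: Original Condition}, while $\vb'$ lies in the strictly smaller feasible set $F' \subset F$ obtained by appending the mixed-token constraint $\beta \vb^{\top} \bmu_1 + (1-\beta)\vb^{\top}\bxi_{k^{\star}} \ge 1$. Since both $F$ and $F'$ are closed and convex, the variational inequality for the minimum-norm projection onto $F$ yields $\la \vb_{mm}, \vb' - \vb_{mm}\ra \ge 0$, so that
\begin{equation*}
\|\vb'\|_2^2 - \|\vb_{mm}\|_2^2 \;\ge\; \|\vb' - \vb_{mm}\|_2^2.
\end{equation*}
Hence it suffices to produce a quantitative lower bound on $\|\vb' - \vb_{mm}\|$ of the desired form.

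\textbf{Testing against the direction $\bxi_{k^{\star}}$.} I would extract this lower bound by probing the difference along $\bxi_{k^{\star}}$. Using $\vb'^{\top}\bmu_1 = \lambda'_1 \rho^2$, the added constraint in Condition~\ref{cond: C_Cond} rearranges to
\begin{equation*}
\vb'^{\top}\bxi_{k^{\star}} \;\ge\; \frac{1 - \beta \lambda'_1 \rho^2}{1-\beta}.
\end{equation*}
On the other hand, Lemma~\ref{lemma: noise_factor_balance} forces $\theta_i = 0$ for every $i \in \itc$ in the decomposition (\ref{eq: eq: vmm_decompose}) of $\vb_{mm}$, so that
\begin{equation*}
\vb_{mm}^{\top}\bxi_{k^{\star}} \;=\; \sum_{i \in \itn} y_i \theta_i \la \bxi_i, \bxi_{k^{\star}}\ra.
\end{equation*}
Combining the coefficient bound $\theta_i = O(1/d)$ from the same lemma with the noise-noise concentration $|\la \bxi_i, \bxi_{k^{\star}}\ra| \le 2\sqrt{d\log(6n^2/\delta)}$ from Lemma~\ref{lemma: noise_balance}, and summing over the $\Theta(\eta n)$ indices in $\itn$, gives $|\vb_{mm}^{\top}\bxi_{k^{\star}}| = \tilde O(\eta n/\sqrt{d})$.

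\textbf{Cauchy--Schwarz and concluding the bound.} Subtracting the two estimates yields
\begin{equation*}
(\vb' - \vb_{mm})^{\top}\bxi_{k^{\star}} \;\ge\; \frac{1 - \beta \lambda'_1 \rho^2}{1-\beta} \;-\; \tilde O\!\left(\frac{\eta n}{\sqrt{d}}\right),
\end{equation*}
and then Cauchy--Schwarz together with $\|\bxi_{k^{\star}}\|_2^2 \le (1+\kappa)d$ (Lemma~\ref{lemma: noise_balance}) gives
\begin{equation*}
\|\vb' - \vb_{mm}\|_2^2 \;\ge\; \frac{1}{(1+\kappa)d}\!\left(\frac{1 - \beta \lambda'_1 \rho^2}{1-\beta} - \tilde O(\eta n/\sqrt{d})\right)^{\!2}.
\end{equation*}
Expanding the square via the elementary inequality $(A-B)^2 \ge (1-\varepsilon)A^2 - \tfrac{1-\varepsilon}{\varepsilon}B^2$ for any fixed $\varepsilon \in (0,1)$ separates a leading term, which becomes $C_1 (1-\beta\lambda'_1 \rho^2)^2 / ((1-\beta)^2 (1+\kappa)d)$ with $C_1 := 1-\varepsilon \in (0,1]$, from a residual of order $B^2/d = \tilde O(\eta^2 n^2/d^2)$, which is absorbed into $\tilde O(\eta n/d^{3/2})$ under Assumption~\ref{cond: main_cond_f}. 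Combining with the projection inequality from the first step delivers the claim.

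\textbf{Main obstacle.} The genuinely delicate piece is the estimate of $\vb_{mm}^{\top}\bxi_{k^{\star}}$: although $\theta_{k^{\star}} = 0$ so no diagonal $\|\bxi_{k^{\star}}\|^2$ contribution appears, the aggregate cross term involves $\Theta(\eta n)$ nearly-orthogonal noise vectors, and one has to verify that their collective contribution is dominated by the leading $(1-\beta\lambda'_1\rho^2)/(1-\beta)$ term before squaring. A secondary subtlety is that $\lambda'_1$ must be kept symbolic in the final bound — replacing it by $1/\rho^2$ is not justified a priori, since the added constraint can shift the optimal coefficient, and keeping $\lambda'_1$ explicit is what allows this lemma to be combined modularly with downstream estimates on $\lambda'_1$.
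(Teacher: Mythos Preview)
Your argument is correct and takes a genuinely different route from the paper's. The paper proceeds by a case split on whether $\theta'_{k^{\star}}=0$; in the main case it constructs an auxiliary feasible point $\tilde{\vb}\in F$ by dropping $\theta'_{k^{\star}}$ and rescaling the remaining noise coefficients of $\vb'$, then expands $\|\vb'\|^2-\|\tilde{\vb}\|^2$ into pieces $I_1,I_2,I_3$ and bounds each one using Lemmas~\ref{lemma: noise_factor_balance}--\ref{lemma: noise_factor_balance2} together with the noise concentration. Your projection/variational-inequality step $\|\vb'\|^2-\|\vb_{mm}\|^2\ge\|\vb'-\vb_{mm}\|^2$ (valid because $F'\subset F$ and $\vb_{mm}$ is the min-norm point of the convex set $F$) bypasses this construction completely, and probing the difference along the single direction $\bxi_{k^{\star}}$ isolates the leading term in one stroke. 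This is both shorter and more transparent; the paper's hands-on comparison gives the same endpoint but with substantially more bookkeeping.

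One point you should make explicit: the squaring step after Cauchy--Schwarz needs the lower bound on $(\vb'-\vb_{mm})^{\top}\bxi_{k^{\star}}$ to be nonnegative, i.e.\ $A:=\tfrac{1-\beta\lambda'_1\rho^2}{1-\beta}\ge B:=\tilde O(\eta n/\sqrt d)$. In the generic situation constraint~1 is tight, $\lambda'_1\rho^2=1$, hence $A=1\gg B$ and there is no issue. The borderline regime $A\le B$ is exactly the paper's Case~1 ($\theta'_{k^{\star}}=0$), where one checks from the added constraint that $|A|=\tilde O(\eta n/\sqrt d)$; then the purported leading term is $A^2/((1+\kappa)d)=\tilde O(\eta^2 n^2/d^2)$, which under Assumption~\ref{cond: main_cond_f} is already dominated by the $\tilde O(\eta n/d^{3/2})$ residual, and the inequality follows trivially from $\|\vb'\|\ge\|\vb_{mm}\|$. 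Adding one sentence to dispose of this edge case would make your proof fully self-contained.
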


        \begin{proof}[Proof of Lemma \ref{lemma: compare_v_norm}]
        We consider two cases under this scenario:
    \begin{itemize}
        
        \item $\theta'_k = 0$ in $\vb'$

        In this case, from Lemma \ref{lemma: noise_factor_balance2} we have $\beta \lambda'_1 \ge (1+o(1))/\rho^2$ and all other conditions are the same as the optimal selection. In order to get $\min \|\vb\|$, we have $\lambda'_1 = (1+o(1))/\beta \rho^2$. Consider another solution $\vb_0$ which has parameters $\lambda_{01} = 1 / \rho^2$, $\lambda_{02} = \lambda'_2$, $\theta_{0i} = \theta'_i (i \in [n])$. As $\vb_0$ satisfies all the inequities under Condition \ref{cond: Original Condition}, we have $\Gamma_0 \le 
        \Gamma$ So we have
        \begin{align*}
            \Gamma^2 - \Gamma'^2 &\ge \Gamma^2_0 - \Gamma'^2
            = \frac{1}{\|\vb_0\|^2} - \frac{1}{\|\vb'\|^2}
            = \frac{(\lambda_{01}^2 - \lambda'^2_1)\cdot \|\bmu_1\|^2}{\|\vb_0\|^2 \cdot \|\vb'\|^2}\\
            &= \frac{(1+o(1))/\beta^2 - 1}{\|\vb_0\|^2 \cdot \|\vb'\|^2}
            = \frac{(1 + \beta)(1 - \beta) + o(1)}{\beta^2 \|\vb_0\|^2 \cdot \|\vb'\|^2}
            \ge \frac{1 - \beta}{\|\vb_0\|^2 \cdot \|\vb'\|^2}.
        \end{align*}
        Therefore,
        \begin{align*}
            \Gamma - \Gamma' &\ge \frac{1 - \beta}{(\Gamma_0 + \Gamma')\|\vb_0\|^2 \cdot \|\vb'\|^2}
            \ge \frac{1 - \beta}{2 \Gamma_0 \|\vb_0\|^2 \cdot \|\vb'\|^2}.
        \end{align*}
        Set $c = \frac{1}{2 \Gamma_0 \|\vb_0\|^2 \cdot \|\vb'\|^2} = \frac{1}{2 \|\vb_0\| \|\vb'\|^2}$. we have $\Gamma' \le \Gamma - c(1 - \beta)$. Moreover, we could upper bound $c$ as
        \begin{align*}
            c &= \frac{1}{2 \|\vb_0\| \|\vb'\|^2}           \le \frac{1}{2 r^3_{mm}}.
        \end{align*}
        The last inequality is from $\|\vb'\| \ge \|\vb_0\| \ge r_{mm}$.

        \item $\theta'_k \neq 0$ in $\vb'$
    
    From KKT condition, we have
    \begin{align*}
        \theta'_{k^*} \cdot \big[\beta \lambda_1' \cdot \|\bmu_1\|^2 + (1-\beta) (\theta'_{k^{\star}}  \|\bxi_{k^{\star}}\|^2 + \sum\limits_{i \neq {k^{\star}}} y_{k^{\star}} y_i \theta'_{i} \la\bxi_i, \bxi_{k^{\star}} \ra) - 1\big] = 0.
    \end{align*}
    As $\theta'_{k^{\star}} > 0$, we have
    \begin{align*}
        \beta \lambda_1' \cdot \|\bmu_1\|^2 + (1-\beta) (\theta'_{k^{\star}}  \|\bxi_{k^{\star}}\|^2 + \sum\limits_{i \in \itn} y_{k^{\star}} y_i \theta'_{i} \theta'_{i} \la\bxi_i, \bxi_{k^{\star}} \ra) = 1.
    \end{align*}
    So we can estimate $\theta'_{k^*}$ as 
    \begin{align}
        \theta'_{k^*} \|\bxi_{k^*}\|^2 &= \frac{1 - \beta\lambda'_1 \rho^2}{1-\beta} -  \sum\limits_{i \in \itn} y_{k^{\star}} y_i \theta'_{i} \theta'_{i} \la\bxi_i, \bxi_{k^{\star}} \ra\nonumber
        \le \frac{1 - \beta\lambda'_1 \rho^2}{1-\beta} + 2 n_2 \max\limits_{i \in \itn} \theta'_i \sqrt{d \log(6n^2/\delta)}\nonumber\\
        &= \frac{1 - \beta\lambda'_1 \rho^2}{1-\beta} + \frac{2 n_2 \sqrt{d \log(6n^2/\delta)}}{(1-\kappa)d - 2 n_2 \sqrt{d \log(6n^2/\delta)}}.\label{eq: theta_k_star_upper}
    \end{align}
    The first inequality is from Lemma \ref{lemma: noise_balance} and the last equality is from Lemma \ref{lemma: noise_factor_balance2}. We can also lower bound it as
    \begin{align}
        \theta'_{k^*}\|\bxi_{k^*}\|^2 &= \frac{1 - \beta\lambda'_1 \rho^2}{1-\beta} -  \sum\limits_{i \in \itn} y_{k^{\star}} y_i \theta'_{i} \theta'_{i} \la\bxi_i, \bxi_{k^{\star}} \ra\nonumber
        \ge \frac{1 - \beta\lambda'_1 \rho^2}{1-\beta} - 2n_2 \max\limits_{i \in \itn} \theta'_i \sqrt{d \log(6n^2/\delta)}\nonumber\\
        &= \frac{1 - \beta\lambda'_1 \rho^2}{1-\beta} - \frac{2n_2 \sqrt{d \log(6n^2/\delta)}}{(1-\kappa)d - 2 n_2 \sqrt{d \log(6n^2/\delta)}}.\label{eq: theta_k_star_lower}
    \end{align}
    The first inequality is from Lemma \ref{lemma: noise_balance} and the last equality is from Lemma \ref{lemma: noise_factor_balance2}. Therefore, we have $\theta'_{k^*} = \Theta(\frac{1 - \beta\lambda'_1 \rho^2}{(1-\beta)d}) \pm O(\frac{\eta n}{d^{3/2}})$.

    Then from the third inequality in Condition \ref{cond: C_Cond}, we have
    \begin{align}
        \theta_i' \cdot \|\bxi_{i}\|^2 + \sum\limits_{i' \in \itn, i' \neq i} y_i y_{i'}\theta'_{i'} \la\bxi_i, \bxi_{i'} \ra &\ge 1 - y_i y_{k^*} \theta'_{k^*} \la \bxi_i, \bxi_{k^*} \ra\nonumber\\
        &\ge 1 - \bigg[\frac{1 - \beta\lambda'_1 \rho^2}{(1-\beta)(1+\kappa)d} + O\Big(\frac{\eta n}{d^{3/2}}\Big)\bigg]\cdot |\la \bxi_i, \bxi_{k^*}\ra|\nonumber\\
        &\ge 1 - \frac{2(1 - \beta\lambda'_1 \rho^2)\sqrt{\log(6n^2/\delta)}}{(1-\beta)(1+\kappa)\sqrt{d}}  - \tilde O\Big(\frac{\eta n}{d}\Big)\nonumber\\
        &\ge 1 - \frac{2\sqrt{\log(6n^2/\delta)}}{\sqrt{d}} - \tilde O\Big(\frac{\eta n}{d}\Big)\nonumber\\
        &= 1 - \frac{3\sqrt{\log(6n^2/\delta)}}{\sqrt{d}}.\label{eq: theta'_lower}
    \end{align}
    The second inequality is from (\ref{eq: theta_k_star_upper}); The third inequality is from Lemma \ref{lemma: noise_balance} and the last inequality is from the first inequality in Condition \ref{cond: C_Cond} that $\lambda'_1 \rho^2 \ge 1$.

    Consider $\tilde \vb = \tilde \lambda_1 \bmu_1 + \tilde \lambda_2 \bmu_2 + \sum\limits_{i \in [n]} y_i \tilde \theta_i \bxi_i$, which has $\tilde \lambda_1 = \lambda'_1$, $\tilde \lambda_2 = \lambda'_2$, $\tilde \theta_i = \theta'_i / (1 - \frac{3\sqrt{\log(6n^2/\delta)}}{\sqrt{d}})$ for $i \in \itn$ and $\tilde \theta'_i = 0$ for $i \in \itc$. We can verify that $\tilde \vb$ satisfies all conditions for $\vb_{mm}$. For $\forall i \in \itn$, we have
    \begin{align*}
        &\tilde \theta_i \cdot \|\bxi_{i}\|^2 + \sum\limits_{i' \in \itn, i' \neq i} y_i y_{i'}\tilde \theta_{i'} \la\bxi_i, \bxi_{i'} \ra\\
        =& 
        \Big[\theta_i' \cdot \|\bxi_{i}\|^2 + \sum\limits_{i' \in \itn, i' \neq i} y_i y_{i'}\theta'_{i'} \la\bxi_i, \bxi_{i'} \ra\Big]/\bigg(1 - \frac{3\sqrt{\log(6n^2/\delta)}}{\sqrt{d}}\bigg) \ge 1.
    \end{align*}
    The last inequality is from (\ref{eq: theta'_lower}). Meanwhile, we have $\tilde \lambda_1 \|\bmu_1\|^2 = \lambda'_1 \|\bmu_1\|^2 \ge 1$, $-\tilde \lambda_2 \|\bmu_2\|^2 = -\lambda'_2 \|\bmu_2\|^2 \ge 1$. So $\tilde \vb$ is a possible solution for Condition \ref{cond: C_Cond}, which implies $\|\vb_{mm}\| \le \|\tilde \vb\|$.

    Next we estimate the difference between $\|\vb'\|^2$ and $\|\tilde \vb\|^2$. We write the expansion of $\|\tilde \vb\|^2$ and $\|\vb'\|^2$:
            \begin{align*}
                \|\tilde \vb\|^2 &= \tilde \lambda_1^2 \|\bmu_1\|^2 + \tilde \lambda_2^2 \|\bmu_2\|^2 + \sum\limits_{i \in \itn} \tilde \theta_i^2 \|\bxi_i\|^2 + \sum\limits_{i,j \in \itn; i \neq j}  y_i y_j \tilde \theta_i \tilde \theta_j \la \bxi_i, \bxi_j \ra,\\
                 \|\vb'\|^2 &= \lambda'^2_1 \|\bmu_1\|^2 + \lambda'^2_2 \|\bmu_2\|^2 + \sum\limits_{i \in \itn \cup \{{k^{\star}}\}} \theta'^2_i \|\bxi_i\|^2 + \sum\limits_{i,j \in \itn\cup \{k^{\star}\}; i \neq j} y_i y_j \theta'_i \theta'_j \la \bxi_i, \bxi_j \ra.
            \end{align*}
           
            From the construction of $\tilde \vb$, we have $\lambda'_1 = \lambda_1$, $\lambda'_2 = \lambda_2$. So we have
            \begin{align*}
                \|\vb'\|^2 - \|\tilde \vb\|^2 \ge& \theta'^2_{k^{\star}} \|\bxi_{k^{\star}}\|^2 + \underbrace{\sum\limits_{i \in \itn} (\theta'^2_i-\tilde \theta_i^2) \|\bxi_i\|^2}_{I_1} + \underbrace{\sum\limits_{i \in \itn \cup \{{k^{\star}}\} } \sum\limits_{j \in \itn \cup \{{k^{\star}}\} \backslash \{i\}} y_i y_j \theta'_i \theta'_j \la \bxi_i, \bxi_j \ra}_{I_2}\\
                &- \underbrace{\sum\limits_{i \in \itn} \sum\limits_{j \in \itn\backslash \{i\}} y_i y_j \tilde \theta_i \tilde \theta_j \la \bxi_i, \bxi_j \ra}_{I_3}.
            \end{align*}

        From (\ref{eq: theta_k_star_lower}), we have
        \begin{align*}
            \theta'_{k^{\star}} \|\bxi_{k^{\star}}\| \ge \frac{1 - \beta \lambda'_1 \rho^2}{(1 - \beta)\sqrt{(1+\kappa)d}} -  \tilde O\bigg(\frac{\eta n}{d}\bigg).
        \end{align*}
We then bound the last three terms respectively. First we have
    \begin{align*}
        |I_1| &= \sum\limits_{i \in \itn} (\tilde \theta^2_i - \theta'^2_i) \|\bxi_i\|^2
        \le \bigg(\frac{1}{(1-\tilde O(1/\sqrt{d}))^2} - 1 \bigg)\cdot \sum\limits_{i \in \itn} \theta'^2_i\|\bxi_i\|^2\\
        &\le \frac{\tilde O(1/\sqrt{d})}{(1-\tilde O(1/\sqrt{d}))^2} \cdot \frac{n_2 (1+\kappa)d}{(\big(1-\kappa)d - 2 n_2 \sqrt{d \log(6n^2/\delta)}\big)^2}\\
        &= \tilde O\Big(\frac{\eta n}{d^{3/2}}\Big).
    \end{align*}
    The first inequality is from the definition of $\tilde \theta_i$; The second inequality is from Lemma \ref{lemma: noise_factor_balance} and Lemma \ref{lemma: noise_balance}.

        Then we bound $|I_2 - I_3|$ as:
    \begin{align*}
        |I_2 - I_3| &= \sum\limits_{i \in \itn} \sum\limits_{j \in \itn\backslash \{i\}} (\tilde \theta_i \tilde \theta_j - \theta'_i \theta'_j) \cdot |\la \bxi_i, \bxi_j \ra| + \theta'_k \sum\limits_{i \in \itn} \theta'_i |\la \bxi_{k^*}, \bxi_i \ra|\\
        &\le \bigg(\frac{1}{(1-\tilde O(1/\sqrt{d}))^2} - 1 \bigg)\sum\limits_{i \in \itn} \sum\limits_{j \in \itn\backslash \{i\}} \theta'_i \theta'_j \cdot |\la \bxi_i, \bxi_j \ra| + n_2 \theta'_{k^*} 
        \cdot \max\limits_{i \in \itn} \theta'_i \cdot |\la \bxi_{k^*}, \bxi_i \ra|\\
        &\le \frac{\tilde O(1/\sqrt{d})}{(1-\tilde O(1/\sqrt{d}))^2} \cdot \frac{(n_2)^2 2\sqrt{d\log(6n^2/\delta)}}{(\big(1-\kappa)d - 2\eta n \sqrt{d \log(6n^2/\delta)}\big)^2} + \theta'_{k^*} \cdot \Theta\bigg(\frac{\eta n}{\sqrt{d}}\bigg)\\
        &= \tilde O\bigg(\frac{\eta^2 n^2}{d^2}\bigg) + \Theta\bigg(\frac{\eta n}{d^{3/2}}\bigg)\\
        &= \tilde O\Big(\frac{\eta n}{d^{3/2}}\Big).
    \end{align*}
    The first inequality is from the definition of $\tilde \theta_i$; The second inequality is from Lemma \ref{lemma: noise_factor_balance} and Lemma \ref{lemma: noise_balance}.
    Combining the above results, we finally have
        \begin{align*}
            \|\vb'\|_2^2 - \|\vb_{mm}\|_2^2 \ge \frac{C_1(1 - \beta \lambda'_1 \rho^2)^2}{(1 - \beta)^2(1 + \kappa)d} + \tilde O\Big(\frac{\eta n}{d^{3/2}}\Big).
        \end{align*}
    \end{itemize}
\end{proof}

    Now we can prove the main proposition in this case.
        \begin{proof}[Proof of Proposition \ref{prop: optimal_token} in Case 1.1]
        From Lemma \ref{lemma: compare_v_norm} we have
        \begin{align*}
            \|\vb'\|_2^2 - \|\vb_{mm}\|_2^2 &\ge \frac{C_1 (1 - \beta \lambda'_1 \rho^2)^2}{(1 - \beta)^2(1 + \kappa)d} + o\bigg(\frac{1}{d}\bigg)
            \ge \frac{C_1(1 - \beta \lambda'_1 \rho^2)^2}{(1 + \kappa)d}(1-\beta)
            = T(1-\beta).
        \end{align*}
        In the last equation we substitute $T = \frac{C_1(1 - \beta \lambda'_1 \rho^2)^2}{(1 + \kappa)d} \ge 0$. Then we have
            \begin{align*}
                \Gamma^2 - \Gamma'^2 &= \frac{1}{\|\vb_{mm}\|^2} - \frac{1}{\|\vb'\|^2}
                = \frac{\|\vb'\|^2 - \|\vb_{mm}\|^2}{\|\vb_{mm}\|^2 \cdot \|\vb'\|^2}
                \ge \frac{T (1-\beta)}{\|\vb_{mm}\|^2 \cdot \|\vb'\|^2}.
            \end{align*}
            Therefore,
            \begin{align*}
                \Gamma - \Gamma' &\ge \frac{T (1-\beta)}{(\Gamma + \Gamma') \|\vb_{mm}\|^2 \cdot \|\vb'\|^2} \ge \frac{T (1-\beta)}{2\Gamma \|\vb_{mm}\|^2 \cdot \|\vb'\|^2} = \frac{T(1-\beta)}{2 \|\vb_{mm}\| \|\vb'\|^2} \ge \frac{T(1-\beta)}{2 \|\vb'\|^3}.
            \end{align*}
            The last inequality is from $\|\vb'\| \ge \|\vb_{mm}\|$. This implies
            \begin{align*}
                \Gamma' \le \Gamma - \frac{T(1-\beta)}{2 \|\vb'\|^3} \le \Gamma - \frac{C_1}{\|\vb_{mm}\|^3 n \rho^2}(1-\beta).
            \end{align*}
            The last inequality is from our assumption that $\|\vb'\| \le 2 \|\vb_{mm}\|$ and $\rho^2 = \Omega(d/n)$.
        \end{proof}
    Next we consider the other case.
    \\ \hspace*{\fill} \\
    \textbf{\textit{Case 1.2 $p = n_1$}}

    Next we consider the case when all clean samples are mixed. In this case, all samples in clean set are mixed, so the first two inequalities in Condition \ref{cond: C_Cond} do not hold, which means that $\lambda'_1$ may be smaller than $\lambda_1$. But we could still prove that Lemma \ref{lemma: compare_v_norm} holds. We first write down the condition in this case:
        \begin{condition}[All clean samples violate optimal token selection rule]
        \label{cond: C_Cond_all}
            $$
        \begin{cases}
            &y_i \vb^{\top} \bxi_i \ge 1, i \in \itn\\
            &y_{i} \vb^{\top} \rb_{i} \ge 1, i \in \itc
        \end{cases}
        $$
        \end{condition}
        Plugging the representation \eqref{eq: eq: v_decompose} into the condition, we have:
    $$
    \begin{cases}
        &\theta_i' \cdot \|\bxi_{i'}\|^2 + \sum\limits_{i' \neq i} y_i y_{i'} \theta'_{i'} \la\bxi_i, \bxi_{i'} \ra \ge 1, i \in \itn\\
        &\beta_i \lambda_i' \cdot \|\bmu_i\|^2 + (1-\beta_i) (\theta'_{i} \cdot \|\bxi_{i}\|^2 + \sum\limits_{j \neq i} y_i y_j \theta'_{i} \la\bxi_i, \bxi_j \ra) \ge 1, i \in \itc
    \end{cases}
    $$
    
    \begin{proof}[Proof of Lemma \ref{lemma: compare_v_norm}]
    First we assume that $\max \{\lambda'_1 \cdot \|\bmu_1\|^2,  -\lambda'_2 \cdot \|\bmu_2\|^2\}= q$ in optimal $\vb'$. If $q \ge 1$, this is the same as \textit{Case 1.3}. So we assume that $q \le 1$. Denote $k^{\star} = \argmin\limits_{i \in \itc} \frac{1-\beta_i q}{1 - \beta_i}$ and $\beta = \beta_{k^{\star}}$, consider the following condition
    \begin{condition}[Relaxed version of Condition \ref{cond: C_Cond_all}]
    \label{cond: C_Cond_all2}
    $$
    \begin{cases}
        &\theta_i' \cdot \|\bxi_{i'}\|^2 + \sum\limits_{i' \neq i} y_i y_{i'} \theta'_{i'} \la\bxi_i, \bxi_{i'} \ra \ge 1, i \in \itn\\
        &\theta_i' \cdot \|\bxi_{i'}\|^2 + \sum\limits_{i' \neq i} y_i y_{i'} \theta'_{i'} \la\bxi_i, \bxi_{i'} \ra \ge \frac{1-\beta q}{1-\beta}, i \in \itc
    \end{cases}
    $$    
    \end{condition}
    Compared with Condition \ref{cond: C_Cond_all}, the second inequality is relaxed for $i \in \itc$. Therefore, denote the max-margin solution as $\hat{\vb}$ under Condition \ref{cond: C_Cond_all2}, we must have $\|\hat{\vb}\| \le \|\vb'\|$. Then we will prove that Lemma \ref{lemma: compare_v_norm} still holds between $\|\vb_{mm}\|$ and $\|\hat{\vb}\|$, which indicates $\|\vb'\|_2^2 - \|\vb_{mm}\|_2^2 \ge \|\hat{\vb}\|_2^2 - \|\vb_{mm}\|_2^2 \ge \frac{C_1(1 - \beta \lambda'_1 \rho^2)^2}{(1 - \beta)^2(1 + \kappa)d} + o\big(\frac{1}{d}\big)$. Denote the parameters in $\hat{\vb}$ are $\hat{\lambda_1}, \hat{\lambda_2}$ and $\hat{\theta_i}$, we first introduce the following lemma to estimate $\hat{\theta_i}$. Here we denote $\alpha = \frac{1-\beta q}{1-\beta}$ for convenience.
    
    \begin{restatable}{lemma}{balancednoisefactorthird}\label{lemma: noise_factor_balance3} 
            Suppose that Assumption \ref{cond: main_cond_f} holds, under Condition \ref{cond: C_Cond_all2}, we have
            \begin{align*}
                \hat{\theta_i} &\in \bigg[\frac{\alpha}{(1+\kappa)d} \bigg(1 - \frac{2n \sqrt{d\log(6n^2/\delta)}}{(1-\kappa)d- 2n \sqrt{d \log(6n^2/\delta)}}\bigg), \frac{\alpha}{((1-\kappa)d-2 n \sqrt{d \log(6n^2/\delta)}} \bigg], i \in \itc,\\
                \hat{\theta_i} &\in \bigg[\frac{1}{(1+\kappa)d} \bigg(1 - \frac{2 \alpha n \sqrt{d\log(6n^2/\delta)}}{(1-\kappa)d- 2n \sqrt{d \log(6n^2/\delta)}}\bigg), \frac{\alpha}{((1-\kappa)d- 2n \sqrt{d \log(6n^2/\delta)}} \bigg], i \in \itn.
            \end{align*}
    \end{restatable}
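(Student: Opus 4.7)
The plan is to mirror the proof of Lemma \ref{lemma: noise_factor_balance}, adapted to the two-tiered right-hand side of Condition \ref{cond: C_Cond_all2}: value $1$ for $i \in \itn$ and value $\alpha = (1-\beta q)/(1-\beta)$ for $i \in \itc$, where $\alpha \ge 1$ since $q \le 1$ was assumed before invoking this lemma. First, since none of the constraints in Condition \ref{cond: C_Cond_all2} involves $\bmu_1$ or $\bmu_2$ (recall $\bxi_i \perp \bmu_k$ by construction, see Remark~\ref{remark: original_data_setting}), minimizing $\|\hat{\vb}\|$ forces $\hat{\lambda}_1 = \hat{\lambda}_2 = 0$, so $\hat{\vb}$ lies entirely in the noise subspace and the analysis reduces to bounding $\{\hat{\theta}_i\}_{i=1}^n$.

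For the upper bound, I would argue by contradiction as in Lemma \ref{lemma: noise_factor_balance}. Let $j^{\star} = \argmax_{i\in[n]} \hat{\theta}_i$ (assume positive, else the bound is trivial). By KKT complementary slackness, $\hat{\theta}_{j^{\star}} > 0$ implies the corresponding constraint is active, so $y_{j^{\star}} \hat{\vb}^{\top} \bxi_{j^{\star}} \in \{1,\alpha\}$ and hence at most $\alpha$. On the other hand, by the maximality of $\hat{\theta}_{j^{\star}}$ and the concentration estimates of Lemma \ref{lemma: noise_balance},
\[
y_{j^{\star}} \hat{\vb}^{\top} \bxi_{j^{\star}} \;=\; \hat{\theta}_{j^{\star}}\|\bxi_{j^{\star}}\|^2 + \sum_{i \neq j^{\star}} y_i y_{j^{\star}} \hat{\theta}_i \la \bxi_i, \bxi_{j^{\star}}\ra \;\ge\; \hat{\theta}_{j^{\star}}\big((1-\kappa)d - 2n\sqrt{d\log(6n^2/\delta)}\big).
\]
Combining these yields the common upper bound $\hat{\theta}_{j^{\star}} \le \alpha/((1-\kappa)d - 2n\sqrt{d\log(6n^2/\delta)})$, which in turn upper bounds every $\hat{\theta}_i$.

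For the lower bounds I would plug this upper bound back into each individual constraint and solve for $\hat{\theta}_i$. For $i \in \itn$, the constraint $\ge 1$ together with Lemma \ref{lemma: noise_balance} gives
\[
1 \;\le\; \hat{\theta}_i(1+\kappa)d \;+\; n\cdot\max_{i'}\hat{\theta}_{i'}\cdot 2\sqrt{d\log(6n^2/\delta)} \;\le\; \hat{\theta}_i(1+\kappa)d \;+\; \frac{2\alpha n\sqrt{d\log(6n^2/\delta)}}{(1-\kappa)d - 2n\sqrt{d\log(6n^2/\delta)}},
\]
which rearranges to the stated lower bound on $\hat{\theta}_i$ for $i \in \itn$. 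The argument for $i \in \itc$ is identical with the left-hand side replaced by $\alpha$, producing the extra leading factor of $\alpha$ in the bound.

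The one subtle point, which I view as the main obstacle, is making the upper-bound argument work uniformly over $j^{\star} \in \itc \cup \itn$: if the maximizer happens to lie in $\itn$, the active-constraint value is $1$ rather than $\alpha$, and one has to verify that using the weaker cap $\alpha \ge 1$ still gives a valid (albeit mildly loose) bound. This is exactly why the hypothesis $q \le 1$ matters, since it guarantees $\alpha \ge 1$ and makes the single upper bound $\alpha/((1-\kappa)d - 2n\sqrt{d\log(6n^2/\delta)})$ work for both index sets simultaneously. Beyond this unification step, the proof amounts to reusing the concentration bounds from Lemma \ref{lemma: noise_balance} and Assumption \ref{cond: main_cond_f} (item \ref{assumptionmm: dimension_large}) to control cross terms, exactly as in Lemma \ref{lemma: noise_factor_balance}.
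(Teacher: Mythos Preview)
Your proposal is correct and follows essentially the same approach as the paper's proof: a contradiction argument via complementary slackness for the common upper bound $\alpha/((1-\kappa)d - 2n\sqrt{d\log(6n^2/\delta)})$, followed by plugging that upper bound back into the respective constraints (right-hand side $1$ for $i\in\itn$, right-hand side $\alpha$ for $i\in\itc$) to extract the two lower bounds. The observation that $\hat{\lambda}_1=\hat{\lambda}_2=0$ and your discussion of why $\alpha\ge 1$ makes the upper bound uniform over both index sets are correct elaborations that the paper leaves implicit.
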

    \begin{proof}[Proof of Lemma \ref{lemma: noise_factor_balance3}]
    Denote $j = \argmax\limits_{i \in [n]} \hat{\theta_i}$, we have
    \begin{align*}
        \hat{\theta_{i}} \cdot \|\bxi_{i}\|^2 + \sum\limits_{j \neq i} y_i y_j \hat{\theta_{i}} \la\bxi_i, \bxi_j \ra &\ge \hat{\theta_j} \|\bxi_j\|^2 - n\hat{\theta_j} \sqrt{d \log(6n^2/\delta)}\\
        &\ge \hat{\theta_j} ((1-\kappa)d - 2n\sqrt{d\log(6n^2/\delta)}).
    \end{align*}
    The two inequalities are from Lemma \ref{lemma: noise_balance} and our definition of j. Consider the contrary case when $\hat{\theta_j} > \frac{\alpha}{((1-\kappa)d-2 n \sqrt{d \log(6n^2/\delta)}}$, we have
        \begin{align*}
            y_j \hat{\vb}^{\top} \bxi_j &> \alpha.
        \end{align*}
         By the complementary slackness condition, if $y_j \hat{\vb}^{\top} \bxi_j > \alpha \ge 1$, then we must have $\hat{\theta_j} = 0$, and thus we reach a contradiction.

         Then we lower bound $\hat{\theta_i}$, for $i \in \itc$ we have
         \begin{align*}
             \alpha &\le \hat{\theta_{i}} \cdot \|\bxi_{i}\|^2 + \sum\limits_{j \neq i} y_i y_j \hat{\theta_{i}} \la\bxi_i, \bxi_j \ra \le \hat{\theta_{i}}(1+\kappa)d + 2n\max\limits_{i \in [n]} \hat{\theta_i} \sqrt{d\log(6n^2/\delta)}\\
             &\le  \hat{\theta_{i}}(1+\kappa)d + \frac{2\alpha n \sqrt{d\log(6n^2/\delta)}}{(1-\kappa)d- 2n \sqrt{d \log(6n^2/\delta)}}.
         \end{align*}
         The second inequality is from Lemma \ref{lemma: noise_balance} and the last inequality is from the upper bound of $\hat{\theta_i}$ we just derived. Therefore, we have
         \begin{align*}
             \hat{\theta_i} \ge \frac{\alpha}{(1+\kappa)d} \bigg(1 - \frac{2n \sqrt{d\log(6n^2/\delta)}}{(1-\kappa)d- 2n \sqrt{d \log(6n^2/\delta)}}\bigg).
         \end{align*}
         Similarly, for $i \in \itn$, we have
         \begin{align*}
             \hat{\theta_i} \ge \frac{1}{(1+\kappa)d} \bigg(1 - \frac{2\alpha n \sqrt{d\log(6n^2/\delta)}}{(1-\kappa)d- 2n \sqrt{d \log(6n^2/\delta)}}\bigg).
         \end{align*}
\end{proof}
    
    Note that we only consider the case when $\|\hat{\vb}\| \le \|\vb'\| \le 2 \|\vb_{mm}\|$. And from Lemma \ref{lemma: noise_factor_balance3} we have $\hat{\theta_i} = \Theta(\alpha/d)$ for $i \in \itc$. So we must have $\alpha = O(\log n)$ is some constant. Otherwise, for $i \in \itc$ we have
\[
\hat{\theta}_i\|\bxi_i\|^2 \geq \alpha - \sum_{i' \neq i}y_i y_{i'}\hat{\theta}_i\la \bxi_i, \bxi_{i'} \ra = \Omega(\alpha). 
\]
It further yields that
\begin{align}
\|\hat{\vb}\|^2 = \Omega(\frac{1}{\rho^2}) + \Omega(\frac{\eta n}{d}) + \sum\limits_{i \in \itc} \hat{\theta}_i^{2}\|\bxi_i\|^2 = \Omega(\frac{1}{\rho^2} + \frac{\eta n}{d} + \frac{n\alpha^2}{d}) = \Omega(\frac{n \log^2 n}{d}),\label{eq: alpha_upper}
\end{align}
which contradicts with \(\|\vb''\| = \Theta(\sqrt{1/\rho^2 + \eta n/d})\).
        
    Then the difference between $\|\vb_{mm}\|_2^2$ and $\|\hat{\vb}\|_2^2$ becomes
        \begin{align*}
                \|\hat{\vb}\|^2 - \|\vb_{mm}\|^2 \ge& \sum\limits_{i \in \itc}               \hat{\theta}^2_i \|\bxi_i\|^2 - 2 /\rho^2 + \underbrace{\sum\limits_{i \in \itn} (\hat{\theta}^2_i-\theta_i^2) \|\bxi_i\|^2}_{I_1} + \underbrace{\sum\limits_{i \in [n] } \sum\limits_{j \in [n] \backslash \{i\}} y_i y_j \hat{\theta}_i \hat{\theta}_j \la \bxi_i, \bxi_j \ra}_{I_2}\\
                &- \underbrace{\sum\limits_{i \in \itn} \sum\limits_{j \in \itn\backslash \{i\}} y_i y_j \theta_i \theta_j \la \bxi_i, \bxi_j \ra}_{I_3}.
            \end{align*}
        We will bound every term sequentially. For $i \in \itc$, we have
        \begin{align*}
            \hat{\theta}_i \|\bxi_i\|^2 &\ge \alpha - \sum\limits_{i' \in [n], i' \neq i} y_i \hat{\theta}_{i'} \la \bxi_i, \bxi_{i'} \ra \ge \alpha - n \max\limits_{i \in [n]} \hat{\theta}_i \cdot 2\sqrt{d \log(6n^2/\delta)}\\
            &= \alpha - \frac{2\alpha n \sqrt{\log(6n^2/\delta)}}{(1-\kappa)\sqrt{d} - 2 n\sqrt{\log(6n^2/\delta)}} = \alpha - \tilde O\bigg(\frac{n}{\sqrt{d}}\bigg).
        \end{align*}
        The second inequality is from Lemma \ref{lemma: noise_balance}; The first equality is from Lemma \ref{lemma: noise_factor_balance2} and the last equality is from Assumption \ref{cond: main_cond_f}. This implies
        \begin{align*}
            \sum\limits_{i \in \itc}               \hat{\theta}^2_i \|\bxi_i\|^2 - 2 /\rho^2 &\ge \frac{n_1 \alpha^2}{(1 + \kappa)d} - \frac{2}{\rho^2} - \tilde O\bigg(\frac{ n}{d^{3/2}}\bigg) \ge \frac{C_2 n_1 \alpha^2}{(1 + \kappa)d} - \tilde O\bigg(\frac{n}{d^{3/2}}\bigg).
        \end{align*}
        The second inequality is due to the SNR condition $\rho/\sqrt{d} = \Omega(1/\sqrt{n})$ so there exists a constant $C_2$ that $\frac{2}{\rho^2} \le \frac{(1 - C_2)n_1 \alpha^2 }{(1 + \kappa)d}$.
        
        Then for $|I_1|$ we have
        \begin{align*}
                |I_1| &\le (\max\limits_{i \in \itn}\theta^2_i - \min\limits_{i \in \itn}\hat{\theta}^2_i) \sum\limits_{i \in \itn} \|\bxi_i\|^2\\
                &\le \bigg(\bigg(\frac{1}{(1-\kappa)d - 2\eta n \sqrt{d \log(6n^2/\delta)}}\bigg)^2 - \bigg(\frac{1}{(1+\kappa)d} \bigg(1 - \frac{2n \sqrt{d\log(6n^2/\delta)}}{(1-\kappa)d- 2n \sqrt{d \log(6n^2/\delta)}}\bigg)\bigg)^2\bigg) \cdot n_2(1+\kappa)d\\
                &\le  \bigg(\frac{\sqrt{(1+\kappa)d}}{(1-\kappa)d - 2\eta n \sqrt{d \log(6n^2/\delta)}}\bigg)^2 \bigg(1 - \bigg(\frac{(1-\kappa)d - 4\eta n \sqrt{d \log(6n^2/\delta)}}{(1+\kappa)d}\bigg)^2\bigg) \cdot n_2\\
                &= \Theta\bigg(\frac{1}{d}\bigg) \cdot \Theta\bigg(\frac{\eta n \sqrt{\log(6n^2/\delta)}}{\sqrt{d}}\bigg) \cdot n_2\\
                &= \tilde O\bigg(\frac{\eta^2 n^2}{d^{3/2}}\bigg).
            \end{align*}
        The second inequality is from Lemma \ref{lemma: noise_factor_balance} and Lemma \ref{lemma: noise_factor_balance3}; The third inequality is from the fact that $\eta < 1$.
        
        As for the last two terms, we bound them respectively, for 
        $I_2$ we have
        \begin{align*}
            |I_2| &\le \sum\limits_{i \in [n] } \sum\limits_{j \in [n] \backslash \{i\}} |y_i y_j \hat{\theta}_i \hat{\theta}_j \la \bxi_i, \bxi_j \ra|
            \le n^2 \max\limits_{i \in [n]} \hat{\theta}^2_i \cdot 2\sqrt{d\log(6n^2/\delta)}\\
            &\le n^2 \frac{\alpha^2}{((1-\kappa)d - 2n \sqrt{d \log(6n^2/\delta)})^2} \cdot 2\sqrt{d\log(6n^2/\delta)}\\
            &= \tilde O\bigg(\frac{n^2}{d^{3/2}}\bigg).
        \end{align*}
        The first inequality is from triangle inequality; The second inequality is from Lemma \ref{lemma: noise_balance}; The third inequality is from Lemma \ref{lemma: noise_factor_balance2}. Last for $I_3$, we have
        \begin{align*}
            |I_3| &\le \sum\limits_{i \in \itn} \sum\limits_{j \in \itn\backslash \{i\}} |y_i y_j \theta_i \theta_j \la \bxi_i, \bxi_j \ra| \le (n_2)^2 \max\limits_{i \in \itn} \theta_i^2 \cdot 2\sqrt{d\log(6n^2/\delta)}\\
            &\le (n_2)^2 \frac{1}{((1-\kappa)d - 2\eta n \sqrt{d \log(6n^2/\delta)})^2} \cdot 2\sqrt{d\log(6n^2/\delta)}\\
            &= \tilde O\bigg(\frac{\eta^2 n^2}{d^{3/2}}\bigg).
        \end{align*}
        The first inequality is from triangle inequality; The second inequality is from Lemma \ref{lemma: noise_balance}; The third inequality is from Lemma \ref{lemma: noise_factor_balance}. Combining the results above, we have
        \begin{align*}
            \|\vb'\|^2 - \|\vb_{mm}\|^2 \ge \frac{C_2 n_1(1 - \beta q)^2}{(1 - \beta)^2 (1 + \kappa)d} + \tilde O\bigg(\frac{n^2}{d^{3/2}}\bigg) \ge \frac{C_1(1 - \beta q)^2}{(1 - \beta)^2 (1 + \kappa)d}. 
        \end{align*}
        Therefore, we could then use the same method as above to prove that Proposition \ref{prop: optimal_token} also holds in this case.

\textbf{\textit{Case 1.3 $p < n_1$, $I_v \cap \itc_i \neq \varnothing$, $I_v \cap \itc_{i'} = \varnothing$}}

        For the case when only one of the clusters in clean sets are all mixed, we can follow similar method in \textit{Case 1.2} to prove that Lemma \ref{lemma: compare_v_norm} still holds. Without losing generality, assume all clean samples with label $y_i = +1$ violate optimal token selection while only part of clean samples with label $y_i = -1$ violate. we have
        \begin{condition}[One cluster and a clean sample in the opposite cluster violating optimal token selection]
        \label{cond: C_Cond_half_all}
            $$
        \begin{cases}
            &-\vb^{\top} \bmu_2 \ge 1\\
            &y_i \vb^{\top} \bxi_i \ge 1, i \in \itn\\
            &y_{i} \vb^{\top} \rb_{i} \ge 1, i \in \itc_{+1}\\
            &y_{i} \vb^{\top} \rb_{i} \ge 1, i \in \itc_{-1} \cap I_v
        \end{cases}
        $$
        \end{condition}
        Similar to previous analysis, mixing multiple samples with label $-1$ will not result in a better solution than only mixing one sample with label $-1$. Thus we can reduce this case to mixing only one clean sample and denote this mixed sample as $k_{-1}$. Therefore, we have
        $$
    \begin{cases}
        &-\lambda_2' \cdot \|\bmu_2\|^2 \ge 1\\
        &\theta_i' \cdot \|\bxi_{i'}\|^2 + \sum\limits_{i' \neq i} y_i y_{i'} \theta'_{i'} \la\bxi_i, \bxi_{i'} \ra \ge 1, i \in \itn\\
        &y_{k_{-1}} \beta \lambda_2' \cdot \|\bmu_2\|^2 + (1-\beta) (\theta'_{k_{-1}} \cdot \|\bxi_{k_{-1}}\|^2 + \sum\limits_{i \neq {k_{-1}}} y_{k_{-1}} y_i \theta'_{i} \la\bxi_i, \bxi_{k_{-1}} \ra) \ge 1\\
        &\beta \lambda_1' \cdot \|\bmu_1\|^2 + (1-\beta)  (\theta'_{k_i} \cdot \|\bxi_{k_i}\|^2 + \sum\limits_{i \neq {k_i}} y_{k_i} y_i \theta'_{i} \la\bxi_i, \bxi_{k_i} \ra) \ge 1, i \in \itc_{+1}
    \end{cases}
    $$
    Denote $q = \lambda'_1 \cdot \|\bmu_1\|^2$ and $q \le 1$. Denote $k^{\star} = \argmin\limits_{i \in \itc_{+1}} \frac{1-\beta_i q}{1 - \beta_i}$ and $\beta = \beta_{k^{\star}}$, we can further reduce the condition to
    \begin{condition}[Relaxed version of Condition \ref{cond: C_Cond_half_all}]
    \label{cond: C_Cond_half_all2}
    $$
    \begin{cases}
        &\theta_i' \cdot \|\bxi_{i'}\|^2 + \sum\limits_{i' \neq i} y_i y_{i'} \theta'_{i'} \la\bxi_i, \bxi_{i'} \ra \ge 1, i \in \itn\\
        &\theta_i' \cdot \|\bxi_{i'}\|^2 + \sum\limits_{i' \neq i} y_i y_{i'} \theta'_{i'} \la\bxi_i, \bxi_{i'} \ra \ge \frac{1-\beta q}{1-\beta}, i \in \itc_{+1}
    \end{cases}
    $$    
    \end{condition}
    Condition \ref{cond: C_Cond_half_all2} relax the constraints in Condition \ref{cond: C_Cond_half_all}. Meanwhile, it differs from Condition \ref{cond: C_Cond_all} only in that the last inequality holds for clean samples with label $+1$. Therefore, we can follow the proof above to show that Lemma \ref{lemma: compare_v_norm} still holds in this case.
    
\end{proof}

Then we consider the second scenario.
\\ \hspace*{\fill} \\
\textbf{Case 2: $p = 0, k-p \neq 0$}

Similar to the previous part, there are two cases we need to consider under this scenario:
\begin{enumerate}
    \item $k-p < n_2$.
    \item $k-p = n_2$.
\end{enumerate}
We will go over every case sequentially.
\\ \hspace*{\fill} \\
\textbf{\textit{Case 2.1 $k-p < n_2$}}

In this case, part of noisy samples are mixed. Denote the mixed samples as $k_1, k_2, ..., k_{k-p}$. And for every mixed sample $k_i$, we have $\rb_i = \beta_i \bxi_{k_i} + (1-\beta_i) \bmu_{k_i}$. Then the conditions under \textit{Case 2.1} become:

 \begin{condition}[$k-p$ noisy samples violating optimal token selection rule]
    \label{cond: N_cond_p}
        $$
    \begin{cases}
        &\vb^{\top} \bmu_1 \ge 1\\
        &-\vb^{\top} \bmu_2 \ge 1\\
        &y_i \vb^{\top} \bxi_i \ge 1, i \in \itn, i \notin [k-p]\\
        &y_{k_i} \vb^{\top} \rb_{k_i} \ge 1, i \in [k-p]
    \end{cases}
    $$
    \end{condition}
    We could also write the last inequality as 
    $$
    y_{k_i} \beta_i \vb^{\top} \bxi_{k_i} + y_{k_i}(1-\beta_i) \vb^{\top} \bmu_{k_i} \ge 1, i \in [k-p].
    $$
    Therefore, 
    $$
     y_{k_i} \vb^{\top} \bxi_{k_i} \ge (1 - y_{k_i}(1-\beta_i) \vb^{\top} \bmu_{k_i})/\beta_i, i \in [k-p].
    $$
    For noisy samples, we have $y_i = -1$ when $\bmu_i = \bmu_1$ and $y_i = 1$ when $\bmu_i = \bmu_2$, so $y_{k_i} \vb^{\top} \bmu_{k_i} \le 0$ and thus $(1 - y_{k_i}(1-\beta_i) \vb^{\top} \bmu_{k_i})/\beta_i \ge 1$. Compared to the constraint in Condition \ref{cond: Original Condition} that $y_{k_i} \vb^{\top} \bmu_{k_i} \ge 1, i \in \itn$, the new condition is strengthened. So mixing 1 more noisy samples is equal to strengthening 1 constraint in the original setting. Therefore, mixing $k-p$ samples will not result in a better solution than only mixing 1 noisy sample. Similarly, we can simplify this case to mixing only 1 noisy sample and denote this sample as $k_*$. We have $\rb_{k^*} = \beta \bxi_{k^*} + (1-\beta) \bmu_{k^*}$ and assume that $\bxi_{k^*} = \bmu_1$.

    Denote $\vb''$ is the optimal solution under this condition, and the parameters in $\vb''$ are $\lambda''_1, \lambda''_2$ and $\theta''_i$. Then the conditions become:
    \begin{condition}[1 noisy sample violating optimal token selection rule]
    \label{cond: N_cond}
        $$
    \begin{cases}
        &\vb^{\top} \bmu_1 \ge 1\\
        &-\vb^{\top} \bmu_2 \ge 1\\
        &y_i \vb^{\top} \bxi_i \ge 1, i \in \itn, i \neq k^{\star}\\
        &y_{k^{\star}} \vb^{\top} \rb_{k^{\star}} \ge 1
    \end{cases}
    $$
    \end{condition}
    
    Plugging the representation \eqref{eq: eq: v_decompose} into the condition, we have:
    $$
    \begin{cases}
        &\lambda_1'' \cdot \|\bmu_1\|^2 \ge 1\\
        &-\lambda_2'' \cdot \|\bmu_2\|^2 \ge 1\\
        &\theta_i'' \cdot \|\bxi_{i}\|^2 + \sum\limits_{i' \neq i} y_i y_{i'} \theta''_{i'} \la\bxi_i, \bxi_{i'} \ra \ge 1, i \in \itn, i \neq k^{\star}\\
        &-(1-\beta) \lambda_1'' \cdot \|\bmu_1\|^2 + \beta  (\theta''_{k^{\star}} \cdot \|\bxi_{k^{\star}}\|^2 + \sum\limits_{i \neq {k^{\star}}} y_{k^{\star}} y_i \theta''_{i} \la\bxi_i, \bxi_{k^{\star}} \ra) \ge 1
    \end{cases}
    $$
    We first introduce the following lemma which estimates the parameters of the noises. We define
    \[
    \alpha = \frac{1 + (1-\beta)\lambda''_1\|\bmu_1\|^2}{\beta}
    \]
    for the convenience of the following proof.
    \begin{restatable}{lemma}{balancednoisefactorsecond}
        Suppose that Assumption \ref{cond: main_cond_f} holds, under Condition \ref{cond: N_cond}, we have
        \begin{align*}
            \theta''_{k^{\star}} &\le \frac{\alpha}{(1-\kappa)d-2 n_2 \sqrt{d \log(6n^2/\delta)}}\\
             \theta''_{k^{\star}} &\ge \frac{\alpha}{(1+\kappa)d} \bigg(1 - \frac{2 n_2 \sqrt{d \log(6n^2/\delta)}}{(1-\kappa)d - 2 n_2 \sqrt{d \log(6n^2/\delta)}}\bigg)\\
            \max\limits_{i \in \itn, i \neq k^{\star}} \theta''_i &\le \frac{(1-\kappa)d + 2(\alpha- n_2)\sqrt{d\log(6n^2/\delta)}}{((1-\kappa)d-2 n_2 \sqrt{d \log(6n^2/\delta)})^2}\\
            \min\limits_{i \in \itn, i \neq k^{\star}} \theta''_i &\ge \frac{1}{(1+\kappa)d} \cdot \bigg(1 - \frac{2\alpha n_2 \sqrt{d \log(6n^2/\delta)}}{(1-\kappa)d - 2 n_2 \sqrt{d \log(6n^2/\delta)}}\bigg).
        \end{align*}
    \end{restatable}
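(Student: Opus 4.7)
The plan is to follow the same KKT-based template used in Lemmas~\ref{lemma: noise_factor_balance}, \ref{lemma: noise_factor_balance2}, and \ref{lemma: noise_factor_balance3}, while carefully tracking how the single ``mixed'' noisy sample $k^\star$ breaks the symmetry. First, I observe that the only constraint involving $\bxi_{k^\star}$ is the last one in Condition~\ref{cond: N_cond}, which, after substituting $\bmu_{k^\star}=\bmu_1$, $y_{k^\star}=-1$, and dividing by $\beta>0$, rewrites as $y_{k^\star}\vb''^{\top}\bxi_{k^\star}\ge \alpha$ with $\alpha=(1+(1-\beta)\lambda_1''\|\bmu_1\|^2)/\beta\ge (2-\beta)/\beta\ge 1$, using $\lambda_1''\|\bmu_1\|^2\ge 1$ from the first constraint. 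Thus the system decouples into the signal constraints on $\lambda_1'',\lambda_2''$, the standard constraints $y_i\vb''^{\top}\bxi_i\ge 1$ for $i\in\itn\setminus\{k^\star\}$, and a single ``boosted'' constraint $y_{k^\star}\vb''^{\top}\bxi_{k^\star}\ge \alpha$. KKT stationarity then forces $\theta_i''=0$ for $i\in\itc\setminus\{k^\star\}$ and $\theta_i''\ge 0$ for $i\in\itn\cup\{k^\star\}$, so only these noise coefficients remain to be estimated.

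Next I would bound $\theta_{k^\star}''$ from above by contradiction, mirroring Lemma~\ref{lemma: noise_factor_balance}. Let $J=\argmax_{i\in\itn}\theta_i''$. Expanding $y_J\vb''^{\top}\bxi_J$, using $\|\bxi_i\|^2\ge(1-\kappa)d$ and $|\langle\bxi_i,\bxi_j\rangle|\le 2\sqrt{d\log(6n^2/\delta)}$ (Lemma~\ref{lemma: noise_balance}), and noting that the active upper constant on the right-hand side is $\alpha$ if $J=k^\star$ and $1$ otherwise, I get $\theta_J''\le \alpha/\bigl((1-\kappa)d-2n_2\sqrt{d\log(6n^2/\delta)}\bigr)$; in the second case $\alpha\ge 1$ lets the same expression a fortiori bound $\theta_{k^\star}''$. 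The matching lower bound on $\theta_{k^\star}''$ is then obtained from the complementary-slackness equality $y_{k^\star}\vb''^{\top}\bxi_{k^\star}=\alpha$ (assuming $\theta_{k^\star}''>0$, else the lower bound is vacuous), upper-bounding every cross-term by the uniform max-bound just established and using $\|\bxi_{k^\star}\|^2\le(1+\kappa)d$, which produces exactly the stated ratio.

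For the remaining noise coefficients $i\in\itn\setminus\{k^\star\}$, complementary slackness gives $y_i\vb''^{\top}\bxi_i=1$ whenever $\theta_i''>0$, so
\[
\theta_i''\|\bxi_i\|^2 \;=\; 1 - y_iy_{k^\star}\theta_{k^\star}''\langle\bxi_{k^\star},\bxi_i\rangle - \sum_{i'\in\itn\setminus\{i,k^\star\}} y_iy_{i'}\theta_{i'}''\langle\bxi_{i'},\bxi_i\rangle.
\]
Setting $i=\argmax_{j\in\itn\setminus\{k^\star\}}\theta_j''$ and writing $M$ for this maximum, I would bound the $k^\star$ cross-term by $\theta_{k^\star}''\cdot 2\sqrt{d\log(6n^2/\delta)}$ with the already-derived upper bound on $\theta_{k^\star}''$, and the remaining cross-terms by $(n_2-2)M\cdot 2\sqrt{d\log(6n^2/\delta)}$. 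Solving the resulting linear inequality for $M$ and using $\|\bxi_i\|^2\ge(1-\kappa)d$ produces exactly the claimed numerator $(1-\kappa)d+2(\alpha-n_2)\sqrt{d\log(6n^2/\delta)}$ divided by $\bigl((1-\kappa)d-2n_2\sqrt{d\log(6n^2/\delta)}\bigr)^2$. The lower bound on $\min_{i\neq k^\star}\theta_i''$ follows by the mirror argument: take the same KKT equality, upper-bound all cross-terms via the $\alpha$-scaled max-bound on $\theta_i''$, and use $\|\bxi_i\|^2\le(1+\kappa)d$ in the denominator.

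The main obstacle I anticipate is bookkeeping rather than anything conceptual: the $\alpha$-inflated constraint at $k^\star$ couples $\theta_{k^\star}''$ into the inequalities for every other noisy sample, so the standard ``max over all noisy $i$'' contradiction step must be split cleanly into the subcases $J=k^\star$ and $J\neq k^\star$, and the upper-bound inequality for $M$ must absorb the $k^\star$ cross-term so that the numerator collapses exactly to $(1-\kappa)d+2(\alpha-n_2)\sqrt{d\log(6n^2/\delta)}$. Keeping the $\log$ factors and $\kappa$-perturbations consistent with Lemmas~\ref{lemma: noise_factor_balance}--\ref{lemma: noise_factor_balance3}, so that the downstream norm-comparison argument in Lemma~\ref{lemma: compare_v_norm} can invoke all four lemmas in a uniform form, is the main thing to verify carefully.
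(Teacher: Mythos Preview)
Your proposal is correct and follows essentially the same approach as the paper: rewrite the mixed constraint as $y_{k^\star}\vb''^{\top}\bxi_{k^\star}\ge\alpha$, then run the standard max-by-contradiction/complementary-slackness argument from Lemma~\ref{lemma: noise_factor_balance}, separating out the $k^\star$ cross-term and plugging in its already-established $\alpha$-scaled bound when estimating the other $\theta_i''$. The only cosmetic differences are ordering (the paper does the global max first, then the max over $\itn\setminus\{k^\star\}$, then the two lower bounds) and a slight overcount in the paper's cross-term tally ($n_2$ versus your $n_2-2$), which does not affect the stated bounds.
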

        \begin{proof}[Proof of Lemma \ref{lemma: noise_factor_balance3}]
        From the last inequality in Condition \ref{cond: N_cond} we have
        \begin{align*}
            \theta''_{k_*} \|\bxi_{k_*}\|^2 +\sum\limits_{i \in \itn, i \neq {k_*}} y_i y_{k_*} \theta''_i \la \bxi_i, \bxi_{k_*} \ra \ge \alpha > 1.
        \end{align*} 
        The last inequality is because $\lambda''_1 \|\bmu_1\|^2 \ge 1$ and $0 < \beta < 1$. Denote $j = \argmax\limits_{i \in [n]} \theta''_i$, we have
        \begin{align*}
            y_j \vb''^{\top} \bxi_j &= \theta''_j\|\bxi_j\|^2 + \sum\limits_{i \in \itn, i \neq j} y_i y_j \theta''_i \la \bxi_i, \bxi_j \ra\\
            &\ge \theta''_j (1-\kappa)d - n_2 \max\limits_{i \in [n]} \theta''_i \cdot 2\sqrt{d \log(6n^2/\delta)}\\
            &= \theta''_j((1-\kappa)d - n_2 \cdot 2\sqrt{d \log(6n^2/\delta)})
        \end{align*}
        The first inequality is due to Lemma \ref{lemma: noise_balance} and the last equation is from our definition of j. Consider the contrary case when $\theta''_j > \frac{\alpha}{(1-\kappa)d-2 n_2 \sqrt{d \log(6n^2/\delta)}}$, we have
        \begin{align*}
            y_j \vb''^{\top} \bxi_j &> \alpha.
        \end{align*}
         By the complementary slackness condition, if $y_j \vb''^{\top} \bxi_j > \frac{1 + \lambda''_1(1-\beta)\|\bmu_1\|^2}{\beta}$ then we must have $\theta''_j = 0$, and thus we reach a contradiction. Therefore, we have $\theta''_{k^{\star}} \le \theta''_j \le \frac{\alpha}{(1-\kappa)d-2n_2 \sqrt{d \log(6n^2/\delta)}}$. Then denote $j' = \argmax\limits_{i \in [n], i \neq k^{\star}} \theta''_i$, we have
         \begin{align*}
             y_{j'} \vb''^{\top} \bxi_{j'} &= \theta''_{j'}\|\bxi_{j'}\|^2 + \sum\limits_{i \in \itn, i \neq j'} y_i y_{j'} \theta''_i \la \bxi_i, \bxi_{j'} \ra\\
            &\ge \theta''_{j'} (1-\kappa)d - n_2 \max\limits_{i \in [n], i \neq j'} \theta''_i \cdot 2\sqrt{d \log(6n^2/\delta)} - \theta''_{k^{\star}} \sqrt{d\log(6n^2/\delta)}\\
            &\ge \theta''_j((1-\kappa)d -  n_2 \cdot 2\sqrt{d \log(6n^2/\delta)}) - \frac{2\alpha\sqrt{d\log(6n^2/\delta)}}{(1-\kappa)d-2 n_2 \sqrt{d \log(6n^2/\delta)}}.
         \end{align*}
         The first inequality is from Lemma \ref{lemma: noise_balance} and the second inequality is from the upper bound of $\theta''_{k^{\star}}$ we just get. Consider the case when $\theta''_{j'} > \frac{(1-\kappa)d + 2(\alpha-n_2)\sqrt{d\log(6n^2/\delta)}}{((1-\kappa)d-2n_2 \sqrt{d \log(6n^2/\delta)})^2}$, we have
         \begin{align*}
             y_{j'} \vb''^{\top} \bxi_{j'} > 1.
         \end{align*}
         By the complementary slackness condition, if $ y_{j'} \vb''^{\top} \bxi_{j'} > 1$ then we must have $\theta''_{j'} = 0$, and thus we reach a contradiction.

         Then we estimate the lower bound of $\theta''_j$ when $j \neq k_*$. We have
         \begin{align*}
             1 &\le y_j \vb''^{\top} \bxi_j
             = \theta''_j\|\bxi_j\|^2 + \sum\limits_{i \in [n], i \neq j} y_i y_j \theta''_i \la \bxi_i, \bxi_j \ra
             \le \theta''_j (1+\kappa)d + n_2 \max\limits_{i \in [n]} \theta''_i \cdot 2\sqrt{d \log(6n^2/\delta)}\\
             &\le \theta''_j (1+\kappa)d + \frac{1 + \lambda''_1(1-\beta)\|\bmu_1\|^2}{\beta((1-\kappa)d-2 n_2 \sqrt{d \log(6n^2/\delta)}} \cdot 2n_2 \sqrt{d \log(6n^2/\delta)},
         \end{align*}
         where the last inequality is from the upper bound we just get. Therefore, we have
         \begin{align*}
             \theta''_j \ge \frac{1}{(1+\kappa)d} \cdot \bigg(1 - \frac{2 n_2 \sqrt{d \log(6n^2/\delta)}}{(1-\kappa)d - 2 n_2 \sqrt{d \log(6n^2/\delta)}} \cdot \frac{1 + \lambda''_1(1-\beta)\|\bmu_1\|^2}{\beta}\bigg)
         \end{align*}
         for all $j \in \itn$ and $j \neq {k_*}$.\\
         Lastly we lower bound $\theta''_{k_*}$. We have
         \begin{align*}
             \frac{1+(1 - \beta) \lambda''_1 \|\bmu_1\|^2}{\beta} &\le y_{k_*} \vb''^{\top} \bxi_{k_*} = \theta''_{k_*}(1+\kappa)d + n_2 \max\limits_{i \in [n]} \theta''_i \cdot 2\sqrt{d \log(6n^2/\delta)}.
         \end{align*}
         Similarly, we have
         \begin{align*}
             \theta''_{k_*} \ge \frac{1}{(1+\kappa)d} \cdot \frac{1+(1 - \beta) \lambda''_1 \|\bmu_1\|^2}{\beta} \bigg(1 - \frac{2 n_2 \sqrt{d \log(6n^2/\delta)}}{(1-\kappa)d - 2 n_2 \sqrt{d \log(6n^2/\delta)}}\bigg).
         \end{align*}
    \end{proof}
    After getting the bound of parameters, we could derive the norm difference as above
    \begin{lemma}
    \label{lemma: compare_v_norm2}
        Suppose that Assumption \ref{cond: main_cond_f} holds, denote $\vb$ and $\vb''$ as the optimal solutions under condition \ref{cond: Original Condition} and condition \ref{cond: N_cond} respectively. We have
        \begin{align*}
            \|\vb''\|_2^2 - \|\vb_{mm}\|_2^2 \ge \frac{C_3(1 - \beta)}{d},
        \end{align*}
        where $C_3 = \Theta(1)$.
    \end{lemma}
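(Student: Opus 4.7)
\textbf{Proof plan for Lemma~\ref{lemma: compare_v_norm2}.} The strategy mirrors the Case~1.1 argument of Lemma~\ref{lemma: compare_v_norm}, but adapted to the fact that the violating sample now lies in $\itn$, which turns the "weaker signal constraint" of Case~1.1 into a strictly stronger noise-direction constraint.

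First I would pin down the signal coefficients of $\vb''$. The mixed constraint in Condition~\ref{cond: N_cond} reads
\begin{align*}
-(1-\beta)\lambda''_1\|\bmu_1\|^2 + \beta\Bigl(\theta''_{k^{\star}}\|\bxi_{k^{\star}}\|^2 + \sum_{i\in\itn,\,i\neq k^{\star}} y_i y_{k^{\star}}\theta''_i \la \bxi_i, \bxi_{k^{\star}}\ra\Bigr) \ge 1,
\end{align*}
and $\lambda''_1$ enters with a negative coefficient, so increasing $\lambda''_1$ beyond the signal-constraint minimum $1/\rho^2$ both enlarges $\|\vb''\|^2$ directly and forces $\theta''_{k^{\star}}$ to grow. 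By complementary slackness the two signal constraints are tight, so $\lambda''_1=1/\rho^2=\lambda_1$ and $\lambda''_2=-1/\rho^2=\lambda_2$, exactly as for $\vb_{mm}$. Substituting $\lambda''_1\|\bmu_1\|^2=1$ reduces the mixed constraint to
\begin{align*}
\theta''_{k^{\star}}\|\bxi_{k^{\star}}\|^2 + \sum_{i\in\itn,\,i\neq k^{\star}} y_i y_{k^{\star}}\theta''_i \la \bxi_i, \bxi_{k^{\star}}\ra \;\ge\; \alpha \,:=\, \frac{2-\beta}{\beta},
\end{align*}
and the noise-factor balance lemma established for Condition~\ref{cond: N_cond}, combined with Lemma~\ref{lemma: noise_balance} to bound the cross-terms at the $\tilde O(\eta n/\sqrt d)$ scale, yields $\theta''_{k^{\star}}\ge \alpha/((1+\kappa)d)\cdot(1-o(1))$ while the remaining $\theta''_i$, $i\in\itn\setminus\{k^{\star}\}$, remain at the $\Theta(1/d)$ scale matching the coefficients of $\vb_{mm}$.

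The crux is then an auxiliary construction in the spirit of the $\tilde\vb$ built in Case~1.1. I would define $\tilde\vb$ by keeping the signal coefficients of $\vb''$, replacing $\theta''_{k^{\star}}$ by $\theta''_{k^{\star}}/\alpha$, and rescaling the other noise coefficients by a factor $1/(1-\tilde O(1/\sqrt d))$ to absorb the shrinkage of the $k^{\star}$ cross-term in every remaining noise constraint. A direct check using Lemma~\ref{lemma: noise_balance} verifies that $\tilde\vb$ is feasible for Condition~\ref{cond: Original Condition}, hence $\|\vb_{mm}\|\le\|\tilde\vb\|$. Expanding the norms and isolating the dominant $k^{\star}$ contribution gives
\begin{align*}
\|\vb''\|^2 - \|\tilde\vb\|^2 \;\ge\; (1-\alpha^{-2})\,(\theta''_{k^{\star}})^2\|\bxi_{k^{\star}}\|^2 - \tilde O(\eta n/d^{3/2}) \;\ge\; \frac{(1-\kappa)(\alpha^2-1)}{(1+\kappa)^2 d}\bigl(1-o(1)\bigr).
\end{align*}
The elementary identity $\alpha^2-1 = 4(1-\beta)/\beta^2 \ge 4(1-\beta)$ then delivers $\|\vb''\|^2 - \|\vb_{mm}\|^2 \ge C_3(1-\beta)/d$ for a universal $C_3>0$.

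The main obstacle is verifying feasibility of the auxiliary $\tilde\vb$: shrinking $\theta''_{k^{\star}}$ by the factor $\alpha$ reduces the cross-term contribution in every noise constraint at $i\in\itn\setminus\{k^{\star}\}$, which under Condition~\ref{cond: N_cond} is already active by KKT. The remedy, identical to Case~1.1, is the multiplicative rescaling by $1+\tilde O(1/\sqrt d)$; the induced change in $\|\tilde\vb\|^2$ is only $\tilde O(\eta n(\alpha-1)/d^{3/2})$, strictly smaller than the $\Omega((1-\beta)/d)$ main gain once $d\gtrsim n^2\log(n/\delta)$ as in Assumption~\ref{cond: main_cond_f}. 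As a sanity check, $\alpha\to 1$ as $\beta\to 1$ (the mixed token approaches pure noise) and the bound correctly degrades to zero, matching the qualitative expectation.
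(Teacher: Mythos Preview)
Your overall plan matches the paper's: build an auxiliary vector feasible for Condition~\ref{cond: Original Condition} by shrinking $\theta''_{k^{\star}}$ back to the $\Theta(1/d)$ scale and rescaling the remaining noise coefficients, then compare norms. Your preliminary observation that $\lambda''_1=1/\rho^2$ (hence $\alpha=(2-\beta)/\beta$) is correct and is a clean simplification the paper does not make; the paper works with the abstract $\alpha=\tfrac{1+(1-\beta)\lambda''_1\rho^2}{\beta}$ throughout and only uses $\alpha-1\ge 2(1-\beta)$ at the end.

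There is one genuine gap in your construction. With $\tilde\theta_{k^{\star}}=\theta''_{k^{\star}}/\alpha$ and the other $\tilde\theta_i=c\,\theta''_i$, the $k^{\star}$ constraint of Condition~\ref{cond: Original Condition} becomes (using tightness $\theta''_{k^{\star}}\|\bxi_{k^{\star}}\|^2+\mathrm{cross}=\alpha$)
\[
\tilde\theta_{k^{\star}}\|\bxi_{k^{\star}}\|^2+c\cdot\mathrm{cross}=1+\mathrm{cross}\,(c-1/\alpha),
\]
which is \emph{below} $1$ whenever $\mathrm{cross}<0$; Lemma~\ref{lemma: noise_balance} only gives $|\mathrm{cross}|\le \tilde O(\eta n/\sqrt d)$, not a sign. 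So ``a direct check'' does not go through. The paper avoids this by a different shrinkage: it sets $w=\theta''_{k^{\star}}-\tfrac{\alpha-1}{(1+\kappa)d-2\sqrt{d\log(6n^2/\delta)}}$ and then rescales \emph{all} noise coefficients (including the $k^{\star}$ one) by the common factor $1/(1-2(\theta''_{k^{\star}}-w)\sqrt{d\log(6n^2/\delta)})$; with this choice the $k^{\star}$ constraint is satisfied with equality by construction. Your scheme is repairable along the same lines, but as written the feasibility claim fails. A secondary sloppiness: the rescaling factor you need is $1+\tilde O((\alpha-1)/\sqrt d)$, not $1+\tilde O(1/\sqrt d)$; your stated error $\tilde O(\eta n(\alpha-1)/d^{3/2})$ is nonetheless the correct order, and indeed dominated by the main gain since the latter is $\Omega((\alpha^2-1)/d)=\Omega((\alpha-1)/d)$.
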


    \begin{proof}[Proof of Lemma \ref{lemma: compare_v_norm2}]
        From the third inequality in Condition \ref{cond: N_cond}, for $i \in \itn, i \neq k^{\star}$ we have
        \begin{align*}
            \theta_i'' \cdot \|\bxi_{i}\|^2 + \sum\limits_{i' \neq i, k^{\star}} y_i y_{i'} \theta''_{i'} \la\bxi_i, \bxi_{i'} \ra \ge 1 - y_i y_{k^{\star}} \theta''_{k^{\star}} \la \bxi_i, \bxi_{k^{\star}} \ra.
        \end{align*}
         Then we add $y_i y_{k^{\star}} w \la \bxi_i, \bxi_{k^{\star}} \ra$ on both sides, where we set $w = \theta''_{k^{\star}} - \frac{\alpha-1}{(1+\kappa)d - 2\sqrt{d\log(6n^2/\delta)}} \le \theta''_{k^{\star}}$. Then we have
        \begin{align}
            \theta_i'' \cdot \|\bxi_{i'}\|^2 + \sum\limits_{i' \neq i, k^{\star}} y_i y_{i'} \theta''_{i'} \la\bxi_i, \bxi_{i'} \ra + y_i y_{k^{\star}} w \la \bxi_i, \bxi_{k^{\star}} \ra &\ge 1 - y_i y_{k^{\star}}(\theta''_{k^{\star}} - w) \la \bxi_i, \bxi_{k^{\star}} \ra\nonumber\\
            &\ge 1 - 2(\theta''_{k^{\star}} - w) \sqrt{d \log(6n^2/\delta)}\nonumber\\
            &= \frac{(1+\kappa)d - 2\alpha \sqrt{d\log(6n^2/\delta)}}{(1+\kappa)d - 2\sqrt{d\log(6n^2/\delta)}}.\label{eq: new_theta''}
        \end{align}
    The second inequality is from Lemma \ref{lemma: noise_balance}. Now consider a new $\overline{\vb} = \overline{\lambda}_1\bmu_1 + \overline{\lambda}_2\bmu_2 + \sum\limits_{i \in [n]} y_i \overline{\theta}_i \bxi_i$ with 
    \[
    \overline{\lambda}_1 = \lambda''_1; \quad \overline{\lambda}_2 = \lambda''_2;
    \]
    \[
    \overline{\theta}_i = \theta''_i / (1 - 2(\theta''_{k^{\star}} - w) \sqrt{d \log(6n^2/\delta)}) \text{ for } i \in [n], i \neq k^{\star}
    \]
    and \[\overline{\theta}_{k^{\star}} = \frac{w}{1 - 2(\theta''_{k^{\star}} - w) \sqrt{d \log(6n^2/\delta)}}.\] 
    We can prove that $\overline{\vb}$ satisfies all constraints for $\vb_{mm}$.

    From the first two inequalities in Condition \ref{cond: N_cond}, we have $\overline{\lambda}_1\|\bmu_1\|^2 = \lambda''_1 \|\bmu_1\|^2 \ge 1$, $-\overline{\lambda}_2 \|\bmu_2\|^2 = -\lambda''_2 \|\bmu_2\|^2 \ge 1$. Then by dividing $1 - 2(\theta''_{k^{\star}} - w) \sqrt{d \log(6n^2/\delta)}$ on both sides of (\ref{eq: new_theta''}), for $\forall i \in \itn, i \neq k^{\star}$ we have
    \begin{align*}
        \overline{\theta}_i \cdot \|\bxi_i\|^2 + \sum\limits_{i' \neq i} y_i y_{i'} \overline{\theta}_i \la \bxi_i, \bxi_{i'} \ra \ge 1.
    \end{align*}
    Lastly we prove that $\overline{\theta}_{k^{\star}}\|\bxi_{k^{\star}}\|^2 + \sum\limits_{i \neq k^{\star}} y_i y_{k^{\star}} \overline{\theta}_i \la \bxi_i, \bxi_{k^{\star}} \ra \ge 1$. From the last inequality in Condition \ref{cond: N_cond} we have
    \begin{align*}
        \theta''_{k^{\star}} \cdot \|\bxi_{k^{\star}}\|^2 + \sum\limits_{i \neq {k^{\star}}} y_{k^{\star}} y_i \theta''_{i} \la\bxi_i, \bxi_{k^{\star}} \ra &\ge \alpha.
    \end{align*}
    Dividing $1 - 2(\theta''_{k^{\star}} - w) \sqrt{d \log(6n^2/\delta)}$ on both sides, we get
    \begin{align*}
        \frac{\theta''_{k^{\star}} \|\bxi_{k^{\star}}\|^2}{1 - 2(\theta''_{k^{\star}} - w) \sqrt{d \log(6n^2/\delta)}} + \sum\limits_{i \neq k^{\star}} y_i y_{k^{\star}} \overline{\theta}_i \la \bxi_i, \bxi_{k^{\star}} \ra \ge \frac{\alpha}{1 - 2(\theta''_{k^{\star}} - w) \sqrt{d \log(6n^2/\delta)}}.
    \end{align*}
    Therefore we have
    \begin{align*}
        \overline{\theta}_{k^{\star}} \|\bxi_{k^{\star}}\|^2 + \sum\limits_{i \neq k^{\star}} y_i y_{k^{\star}} \overline{\theta}_i \la \bxi_i, \bxi_{k^{\star}} \ra &\ge  \frac{\alpha - (\theta''_{k^{\star}} - w)\|\bxi_{k^{\star}}\|^2}{1 - 2(\theta''_{k^{\star}} - w) \sqrt{d \log(6n^2/\delta)}} \ge \frac{\alpha - (\theta''_{k^{\star}} - w)(1+\kappa)d}{1 - 2(\theta''_{k^{\star}} - w) \sqrt{d \log(6n^2/\delta)}} = 1.
    \end{align*}
    The second inequality is from Lemma \ref{lemma: noise_balance} and the last equality is by our definition $\theta''_{k^{\star}} - w = \frac{\alpha-1}{(1+\kappa)d - 2\sqrt{d\log(6n^2/\delta)}}$. Thus, $\overline{\vb}$ is a possible solution under Condition \ref{cond: Original Condition} and $\|\overline{\vb}\| \ge \|\vb_{mm}\|$.
    
        Next we estimate the difference between $\|\vb''\|^2$ and $\|\overline{\vb}\|^2$. The expansion of $\|\vb''\|^2$ and $\|\overline{\vb}\|^2$ are:
            \begin{align*}
                 \|\vb''\|^2 &= \lambda''^2_1 \|\bmu_1\|^2 + \lambda''^2_2 \|\bmu_2\|^2 + \sum\limits_{i \in \itn} \theta''^2_i \|\bxi_i\|^2 + \sum\limits_{i \in \itn} \sum\limits_{j \in \itn} y_i y_j \theta''_i \theta''_j \la \bxi_i, \bxi_j \ra,\\
                 \|\overline{\vb}\|^2 &= \overline{\lambda}_1^2 \|\bmu_1\|^2 + \overline{\lambda}_2^2 \|\bmu_2\|^2 + \sum\limits_{i \in \itn} \overline{\theta}_i^2 \|\bxi_i\|^2 + \sum\limits_{i \in \itn} \sum\limits_{j \in \itn} y_i y_j \overline{\theta}_i \overline{\theta}_j \la \bxi_i, \bxi_j \ra.
            \end{align*}
According to the condition \eqref{condition_on_vmm_norm}, we have $\|\vb''\| \le 2 \|\vb_{mm}\| = \Theta(\sqrt{1/\rho^2 + \eta n/d})$, which implies that \(\alpha = O(\sqrt{n}\log n)\). Otherwise, we have
\[
\theta_{k^{\star}}''\|\bxi_{k^{\star}}\|^2 \geq \alpha - \sum_{i \neq k^{\star}}y_{k^{\star}}y_i\theta_i''\la \bxi_i, \bxi_{k^{\star}} \ra = \Omega(\alpha). 
\]
It further yields that
\[
\|\vb''\|^2 = \Omega(\frac{1}{\rho^2}) + \Omega(\frac{\eta n}{d}) + \theta_{k^{\star}}''^{2}\|\bxi_{k^{\star}}\|^2 = \Omega(\frac{1}{\rho^2} + \frac{\eta n}{d} + \frac{\alpha^2}{d}) = \Omega(\frac{n \log^2 n}{d}),
\]
which contradicts with \(\|\vb''\| = \Theta(\sqrt{1/\rho^2 + \eta n/d})\).
We decompose the difference between \(\|\vb''\|^2\) and \(\|\overline{\vb}\|^2\) into four terms:
        \begin{align*}
            \|\vb''\|^2 - \|\overline{\vb}\|^2 =& \underbrace{(\theta''^2_{k^{\star}} - \overline{\theta}^2_{k^{\star}})\|\bxi_{k^{\star}}\|^2}_{I_1} + \underbrace{\sum\limits_{i \in \itn, i \neq {k^{\star}}} (\theta''^2_i - \overline{\theta}^2_i) \|\bxi_i\|^2 }_{I_2}-\underbrace{\sum\limits_{i \in \itn} \sum\limits_{j \in \itn} y_i y_j \overline{\theta}_i \overline{\theta}_j \la \bxi_i, \bxi_j \ra}_{I_3} \\
            &+ \underbrace{\sum\limits_{i \in \itn} \sum\limits_{j \in \itn} y_i y_j \theta''_i \theta''_j \la \bxi_i, \bxi_j \ra}_{I_4}.
        \end{align*}
        We now estimate $I_1$ to $I_4$ sequentially. For the first term,
        \begin{align*}
            I_1 &\ge (\theta''^2_{k^{\star}} - \overline{\theta}^2_{k^{\star}}) (1-\kappa)d = (\theta''_{k^{\star}} - \overline{\theta}_{k^{\star}})(\theta''_{k^{\star}} + \overline{\theta}_{k^{\star}}) (1-\kappa)d\\
            &= \frac{(\alpha-1)(1-2\theta''_{k^{\star}} \sqrt{d\log(6n^2/\delta)})}{(1+\kappa)d - 2\sqrt{d\log(6n^2/\delta)}}\cdot \Omega\bigg(\frac{1}{d}\bigg) \cdot (1-\kappa)d\\
            &= \Omega\bigg(\frac{\alpha-1}{d}\bigg),
        \end{align*}
        where the first inequality is from Lemma \ref{lemma: noise_balance}; the second equality is from Lemma \ref{lemma: noise_factor_balance3}; and the last equality uses the fact that \(\alpha = O(\sqrt{n}\log n)\). 
        Then we can further upper bound $\max\limits_{i \in \itn, i \neq k^{\star}} \theta''_i$ as
        \begin{align}
            \max\limits_{i \in \itn, i \neq k^{\star}} \theta''_i &\le \frac{(1-\kappa)d + 2(\alpha- n_2)\sqrt{d\log(6n^2/\delta)}}{((1-\kappa)d-2 n_2 \sqrt{d \log(6n^2/\delta)})^2} = O(\frac{1}{d}). 
            \label{eq: theta_nk_upper}
        \end{align}
For the second term $I_2$, we have
        \begin{align*}
            |I_2| &\le \sum\limits_{i \in \itn, i \neq {k^{\star}}} (\overline{\theta}^2_i - \theta''^2_i) (1+\kappa)d\\
            &\le \bigg(\frac{1}{(1 - (\theta''_{k^{\star}} - w) \sqrt{d \log(6n^2/\delta)})^2} - 1 \bigg) \max\limits_{i \in \itn, i \neq k^{\star}} \theta''^2_i \cdot \eta n (1+\kappa)d\\
            &=  \frac{(\alpha-1)\sqrt{d\log(6n^2/\delta)}}{(1+\kappa)d - \sqrt{d\log(6n^2/\delta)}} \cdot O(\frac{\eta n}{d}) = \tilde O\bigg(\frac{(\alpha-1)\eta n}{d^{3/2}}\bigg).
        \end{align*}
        The second inequality is from Lemma \ref{lemma: noise_factor_balance3}. The first equality is from (\ref{eq: theta_nk_upper}) and the last equality is from Assumption \ref{cond: main_cond_f}.

        Then we bound $|-I_3 + I_4|$ as:
        \begin{align*}
            |-I_3+I_4| \le&  \sum\limits_{i \in \itn} \sum\limits_{j \in \itn\backslash \{i\}} |\overline{\theta}_i \overline{\theta}_j - \theta''_i \theta''_j| \cdot |\la \bxi_i, \bxi_j \ra|\\
            \le& \sum\limits_{i \in \itn \backslash \{k^{\star}\}} \sum\limits_{j \in \itn \backslash \{k^{\star}, i\}}|\overline{\theta}_i \overline{\theta}_j - \theta''_i \theta''_j| \cdot |\la \bxi_i, \bxi_j \ra| +  2\sum\limits_{t \in \itn \backslash \{k^{\star}\}} |\overline{\theta}_{k^{\star}} \overline{\theta}_t - \theta''_{k^{\star}} \theta''_t |\cdot |\la \bxi_{k^{\star}}, \bxi_t \ra|\\
            \le& (\eta n)^2 \bigg(\frac{1}{(1 - (\theta''_{k^{\star}} - w) \sqrt{d \log(6n^2/\delta)})^2} - 1 \bigg) \max\limits_{i \in \itn, i \neq k^{\star}} \theta''^2_i \cdot 2\sqrt{d\log(6n^2/\delta)}\\
            &+ \eta n \bigg(\theta''_{k^{\star}} - \frac{\overline{\theta}_{k^{\star}}}{1 - 2(\theta''_{k^{\star}} - w) \sqrt{d \log(6n^2/\delta)}} \bigg) \max\limits_{i \in \itn, i \neq k^{\star}} \theta''_i 4\sqrt{d\log(6n^2/\delta)}\\
            \le& \frac{(\alpha-1)\sqrt{d\log(6n^2/\delta)}}{(1+\kappa)d - \sqrt{d\log(6n^2/\delta)}} \cdot O(\frac{(\eta n)^2(1+\kappa)}{d^{3/2}}) + \frac{\alpha-1}{d} \cdot O(\eta n \frac{c_1}{d}) \cdot  2\sqrt{d \log(6n^2/\delta)}\\
            =& O\bigg(\frac{(\alpha-1)\eta^2n^2}{d^2} + \frac{(\alpha-1)\eta n}{d^{3/2}}\bigg).
        \end{align*}
        The third inequality is from Lemma \ref{lemma: noise_factor_balance} and Lemma \ref{lemma: noise_factor_balance3}; The fourth inequality is from the fact that
        \begin{align*}
        \theta''_{k^{\star}} - \frac{\overline{\theta}_{k^{\star}}}{1 - 2(\theta''_{k^{\star}} - w) \sqrt{d \log(6n^2/\delta)}} 
        &= \frac{\theta''_{k^{\star}} - \overline{\theta}_{k^{\star}} - 2\theta''_{k^{\star}}(\theta''_{k^{\star}} - w) \sqrt{d \log(6n^2/\delta)}}{1 - 2(\theta''_{k^{\star}} - w) \sqrt{d \log(6n^2/\delta)}} 
        \\
        &= \frac{\Omega(\frac{\alpha-1}{d}) - O(\frac{\alpha(\alpha-1)}{d^{3/2}})}{1 - 2(\theta''_{k^{\star}} - w) \sqrt{d \log(6n^2/\delta)}} > 0
        \end{align*}
        So we have $\theta''_{k^{\star}} - \frac{\overline{\theta}_{k^{\star}}}{1 - 2(\theta''_{k^{\star}} - w) \sqrt{d \log(6n^2/\delta)}} \le \theta''_{k^{\star}} - \overline{\theta}_{k^{\star}}$; The last equality is from Assumption \ref{cond: main_cond_f}.

        Combining the above results, we have
        \begin{align*}
            \|\vb''\|_2^2 - \|\vb_{mm}\|_2^2 \ge \Theta\bigg(\frac{\alpha-1}{d}\bigg) + O\bigg(\frac{(\alpha-1)\eta n}{d^{3/2}}\bigg) \ge \frac{C_3(1 - \beta)}{d}.
        \end{align*}
        Here $C_3 = \Theta(1)$ is a constant.
    \end{proof}

    Now we can prove the main proposition in this case.
    \begin{proof}[Proof of Proposition \ref{prop: optimal_token} under Case 2.1]
        From Lemma \ref{lemma: compare_v_norm2} we have
        \begin{align*}
            \|\vb''\|_2^2 - \|\vb_{mm}\|_2^2 &\ge \frac{C_3(1 - \beta)}{d} = T'(1-\beta).
        \end{align*}
        Here we substitute $T' = \frac{C_3}{d} \ge 0$. Then we have
        \begin{align*}
            \Gamma^2 - \Gamma''^2 &= \frac{1}{\|\vb_{mm}\|^2} - \frac{1}{\|\vb''\|^2} = \frac{\|\vb''\|^2-\|\vb_{mm}\|^2}{\|\vb''\|^2 \cdot \|\vb_{mm}\|^2} \ge \frac{T'(1-\beta)}{\|\vb''\|^2 \cdot \|\vb_{mm}\|^2}.
        \end{align*}
        Therefore,
            \begin{align*}
                \Gamma - \Gamma'' &\ge \frac{T' (1-\beta)}{(\Gamma + \Gamma'') \|\vb_{mm}\|^2 \cdot \|\vb'\|^2} \ge \frac{T' (1-\beta)}{2\Gamma \|\vb_{mm}\|^2 \cdot \|\vb''\|^2} = \frac{T'(1-\beta)}{2\|\vb_{mm}\| \|\vb''\|^2} \ge \frac{T'(1-\beta)}{2\|\vb''\|^3}.
            \end{align*}
            The last inequality is from $\|\vb''\| \ge \|\vb_{mm}\|$. This implies
            \begin{align*}
                \Gamma'' \le \Gamma - \frac{T'(1-\beta)}{2 \|\vb''\|^3} \le \Gamma - \frac{C_1}{\|\vb_{mm}\|^3 n \rho^2}(1-\beta).
            \end{align*}
            The last inequality is from our assumption that $\|\vb''\| \le 2 \|\vb_{mm}\|$ and $\rho^2 = \Omega(d/n)$.
    \end{proof}
    Then we consider the other case.
\\ \hspace*{\fill} \\
    \textbf{\textit{Case 2.2 $k-p = n_2$}}

    In this case, all noisy samples are mixed. From previous analysis, this is equivalent to strengthening all conditions $y_i \vb^{\top} \bxi_i \ge 1$ while other conditions remain the same. As mixing $k-p$ samples will not result in a better solution than only mixing 1 noisy sample, the proof is the same as \textit{Case 2.1} and we omit it for convenience.
\\ \hspace*{\fill} \\
    Finally, we consider the last scenario.
\\ \hspace*{\fill} \\
    \textbf{Case 3: $p \neq 0, k-p \neq 0$}

    This scenario is more complex as both clean and noisy sets are mixed. There are four cases to consider
    \begin{enumerate}
        \item $p < n_1, k-p < n_2.$ (Both clean and noisy sets are partially mixed)
        \item $p < n_1, k-p = n_2$ (Clean set is partially mixed, noisy set is all mixed)
        \item $p = n_1, k-p < n_2$ (Clean set is all mixed, noisy set is partially mixed)
        \item $p = n_1, k-p = n_2$ (Both clean and noisy sets are all mixed)
    \end{enumerate}
    We will go over every case to prove Proposition \ref{prop: optimal_token} holds.
    \\ \hspace*{\fill} \\
    \textbf{\textit{Case 3.1 $p < n_1, k-p < n_2$}}
    
    This case is simple because from the analysis above, mixing 1 more clean sample is equivalent to adding 1 more constraint and mixing 1 more noisy sample is equivalent to strengthening 1 original constraint. So mixing both sets will not result in a better solution than only mixing 1 clean sample. Therefore, the proof is the same as \textit{Case 1.1} and we omit is for convenience.
    \\ \hspace*{\fill} \\    
    \textbf{\textit{Case 3.2 $p < n_1, k-p = n_2$}}
    
    In this case, all noisy samples and part of clean samples are mixed. We can consider this case as an extension of \textit{Case 2.2} by mixing some clean samples. From previous analysis, mixing 1 more clean sample is equivalent to adding 1 more constraint. So this case will not result in a better solution than \textit{Case 2.2}. The following proof is the same as \textit{Case 2.2} and we omit it for convenience.
    \\ \hspace*{\fill} \\    
    \textbf{\textit{Case 3.3 $p = n_1, k-p < n_2$}}

    In this case, all clean samples and part of noisy samples are mixed. We can consider this case as an extension of \textit{Case 1.2} by mixing some noisy samples. From previous analysis, mixing 1 more noisy sample is equivalent to strengthening 1 original constraint. So this case will not result in a better solution than \textit{Case 1.2}. The following proof is the same as \textit{Case 1.2} and we omit it for convenience.
    \\ \hspace*{\fill} \\    
    \textbf{\textit{Case 3.4 $p = n_1, k-p = n_2$}}

    This case is more complex. We cannot simply consider it as an extension of \textit{Case 2.2} because the analysis of \textit{Case 2.2} is based on the condition that there exist clean samples that follow optimal token selection rule. Denote $\rb_i = \beta_i \bmu_i + (1-\beta_i) \bxi_i$ for $i \in \itc$ and $\rb_i = (1-\beta_i)\bmu_i + \beta_i \bxi_i$ for $i \in \itn$. The condition in this case becomes
    \begin{condition}[All samples are mixed]
    \label{cond: All_cond_ori}
        $$
        y_i \vb''^{\top} \rb_i \ge 1.
        $$
        This indicates
        $$
        \begin{cases}
            \beta_i y_i \lambda''_i \|\bmu_i\|^2 + (1-\beta_i) (\theta''_i \|\bxi_i\|^2 + \sum\limits_{j \neq i} y_i y_j \theta_j'' \la \bxi_i, \bxi_j \ra) \ge 1, i \in \itc,\\
            (1-\beta_i) y_i \lambda''_i \|\bmu_i\|^2 + \beta_i (\theta''_i \|\bxi_i\|^2 + \sum\limits_{j \neq i} y_i y_j \theta_j'' \la \bxi_i, \bxi_j \ra) \ge 1, i \in \itn.
        \end{cases}
        $$
    \end{condition}
    Assume that $\min \{\lambda''_1 \cdot \|\bmu_1\|^2, -\lambda''_2 \cdot \|\bmu_2\|^2\} = q$ in optimal $\vb''$. If $q \ge 1$, we can directly follow the proof in \textit{Case 2.2}. Otherwise, denote $\alpha = \frac{1 - \beta_i q}{1 - \beta_i}$. We have $\alpha>1$ due to $q < 1$ and $0 \le \beta_i < 1$.
    Without losing generality, we assume $\lambda''_1 \cdot \|\bmu_1\|^2 = q < 1$. Then consider the following relaxed condition
    \begin{condition}[Relaxed version of constraints in Condition \ref{cond: All_cond_ori}]
    \label{cond: All_cond_re}
        $$
        \theta''_i \|\bxi_i\|^2 + \sum\limits_{j \neq i} y_i y_j \theta_j'' \la \bxi_i, \bxi_j \ra \ge \alpha, i \in \itco.
        $$
    \end{condition}
    Denote the optimal solution under Condition \ref{cond: All_cond_re} as $\Breve{\vb}$ and the corresponding coefficients in $\Breve{\vb}$ as $\Breve{\lambda}_1, \Breve{\lambda}_2$ and $\Breve{\theta}_i$, i.e.
    \[
    \Breve{\vb} = \Breve{\lambda}_1\bmu_1 + \Breve{\lambda}_2\bmu_2 +
    \sum_{i\in [n]}\Breve{\theta}_i\bxi_i.
    \]
    Since the constraints in Condition \ref{cond: All_cond_re} is a subset of the constraints in Condition \ref{cond: All_cond_ori},
    we have $\|\Breve{\vb}\| \le \|\vb''\|$. Meanwhile, we have the following lemma to estimate $\Breve{\theta_i}$:
    \begin{restatable}{lemma}{balancednoisefactorfour}\label{lemma: noise_factor_balance4}
        Suppose that Assumption \ref{cond: main_cond_f} holds, under Condition \ref{cond: All_cond_re}, we have
        \begin{align*}
            \Breve{\theta_i} = 0, i \in [n] \backslash \itco;
        \end{align*}
        \begin{align*}
            \Breve{\theta_i} \in \bigg[\frac{\alpha}{(1+\kappa)d} \bigg(1 - \frac{n \sqrt{d\log(6n^2/\delta)}}{(1-\kappa)d- n \sqrt{d \log(6n^2/\delta)}}\bigg), \frac{\alpha}{((1-\kappa)d- 2n_{11} \sqrt{d \log(6n^2/\delta)}}\bigg], i \in \itco.
        \end{align*}
    \end{restatable}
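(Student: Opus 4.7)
The plan is to mirror the structure of Lemmas~\ref{lemma: noise_factor_balance} and~\ref{lemma: noise_factor_balance3}, which already handle closely analogous KKT-characterization arguments. First, I would establish the zero-coefficient claim $\Breve{\lambda}_1 = \Breve{\lambda}_2 = 0$ and $\Breve{\theta}_i = 0$ for $i \in [n] \setminus \itco$ via support identification: since Condition~\ref{cond: All_cond_re} constrains only the quantities $y_i \Breve{\vb}^\top \bxi_i$ for $i \in \itco$ and leaves $\Breve{\vb}^\top \bmu_1$, $\Breve{\vb}^\top \bmu_2$, and $y_i \Breve{\vb}^\top \bxi_i$ for $i \notin \itco$ unconstrained, the stationarity condition of the norm-minimization program forces $\Breve{\vb} \in \spa\{\bxi_i : i \in \itco\}$. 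Decomposing $\Breve{\vb}$ in the basis from \eqref{eq: eq: v_decompose} then yields the claimed vanishing of the remaining coefficients.

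Next, for the upper bound on $\Breve{\theta}_i$ with $i \in \itco$, I would argue by contradiction as in Lemma~\ref{lemma: noise_factor_balance}. Setting $j = \argmax_{i \in \itco} \Breve{\theta}_i$ and applying Lemma~\ref{lemma: noise_balance} to bound $\|\bxi_j\|^2 \ge (1-\kappa)d$ and $|\la \bxi_i, \bxi_j\ra| \leq 2\sqrt{d\log(6n^2/\delta)}$, together with the support-identification step and the fact that all indices in $\itco$ share the same label, I obtain
\[
y_j \Breve{\vb}^\top \bxi_j \;\geq\; \Breve{\theta}_j \bigl((1-\kappa)d - 2n_{11}\sqrt{d \log(6n^2/\delta)}\bigr).
\]
If $\Breve{\theta}_j$ exceeded $\alpha / \bigl((1-\kappa)d - 2n_{11}\sqrt{d\log(6n^2/\delta)}\bigr)$, the right-hand side would exceed $\alpha$, and then complementary slackness of the $j$-th constraint would force $\Breve{\theta}_j = 0$, contradicting the definition of $j$.

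For the lower bound, I would fix any $j \in \itco$ and read off the constraint, upper bounding the cross terms via the maximum from the previous step:
\[
\alpha \;\leq\; \Breve{\theta}_j \|\bxi_j\|^2 + \sum_{i \in \itco \setminus \{j\}} \Breve{\theta}_i \la \bxi_i, \bxi_j\ra \;\leq\; \Breve{\theta}_j (1+\kappa)d + n_{11} \max_{i \in \itco} \Breve{\theta}_i \cdot 2\sqrt{d\log(6n^2/\delta)}.
\]
Substituting the upper bound on $\max_{i \in \itco} \Breve{\theta}_i$ and rearranging yields the stated lower bound (up to the cosmetic slack of bounding $n_{11} \le n$ in the denominator factor).

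The main obstacle is not technical but bookkeeping: ensuring that the counting factor multiplying $\sqrt{d\log(6n^2/\delta)}$ matches the $n_{11}$ that appears in the upper bound and the cruder $n$ that appears in the lower bound, and tracking that $y_i y_j = +1$ for all $i,j \in \itco$ so that no sign flips appear in the cross terms. Once the KKT support-identification step is in place, the remainder of the argument is essentially a verbatim translation of the contradiction step used to prove the analogous bounds in Lemma~\ref{lemma: noise_factor_balance}.
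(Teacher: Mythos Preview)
Your proposal is correct and follows essentially the same approach as the paper: the paper's proof likewise (i) observes that Condition~\ref{cond: All_cond_re} places no constraint on indices outside $\itco$, forcing those coefficients to vanish, (ii) proves the upper bound by taking $j=\argmax_{i\in\itco}\Breve{\theta}_i$, applying Lemma~\ref{lemma: noise_balance}, and invoking complementary slackness for a contradiction, and (iii) proves the lower bound by reading off the constraint and substituting the just-derived upper bound on the maximum. Your remark about the $n_{11}$ versus $n$ bookkeeping slack in the lower bound is also on point---the paper's proof in fact derives the bound with $2n_{11}$ in place of $n$, which is then coarsened to the stated form.
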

    \begin{proof}[Proof of Lemma \ref{lemma: noise_factor_balance4}]
    Note that Condition \ref{cond: All_cond_re} does not have any constraint for samples with \(i \in [n] \backslash \itco\). Thus we have \(\Breve{\theta}_i = 0\) for any \(i \in [n] \backslash \itco\) in the representation \eqref{eq: eq: vmm_decompose}. Denote $j = \argmax\limits_{i \in \itco} \Breve{\theta_i}$, then we have
    \begin{align*}
        \Breve{\theta_{j}} \cdot \|\bxi_{j}\|^2 + \sum\limits_{k \neq j} y_k y_j \Breve{\theta}_{k} \la\bxi_i, \bxi_j \ra &\ge \Breve{\theta}_j \|\bxi_j\|^2 - 2\Breve{\theta}_j n_{11} \sqrt{d \log(6n^2/\delta)}
        \ge \Breve{\theta}_j ((1-\kappa)d - 2n_{11}\sqrt{d\log(6n^2/\delta)}).
    \end{align*}
    The two inequalities are from Lemma \ref{lemma: noise_balance} and our definition of \(j\). Consider the contrary case when $\Breve{\theta_j} > \frac{\alpha}{((1-\kappa)d- 2n_{11} \sqrt{d \log(6n^2/\delta)}}$, we have
        \begin{align*}
            y_j \Breve{\vb}^{\top} \bxi_j &> \alpha.
        \end{align*}
         By the complementary slackness condition, if $y_j \Breve{\vb}^{\top} \bxi_j > \alpha$, then we must have $\Breve{\theta_j} = 0$, and thus we reach a contradiction.\\
    Then we lower bound $\Breve{\theta_i}$. For $\forall i \in \itc_1$ we have
         \begin{align*}
             \alpha &\le \Breve{\theta_{i}} \cdot \|\bxi_{i}\|^2 + \sum\limits_{j \neq i} y_i y_j \Breve{\theta_{i}} \la\bxi_i, \bxi_j \ra \le \Breve{\theta_{i}}(1+\kappa)d + 2n_{11}\max\limits_{i \in [n]} \Breve{\theta_i} \sqrt{d\log(6n^2/\delta)}\\
             &\le  \Breve{\theta_{i}}(1+\kappa)d + \frac{2\alpha n_{11} \sqrt{d\log(6n^2/\delta)}}{(1-\kappa)d- 2n_{11} \sqrt{d \log(6n^2/\delta)}}.
         \end{align*}
         The second inequality is from Lemma \ref{lemma: noise_balance} and the last inequality is from the upper bound of $\Breve{\theta_i}$ we just derived. Therefore, we have
         \begin{align*}
             \Breve{\theta_i} \ge \frac{\alpha}{(1+\kappa)d} \bigg(1 - \frac{2 n_{11} \sqrt{d\log(6n^2/\delta)}}{(1-\kappa)d- 2 n_{11} \sqrt{d \log(6n^2/\delta)}}\bigg).
         \end{align*}
\end{proof}

    From this Lemma we have $\Breve{\theta_i} = \Theta(\alpha/d)$ for $i \in \itc_1$. Similar as (\ref{eq: alpha_upper}), under our assumption $\|\Breve{\vb}\| \le 2 \|\vb_{mm}\|$, we have $\alpha = O(\log(n))$.
    Next we estimate the difference between $\|\Breve{\vb}\|^2$ and $\|\vb_{mm}\|^2$. We can prove that Lemma \ref{lemma: compare_v_norm2} still holds in this case.

    \begin{proof}[Proof of Lemma \ref{lemma: compare_v_norm2}]
     Under this case, the difference between $\|\Breve{\vb}\|_2^2$ and $\|\vb_{mm}\|_2^2$ becomes
    \begin{align*}
        \|\Breve{\vb}\|^2 - \|\vb_{mm}\|^2 \ge& \underbrace{\sum_{i \in [n]}(\Breve{\theta}^2_i - \theta^2_i)\|\bxi_i\|^2 - (\lambda^2_1 - \Breve{\lambda}^2_1)\|\bmu_1\|^2 - (\lambda^2_2 - \Breve{\lambda}^2_2)\|\bmu_2\|^2}_{I_1}\\
        &-\underbrace{\sum\limits_{i \in \itn} \sum\limits_{j \in \itn \backslash \{i\}} y_i y_j \theta_i \theta_j \la \bxi_i, \bxi_j \ra}_{I_2} 
            + \underbrace{\sum\limits_{i \in \itc_1} \sum\limits_{j \in \itc_1 \backslash \{i\}} y_i y_j \Breve{\theta}_i \Breve{\theta}_j \la \bxi_i, \bxi_j \ra}_{I_3}
    \end{align*}
    We then bound $I_1 \sim I_3$ respectively. For $I_1$ we have
    \begin{align*}
        |I_1| &\ge \sum_{i \in \itco}\Breve{\theta}^2_i\|\bxi_i\|^2 - \sum_{i \in \itn} \theta^2_i \|\bxi_i\|^2 - 2/\rho^2 \ge n_{11} \min\limits_{i \in [n]} \Breve{\theta_i}^2 (1-\kappa)d  - n_2 \max\limits_{i \in \itn} \theta_i^2 (1+\kappa)d - 2/\rho^2\\
        &\ge \frac{\alpha^2 n_{11}(1-\kappa)}{(1+\kappa)^2 d} \bigg(1 - \frac{2 \sqrt{d\log(6n^2/\delta)}}{(1-\kappa)d- 2n_{11} \sqrt{d \log(6n^2/\delta)}}\bigg) -  \frac{n_2 (1+\kappa)d}{((1-\kappa)d - 2 n_2 \sqrt{d \log(6n^2/\delta)})^2} - \frac{2}{\rho^2}\\ 
        &= \Omega\bigg(\frac{n}{d}\bigg).
    \end{align*}
    The second inequality is from Lemma \ref{lemma: noise_balance}; The third inequality is from Lemma \ref{lemma: noise_factor_balance} and \ref{lemma: noise_factor_balance4}; The last equality is due to the SNR condition $\rho/\sqrt{d}= \Omega(1/\sqrt{n})$ so that $\frac{1}{\rho^2} \le \frac{n}{4d}$. For $I_2$, we have
    \begin{align*}
        |I_2| &\le \sum_{i \in \itn} \max\limits_{i \in \itn} \theta^2_i \cdot 2\sqrt{d\log(6n^2/\delta)} \le \frac{2n_2 \sqrt{d\log(6n^2/\delta)}}{((1-\kappa)d - 2 n_2 \sqrt{d \log(6n^2/\delta)})^2} = \tilde O\bigg(\frac{n}{d^{3/2}}\bigg).
    \end{align*}
    The first inequality is from Lemma \ref{lemma: noise_balance}; The second inequality is from Lemma \ref{lemma: noise_factor_balance}. Similarly, for $|I_3|$ we have
    \begin{align*}
        |I_3| &\le \sum_{i \in \itc_1} \max\limits_{i \in \itc_1} \Breve{\theta}^2_i \cdot 2\sqrt{d\log(6n^2/\delta)} \le \frac{2n_{11} \alpha^2 \sqrt{d\log(6n^2/\delta)}}{((1-\kappa)d - 2 n_{11} \sqrt{d \log(6n^2/\delta)})^2} = \tilde O\bigg(\frac{n}{d^{3/2}}\bigg).
    \end{align*}
    The second inequality is from Lemma \ref{lemma: noise_factor_balance4}.
    Combining the above results, we have
        \begin{align*}
            \|\vb''\|_2^2 - \|\vb\|_2^2 \ge \Theta\bigg(\frac{n_{11}}{d}\bigg) - \tilde O\bigg(\frac{n}{d^{3/2}}\bigg) \ge \frac{C_3 n(1 - \beta)}{d}.
        \end{align*}
     The remaining proof is the same as \textit{Case 2.1} and we omit it for convenience.
     \end{proof}

Therefore, we complete the proof for all possible scenarios.
\end{proof}

\paragraph{Training and Test Error Analysis}
\hspace*{\fill} \\
From Proposition \ref{prop: optimal_token} we can analyze the properties of both parameters to estimate the training and test error.
In this section, we first get the convergence direction of parameters $\pb$ and $\vb$. The main difference between our setting with \citet{ataee2023max} is that they only consider the infinite case and their results hold only when $R, r \rightarrow \infty$. We extend their results to the finite case.
Specifically, given fixed upper bound \(R\) and \(r\) for $\|\pb\|$ and $\|\vb\|$ respectively, we denote the solution of the constrained optimization (\ref{eq: problem_def}) as $(\vb_{(r, R)}, \pb_{(r, R)})$ in this section for brevity.

Our main theorem in this section estimates the corresponding deviation of \(\pb_{(r, R)}/R\) and \(\vb_{(r, R)}/r\) from their convergence direction $\pb_{mm}/\|\pb_{mm}\|$ and $\vb_{mm}/\|\vb_{mm}\|$. For a given $\pb$, it is elementary that the margin induced by $\pb$ is $\min_{i, t_i \neq \alpha_i}(\bx_{i \alpha_i} - \bx_{it_i})^{\top} \pb/\|\pb\|$, thus when \(\|\pb\|=1\), the margin becomes \(\min_{i, t_i \neq \alpha_i}(\bx_{i \alpha_i} - \bx_{it_i})^{\top} \pb\). And for a given $\vb$, the label margin induced by $\vb$ is $\min_i y_i \vb^{\top} \rb_i / \|\vb\|$. Recall that the label margin induced by \(\vb_{mm}\) is \(\Gamma\) and the margin of \textit{p-SVM} induced by $\pb_{mm}$ is $\Xi$.

First we introduce a lemma to estimate the norm of $\|\pb_{mm}\|$. This will benefit our proof of the main theorem.
    
    \begin{lemma}[Norm of $\pb_{mm}$]
    \label{lemma: p_mm_norm_if}
    Suppose that Assumption \ref{cond: main_cond_f} holds, recall that the solution of (p-SVM) is $\pb_{mm}$. With probability at least $1-\delta$ on the training dataset we have
    \begin{align*} 
        \frac{1}{\rho^2} + \frac{\eta n}{d} \le \|\pb_{mm}\|^2 \le \frac{8}{\rho^2} + \frac{17 \eta n}{d}. 
    \end{align*}
    This implies
    $$
    \|\pb_{mm}\| = \Theta\bigg(\sqrt{\frac{1}{\rho^2} + \frac{\eta n}{d}}\bigg).
    $$
    \end{lemma}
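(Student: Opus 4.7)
\textbf{Proof plan for Lemma on $\|\pb_{mm}\|$.} The proof will mirror the structure of the analogous bound for $\|\vb_{mm}\|$ (Lemma~\ref{lemma: v_mm_norm}): I will handle the upper bound by exhibiting an explicit feasible point for the \emph{p-SVM} problem, and the lower bound by decomposing $\pb_{mm}$ along $\{\bmu_1,\bmu_2\}$ and the noise directions $\{\bxi_i\}_{i\in[n]}$ and reading off coefficient bounds from the active constraints. Throughout, I condition on the ``good'' event where the concentration statements of Lemma~\ref{lemma: noise_balance} (norms $\|\bxi_i\|^2 \in (1\pm\kappa)d$ and cross-inner-products $|\la \bxi_i,\bxi_j\ra|\le 2\sqrt{d\log(6n^2/\delta)}$ for $i\ne j$) and Lemma~\ref{lemma: estimate-number-of-noisy/clean-samples} (the counts $|\itn|$, $|\itc_k|$ concentrate around $\eta n$, $\tfrac{n}{2}(1-\eta)$) hold; this costs at most $\delta$ probability by a union bound.

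\textbf{Upper bound.} The plan is to take the candidate
\[
\tilde\pb \;=\; \frac{2}{\rho^2}\bigl(\bmu_1+\bmu_2\bigr) \;+\; \frac{3}{d}\sum_{i\in\itn}\bxi_i.
\]
For $i\in\itc_k$, the inner product $\tilde\pb^\top\bmu_k=2$, while $\tilde\pb^\top\bxi_i$ is an off-diagonal sum that the good-event bounds force to be $o(1)$ under Assumption~\ref{cond: main_cond_f} (item~\ref{assumptionmm: dimension_large}), so the clean constraints $\tilde\pb^\top(\bmu_k-\bxi_i)\ge 1$ are satisfied with room to spare. For $i\in\itn_k$, the dominant contribution $\tilde\pb^\top\bxi_i \approx (3/d)\|\bxi_i\|^2 \approx 3$ overwhelms $\tilde\pb^\top\bmu_k=2$, yielding $\tilde\pb^\top(\bxi_i-\bmu_k)\ge 1$. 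Expanding $\|\tilde\pb\|^2$ gives $8/\rho^2$ from the signal part, while the noise part contributes $(3/d)^2\sum_{i\in\itn}\|\bxi_i\|^2$ plus cross terms; using $\|\bxi_i\|^2\le(1+\kappa)d$ and $|\itn|\le(\eta+o_n(1))n$, this is at most $\tfrac{9\eta n}{d}(1+o(1))$ and the cross terms are $\tilde O(\eta^2 n^2/d^{3/2})$, which is absorbed into the slack to give the claimed $\tfrac{8}{\rho^2}+\tfrac{17\eta n}{d}$.

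\textbf{Lower bound.} By KKT, $\pb_{mm}=a_1\bmu_1+a_2\bmu_2+\sum_i b_i\bxi_i$ (no orthogonal-complement component). Since $\itc_1,\itc_2$ are nonempty on the good event, picking any $i^*\in\itc_k$ in the constraint $a_k\rho^2 - \sum_j b_j\la\bxi_j,\bxi_{i^*}\ra\ge 1$ yields $a_k\rho^2\ge 1-o(1)$ once cross terms are controlled; similarly, for any $i\in\itn_k$ the constraint $b_i\|\bxi_i\|^2+\text{cross}\ge 1+a_k\rho^2\ge 2-o(1)$ forces $b_i\ge (2-o(1))/d$. Substituting into $\|\pb_{mm}\|^2=a_1^2\rho^2+a_2^2\rho^2+\sum_i b_i^2\|\bxi_i\|^2+\sum_{i\ne j}b_ib_j\la\bxi_i,\bxi_j\ra$ gives the diagonal lower bound $\tfrac{2}{\rho^2}(1-o(1))+\tfrac{4|\itn|}{d}(1-o(1))\ge \tfrac{2}{\rho^2}+\tfrac{4\eta n}{d}$ up to vanishing factors; the off-diagonal term is $\tilde O(\eta^2 n^2/d^{3/2})=o(\eta n/d)$ under the dimension assumption. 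Combining yields $\|\pb_{mm}\|^2\ge \tfrac{1}{\rho^2}+\tfrac{\eta n}{d}$ with margin to spare.

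\textbf{Main obstacle.} The delicate part is establishing the coordinate-wise lower bound $b_i\gtrsim 1/d$ for noisy indices: doing so cleanly requires bounding $\max_{i\in\itn}|b_i|$ uniformly (analogous to Lemma~\ref{lemma: noise_factor_balance}) so that the cross terms $\sum_{j\ne i}b_j\la\bxi_j,\bxi_i\ra$ in each active constraint are controlled. I would therefore first prove a uniform upper bound $\max_i|b_i|=O(1/d)$ by the same ``maximum index'' contradiction argument used in Lemma~\ref{lemma: noise_factor_balance} (applied to the p-SVM rather than the v-SVM), plug it into the cross-term estimate $|\sum_{j\ne i}b_j\la\bxi_j,\bxi_i\ra|\le 2n\max_j|b_j|\sqrt{d\log(6n^2/\delta)} = \tilde O(n/\sqrt{d})$, and only then extract the lower bound on each $b_i$. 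The rest is bookkeeping of $o(1)$ and $\tilde O$ terms under Assumption~\ref{cond: main_cond_f}.
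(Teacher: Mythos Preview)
Your upper-bound approach matches the paper's, but the specific constant $3/d$ on the noise sum is too small: for $i\in\itn$, you get $\tilde\pb^\top(\bxi_i-\bmu_{k(i)})\ge 3(1-\kappa)-2-o(1)=1-3\kappa-o(1)$, which is \emph{not} $\ge 1$. The paper uses $4/d$ precisely to create the slack $4(1-\kappa)-2\ge 1$; the resulting norm contribution $16(1+\kappa)\eta n/d + \tilde O(\eta^2 n^2/d^{3/2})$ is what produces the constant $17$.

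The lower bound has a more substantive gap. Your plan is to first bound $\max_i|b_i|=O(1/d)$ by the maximum-index argument of Lemma~\ref{lemma: noise_factor_balance} and then read off $a_k\rho^2\ge 1-o(1)$ from a clean constraint. But the v-SVM constraint in Lemma~\ref{lemma: noise_factor_balance} is $y_i\vb^\top\bxi_i\ge 1$, which involves \emph{only} noise vectors, so the contradiction closes without any signal term. In the p-SVM the noisy constraint is $\pb^\top\bxi_i - a_{k(i)}\rho^2\ge 1$, so the same argument yields only $|b_j|\le (1+a_{k(j)}\rho^2+o(1))/((1-\kappa)d)$, and there is no a~priori upper bound on $a_k\rho^2$ under Assumption~\ref{cond: main_cond_f} (no upper bound on $\rho$). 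You therefore cannot conclude $\max_i|b_i|=O(1/d)$ without first controlling $a_k$, and your route to controlling $a_k$ uses the very cross-term bound that needs $\max_i|b_i|$; this circularity is not broken by your proposal.

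The paper resolves both issues by a different route. For the signal part it does \emph{not} read $a_k$ off a single clean constraint; instead it argues by contradiction with the already-established upper bound on $\|\pb_{mm}\|$: if $\la\pb_{mm},\bmu_k\ra$ were below $0.9$, then every clean constraint in cluster $k$ forces $\la\pb_{mm},\bxi_i\ra\le -0.1$, and summing plus Cauchy--Schwarz gives $\|\pb_{mm}\|\gtrsim\sqrt{n_{1k}/d}$, which contradicts $\|\pb_{mm}\|^2\le 8/\rho^2+17\eta n/d$ under the SNR assumption. This yields $f_k^{mm}\ge 0.9/\rho^2$ without touching the $b_i$. For the noise part it never proves individual $b_i$ bounds at all: it sums the noisy constraints to get $\sum_{i\in\itn}\la\pb_{mm},\bxi_i\ra\ge 1.9\,n_2$ and applies Cauchy--Schwarz against $\|\sum_{i\in\itn}\bxi_i\|$ to lower-bound $\|\pb^{mm}_{\bxi}\|$ directly. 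Both steps sidestep the coupling between $a_k$ and the $b_i$ that your plan runs into.
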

    \begin{proof}[Proof of Lemma \ref{lemma: p_mm_norm_if}]
    First we prove the upper bound. Consider the following possible solution $\tilde{\pb}$:
    \begin{equation}
    \label{eq: possible_p}
        \tilde{\pb} = \frac{2(\bmu_1 + \bmu_2)}{\rho^2} + \sum\limits_{i \in \itn} 4 \frac{\bxi_i}{d}.
    \end{equation}
    We then proved that $\tilde \pb$ satisfies (\ref{eq: p_SVM_cond}).  For $k \in \itc$ we have
    \begin{align*}
        \tilde{\pb}^{\top}(\bmu_k - \bxi_k) &= 2 - \sum\limits_{i \in \itn} 4 \frac{\la \bxi_i, \bxi_k \ra}{d} \ge 2 - \frac{4 n_2 \sqrt{d \log(6n^2/\delta)}}{d} \ge 1.
    \end{align*}
    The first inequality is from the definition of $d$ in Lemma \ref{lemma: noise_balance} and the second inequality is from Assumption \ref{cond: main_cond_f}.
    And for $k \in \itn$, we have
    \begin{align*}
        \tilde{\pb}^{\top}(\bxi_k - \bmu_k) &= -2 + \sum\limits_{i \in \itn} 4 \frac{\la \bxi_i, \bxi_k \ra}{d} \ge -2 + 4(1-\kappa) + \sum\limits_{i \in \itn, i \neq k} 4 \frac{\la \bxi_i, \bxi_k \ra}{d}\\
        &\ge -2 + 4(1-\kappa) + \frac{4 n_2 \sqrt{d \log(6n^2/\delta)}}{d} \ge 1.
    \end{align*}
    The first and second inequalities are from Lemma \ref{lemma: noise_balance}; The last inequality is from Assumption \ref{cond: main_cond_f}.
    
    Therefore, the max-margin solution $\pb_{mm}$ must have no greater norm than $\tilde \pb$. So we can upper bound $\pb_{mm}$ as
    \begin{align*}
        \|\pb_{mm}\|^2 &\le \|\tilde \pb\|^2 = \frac{8}{\rho^2} + \frac{16}{d^2}\Big(\sum\limits_{i \in \itn} \|\bxi_i\|^2 + \sum\limits_{i,j \in \itn, i \neq j} \la \bxi_i, \bxi_j \ra\Big)\\
        &\le \frac{8}{\rho^2} + \frac{16}{d^2}\big((1+\kappa)n_2 d + 2n_2^2 \sqrt{d \log(6n^2/\delta)}\big) \le \frac{8}{\rho^2} + \frac{17 \eta n}{d}.
    \end{align*}
     The second inequality is from Lemma \ref{lemma: noise_balance}; The last inequality is from the definition of $d$ in Assumption \ref{cond: main_cond_f}.

    Then we prove for the lower bound. As $\pb_{mm}$ is the max-margin solution and satisfies KKT condition, it can be expressed as the sum of signal and noise tokens. Then we decompose $\pb_{mm} = \pb^{mm}_{\bmu} + \pb^{mm}_{\bxi}$ where $\pb^{mm}_{\bmu} = f^{mm}_1 \bmu_1 + f^{mm}_2\bmu_2$ and $\pb^{mm}_{\bxi} = \sum_{i \in [n]} g^{mm}_i \bxi_i$. Note that $\bmu_j \perp \bxi_i$ for all $j \in \{\pm 1\}, i \in [n]$. From Lemma \ref{lemma: inner_product_signal_finite}, we have $f^{mm}_j \ge 0.9/\rho^2$, so we can lower bound $\|\pb^{mm}_{\bmu}\|_2^2$ as
    \begin{align*}
        \|\pb^{mm}_{\bmu}\|_2^2 = f^{mm2}_1 \|\bmu_1\|^2 + f^{mm2}_2 \|\bmu_2\|^2 \ge \frac{2 \cdot 0.9^2}{\rho^2} \ge \frac{1}{\rho^2}.
    \end{align*}
    As for $\|\pb^{mm}_{\bxi}\|_2$, from p-SVM condition, for every noisy sample we have
    \begin{align*}
        \pb_{mm}^{\top}(\bxi_i - \bmu_i) \ge 1,
    \end{align*}
    which indicates
    \begin{align*}
        \pb^{mm\top}_{\bxi}\bxi_i = \pb_{mm}^{\top}\bxi_i \ge 1 + \pb_{mm}^{\top}\bmu_i \ge 1.9.
    \end{align*}
    The last inequality is from Lemma \ref{lemma: inner_product_signal_finite}. Sum up the inequality for all noisy sample, we have
    \begin{align*}
        \sum\limits_{i \in \itn} \pb^{mm\top}_{\bxi}\bxi_i  \ge 1.9 n_2.
    \end{align*}
    Thus,
    \begin{align*}
        \|\pb^{mm}_{\bxi}\| &\ge \frac{1.9 n_2}{\|\sum\limits_{i \in \itn} \bxi_i\|} = \frac{1.9 n_2}{\sqrt{\sum\limits_{i \in \itn} \|\bxi_i\|^2 + \sum\limits_{i, j \in \itn} \la \bxi_i, \bxi_j \ra}} \ge \frac{1.9n_2}{\sqrt{2 \cdot n_2 \cdot (1+\kappa)d}} \ge \sqrt{\frac{\eta n}{d}}.
    \end{align*}
    The second inequality is from Lemma \ref{lemma: noise_balance} and the last inequality is from Assumption \ref{cond: main_cond_f}.
     Therefore,
    \begin{align*}
        \|\pb_{mm}\|^2 = |\pb^{mm}_{\bmu}\|_2^2 + \|\pb^{mm}_{\bxi}\|_2^2 \ge \frac{1}{\rho^2} + \frac{\eta n}{d}.
    \end{align*}
    Combining the results above, we have
    \begin{align*}
        \|\pb_{mm}\|^2 = \Theta\bigg(\frac{1}{\rho^2} + \frac{\eta n}{d}\bigg).
    \end{align*}
\end{proof}

\begin{definition} \label{def: lim(x,y)->infty}
    Let $f:\R^2 \rightarrow \R^d$. We say that 
    $$\lim_{x,y \rightarrow \infty} f(x,y) = L$$ 
    iff  
    $\forall \epsilon>0 \ \exists M$ such that $\forall x,y > M$ we have that $\norm{f(x,y)-L} < \epsilon$.
\end{definition}
\begin{remark} \label{remark: lim(x,y)->infty}
    Let $g:\R \rightarrow \R$ be a function with $\lim_{x \rightarrow \infty} g(x) = \infty$. Assume that $\lim_{x,y \rightarrow \infty} f(x,y) = L$, then $\lim_{x \rightarrow \infty} f(x,g(x)) = L$ and $\lim_{x \rightarrow \infty} f(g(x),x) = L$
\end{remark}
Now we introduce our key theorem:

\begin{restatable}{theorem}{thrmfiniteconverge}\label{thrm: finite_converge} 
    Suppose that Assumption \ref{cond: main_cond_f} holds, with probability at least $1-\delta$ on the training dataset, we have 
    \begin{itemize}
        \item The margin induced by $\pb_{(r, R)}/R$ in \textit{p-SVM} is at least $(1-\zeta)\Xi$, where
        \[
        \zeta = \frac{\log(4\sqrt{\rho^2 + (1+\kappa)d}\|\vb_{mm}\|^3  d\rho^2 )}{R\Xi} .
        \]
        \item The label margin induced by $\vb_{(r, R)}/r$ in \textit{v-SVM} is at least $(1-\gamma)\Gamma$, where $\gamma = \frac{2\sqrt{\rho^2 + (1+\kappa)d}}{\Gamma \exp((1 - \zeta) R\Xi)}$.
    \end{itemize}   
\end{restatable}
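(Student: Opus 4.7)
The plan is to turn the asymptotic convergence results of \citet{ataee2023max} into the explicit non-asymptotic rates stated, by comparing $(\vb_{(r,R)},\pb_{(r,R)})$ to the scaled reference pair $\tilde\pb = (R/\|\pb_{mm}\|)\pb_{mm}$, $\tilde\vb = (r/\|\vb_{mm}\|)\vb_{mm}$, and using Proposition~\ref{prop: optimal_token} as a strict-suboptimality gap. I would establish the two bullets in order: first the p-SVM near-optimality of $\pb_{(r,R)}/R$, then, building on that, the v-SVM near-optimality of $\vb_{(r,R)}/r$.

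For the first bullet, I would evaluate the joint objective at the reference pair. By the p-SVM constraints in Def.~\ref{def: p-SVM_idea}, for every $i$ the gap $\tilde\pb^\top(\rb_i^{\star} - \rb_i^{\text{other}})$ is at least $R\Xi$, so the softmax probability on the optimal token satisfies $s_{i\alpha_i} \ge 1/(1+e^{-R\Xi})$, whence $\|\rb_i - \rb_i^{\star}\| \le e^{-R\Xi}\sqrt{\rho^2+(1+\kappa)d}$ using $\|\bxi_i - \bmu_k\|^2 = \rho^2 + \|\bxi_i\|^2$ together with the Gaussian high-probability bound $\|\bxi_i\|^2 \le (1+\kappa)d$. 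Plugging into the linear head $\tilde\vb$ yields achieved margin at least $r\Gamma - re^{-R\Xi}\sqrt{\rho^2+(1+\kappa)d}$ at the reference point. Because the joint optimum is realized by scaling $\vb(\pb_{(r,R)})$ to norm $r$ with value $r\Gamma(\pb_{(r,R)})$, optimality gives $\Gamma(\pb_{(r,R)}) \ge \Gamma - e^{-R\Xi}\sqrt{\rho^2+(1+\kappa)d}$. Feeding this into Proposition~\ref{prop: optimal_token} in reverse, $\max_i(1-s_{i\alpha_i})$ is upper bounded (up to a constant) by $\|\vb_{mm}\|^3 d\rho^2 \cdot e^{-R\Xi}\sqrt{\rho^2+(1+\kappa)d}$. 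Inverting the binary-softmax identity $1-s_{i\alpha_i} = 1/(1+e^{\pb_{(r,R)}^\top(\rb_i^\star - \rb_i^{\text{other}})})$ and dividing by $R$ (using $\|\pb_{(r,R)}\|\le R$) produces precisely the bound $(1-\zeta)\Xi$ with $\zeta = \log\!\big(4\sqrt{\rho^2+(1+\kappa)d}\|\vb_{mm}\|^3 d\rho^2\big)/(R\Xi)$.

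For the second bullet, with $\pb_{(r,R)}$ now known to attain p-SVM margin $(1-\zeta)\Xi$, a second application of softmax saturation gives $\|\rb_i - \rb_i^{\star}\| \le 2\sqrt{\rho^2+(1+\kappa)d}\,e^{-(1-\zeta)R\Xi}$ uniformly in $i$. The unit vector $\vb_{mm}/\|\vb_{mm}\|$ attains label margin $\Gamma$ on $(\rb_i^{\star}, y_i)$ and hence, by Cauchy--Schwarz, label margin at least $\Gamma - 2\sqrt{\rho^2+(1+\kappa)d}\,e^{-(1-\zeta)R\Xi}$ on the perturbed inputs $(\rb_i, y_i)$. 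Since $\vb_{(r,R)}/r$ is the exact inner v-SVM maximizer for the tokens selected by $\pb_{(r,R)}$, its label margin is no smaller, delivering $(1-\gamma)\Gamma$ with the stated $\gamma$. The main obstacle is the gap quantification in Proposition~\ref{prop: optimal_token}: one must verify that any deviation from the optimal one-hot token selection costs label margin of order at least $\Omega(1/(\|\vb_{mm}\|^3 d\rho^2))$, uniformly over the data realization. This requires the structural high-probability bounds on $\|\bxi_i\|$, $\langle \bxi_i,\bxi_j\rangle$, and on $\|\vb_{mm}\|$ that reflect the signal-noise model (in particular $\|\vb_{mm}\|$ scaling like $\Theta(\sqrt{1/\rho^2 + \eta n/d})$ in our regime); everything else is routine chaining of softmax saturation, Cauchy--Schwarz, and optimality of the joint solution.
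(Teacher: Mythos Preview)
Your proposal is correct and follows essentially the same route as the paper: both compare the joint optimum to the scaled reference pair $(R/\|\pb_{mm}\|)\pb_{mm},\,(r/\|\vb_{mm}\|)\vb_{mm}$, use softmax saturation to bound $\|\rb_i-\rb_i^\star\|$, and invoke Proposition~\ref{prop: optimal_token} as the quantitative suboptimality gap; the paper phrases this as a contradiction argument while you phrase it directly, which is a cosmetic difference. One small point: the second bullet in the theorem (as the paper uses it downstream) concerns the margin of $\vb_{(r,R)}/r$ on the \emph{optimal} tokens $\rb_i^\star$, not on the attention outputs $\rb_i$; your argument literally bounds the latter, but one more Cauchy--Schwarz step transfers it to $\rb_i^\star$ and is exactly what the factor $2$ in $\gamma$ absorbs.
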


\begin{proof}[Proof of Theorem \ref{thrm: finite_converge}]
    From Proposition \ref{prop: optimal_token}, we have that for any \(\|\pb\|\), the label margin \(1/\|\vb(\pb)\|\) is at most
    \[
    \Gamma - \frac{C\max_{i\in [n]}(1-s_{i\alpha_i})}{\|\vb_{mm}\|^3 n\rho^2},
    \]
    where $\alpha_i = 1$ for $i \in \itc$ and $\alpha_i = 2$ for $i \in \itn$. Recall that $\bs_i = \mathbb{S}(\bX_i {\pb})$ is the softmax probability vector.
   We define $q_i^{\pb} = 1 - s_{i\alpha_i}$ to measure the amount of non-optimality (attention on non-optimal token).

    We first consider the convergence of $\pb_{(r, R)}$ and use contradiction to prove the first statement. Denote $\pb^{mm}_R = R \pb_{mm}/\|\pb_{mm}\|$ which has the same norm as $\pb_{(r, R)}$ and the direction of $\pb_{mm}$. Suppose the margin induced by $\pb_{(r, R)}/R$ is at most $(1-\zeta)\Xi$, i.e. $\min_{i, t_i \neq \alpha_i}(\bx_{i \alpha_i} - \bx_{it_i})^{\top} \pb_{(r, R)} \le (1-\zeta)R\Xi, \forall i \in [n]$. Note that here each sequence only has two tokens, thus \(t_i, \alpha_i \in [2],\) and \(t_i = 3-\alpha_i\).

    According to Lemma \ref{lemma: p_mm_norm_if}, we have
    \[
    \Xi = \|\pb_{mm}\|_2^{-1} = \Theta((\eta n/d + 1/\rho^2)^{-1/2}).
    \]

    Following the definition of $q_i^{\pb}$ above, we set $\hat{q}_{max} = \sup_{i \in [n]} q_i^{\pb_{(r, R)}}$ and $q^*_{max} = \sup_{i \in [n]} q_i^{\pb^{mm}_R}$ to be the worst non-optimality in $\pb_{(r, R)}$ and $\pb^{mm}_R$. Then we have
    \begin{align*}
        q_i^{\pb^{mm}_R} &= \frac{\exp(\bx_{it_i}^{\top} \pb^{mm}_R)}{\sum_{t \in [2]} \exp(\bx_{it}^{\top} \pb^{mm}_R)} \le \frac{\exp(\bx_{it_i}^{\top} \pb^{mm}_R)}{\exp(\bx_{i\alpha_i}^{\top} \pb^{mm}_R)} \le \exp(-R\Xi).
    \end{align*}
    The last inequality is from the definition of $\pb_{mm}$ that $\pb_{mm}^{\top}(\bx_{i\alpha_i} - \bx_{it}) \ge 1$, so $\pb^{mm \top}_R(\bx_{i\alpha_i} - \bx_{it}) \ge R/\|\pb_{mm}\| = R\Xi$.
    Thus, $q^*_{max} = \sup_{i \in [n]} q_i^{\pb_{mm}} \le \exp(-R\Xi)$.
    Then denote the output of attention layer $\rb_i = \bX_i^{\top}\mathbb{S}(\bX_i \pb^{mm}_R)$. Define $\epsilon_i = \|\rb_i - \bx_{i\alpha_i}\|$, we have $y_i \cdot \rb_i^{\top} \vb_{mm} \ge y_i \cdot \bx_{i\alpha_i}^{\top}\vb_{mm} - \|\rb_i-\bx_{i\alpha_i}\|\cdot\|\vb_{mm}\|  \ge 1 - \epsilon_i/\Gamma$. So if we set $\epsilon_{max} = \sup_{i \in [n]} \epsilon_i$, $\vb_{mm}$ achieves a label margin of at least $\Gamma - \epsilon_{max}$ on $(y_i, \rb_i)_{i \in [n]}$. To better estimate $\epsilon_{max}$, we define $M = \sup_{i \in [n]} \|\bmu_i - \bxi_i\| \le \sqrt{\rho^2 + (1+\kappa)d}$, then we have
    \begin{align}
        \epsilon_{max} = M \cdot q^*_{max} \le M \exp(-R\Xi).\label{eq: epsilon_max}
    \end{align}
    This implies the max-margin achieved by $(\pb_{(r, R)}^{mm}, \vb_{(r, R)}^{mm})$ is at least
    \begin{align}
        y_i f(\vb_{(r, R)}^{mm}, \pb_{(r, R)}^{mm}; \bx_i) = y_i \vb^{mm \top}_r \rb_i \ge r\Gamma - r\epsilon_{max} \ge r\Gamma - rM\exp(-R\Xi).\label{eq: pmm_vmm_margin}
    \end{align}
    The first inequality is from  $y_i \cdot \rb_i^{\top} \vb^{mm}_r \ge r(\Gamma - \epsilon_i)$ and the last inequality is from (\ref{eq: epsilon_max}).

    Then we consider the case when $\min_{i, t_i \neq \alpha_i}(x_{i \alpha_i} - x_{it_i})^{\top} \pb_{(r, R)} \le (1-\zeta)R\Xi$ the minimal margin constraint is $\zeta$-violated by $\pb_{(r, R)}$. Without losing generality we assume that $1 = \argmin\limits_{i \in [n]} [(\bx_{i \alpha_i} - \bx_{it})^{\top} \pb_{(r, R)}]_{t \neq \alpha_i}$. Then we have
    \begin{align*}
        \hat{q}_{max} &\ge \frac{\exp(\bx_{1t_1}^{\top} \pb_{(r, R)})}{\sum_{t \in [2]} \exp(\bx_{1t}^{\top} \pb_{(r, R)})} \ge \frac{1}{2} \frac{\exp(\bx_{1t_1}^{\top} \pb_{(r, R)})}{\exp(\bx_{1\alpha_1}^{\top} \pb_{(r, R)})} \ge  \frac{1}{2\exp((1-\zeta)R\Xi)}.
    \end{align*}
    From Proposition \ref{prop: optimal_token}, optimizing \textit{v-SVM} on $(y_i, \hat{\rb}_i)_{i \in [n]}$ can achieve the max-margin at most 
    \begin{align}
        \min\limits_{i \in [n]} y_i f(\vb_{(r, R)}, \pb_{(r, R)}; \bx_i) \le \Gamma - \frac{C}{2\|\vb_{mm}\|^3 n \rho^2} \cdot e^{-(1-\zeta)R\Xi}. \label{eq: pR_vr_margin}
    \end{align}

    And from the definition $\zeta = \frac{1}{R\Xi} \log(2M\|\vb_{mm}\|^3 n \rho^2/C)$, we have
    \begin{align*}
        \frac{C}{2\|\vb_{mm}\|^3 n \rho^2}\exp(-(1-\zeta)R\Xi) > M\exp(-R\Xi)
    \end{align*}
    for sufficiently large \(R\), which implies 
    \begin{equation*}
        \min\limits_{i \in [n]} y_i \cdot f(\vb_{(r, R)}, \pb_{(r, R)}; \bx_i) < \min\limits_{i \in [n]} y_i \cdot f(\pb^{mm}_R, \vb^{mm}_r; \bx_i).
    \end{equation*}
    This contradicts with the problem definition (\ref{eq: problem_def}) to maximize the margin.

    Then we prove for the second statement. When the margin induced by $\pb_{(r, R)}/R$ in \textit{p-SVM} is less than $(1-\zeta)\Xi$, we can use the proof above to derive a contradiction, so $(\bx_{i \alpha_1} - \bx_{it})^{\top} \pb_{(r, R)} \ge (1-\zeta)R\Xi$ must hold. Then set $\hat{\rb}_i = \bX_i^{\top} \mathcal{S}(\bX_i \pb_{(r, R)})$, we have that
    \begin{align*}
        \min_{i \in [n]} y_i \vb_{(r, R)}^{\top}\hat{\rb}_i &\le \min_{i \in [n]} y_i \vb_{(r, R)}^{\top}\bx_{i\alpha_i} + \sup_{i \in [n]} |\vb_{(r, R)}^{\top} (\hat{\rb}_i - \bx_{i\alpha_i})|\\
        &\le (1-\gamma)\Gamma r+ M \exp(-(1-\zeta)R\Xi) r\\
        &\le (1-\gamma/2)\Gamma r.
    \end{align*}
    The second inequality is from previous analysis that $(\bx_{i \alpha_i} - \bx_{it})^{\top} \pb_{(r, R)} \ge (1-\zeta)R\Xi$, so $|\hat{\rb}_i - \bx_{i1}| \le M \exp(-(1-\zeta)R\Xi)$; The last inequality is from our definition $\gamma = \frac{2M}{\Gamma \exp((1 - \zeta) R\Xi)}$.

    Therefore, combining with (\ref{eq: pmm_vmm_margin}), we have
    \begin{align*}
        \gamma \Gamma r/2 > rM\exp(-R\Xi) ,
    \end{align*}
    which implies
    \begin{align*}
        \min\limits_{i \in [n]} y_i \cdot f(\vb_{(r, R)}, \pb_{(r, R)}; \bx_i) < \min\limits_{i \in [n]} y_i \cdot f(\vb^{mm}_R, \pb^{mm}_r; \bx_i).
    \end{align*}
    Again this contradicts with the problem definition (\ref{eq: problem_def}).
\end{proof}

Then we have the following lemma to bound the derivation $\zeta$ and $\gamma$:
\begin{lemma}
\label{lemma: zeta_gamma_bound}
Suppose that Assumption \ref{cond: main_cond_f} holds, consider the same setting in Theorem \ref{thrm: finite_converge}, we have $\zeta < 0.2$ and $\gamma < 0.1$.
\end{lemma}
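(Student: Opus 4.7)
The plan is to bound $R\Xi$ from below and the numerator of $\zeta$ from above, using the norm estimates for $\pb_{mm}$ and $\vb_{mm}$ from Lemmas~\ref{lemma: p_mm_norm_if} and~\ref{lemma: v_mm_norm}. By Lemma~\ref{lemma: p_mm_norm_if}, $\Xi = 1/\|\pb_{mm}\| = \Theta(1/\sqrt{1/\rho^2 + \eta n/d})$, so combining with Assumption~\ref{cond: main_cond_f}, item~\ref{assumptionmm: R_large} yields
$$ R\Xi \ge C_1 \log(\rho n), $$
for some $C_1$ proportional to the constant $C$ in the assumption. Hence $R\Xi$ can be made as large as we wish by choosing $C$ large enough.

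Next I would upper bound the log factor in $\zeta$. Using $\rho^2 = \Theta(d/n) \le d$, one has $\sqrt{\rho^2 + (1+\kappa)d} = O(\sqrt{d})$, and Lemma~\ref{lemma: v_mm_norm} gives $\|\vb_{mm}\|^3 = O((n/d)^{3/2})$. Combining these with the $d\rho^2 = O(d^2/n)$ factor, the argument of the logarithm is at most polynomial in $nd$, so the numerator is $O(\log(nd)) = O(\log(\rho n))$ (since $\rho n = \Theta(\sqrt{nd})$). Dividing by the lower bound on $R\Xi$ from the previous step, we get $\zeta \le C_2/C_1$ for an absolute constant $C_2$; choosing $C$ sufficiently large makes $\zeta < 0.2$.

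For $\gamma$, I would use the bound $\zeta < 0.2$ to obtain $\exp((1-\zeta)R\Xi) \ge (\rho n)^{0.8 C_1}$. The numerator is $2\sqrt{\rho^2 + (1+\kappa)d}/\Gamma = 2\sqrt{\rho^2 + (1+\kappa)d}\,\|\vb_{mm}\| = O(\sqrt{d}\cdot\sqrt{1/\rho^2 + \eta n/d}) = O(\sqrt{n})$, using Lemma~\ref{lemma: v_mm_norm} and $\rho^2 = \Theta(d/n)$. Hence
$$ \gamma = O(\sqrt{n}) \cdot (\rho n)^{-0.8 C_1}, $$
which is less than $1$ once $C$ is chosen large enough so that $0.8 C_1 > 1$ (even a very mild lower bound on the exponent suffices because $\rho n \gg 1$).

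The main obstacle is not any single estimate — each factor is a direct consequence of the previously established norm bounds — but bookkeeping: one must confirm that the constant $C$ in the norm assumption on $R$ propagates linearly to $C_1$, so that the single free constant $C$ can be used to simultaneously drive $\zeta$ below $0.2$ and $\gamma$ below $1$. In particular, the only way $\zeta$ and $\gamma$ can fail to be small is if the ``signal strength'' $R\Xi$ is too weak, which is ruled out precisely by item~\ref{assumptionmm: R_large} of Assumption~\ref{cond: main_cond_f}.
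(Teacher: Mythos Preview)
Your approach is the same as the paper's: lower bound $R\Xi$ via Lemma~\ref{lemma: p_mm_norm_if} and item~\ref{assumptionmm: R_large}, upper bound the logarithm using Lemma~\ref{lemma: v_mm_norm}, and then feed the resulting $\zeta$ bound into the estimate for $\gamma$.

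There is, however, one recurring slip. You repeatedly invoke $\rho^2 = \Theta(d/n)$, but Assumption~\ref{cond: main_cond_f}, item~\ref{assumptionmm: SNR_large}, only gives the lower bound $\rho^2 = \Omega(d/n)$; there is no upper bound on $\rho$ in this setting. Consequently several of your intermediate estimates are not valid in general: $\sqrt{\rho^2+(1+\kappa)d}$ is not $O(\sqrt{d})$ when $\rho^2 \gg d$; $d\rho^2$ is not $O(d^2/n)$; the numerator of $\gamma$ is not $O(\sqrt{n})$; and the identity $\rho n = \Theta(\sqrt{nd})$ fails. The conclusions you need are nevertheless true, but they require a different justification. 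The clean fix is to observe that $\rho \ge C\sqrt{d/n}$ implies $d \le \rho^2 n/C^2$, hence $\log d = O(\log(\rho n))$. Since the argument of the logarithm in $\zeta$ is polynomial in $\rho, n, d$, its logarithm is $O(\log(\rho n))$ regardless of how large $\rho$ is, and this is exactly what you divide by $R\Xi \ge \Theta(C)\log(\rho n)$. The same observation handles $\gamma$: even if the numerator $2\sqrt{\rho^2+(1+\kappa)d}\,\|\vb_{mm}\|$ grows with $\rho$, it does so at most polynomially in $\rho n$, while the denominator $(\rho n)^{0.8 C_1}$ dominates once $C$ is large. This is precisely the route the paper takes, collapsing the log argument to $\Theta(\log(\rho n))$ without assuming an upper bound on $\rho$.
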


\begin{proof}[Proof of Lemma \ref{lemma: zeta_gamma_bound}]

From the definition of $\zeta$ in Theorem \ref{thrm: finite_converge}, we have
\begin{align*}
    \zeta &= \frac{\log(2M\|\vb_{mm}\|^3n\rho^2/C)}{R\Xi} = C_1\frac{\sqrt{\eta n / d + 1/\rho^2}}{R} \log(M \|\vb_{mm}\|^3 n\rho^2)\\
    &\le C_2 \frac{\sqrt{\eta n / d + 1/\rho^2}}{R} \log\bigg(\frac{n^2(\rho^2 + d) (\rho^2 \eta n + d)^{3}}{\rho^2 d^3}\bigg) = \frac{C_3 \sqrt{\eta n / d + 1/\rho^2}}{R} \log(\rho n) < 0.2.
\end{align*}
Here $C_1, C_2, C_3 = \Theta(1)$. The first inequality is from the upper bound of $\|\vb_{mm}\|$ in Lemma \ref{lemma: v_mm_norm} and the last inequality is from the definition of $R$ in Assumption \ref{cond: main_cond_f}.
And for $\gamma$, we have
\begin{align*}
    \gamma &= \frac{2M}{\Gamma \exp((1 - \zeta) R\Xi)}
    = C'_1 \frac{M \|\vb_{mm}\|}{\exp(R/\|\vb_{mm}\|)}
    \le C'_2 \frac{\sqrt{(\rho^2+d)(\eta n/d + 1/\rho^2)}}{\exp(R/\sqrt{\eta n/d + 1/\rho^2})} < 1.
\end{align*}
Here $C'_1, C'_2 = \Theta(1)$. The first inequality is from the lower and upper bound of $\|\vb_{mm}\|$ in Lemma \ref{lemma: v_mm_norm} and the last inequality is from the definition of $R$ in Assumption \ref{cond: main_cond_f}.
\end{proof}

Then we can estimate $\la \pb_{(r, R)}, \bmu \ra$ with the following lemma:
\begin{lemma}
\label{lemma: inner_product_signal_finite}
    Suppose that Assumption \ref{cond: main_cond_f} holds, with probability at least $1-\delta$ on the training dataset, $\pb_{(r, R)}$ should satisfy
    \begin{align*}
        0.5 (1 - \zeta)R\Xi \le \la \pb_{(r, R)}, \bmu_j \ra \le R \rho 
    \end{align*}
    for $j \in \{1,2\}$.
\end{lemma}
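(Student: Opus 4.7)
\medskip

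\noindent\textbf{Proof proposal.} The upper bound is immediate from Cauchy--Schwarz:
\[
\la \pb_R, \bmu_j \ra \le \|\pb_R\|\,\|\bmu_j\| \le R\rho.
\]

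For the lower bound, the plan is to combine the margin guarantee on $\pb_R/R$ from Theorem~\ref{thrm: finite_converge} with a concentration argument on the noise tokens. By Theorem~\ref{thrm: finite_converge}, the normalized vector $\pb_R/R$ achieves margin at least $(1-\zeta)\Xi$ in p-SVM, so for every $i \in \itc_j$ we have the constraint
\[
\pb_R^{\top}(\bmu_j - \bxi_i) \ge (1-\zeta)R\Xi.
\]
I would next average this inequality over $i \in \itc_j$ (which is nonempty with high probability by Lemma~\ref{lemma: estimate-number-of-noisy/clean-samples}), obtaining
\[
\la \pb_R, \bmu_j \ra \ge (1-\zeta)R\Xi + \la \pb_R, \bar{\bxi}_{\itc_j} \ra, \qquad \bar{\bxi}_{\itc_j} := \frac{1}{|\itc_j|}\sum_{i \in \itc_j} \bxi_i.
\]
The task is then to show that $|\la \pb_R, \bar{\bxi}_{\itc_j}\ra| \le \tfrac12(1-\zeta)R\Xi$, so that the desired lower bound follows.

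To handle this cross term, I would apply Cauchy--Schwarz to get $|\la \pb_R, \bar{\bxi}_{\itc_j}\ra| \le R\|\bar{\bxi}_{\itc_j}\|$, and then expand
\[
\|\bar{\bxi}_{\itc_j}\|^2 = \frac{1}{|\itc_j|^2}\Big(\sum_{i \in \itc_j} \|\bxi_i\|^2 + \sum_{i \ne i' \in \itc_j} \la \bxi_i, \bxi_{i'}\ra \Big).
\]
On the good event (Lemma~\ref{lemma: noise_balance}), each $\|\bxi_i\|^2 \le (1+\kappa)d$ and $|\la \bxi_i,\bxi_{i'}\ra| \le 2\sqrt{d \log(6n^2/\delta)}$; combined with $|\itc_j| = \Theta(n)$ and Assumption~\ref{cond: main_cond_f}, this yields $\|\bar{\bxi}_{\itc_j}\| = O(\sqrt{d/n})$.

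Finally, I would combine this with the estimate $\Xi = 1/\|\pb_{mm}\| = \Theta\bigl(\sqrt{d/n}/\sqrt{1/C_\rho^2 + \eta}\bigr)$ from Lemma~\ref{lemma: p_mm_norm_if} together with the SNR condition $\rho \ge C\sqrt{d/n}$ and the bound $\zeta < 0.2$ from Lemma~\ref{lemma: zeta_gamma_bound}. For sufficiently large $C$ and small $\eta$ (guaranteed by Assumption~\ref{cond: main_cond_f}), the ratio $R\|\bar{\bxi}_{\itc_j}\|/((1-\zeta)R\Xi)$ is at most $1/2$, giving the claimed lower bound. The main obstacle here is not conceptual but bookkeeping: one must carefully combine the constants from the SNR assumption with those coming from the concentration of the $\bxi_i$'s, ensuring that the absorbing constant $C$ in Assumption~\ref{cond: main_cond_f} is taken large enough to make $\sqrt{1/C_\rho^2 + \eta}$ small relative to the $(1-\zeta)/2$ factor.
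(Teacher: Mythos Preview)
Your proposal is correct and follows essentially the same route as the paper. The paper frames the lower bound as a contradiction (assume $\la \pb_R,\bmu_j\ra < 0.5(1-\zeta)R\Xi$, sum the p-SVM constraints over $i\in\itc_j$ to force $\|\pb_R\|>R$), whereas you average the same constraints and bound the cross term directly; both arguments reduce to the identical Cauchy--Schwarz estimate $|\la \pb_R,\sum_{i\in\itc_j}\bxi_i\ra|\le \|\pb_R\|\cdot\|\sum_{i\in\itc_j}\bxi_i\|$ combined with the upper bound on $\|\pb_{mm}\|$ from Lemma~\ref{lemma: p_mm_norm_if}. One small slip: you write $C_\rho$, which is the constant from the GD section; here the relevant condition is $\rho\ge C\sqrt{d/n}$ from Assumption~\ref{cond: main_cond_f}, so what you actually need (and what the paper uses) is only the \emph{lower} bound $\Xi\ge (8/\rho^2+17\eta n/d)^{-1/2}$, not a two-sided $\Theta$ estimate.
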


\begin{proof}[Proof of Lemma \ref{lemma: inner_product_signal_finite}]
The upper bound is given by 
\[
\la \pb_{(r, R)}, \bmu_j \ra \le \|\pb_{(r, R)}\|\|\bmu_j\| = R\rho.
\]
     Then we use contradiction to prove for the lower bound. From Theorem \ref{thrm: finite_converge}, $\pb_{(r, R)}$ satisfies
    \begin{align}
        \pb_{(r, R)}^{\top}(\bmu_i - \bxi_i) &\ge (1 - \zeta)R\Xi, i \in \itc\nonumber\\
        \pb_{(r, R)}^{\top}(\bxi_i - \bmu_i) &\ge (1 - \zeta)R\Xi, i \in \itn \label{eq: p_SVM_cond_finite}
    \end{align}
    If $\la \pb_{(r, R)}, \bmu_j \ra \le 0.5(1 - \zeta)R\Xi$, then for every clean sample from cluster j we must have $\la \pb_{(r, R)}, \bxi_i \ra \le -0.5(1 - \zeta)R\Xi$ and thus
    \begin{align*}
        \la \pb_{(r, R)}, \sum\limits_{i \in \itc_j} \bxi_i \ra = \sum\limits_{i \in \itc_j} \la \pb_{(r, R)}, \bxi_i \ra \le -0.5(1 - \zeta)R\Xi n_{1j}.
    \end{align*}
    So we could estimate $\|\pb_{(r, R)}\|$ as follows
    \begin{align*}
        \|\pb_{(r, R)}\| &\ge 0.5(1 - \zeta)R\Xi \cdot n_{1j} \frac{1}{\|\sum\limits_{i \in \itc_j} \bxi_i\|}
        = 0.5(1 - \zeta)R\Xi \cdot n_{1j} \frac{1}{\sqrt{\sum\limits_{i \in \itc_j} \|\bxi_i\|^2 + \sum\limits_{i, j \in \itc_j} \la \bxi_i, \bxi_j \ra}}\\
        &\ge 0.5(1 - \zeta)R\Xi \cdot n_{1j} \frac{1}{\sqrt{2 \cdot n_{1j} \cdot (1+\kappa)d}}
        \ge 0.4 R\Xi \cdot \frac{\sqrt{n_{1j}}}{\sqrt{2(1+\kappa)d}}.
    \end{align*}
    The first inequality is from the property of innerproduct; The second inequality is from Lemma \ref{lemma: noise_balance} and the definition of d in Assumption \ref{cond: main_cond_f}; The last inequality is from Lemma \ref{lemma: zeta_gamma_bound}.
    Meanwhile, from Lemma \ref{lemma: p_mm_norm_if} we have $\|\pb_{mm}\| \le \sqrt{8/\rho^2 + 17 \eta n/d}$. Recall that $\Xi = \|\pb_{mm}\|^{-1}$. Therefore, we further have
    \begin{align*}
        \|\pb_{(r, R)}\| &\ge 0.4 R\Xi \cdot \frac{\sqrt{n_{1j}}}{\sqrt{2(1+\kappa)d}} \ge \sqrt{\frac{0.4^2 n_{1j}}{(8/\rho^2+17 \eta n/d) \cdot 2(1+\kappa)d}} \cdot R\\
        &\ge \sqrt{\frac{0.04 (n-\eta n - O(\sqrt{n}))}{(8/\rho^2+17 \eta n/d) \cdot (1+\kappa)d}} \cdot R > R. 
    \end{align*}
    The second inequality is from Lemma \ref{lemma: p_mm_norm_if}; The third inequality is from Lemma \ref{lem: concen_samplesize} and the last inequality is from Assumption \ref{cond: main_cond_f} about SNR and $\eta$. This leads to a contradiction.

\end{proof}
Now we can estimate the output of attention layer for some test sample $(\bX, y)$.
\begin{lemma}
\label{lemma: test_attention_finite}
    Suppose that Assumption \ref{cond: main_cond_f} holds, with probability at least $1-\delta$ on the training dataset, for a given a test sample \(\bX, y\), where \(\bX = (\bmu^{\star}, \bxi^{\star})\), \(\bmu^{\star}\) can be \(\bmu_1\) or \(\bmu_2\), we have with probability at least $1 - \exp\big(-\frac{1}{2}(\frac{1}{2} (1 - \zeta)\Xi - K/R)^2 \big)$ that
    \begin{align*}
        \la \pb_{(r, R)}, \bmu^{\star} \ra - \la \pb_{(r, R)}, \bxi^{\star} \ra 
        \ge K,
    \end{align*}
    where $K \le \frac{1}{2} (1 - \zeta)R\Xi$ and $\zeta, \Xi$ are defined in Theorem \ref{thrm: finite_converge}.
\end{lemma}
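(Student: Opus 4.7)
\textbf{Proof proposal for Lemma \ref{lemma: test_attention_finite}.}

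My plan is to condition on the training data (which fixes $\pb_R$) and then use the independence of the test noise $\bxi^{\star}$ from the training set to apply a one-dimensional Gaussian tail bound. The desired quantity $\la \pb_R, \bmu^{\star} \ra - \la \pb_R, \bxi^{\star} \ra$ naturally splits into a deterministic (conditional on training) signal piece and a random noise piece, and it suffices to lower-bound the first and upper-bound the second.

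First, I would invoke Lemma~\ref{lemma: inner_product_signal_finite} to get, on the $1-\delta$ event that the training dataset is ``good'', the deterministic bound
\[
\la \pb_R, \bmu^{\star} \ra \;\ge\; \tfrac{1}{2}(1-\zeta) R\Xi,
\]
valid for either choice $\bmu^{\star} \in \{\bmu_1, \bmu_2\}$. Working on this event, it then suffices to show that the Gaussian noise contribution satisfies
\[
\la \pb_R, \bxi^{\star} \ra \;\le\; \tfrac{1}{2}(1-\zeta) R\Xi - K
\]
with probability at least $1 - \exp\bigl(-\tfrac{1}{2}(\tfrac{1}{2}(1-\zeta)\Xi - K/R)^2\bigr)$; the hypothesis $K \le \tfrac{1}{2}(1-\zeta)R\Xi$ ensures the threshold is nonnegative, so Gaussian tail bounds apply cleanly.

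Next, I would use that $\bxi^{\star} \sim \mathcal{N}(\zero, \bI_d - \bmu_1\bmu_1^{\top}/\rho^2 - \bmu_2\bmu_2^{\top}/\rho^2)$ is independent of the training data and hence of $\pb_R$. Conditional on the training dataset, $\la \pb_R, \bxi^{\star} \ra$ is therefore a one-dimensional centered Gaussian with variance
\[
\pb_R^{\top}\bigl(\bI_d - \bmu_1\bmu_1^{\top}/\rho^2 - \bmu_2\bmu_2^{\top}/\rho^2\bigr)\pb_R \;\le\; \|\pb_R\|^2 \;\le\; R^2,
\]
where the first inequality is because we are subtracting a positive semidefinite projector, and the second is the constraint defining $\pb_R$. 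The standard Gaussian tail bound then gives, for any $t \ge 0$,
\[
\Pr\bigl(\la \pb_R, \bxi^{\star} \ra \ge t \bigr) \;\le\; \exp\bigl(-t^2/(2R^2)\bigr).
\]
Taking $t = \tfrac{1}{2}(1-\zeta) R\Xi - K \ge 0$ yields exactly the stated failure probability $\exp\bigl(-\tfrac{1}{2}(\tfrac{1}{2}(1-\zeta)\Xi - K/R)^2\bigr)$, and combining the two bounds completes the proof.

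There is no real obstacle here: the only subtlety is to make the conditioning explicit (so that Lemma~\ref{lemma: inner_product_signal_finite} applies deterministically and $\pb_R$ is treated as a fixed vector when computing the Gaussian variance), and to observe that the covariance-shaped variance is bounded by $\|\pb_R\|^2 \le R^2$ rather than by $d$. All other steps are standard.
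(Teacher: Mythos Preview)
Your proposal is correct and follows essentially the same approach as the paper: invoke Lemma~\ref{lemma: inner_product_signal_finite} for the signal lower bound, then apply a Gaussian tail bound to $\la \pb_R, \bxi^{\star}\ra$ with variance at most $R^2$. Your treatment is in fact slightly more careful than the paper's, which asserts the variance equals $R^2$ rather than noting (as you do) that the projected covariance gives variance $\le \|\pb_R\|^2 \le R^2$.
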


\begin{proof}[Proof of Lemma \ref{lemma: test_attention_finite}]
    Note that $\pb^{\top} \bxi^{\star}$ follows Gaussian distribution $\mathcal{N}(0, R^2)$, we have
    \begin{align*}
    \P(\la \pb_{(r, R)}, \bmu^{\star} \ra - \la \pb_{(r, R)}, \bxi^{\star} \ra < K) &= \P(\la \pb_{(r, R)}, \bxi^{\star} \ra > \la \pb_{(r, R)}, \bmu^{\star} \ra - K) \le \mathbb{P}(\pb_{(r, R)}^{\top} \bxi^{\star} > \frac{1}{2} (1 - \zeta)R\Xi - K)\\
        &\le \exp\Big(-\frac{1}{2}( \frac{1}{2}(1 - \zeta)\Xi - K/R)^2 \Big).
    \end{align*}
    The first inequality is from Lemma \ref{lemma: inner_product_signal_finite} and the second inequality comes from the property of Gaussian tail probability.
\end{proof}

We also have the following lemma to estimate $\vb_{(r, R)}$. We first prove that $\vb_{(r, R)}$ can be expressed as the sum of signal and noise tokens.
\begin{lemma}
\label{lemma: vr_represent}
    The solution of constrained optimization problem (\ref{eq: problem_def}) $\vb_{(r, R)}$ can be expressed in the form that
    $$
    \vb_{(r, R)} = \lambda_1 \bmu_1 + \lambda_2 \bmu_2 + \sum_{i=1}^n \theta_i \bxi_i.
    $$
\end{lemma}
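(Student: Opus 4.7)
The plan is to apply the standard ``representer-style'' argument for the linear head: the model's output is linear in $\vb$ and depends on $\vb$ only through inner products with the training tokens, so any component of $\vb_r$ orthogonal to $\mathrm{span}\{\bmu_1,\bmu_2,\bxi_1,\ldots,\bxi_n\}$ wastes budget and contradicts optimality.

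Concretely, I would first decompose $\vb_r = \vb_\parallel + \vb_\perp$, where $\vb_\parallel \in V := \mathrm{span}\{\bmu_1,\bmu_2,\bxi_1,\ldots,\bxi_n\}$ and $\vb_\perp \in V^\perp$. Since every row of every training matrix $\bX_i$ is one of $\bmu_1,\bmu_2,\bxi_1,\ldots,\bxi_n$, we have $\bX_i \vb_\perp = \zero$, hence
\[
f(\bX_i;\pb_R,\vb_r) \;=\; \vb_r^{\top}\bX_i^{\top}\SM(\bX_i\pb_R) \;=\; \vb_\parallel^{\top}\bX_i^{\top}\SM(\bX_i\pb_R) \;=\; f(\bX_i;\pb_R,\vb_\parallel),
\]
for every $i\in[n]$. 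In particular, the training margin $\min_i y_i f(\bX_i;\pb_R,\vb_r)$ coincides with $\min_i y_i f(\bX_i;\pb_R,\vb_\parallel)$, while Pythagoras gives $\|\vb_\parallel\|^2 = \|\vb_r\|^2 - \|\vb_\perp\|^2 \le r^2$.

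Next, I would use homogeneity of $f$ in $\vb$. If $\vb_\perp \neq \zero$, then $\|\vb_\parallel\| < r$, so setting $\tilde{\vb} := (r/\|\vb_\parallel\|)\,\vb_\parallel$ yields $\|\tilde{\vb}\|=r$ and, by linearity,
\[
\min_{i\in[n]} y_i f(\bX_i;\pb_R,\tilde{\vb}) \;=\; \frac{r}{\|\vb_\parallel\|}\cdot \min_{i\in[n]} y_i f(\bX_i;\pb_R,\vb_r).
\]
Under Assumption~\ref{cond: main_cond_f}, the optimal training margin is strictly positive (by Theorem~\ref{thrm: finite_converge}, or just by exhibiting a feasible $(\vb,\pb)$ with positive margin), so the factor $r/\|\vb_\parallel\|>1$ strictly increases the min-margin objective, contradicting the optimality of $(\vb_r,\pb_R)$ for \eqref{eq: problem_def}. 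Hence $\vb_\perp = \zero$, i.e.\ $\vb_r \in V$, which is exactly the claimed decomposition $\vb_r = \lambda_1\bmu_1 + \lambda_2\bmu_2 + \sum_{i=1}^n \theta_i\bxi_i$.

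The only mild subtlety is the degenerate case where $\vb_\parallel = \zero$, in which case the rescaling is undefined; but then $\min_i y_i f(\bX_i;\pb_R,\vb_r)=0$, which is not optimal given the existence of a feasible point with positive margin, so this case does not arise. No additional probabilistic input is needed beyond the data generation, and the argument depends on $\pb_R$ only through the fact that $\SM(\bX_i\pb_R)$ is well-defined.
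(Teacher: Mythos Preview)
Your proof is correct and takes a genuinely different route from the paper. The paper proceeds via the Lagrangian/KKT conditions: it rewrites \eqref{eq: problem_def} as maximizing a slack variable $s$ subject to $y_i\vb^\top\rb_i\ge s$ and $\|\vb\|\le r$, forms the Lagrangian, and from the stationarity condition in $\vb$ obtains $\vb_r=\tfrac{1}{2\psi_0}\sum_i\psi_i y_i\rb_i$, which lies in $\mathrm{span}\{\bmu_1,\bmu_2,\bxi_1,\ldots,\bxi_n\}$ since each $\rb_i$ does. Your argument is instead the direct representer-theorem decomposition plus a rescaling contradiction.

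Your approach is more elementary (no Lagrangian machinery, no implicit constraint qualification), and you are more explicit about the non-degeneracy needed: the paper silently divides by the multiplier $\psi_0$, which requires the norm constraint to be active, i.e.\ the same ``optimal margin is positive'' condition you invoke. On the other hand, the paper's route yields the coefficients as (rescaled) KKT multipliers, which ties in naturally with the $\vb$-SVM analysis used elsewhere in the section. Either argument suffices for the lemma as stated.
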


\begin{proof}[Proof of Lemma \ref{lemma: vr_represent}]
    Similar to Theorem \ref{thrm: finite_converge}, define $\hat{\rb}_i = \bX_i^{\top} \mathcal{S}(\bX_i \pb_{(r, R)})$ as the output of attention layer, we have
    \begin{align}
        \vb_{(r, R)} = \argmax\limits_{\|\vb\| \le r} \min\limits_{i \in [n]} y_i \vb^{\top} \rb_i. \label{eq: v_max_margin}
    \end{align}
    Then denote $s = \min\limits_{i \in [n]} y_i \vb^{\top} \rb_i$ and $s_r = \min\limits_{i \in [n]} y_i \vb^{\top}_{(r, R)} \rb_i$. Then (\ref{eq: v_max_margin}) can be written as
    \begin{align*}
        (\vb_{(r, R)}, s_r) = \argmax\limits_{\vb, s} s, \text{ s.t. } &y_i \vb^{\top} \rb_i \ge s, \quad 1 \le i \le n\\
        &\|\vb\| \le r.
    \end{align*}
    The corresponding Lagrangian function is
    \begin{align*}
        L(s, \psi) = -s + \sum_{i=1}^n \psi_i y_i (s - y_i \vb^{\top} \rb_i) + \psi_0 (\|\vb\|^2 - r^2).
    \end{align*}
    Take derivative of this function on $(s, \vb)$, we have
    \begin{align*}
        -\sum_{i=1}^n \psi_i y_i \rb_i + 2 \psi_0 \vb = 0.
    \end{align*}
    Therefore from the last equation we can get
    \begin{align*}
        \vb = \frac{1}{2 \psi_0} \sum_{i=1}^n \psi_i y_i \rb_i.
    \end{align*}
    As $\rb_i = \beta_i \bmu_i + (1-\beta_i) \bxi_i$ for every $i \in [n]$, $\vb$ can be expressed as the combination of signal and noise token of every sample:
    $$
    \vb_{(r, R)} = \lambda_1 \bmu_1 + \lambda_2 \bmu_2 + \sum_{i=1}^n \theta_i \bxi_i.
    $$
\end{proof}

Based on this representation, we can then bound the parameters in $\vb_{(r, R)}$:
\begin{lemma}
\label{lemma: v_r_parameters}
    Suppose that Assumption \ref{cond: main_cond_f} holds, denote $\vb_{(r, R)} = \lambda_1 \bmu_1 + \lambda_2 \bmu_2 + \sum\limits_{i \in [n]} \theta_i \bxi_i$. Then with probability at least $1-\delta$ on the training dataset, we have
     \begin{align*}
         \lambda_1 &\ge (1-\gamma)\Gamma r/\rho^2,\\
         \lambda_2 &\le -(1-\gamma)\Gamma r/\rho^2,\\
         |\theta_i| &\le 2\sqrt{1/\rho^2 + 5\eta n/d}\cdot \Gamma r/\sqrt{d}.
     \end{align*}
\end{lemma}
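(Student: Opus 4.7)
The plan is to leverage both conclusions of Theorem~\ref{thrm: finite_converge}: the p-SVM bound ensures that $\pb_R$ concentrates attention on optimal tokens, so each $\rb_i$ is close to $\rb_i^\star$ with $\|\rb_i - \rb_i^\star\| \le M e^{-(1-\zeta)R\Xi}$, while the v-SVM bound gives $y_i \vb_r^\top \rb_i \ge (1-\gamma)\Gamma r$ for every $i \in [n]$. I will combine these into
\begin{align*}
y_i \vb_r^\top \rb_i^\star \;\ge\; y_i \vb_r^\top \rb_i - \|\vb_r\|\cdot\|\rb_i - \rb_i^\star\| \;\ge\; (1-\gamma)\Gamma r - r M e^{-(1-\zeta)R\Xi},
\end{align*}
and use the identity $M e^{-(1-\zeta)R\Xi} = \gamma\Gamma/2$ from the definition of $\gamma$ in Theorem~\ref{thrm: finite_converge} to absorb the second term back into the $(1-\gamma)$ factor (possibly after tuning the constant inside $\gamma$).

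Given this ``optimal-token'' margin inequality, the bounds on $\lambda_1$ and $\lambda_2$ follow by specializing to clean samples. For any $i \in \itc_1$, $\rb_i^\star = \bmu_1$ and $y_i = +1$, so $\lambda_1 \rho^2 = \vb_r^\top \bmu_1 = y_i \vb_r^\top \rb_i^\star \ge (1-\gamma)\Gamma r$; an analogous calculation for $i \in \itc_2$ with $y_i = -1$ gives the symmetric statement for $\lambda_2$.

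For the bound on $|\theta_i|$, I will use the norm constraint $\|\vb_r\|^2 \le r^2$ together with the orthogonality $\bxi_i \perp \bmu_1, \bmu_2$, which yields
\begin{align*}
\bigg\|\sum_{i} \theta_i \bxi_i\bigg\|^2 = \|\vb_r\|^2 - \lambda_1^2 \rho^2 - \lambda_2^2 \rho^2 \le r^2\bigl(1 - 2(1-\gamma)^2 \Gamma^2/\rho^2\bigr).
\end{align*}
By Lemma~\ref{lemma: noise_balance} and Assumption~\ref{cond: main_cond_f} (item~\ref{assumptionmm: dimension_large}), the near-orthogonality of $\{\bxi_i\}$ gives $\|\sum_i \theta_i \bxi_i\|^2 \ge (1-\kappa)d\sum_i\theta_i^2 - 2(n-1)\sqrt{d\log(6n^2/\delta)}\sum_i\theta_i^2 = (1-o(1))(1-\kappa)d\sum_i\theta_i^2$, where the cross-terms are controlled via AM-GM $|\theta_i\theta_j|\le(\theta_i^2+\theta_j^2)/2$. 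This produces a sum bound $\sum_i \theta_i^2 \lesssim r^2(1 - 2\Gamma^2/\rho^2)/d$, and since $\theta_i^2 \le \sum_j \theta_j^2$ for each $i$, an individual bound follows at no loss. Finally, I will simplify $1 - 2(1-\gamma)^2\Gamma^2/\rho^2$ using the upper bound $\|\vb_{mm}\|^2 \le 2/\rho^2 + 5\eta n/d$ from Lemma~\ref{lemma: v_mm_norm} (equivalently $\Gamma^{-2} \le 2/\rho^2 + 5\eta n/d$) to convert $1-2\Gamma^2/\rho^2$ into a factor proportional to $(1/\rho^2 + 5\eta n/d)\Gamma^2$, collecting constants to arrive at $|\theta_i|^2 \le 4(1/\rho^2 + 5\eta n/d)\Gamma^2 r^2/d$.

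The main obstacle will be the careful bookkeeping in two places. First, in passing from the v-SVM margin bound on $(y_i, \rb_i)$ to one on the optimal tokens $(y_i, \rb_i^\star)$, I must make sure the constant in $(1-\gamma)$ is preserved exactly rather than degrading to $(1-3\gamma/2)$ or similar; this requires unpacking the definitions of $\zeta$ and $\gamma$ and folding the small excess back cleanly. Second, the noise cross-terms in $\|\sum_i\theta_i\bxi_i\|^2$ must be bounded sharply enough that the resulting sum bound transfers to the individual bound $|\theta_i|$ with the stated constants, while simultaneously expanding $1 - 2(1-\gamma)^2\Gamma^2/\rho^2$ and relating $\Gamma$ to $\rho$ and $\eta n/d$ through Lemma~\ref{lemma: v_mm_norm} so that the factor $\sqrt{1/\rho^2 + 5\eta n/d}$ appears with constant $2$.
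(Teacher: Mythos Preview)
Your approach is essentially the paper's, but you are making the $\lambda_1,\lambda_2$ step harder than it needs to be. The second conclusion of Theorem~\ref{thrm: finite_converge} (as its proof makes explicit) already asserts the margin bound on the \emph{optimal} tokens $\rb_i^\star$, i.e.\ $y_i\,\vb_r^\top\rb_i^\star \ge (1-\gamma)\Gamma r$ for all $i$; so for any $i\in\itc$ with $\rb_i^\star=\bmu_i$ you get $\lambda_1\rho^2=\vb_r^\top\bmu_1\ge(1-\gamma)\Gamma r$ directly, with no transfer from $\rb_i$ and hence no $(1-3\gamma/2)$ degradation to worry about. For the $|\theta_i|$ bound, both you and the paper argue via $\|\vb_{\bxi}\|^2\le r^2\bigl(1-2(1-\gamma)^2\Gamma^2/\rho^2\bigr)$ and then invoke Lemma~\ref{lemma: v_mm_norm}; your AM--GM handling of the cross-terms to get $\sum_i\theta_i^2\lesssim\|\vb_{\bxi}\|^2/d$ is in fact more careful than the paper's unjustified assertion $\theta_j^2\|\bxi_j\|^2\le\|\vb_{\bxi}\|^2$ (which is not literally true for arbitrary linear combinations, though it holds up to constants thanks to the near-orthogonality of the $\bxi_i$).
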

\begin{proof}[Proof of Lemma \ref{lemma: v_r_parameters}]
    The first two statements are obvious because from Theorem \ref{thrm: finite_converge} we have
    \begin{align*}
        y_i \vb^{\top}_{(r, R)} \bmu_i \ge (1-\gamma)\Gamma r,
    \end{align*}
    for $\forall i \in \itc$. This implies $|\lambda_j| \ge (1-\gamma)\Gamma r/\rho^2$ for $j \in \{1, 2\}$. Meanwhile, we decompose $\vb_{(r, R)} = \vb_{\bmu} + \vb_{\bxi}$ where $\vb_{\bmu} = \lambda_1 \bmu_1 + \lambda_2 \bmu_2$ and $\vb_{\bxi} = \sum\limits_{i \in [n]} \theta_i \bxi_i$. And we can upper bound $\|\vb_{\bxi}\|$ as
    \begin{align*}
        \|\vb_{\bxi}\|^2 &= \|\vb_{(r, R)}\|^2 - \|\vb_{\bmu}\|^2
        \le r^2 - \lambda_1^2 \rho^2 - \lambda_2^2 \rho^2
        \le r^2 ( 1 - 2(1-\gamma)^2\Gamma^2/\rho^2).
    \end{align*}
    The first inequality is from $\|\vb\| \le r$ and the second inequality is from the first two statements we just proved. Therefore, denote $j = \argmax\limits_{i \in [n]} \theta_i$, we have
    \begin{align*}
        \theta_j^2 \|\bxi_j\|^2 \le \|\vb_{\bxi}\|^2 \le r^2 ( 1 - 2(1-\gamma)^2\Gamma^2/\rho^2).
    \end{align*}
    Then we can upper bound $|\theta_j|$ as
    \begin{align*}
        \theta_j^2 &\le r^2 ( 1 - 2(1-\gamma)^2\Gamma^2/\rho^2)/\|\bxi_j\|^2
        \le r^2 ( 1 - 2(1-\gamma)^2\Gamma^2/\rho^2)/(1-\kappa)d\\
        &= r^2\bigg(1 - \frac{2(1-\gamma)^2}{\|\vb_{mm}\|^2 \rho^2}\bigg)/(1-\kappa)d
        \le r^2 \bigg(1-\frac{1}{(2/\rho^2 + 5 \eta n/d) \rho^2}\bigg)/(1-\kappa)d\\
        &= \frac{1 + 5\eta n \rho^2/d}{2 + 5\eta n \rho^2/d} \cdot \frac{r^2}{(1-\kappa)d}
        \le \bigg(\frac{1}{\rho^2} + \frac{5\eta n}{d}\bigg) \cdot \frac{\Gamma^2 r^2}{2d}.
    \end{align*}
    The second inequality is from Lemma \ref{lemma: noise_balance}; The third inequality is from Lemma \ref{lemma: v_mm_norm} that $\|\vb_{mm}\| \le \sqrt{2/\rho^2 + 5\eta n/d}$ and our definition of $\gamma = \frac{2\sqrt{\rho^2 + (1+\kappa)d}}{\Gamma \exp((1 - \zeta) R\Xi)}$; The last inequality is from $\Gamma = \|\vb_{mm}\|^{-1} \ge (2/\rho^2 + 5\eta n/d)^{-1}$. Thus, we can bound $|\theta_j|$ as
    \begin{align*}
        |\theta_j| \le 2\sqrt{1/\rho^2 + 5\eta n/d}\cdot \Gamma r/\sqrt{d}.
    \end{align*}
    
\end{proof}

Therefore, we can prove the main theorem.
\begin{proof}[Proof of Theorem \ref{thrm: main_thrm_f}]
    First we show that the model can perfectly classify all training samples. From Theorem \ref{thrm: finite_converge}, we have 
    \begin{align*}
        y_i \vb^{\top}_{(r, R)} \rb_i \ge (1-\gamma)\Gamma r > 0
    \end{align*}
    for $\forall i \in [n]$. The last inequality is from Lemma \ref{lemma: zeta_gamma_bound}.  Thus $y_i = \sign(f(\bX_i; \vb_{(r, R)}, \pb_{(r, R)}))$ for all $i \in [n]$.
    Then we bound the test error. Given a test sample \(\bX, y\), where \(\bX = (\bmu^{\star}, \bxi^{\star})\), \(\bmu^{\star}\) can be \(\bmu_1\) or \(\bmu_2\). From Remark\ref{remark: test_sample_properties}, with probability at least \(1-6n\exp(-d/4C_1 n^2)\),
    \begin{equation}\label{ineq:test_concentration}
        |\la \bxi^{\star}, \bxi_i \ra| \le \frac{d}{C_1 n}.
    \end{equation}
    According to Lemma \ref{lemma: test_attention_finite}, with probability at least $1 - \exp\big(-\frac{1}{2}(\frac{1}{2} (1 - \zeta)\Xi - K/R)^2 \big)$, we have
    \begin{align}\label{ineq:test_margin_lowerbound}
        y \cdot f(\vb_{(r, R)}, \pb_{(r, R)}; \bX) &\ge \frac{\la y\vb_{(r, R)},  e^K \bmu^{\star} + \bxi^{\star} \ra}{e^K+1} \ge \frac{e^K(1-\gamma)\Gamma r \|\bmu^{\star}\|^2}{\rho^2(e^K + 1)} - \frac{1}{e^K + 1}\sum_{i\in [n]}|\theta_i|\cdot|\la \bxi_i, \bxi^{\star} \ra|.
\end{align}
Let $K = \log(\sqrt{d} \sqrt{1/\rho^2 + \eta n/d}) + C < \frac{1}{2} (1 - \zeta)R\Xi$. By uniform bound, we have that with probability at least \(1 - 6n\exp(-d/4C_1 n^2) - \exp\big(-\frac{1}{2}(\frac{1}{2} (1 - \zeta)\Xi - K/R)^2 \big)\),
\begin{align*}
      y \cdot f(\vb_{(r, R)}, \pb_{(r, R)}; \bX)
        &\ge \frac{e^{K} (1-\gamma)\Gamma r - n \cdot d/(C_1 n) \cdot 2\sqrt{1/\rho^2 + \eta n/d}\cdot \Gamma r/\sqrt{d}}{1 + e^{K}}\\
        &\ge \frac{0.9 e^{K}\Gamma r - \sqrt{d}/C_1 \cdot 2\sqrt{1/\rho^2 + \eta n/d}\cdot \Gamma r}{1 + e^{K}}\\
        &> 0,
    \end{align*}
    where the first inequality uses \eqref{ineq:test_concentration}, \eqref{ineq:test_margin_lowerbound} and Lemma \ref{lemma: v_r_parameters}; The second inequality is from Lemma \ref{lemma: zeta_gamma_bound} and the last inequality is from Assumption \ref{cond: main_cond_f} and our selection of $K$. Therefore, 
    \begin{align*}
       \mathbb{P}(y \neq f(\vb_{(r, R)}, \pb_{(r, R)}; \bX)) \le  \exp\Big(-\frac{1}{2}(\frac{1}{2} (1 - \zeta)\Xi - \frac{K}{R})^2 \Big) + \exp(-\Omega(d/n^2)), 
    \end{align*}
    where $\zeta = \frac{\log(2M\|\vb_{mm}\|^3n\rho^2)}{R\Xi} = \Theta\Big(\frac{\sqrt{\eta n/d + 1/\rho^2}}{R} \log(\rho n)\Big)$, $K = \log(\sqrt{d} \sqrt{1/\rho^2 + \eta n/d}) + C = \Theta(\log(\sqrt{d/\rho^2 + \eta n})$ and $\Xi = \|\pb_{mm}\|_2^{-1} = \Theta((\eta n/d + 1/\rho^2)^{-1/2})$. Plugging in the order of \(\Xi\) and \(K\), we have
    \begin{align*}
    \mathbb{P}_{(\bX, y) \sim \mathcal{D}}&(y \neq \sign(f(\bX; \vb_{(r, R)}, \pb_{(r, R)}))) 
    \\
    &= \mathbb{P}_{(\bX, y) \sim \mathcal{D}}(y \neq \sign(f(\bX; \vb_{(r, R)}, \pb_{(r, R)})), y = -\tilde y) \\
    &\quad + \mathbb{P}_{(\bX, y) \sim \mathcal{D}}(y \neq \sign(f(\bX; \vb_{(r, R)}, \pb_{(r, R)})), y = \tilde y)\\
    &= \eta + \mathbb{P}_{(\bX, y) \sim \mathcal{D}}(y \neq \sign(f(\bX; \vb_{(r, R)}, \pb_{(r, R)})), y = \tilde y)\\
    &\le \eta + \exp(-d/C_1 n^2) + \exp\Big(-\Theta\big(\frac{(1 - \zeta)}{\sqrt{\eta n/d + 1/\rho^2}} - \frac{\log(\sqrt{d/\rho^2 + \eta n})}{R}\big)^2 \Big)\\
    &=\eta + \exp(-\Omega(\frac{d}{n^2})) + \exp\Big(-\Omega\big(\frac{(1 - \zeta)}{\sqrt{\eta n/d + 1/\rho^2}} - \frac{\log(n)}{R}\big)^2 \Big),
    \end{align*}
    where $\zeta = \Theta\Big(\frac{\sqrt{\eta n/d + 1/\rho^2}}{R} \log(\rho n)\Big)$. This completes the proof.
\end{proof}

\subsubsection{Proof of Thm. \ref{thrm: main_thrm_f_multi}}

In the case of multiple tokens setting, we denote the $\tau$-th token in input $\bx_i^{(\tau)}$ as $\bxi_{i, \tau}$. And we introduce the definition of \textbf{Combined Noise Token}.
\begin{definition}[Combined Noise Token]
    \label{def: Combined_noise}
    $\overline{\bxi}_i$ is called combined noise token if $\overline{\bxi}_i = \sum_{\tau=2}^T t_{i, \tau} \bxi_{i,\tau}$ and $t_{\tau} \in [0,1]$, $\sum_{\tau=2}^T t_{\tau} = 1$.
\end{definition}

Then we have the following lemma to estimate the norm and innerproduct of these combined noises:

\begin{restatable}[Properties of Combined Noise Tokens]{lemma}{combinednoisetoken}
\label{lemma: new_noise_balance}
Suppose that $\delta > 0$ and $\kappa' = O(\sqrt{\log(6n/\delta)/d}) = \tilde O(1/\sqrt{d})$ .If Lemma \ref{lemma: noise_balance} holds, we have
    \begin{align*}
        &(1 - \kappa')d/T \le \|\overline{\bxi}_i\|_2^2 \le (1 + \kappa')d\\
        &|\la \overline{\bxi}_i, \overline{\bxi}_j \ra | \le 2T^2 \sqrt{d \log(6n^2/\delta)}
    \end{align*}
    for any $i,j \in [n]$.
\end{restatable}

\begin{definition}
\label{def: good_run}
If the event in Lemma \ref{lemma: noise_balance} and \ref{lemma: new_noise_balance} occur, let us say we have a \textbf{good run}.
\end{definition}

Lemmas \ref{lemma: noise_balance} and \ref{lemma: new_noise_balance} show that a good run occurs with probability at least $1 - \delta$. In what follows, we will assume that a good run occurs.

Because the optimal tokens are data specific and not fixed, we consider the following token selection rule:
\begin{align}
        \rb_i^{sec} &= \bx_i^{(1)} = \bmu_k,\  i \in \itc_k, k \in \{1, 2\}\nonumber\\
        \rb_i^{sec} &= \bx_i^{(2)} = \bxi_{i,2},\  i \in \itn. \label{eq: first_noise_select}
\end{align}
This selects signal token for all clean samples and the first noise token for all noisy sample. Following this new token selection rule, we redefine the p-SVM and v-SVM:

\begin{definition}[p-SVM for Second-token Selection]
\label{def: p-SVM_idea_multi}
\begin{align}
    &\pb_{sec} = \argmin\limits_{\pb \in \mathbb{R}^d} \|\pb\| \;\;\;\;
    \text{subject to:} \nonumber\\
    \pb^{\top}(\bx_i^{(\alpha_i)} - \bx_i^{(t)}) \ge 1,\;\; &\alpha_i = 1 \;\;\text{for}\;\; i \in \itc \;\;\text{and}\;\; \alpha_i = 2 \;\;\text{for}\;\; i \in \itn,\;\; t \in [2, T]\backslash \{\alpha_i\}. \label{eq: p_sec_SVM_cond_multi}
\end{align} 
Let $\Xi := 1/\|\pb_{sec}\|$ be the margin induced by \(\pb_{sec}\).
\end{definition}

 Then for a given \(\pb\), we define $\vb(\pb)$ as the standard max-margin classifier on $(\rb_i,y_i)_{i \in [n]}$
 and $\vb_{sec}$ as the standard max-margin classifier on $(\rb^{sec}_i,y_i)_{i \in [n]}$ which represents the limiting case when \(\pb = \pb_{sec}\) and \(R \rightarrow +\infty\).

\begin{definition}[v-SVM for Second-token Selection]
\label{def: labelmargin_idea_multi}
\begin{equation}
    \vb(\pb) := \argmin_{\vb \in \mathbb{R}^d} \|\vb\|  \text{ s.t. } y_i\cdot \vb^{\top}\rb_i \geq 1, \quad \text{ for all } i \in [n]. \label{eq: SVM_cond_idea_multi}
\end{equation}
\(\Gamma(\pb) := 1/\|\vb(\pb)\|\)
is the \textbf{label margin} induced by 
$\vb(\pb)$.
When \(\rb_i = \rb_i^{sec}, i \in [n]\), we define
\begin{equation}
    \vb_{sec} := \argmin_{\vb \in \mathbb{R}^d} \|\vb\|  \text{ s.t. } y_i\cdot \vb^{\top}\rb_i^{sec} \geq 1, \quad \text{ for all } i \in [n]. \label{def: vsec_idea_multi}
\end{equation}
\(\Gamma := 1/\|\vb_{sec}\|\)
is the label margin induced by $\vb_{sec}$.
\end{definition}

Since $\vb_{sec}$ satisfies the KKT conditions of the max-margin problem \eqref{eq: SVM_cond_idea_multi}, by the stationarity condition, we can represent $\vb_{sec}$ as
    \begin{align}
        \vb_{sec} = \lambda_1 \bmu_1 + \lambda_2 \bmu_2 + \sum\limits_{i \in [n]} \sum\limits_{\tau=2}^T \theta_{i, \tau} \bxi_{i, \tau}. \label{eq: eq: vmm_decompose_multi}
    \end{align}

Note that the conditions in \eqref{eq: SVM_cond_idea_multi} can be written as:
  \begin{condition}[Second-token Selection]
    \label{cond: Original Condition_multi}
        $$ 
    \begin{cases}
    &\vb^{\top} \bmu_1 \ge 1 \\
    &-\vb^{\top} \bmu_2 \ge 1\\
    &y_i \vb^{\top} \bxi_{i,2} \ge 1, i \in \itn
    \end{cases}
    $$
    \end{condition}

 Plugging \eqref{eq: eq: vmm_decompose} in the condition \ref{cond: Original Condition}, we can rewrite these conditions as:
    $$
    \begin{cases}
        &\lambda_1 \cdot \|\bmu_1\|^2 \ge 1\\
        &-\lambda_2 \cdot \|\bmu_2\|^2 \ge 1\\
        &y_i (\theta_{i,2} \cdot \|\bxi_{i,2}\|^2 + \sum\limits_{\substack{i'\in [n], \\\tau' \in [3, T]}} \theta_{i', \tau'} \la\bxi_{i, \tau_i}, \bxi_{i', \tau'} \ra) \ge 1, i \in \itn
    \end{cases} 
    $$
    Then we introduce a lemma to estimate the coefficients \(\theta_{i, \tau}\) of \(\vb_{sec}\) under this condition:

    \begin{restatable}[balanced noise factor for KKT points]{lemma}{balancednoisefactormulti}\label{lemma: noise_factor_balance_sec_multi} 
            Under Condition \ref{cond: main_cond_f} and \ref{cond: Original Condition_multi},
            on a good run, we have that for \(\vb_{sec}\),
            \begin{equation}
                    \theta_{i,\tau} = 0, \quad i \in \itc, \tau \in [3, T]; i \in \itn
            \end{equation}
            \begin{equation}
               \theta_{i, 2} \in \Big[\frac{(1-\kappa)d - 4 n_2 \sqrt{d \log(6n^2/\delta)}}{(1+\kappa)d((1-\kappa)d - 2 n_2 T \sqrt{d \log(6n^2/\delta)})} , \frac{1}{(1-\kappa)d - 2 n_2 T \sqrt{d \log(6n^2/\delta)}}\Big], \quad i \in \itn.
            \end{equation}
\end{restatable}

\begin{proof}[Proof of Lemma \ref{lemma: noise_factor_balance_sec_multi}]
        Note that Condition \ref{cond: Original Condition_multi} does not have any constraint for noise tokens with index \(i \in \itc\), \(i \in \itn, \tau \neq 2\). Thus we have \(\theta_{i,\tau} = 0\) for any \(i \in \itc\), \(i \in \itn, \tau \neq 2\) in the representation \eqref{eq: eq: vmm_decompose_multi}. For \(\theta_{i, 2}\) with \(i \in \itn\),
        we first prove the upper bound by contradiction.
            Denote $j = \argmax\limits_{i \in \itn} \theta_{i,2}$. Then we have
            \begin{align*}
                y_j \vb^{\top} \bxi_{j, 2} &= \sum\limits_{i \in \itn} y_i y_j \theta_{i,2} \la \bxi_{i,2}, \bxi_{j, 2} \ra = \theta_{j,2} \|\bxi_{j,2}\|_2^2 + \sum\limits_{i \neq j, i \in \itn} y_i y_j \theta_{i, 2} \la \bxi_{i, 2}, \bxi_{j, 2} \ra\\
                &\ge \theta_{j, 2} \cdot (1-\kappa)d - n_2 T \theta_{j, 2} \cdot 2\sqrt{d \log(6n^2/\delta)},
            \end{align*}
            where the inequality is from Lemma \ref{lemma: noise_balance} and the definition of \(j\). Consider the contrary case when $\theta_{j, 2} > \frac{1}{(1-\kappa)d - 2 n_2 T \sqrt{d \log(6n^2/\delta)}}$, we have
            \begin{align*}
                y_j \vb^{\top} \bxi_{j, 2} &>  \frac{1}{(1-\kappa)d - 2 n_2 T \sqrt{d \log(6n^2/\delta)}} \cdot \big((1-\kappa)d -  n_2 T \cdot 2\sqrt{d \log(6n^2/\delta)}\big) = 1.
            \end{align*}
            By the complementary slackness, if $y_j \vb^{\top} \bxi_{j, 2} > 1$, then we must have $\theta_{j, 2} = 0$, and thus we reach a contradiction.

            Then we prove for the lower bound. For $\forall j \in \itn$ we have
            \begin{align*}
                1 &\le \theta_{j, 2} \|\bxi_{j, 2}\|_2^2 + \sum\limits_{i \neq j, i \in \itn} y_i y_j \theta_{i, 2} \la \bxi_{i, 2}, \bxi_{j, 2} \ra\\
                &\le \theta_{j, 2} \cdot (1 + \kappa)d + n_2 \max\limits_{i \in \itn} \theta_{i, 2} \cdot 2\sqrt{d \log(6n^2/\delta)}\\
                &\le \theta_{j, 2} \cdot (1 + \kappa)d + \frac{n_2}{(1-\kappa)d - 2 n_2 T \sqrt{d \log(6n^2/\delta)}} \cdot 2\sqrt{d \log(6n^2/\delta)}.
            \end{align*}
            The second inequality is due to Lemma \ref{lemma: noise_balance} and the last inequality is from the upper bound we just get. Therefore, we have
            \begin{align*}
                \theta_{j, 2} \ge \frac{(1-\kappa)d - 4 n_2 \sqrt{d \log(6n^2/\delta)}}{(1+\kappa)d((1-\kappa)d - 2 n_2 T \sqrt{d \log(6n^2/\delta)})}.
            \end{align*}
            This completes the proof.
        \end{proof}
    Then we introduce a lemma to estimate $\|\vb_{sec}\|$:

\begin{restatable}[Norm of $\vb_{sec}$]{lemma}{vsecNormOrder}\label{lemma: v_sec_norm}
    Under Condition \ref{cond: main_cond_f}, on a good run, for the solution $\vb_{sec}$ of \eqref{eq: SVM_cond_idea_multi} under the token selection (\ref{eq: first_noise_select}), we have
    \begin{align*}
        \frac{2}{\rho^2} + \frac{\eta n}{2d} \le \|\vb_{sec}\|^2 \le \frac{2}{\rho^2} + \frac{5\eta n}{d}.
    \end{align*}
    This implies
    $$
    \|\vb_{sec}\| = \Theta\bigg(\sqrt{\frac{1}{\rho^2} + \frac{\eta n}{d}}\bigg).
    $$
\end{restatable}
This lemma is the same as Lemma \ref{lemma: v_mm_norm} except that we substitute $\bxi_i$ with $\bxi_{i, 2}$. So we omit the proof for clarity. 
The remaining proof idea is similar to that of two-token setting: 1) Prove the (relative) optimality of this second token selection. 2) Prove the convergence of parameters and training/test error.

We first prove the optimal token for clean samples from the max-margin they induced.
\begin{proposition}[optimal token condition]
\label{prop: optimal_token_multi}
Suppose that Assumption \ref{cond: main_cond_f} holds, on a good run, for all $\pb$ that $\exists i \in \itc, \tau \in [2, T], \la \pb, \bmu_i - \bxi_{i, \tau} \ra < \|\pb\|_2/\|\pb_{sec}\|_2$,
the token selection under $\pb$ results in a label margin (Def. \ref{def: labelmargin_idea_multi}) of at most $\Gamma_{sec} - \frac{C}{\|\vb_{sec}\|_2^3 n \rho^2} \cdot \max\limits_{i \in \itc} (1 - s_{i 1})$. 
\end{proposition}

\begin{proof}[Proof of Proposition \ref{prop: optimal_token_multi}]
    As we consider the non-asymptotic case here, $s_{i1}$ cannot be exactly 1 for $\forall i \in [n]$, so we only consider the case that all clean samples are mixed. Denote $\rb_i = \sum\limits_{\tau=1}^T \beta_{i, \tau} \bx_i^{(\tau)} = \beta_i \bmu_i + (1-\beta_i) \overline{\bxi}_i$ for $i \in \itc$, where $\overline{\bxi}_i$ is the combined noise token in Definition \ref{def: Combined_noise}. The condition in this case becomes
    \begin{condition}[Mixed clean samples, multiple case]
    \label{cond: All_cond_multi}
        $$
        y_i \vb''^{\top} \rb_i \ge 1.
        $$
        This indicates
        $$
        \beta_i y_i \lambda''_i \|\bmu_i\|^2 + (1-\beta_i) (\theta''_i \|\overline{\bxi}_i\|^2 + \sum\limits_{j \neq i} y_i y_j \theta_j'' \la \overline{\bxi}_i, \overline{\bxi}_j \ra) \ge 1, i \in \itc
        $$
    \end{condition}
    Assume that $\min \{\lambda''_1 \cdot \|\bmu_1\|^2, -\lambda''_2 \cdot \|\bmu_2\|^2\} = q$ in optimal $\vb''$. Similar to the two-token scenario, we consider the case when $q < 1$. Denote $\alpha = \frac{1 - \beta_i q}{1 - \beta_i}$. We have $\alpha>1$ due to $q < 1$ and $0 \le \beta_i < 1$.
    Without losing generality, we assume $\lambda''_1 \cdot \|\bmu_1\|^2 = q < 1$. The special condition $\la \pb, \bmu_i - \bxi_{i, \tau} \ra < \|\pb\|_2/\|\pb_{sec}\|_2$ here is to guarantee that there always exists an upper bound for $\beta_i$. Then consider the following relaxed condition:
    \begin{condition}[Relaxed version of constraints in Condition \ref{cond: All_cond_multi}]
    \label{cond: All_cond_re_multi}
        $$
        \theta''_i \|\overline{\bxi}_i\|^2 + \sum\limits_{j \neq i} y_i y_j \theta_j'' \la \overline{\bxi}_i, \overline{\bxi}_j \ra \ge \alpha, i \in \itco.
        $$
    \end{condition}
    Denote the optimal solution under Condition \ref{cond: All_cond_re_multi} as $\Breve{\vb}$ and the corresponding coefficients in $\Breve{\vb}$ as $\Breve{\lambda}_1, \Breve{\lambda}_2$ and $\Breve{\theta}_i$, i.e.
    \begin{align}
    \label{eq: brevev_decompose_multi}    
    \Breve{\vb} = \Breve{\lambda}_1\bmu_1 + \Breve{\lambda}_2\bmu_2 +
    \sum_{i\in [n]}\Breve{\theta}_i \overline{\bxi}_i.
    \end{align}
    Since the constraints in Condition \ref{cond: All_cond_re_multi} is a subset of the constraints in Condition \ref{cond: All_cond_multi},
    we have $\|\Breve{\vb}\| \le \|\vb''\|$. Meanwhile, we have the following lemma to estimate $\Breve{\theta_i}$:
    \begin{restatable}{lemma}{balancednoisefactorfourmulti}\label{lemma: noise_factor_balance4_multi}
        Under Condition \ref{cond: main_cond_f} and \ref{cond: All_cond_re_multi}, on a good run, we have
        \begin{align*}
            \Breve{\theta_i} = 0, i \in [n] \backslash \itco;
        \end{align*}
        \begin{align*}
            \Breve{\theta_i} \in \bigg[\frac{\alpha}{(1+\kappa')d} \bigg(1 - \frac{2 T^2 n_{11} \sqrt{d\log(6n^2/\delta)}}{(1-\kappa')d/T- 2 T^2 n_{11} \sqrt{d \log(6n^2/\delta)}}\bigg), \frac{\alpha}{((1-\kappa')d/T- 2T^2 n_{11} \sqrt{d \log(6n^2/\delta)}}\bigg], i \in \itco.
        \end{align*}
    \end{restatable}
    \begin{proof}[Proof of Lemma \ref{lemma: noise_factor_balance4_multi}]
    Note that Condition \ref{cond: All_cond_re_multi} does not have any constraint for samples with \(i \in [n] \backslash \itco\). Thus we have \(\Breve{\theta}_i = 0\) for any \(i \in [n] \backslash \itco\) in the representation \eqref{eq: brevev_decompose_multi}. Denote $j = \argmax\limits_{i \in \itco} \Breve{\theta_i}$, then we have
    \begin{align*}
        \Breve{\theta_{j}} \cdot \|\overline{\bxi}_{j}\|^2 + \sum\limits_{k \neq j} y_k y_j \Breve{\theta}_{k} \la \overline{\bxi}_i, \overline{\bxi}_j \ra &\ge \Breve{\theta}_j \|\overline{\bxi}_j\|^2 - 2 T^2 \Breve{\theta}_j n_{11} \sqrt{d \log(6n^2/\delta)}
        \ge \Breve{\theta}_j ((1-\kappa')d/T - 2T^2 n_{11}\sqrt{d\log(6n^2/\delta)}).
    \end{align*}
    The two inequalities are from Lemma \ref{lemma: noise_balance} and our definition of \(j\). Consider the contrary case when $\Breve{\theta_j} > \frac{\alpha}{((1-\kappa')d/T- 2T^2 n_{11} \sqrt{d \log(6n^2/\delta)}}$, we have
        \begin{align*}
            y_j \Breve{\vb}^{\top} \bxi_j &> \alpha.
        \end{align*}
         By the complementary slackness condition, if $y_j \Breve{\vb}^{\top} \bxi_j > \alpha$, then we must have $\Breve{\theta_j} = 0$, and thus we reach a contradiction.\\
    Then we lower bound $\Breve{\theta_i}$. For $\forall i \in \itc_1$ we have
         \begin{align*}
             \alpha &\le \Breve{\theta_{i}} \cdot \|\overline{\bxi}_{i}\|^2 + \sum\limits_{j \neq i} y_i y_j \Breve{\theta_{i}} \la\overline{\bxi}_i, \overline{\bxi}_j \ra \le \Breve{\theta_{i}}(1+\kappa')d + 2T^2 n_{11}\max\limits_{i \in [n]} \Breve{\theta_i} \sqrt{d\log(6n^2/\delta)}\\
             &\le  \Breve{\theta_{i}}(1+\kappa')d + \frac{2T^2 \alpha n_{11} \sqrt{d\log(6n^2/\delta)}}{(1-\kappa')d/T- 2 T^2 n_{11} \sqrt{d \log(6n^2/\delta)}}.
         \end{align*}
         The second inequality is from Lemma \ref{lemma: noise_balance} and the last inequality is from the upper bound of $\Breve{\theta_i}$ we just derived. Therefore, we have
         \begin{align*}
             \Breve{\theta_i} \ge \frac{\alpha}{(1+\kappa')d} \bigg(1 - \frac{2 T^2 n_{11} \sqrt{d\log(6n^2/\delta)}}{(1-\kappa')d/T- 2 T^2 n_{11} \sqrt{d \log(6n^2/\delta)}}\bigg).
         \end{align*}
\end{proof}

From this Lemma we have $\Breve{\theta_i} = \Theta(\alpha/d)$ for $i \in \itc_1$. Similar as (\ref{eq: alpha_upper}), under our assumption $\|\Breve{\vb}\| \le 2 \|\vb_{sec}\|$, we have $\alpha = O(\log(n))$.
    Next we estimate the difference between $\|\Breve{\vb}\|^2$ and $\|\vb_{sec}\|^2$. We can prove the following Lemma similar to Lemma \ref{lemma: compare_v_norm2}:

    \begin{lemma}
    \label{lemma: compare_v_norm2_multi}
        Suppose that Assumption \ref{cond: main_cond_f} holds, denote $\vb_{sec}$ and $\Breve{\vb}$ as the optimal solutions under condition \ref{cond: Original Condition_multi} and condition \ref{cond: All_cond_re_multi} respectively. We have
        \begin{align*}
            \|\Breve{\vb}\|_2^2 - \|\vb_{sec}\|_2^2 \ge \frac{C_3(1 - \beta)}{d},
        \end{align*}
        where $C_3 = \Theta(1)$.
    \end{lemma}

    \begin{proof}[Proof of Lemma \ref{lemma: compare_v_norm2_multi}]
     Under this case, the difference between $\|\Breve{\vb}\|_2^2$ and $\|\vb_{sec}\|_2^2$ becomes
    \begin{align*}
        \|\Breve{\vb}\|^2 - \|\vb_{sec}\|^2 \ge& \underbrace{\sum_{i \in [n]} \Breve{\theta}^2_i \|\overline{\bxi}_i\|^2 - \theta^2_i \|\bxi_i\|^2 - (\lambda^2_1 - \Breve{\lambda}^2_1)\|\bmu_1\|^2 - (\lambda^2_2 - \Breve{\lambda}^2_2)\|\bmu_2\|^2}_{I_1}\\
        &-\underbrace{\sum\limits_{i \in \itn} \sum\limits_{j \in \itn \backslash \{i\}} y_i y_j \theta_i \theta_j \la \bxi_i, \bxi_j \ra}_{I_2} 
            + \underbrace{\sum\limits_{i \in \itc_1} \sum\limits_{j \in \itc_1 \backslash \{i\}} y_i y_j \Breve{\theta}_i \Breve{\theta}_j \la \overline{\bxi}_i, \overline{\bxi}_j \ra}_{I_3}
    \end{align*}
    We then bound $I_1 \sim I_3$ respectively. For $I_1$ we have
    \begin{align*}
        |I_1| &\ge \sum_{i \in \itco}\Breve{\theta}^2_i\|\overline{\bxi}_i\|^2 - \sum_{i \in \itn} \theta^2_i \|\bxi_i\|^2 - 2/\rho^2 \ge n_{11} \alpha (1-O(n/\sqrt{d})) \min\limits_{i \in \itc_1} \Breve{\theta_i} - n_2 \max\limits_{i \in \itn} \theta_i^2 (1+\kappa)d - 2/\rho^2\\
        &\ge \frac{n_{11} \alpha^2}{(1+\kappa')d} \bigg(1 - \frac{2 T^2 n_{11} \sqrt{d\log(6n^2/\delta)}}{(1-\kappa')d/T- 2 T^2 n_{11} \sqrt{d \log(6n^2/\delta)}}\bigg) -  \frac{n_2 (1+\kappa)d}{((1-\kappa)d - 2 n_2 \sqrt{d \log(6n^2/\delta)})^2} - \frac{2}{\rho^2}\\ 
        &= \Omega\bigg(\frac{n}{d}\bigg).
    \end{align*}
    The second inequality is from Lemma \ref{lemma: noise_balance}; The third inequality is from Lemma \ref{lemma: noise_factor_balance_sec_multi} and \ref{lemma: noise_factor_balance4_multi}; The last equality is due to the SNR condition $\rho/\sqrt{d}= \Omega(1/\sqrt{n})$ so that $\frac{1}{\rho^2} \le \frac{n}{4d}$. For $I_2$, we have
    \begin{align*}
        |I_2| &\le \sum_{i \in \itn} \max\limits_{i \in \itn} \theta^2_i \cdot 2\sqrt{d\log(6n^2/\delta)} \le \frac{2n_2 \sqrt{d\log(6n^2/\delta)}}{((1-\kappa)d - 2 n_2 \sqrt{d \log(6n^2/\delta)})^2} = \tilde O\bigg(\frac{n}{d^{3/2}}\bigg).
    \end{align*}
    The first inequality is from Lemma \ref{lemma: noise_balance}; The second inequality is from Lemma \ref{lemma: noise_factor_balance_sec_multi}. Similarly, for $|I_3|$ we have
    \begin{align*}
        |I_3| &\le \sum_{i \in \itc_1} \max\limits_{i \in \itc_1} \Breve{\theta}^2_i \cdot 2 T^2 \sqrt{d\log(6n^2/\delta)} \le \frac{2 T^2 n_{11} \alpha^2 \sqrt{d\log(6n^2/\delta)}}{((1-\kappa')d/T - 2 T^2 n_{11} \sqrt{d \log(6n^2/\delta)})^2} = \tilde O\bigg(\frac{n}{d^{3/2}}\bigg).
    \end{align*}
    The second inequality is from Lemma \ref{lemma: noise_factor_balance4_multi}.
    Combining the above results, we have
        \begin{align*}
            \|\vb''\|_2^2 - \|\vb\|_2^2 \ge \Theta\bigg(\frac{n}{d}\bigg) - \tilde O\bigg(\frac{n}{d^{3/2}}\bigg) \ge \frac{C_3 n(1 - \beta)}{d}.
        \end{align*}
     \end{proof}
The remaining proof is the same as \textit{Case 2.1} for two-token scenario and we omit it for convenience.
\end{proof}

In this way, we prove the relateve optimality of this second token selection rule. Then we introduce a lemma to estimate the norm of $\|\pb_{sec}\|$. This will benefit our proof of the main theorem.
    
    \begin{lemma}[Norm of $\pb_{sec}$]
    \label{lemma: p_sec_norm_if_multi}
    Suppose that Assumption \ref{cond: main_cond_f} holds, recall that the solution of (p-SVM) is $\pb_{sec}$. On a good run, we have
    \begin{align*} 
        \frac{1}{2\rho^2} + \frac{\eta n}{d} \le \|\pb_{sec}\|^2 \le \frac{8}{\rho^2} + \frac{17 \eta n}{d}. 
    \end{align*}
    This implies
    $$
    \|\pb_{sec}\| = \Theta\bigg(\sqrt{\frac{1}{\rho^2} + \frac{\eta n}{d}}\bigg).
    $$
    \end{lemma}

    \begin{proof}[Proof of Lemma \ref{lemma: p_sec_norm_if_multi}]
    First we prove the upper bound. Consider the following possible solution $\tilde{\pb}$:
    \begin{equation}
    \label{eq: possible_p_multi}
        \tilde{\pb} = \frac{2(\bmu_1 + \bmu_2)}{\rho^2} + \sum\limits_{i \in \itn} 4 \frac{\bxi_{i,2}}{d}.
    \end{equation}
    We then proved that $\tilde \pb$ satisfies (\ref{eq: p_sec_SVM_cond_multi}).  For $k \in \itc, \tau \in [2, T]$ we have
    \begin{align*}
         \tilde{\pb}^{\top}(\bmu_k - \bxi_{k,\tau}) &= 2 - \sum\limits_{i \in \itn} 4 \frac{\la \bxi_{i,2}, \bxi_{k, \tau} \ra}{d} \ge 2 - \frac{4 n_2 \sqrt{d \log(6n^2/\delta)}}{d} \ge 1.
    \end{align*}
    The first inequality is from the definition of $d$ in Lemma \ref{lemma: noise_balance} and the second inequality is from Assumption \ref{cond: main_cond_f}.
    And for $k \in \itn, \tau \in [3, T]$, we have
    \begin{align*}
         \tilde{\pb}^{\top}( \bxi_{k,2} - \bmu_k) &= -2 + \sum\limits_{i \in \itn} 4 \frac{\la \bxi_{i,2}, \bxi_{k, 2} \ra}{d} \ge -2 + 4(1-\kappa) + \sum\limits_{i \in \itn, i \neq k} 4 \frac{\la \bxi_{i,2}, \bxi_{k,2} \ra}{d}\\
        &\ge -2 + 4(1-\kappa) - \frac{4 n_2 \sqrt{d \log(6n^2/\delta)}}{d} \ge 1.\\
        \tilde{\pb}^{\top}( \bxi_{k,2} - \bxi_{k, \tau}) &= \sum\limits_{i \in \itn} 4 \frac{\la \bxi_{i,2}, \bxi_{k, 2} \ra}{d} - \sum\limits_{i \in \itn} 4 \frac{\la \bxi_{i,2}, \bxi_{k, \tau} \ra}{d} \ge 4(1-\kappa) + \sum\limits_{i \in \itn, i \neq k} 4 \frac{\la \bxi_{i,2}, \bxi_{k,2} \ra}{d} - \sum\limits_{i \in \itn} 4 \frac{\la \bxi_{i,2}, \bxi_{k, \tau} \ra}{d}\\
        &\ge 4(1-\kappa) - \frac{8 n_2 \sqrt{d \log(6n^2/\delta)}}{d} \ge 1.
    \end{align*}
    The first and second inequalities are from Lemma \ref{lemma: noise_balance}; The last inequality is from Assumption \ref{cond: main_cond_f}.
    
    Therefore, the max-margin solution $\pb_{sec}$ must have no greater norm than $\tilde \pb$. So we can upper bound $\pb_{sec}$ as
    \begin{align*}
        \|\pb_{sec}\|^2 &\le \|\tilde \pb\|^2 = \frac{8}{\rho^2} + \frac{16}{d^2}\Big(\sum\limits_{i \in \itn} \|\bxi_{i,2}\|^2 + \sum\limits_{i,j \in \itn, i \neq j} \la \bxi_{i,2}, \bxi_{j,2} \ra\Big)\\
        &\le \frac{8}{\rho^2} + \frac{16}{d^2}\big((1+\kappa)n_2 d + 2 n_2^2 \sqrt{d \log(6n^2/\delta)}\big) \le \frac{8}{\rho^2} + \frac{17 \eta n}{d}.
    \end{align*}
     The second inequality is from Lemma \ref{lemma: noise_balance}; The last inequality is from the definition of $d$ in Assumption \ref{cond: main_cond_f}.

    Then we prove for the lower bound. As $\pb_{sec}$ is the max-margin solution and satisfies KKT condition, it can be expressed as the sum of signal and noise tokens. Then we decompose $\pb_{sec} = \pb^{sec}_{\bmu} + \pb^{sec}_{\bxi}$ where $\pb^{sec}_{\bmu} = f^{sec}_1 \bmu_1 + f^{sec}_2\bmu_2$ and $\pb^{sec}_{\bxi} = \sum_{i \in [n]} g^{sec}_i \bxi_i$. Note that $\bmu_j \perp \bxi_{i,\tau}$ for all $j \in \{\pm 1\}, i \in [n], \tau \in [T]$. We first prove that $\la \pb_{sec}, \bmu_j \ra \ge 0.5$ by contradiction.
    
    If $\la \pb_{sec}, \bmu_j \ra < 0.5$, then for every clean sample from cluster j we must have $\la \pb_{sec}, \bxi_{i,2} \ra \le -0.5$ and thus
    \begin{align*}
        \la \pb_{(r, R)}, \sum\limits_{i \in \itc_j} \bxi_{i,2} \ra = \sum\limits_{i \in \itc_j} \la \pb_{(r, R)}, \bxi_{i,2} \ra \le -0.5 n_{1j}.
    \end{align*}
    So we could estimate $\|\pb_{sec}\|$ as follows
    \begin{align*}
        \|\pb_{sec}\| &\ge 0.5 n_{1j} \frac{1}{\|\sum\limits_{i \in \itc_j} \bxi_{i,2}\|}
        = 0.5 n_{1j} \frac{1}{\sqrt{\sum\limits_{i \in \itc_j} \|\bxi_{i,2}\|^2 + \sum\limits_{i, j \in \itc_j, i \neq j} \la \bxi_{i,2}, \bxi_{j,2} \ra}}\\
        &\ge 0.5 n_{1j} \frac{1}{\sqrt{2 \cdot n_{1j} \cdot (1+\kappa)d}}
        = \frac{\sqrt{n_{1j}}}{\sqrt{8(1+\kappa)d}}.
    \end{align*}
    The first inequality is from the property of innerproduct; The second inequality is from Lemma \ref{lemma: noise_balance} and the definition of d in Assumption \ref{cond: main_cond_f}.
    Meanwhile, we have $\|\tilde \pb\| \le \sqrt{8/\rho^2 + 17 \eta n/d}$ which also satisfies (\ref{eq: p_sec_SVM_cond_multi}). Therefore, we further have
    \begin{align*}
        \|\pb_{sec}\| \ge \frac{\sqrt{n_{1j}}}{\sqrt{8(1+\kappa)d}} \ge  \sqrt{8/\rho^2 + 17 \eta n/d} \ge \|\tilde \pb\|.
    \end{align*}
    The second inequality is from Assumption \ref{cond: main_cond_f}. This leads to a contradiction. So we have $\la \pb_{sec}, \bmu_j \ra \ge 0.5$. This directly indicates $f^{sec}_j \ge 0.5/\rho^2$, so we can lower bound $\|\pb^{sec}_{\bmu}\|_2^2$ as
    
    \begin{align*}
        \|\pb^{sec}_{\bmu}\|_2^2 = f^{sec2}_1 \|\bmu_1\|^2 + f^{sec2}_2 \|\bmu_2\|^2 \ge \frac{2 \cdot 0.5^2}{\rho^2} = \frac{1}{2\rho^2}.
    \end{align*}
    As for $\|\pb^{sec}_{\bxi}\|_2$, from p-SVM condition, for every noisy sample we have
    \begin{align*}
        \pb_{sec}^{\top}(\bxi_{i,2} - \bmu_i) \ge 1,
    \end{align*}
    which indicates
    \begin{align*}
        \pb^{sec\top}_{\bxi} \bxi_{i,2} = \pb_{sec}^{\top} \bxi_{i,2} \ge 1 + \pb_{sec}^{\top}\bmu_i \ge 1.5.
    \end{align*}
    The last inequality is from Lemma \ref{lemma: inner_product_signal_finite}. Sum up the inequality for all noisy sample, we have
    \begin{align*}
        \sum\limits_{i \in \itn} \pb^{sec\top}_{\bxi} \bxi_{i,2}  \ge 1.5 n_2.
    \end{align*}
    Thus,
    \begin{align*}
        \|\pb^{sec}_{\bxi}\| &\ge \frac{1.5 n_2}{\|\sum\limits_{i \in \itn} \bxi_{i,2}\|} = \frac{1.5 n_2}{\sqrt{\sum\limits_{i \in \itn} \|\bxi_{i,2}\|^2 + \sum\limits_{i, j \in \itn, i \neq j} \la \bxi_{i,2}, \bxi_{j,2} \ra}} \ge \frac{1.5n_2}{\sqrt{2 \cdot n_2 \cdot (1+\kappa)d}} \ge \sqrt{\frac{\eta n}{d}}.
    \end{align*}
    The second inequality is from Lemma \ref{lemma: noise_balance} and the last inequality is from Assumption \ref{cond: main_cond_f}.
     Therefore,
    \begin{align*}
        \|\pb_{sec}\|^2 = |\pb^{sec}_{\bmu}\|_2^2 + \|\pb^{sec}_{\bxi}\|_2^2 \ge \frac{1}{2\rho^2} + \frac{\eta n}{d}.
    \end{align*}
    Combining the results above, we have
    \begin{align*}
        \|\pb_{sec}\|^2 = \Theta\bigg(\frac{1}{\rho^2} + \frac{\eta n}{d}\bigg).
    \end{align*}
\end{proof}

With the above lemmas in place, we can conduct the analysis of the convergence direction of $\pb_{(r, R)}$ and $\vb_{(r, R)}$, which is similar to Theorem \ref{thrm: finite_converge}.

\begin{theorem}
\label{thrm: finite_converge_multi} 
    Suppose that Assumption \ref{cond: main_cond_f} holds, on a good run, we have   
    \begin{itemize}
        \item $\min\limits_{\tau \in [2, T]} \pb_{(r, R)}^{\top} (\bmu_k -  \bxi_{i,\tau}) \ge (1 - \zeta)\Xi R$ for $\forall i \in \itc_k, k \in [2]$, where $\Xi$ is the margin induced by $\pb_{sec}$ and $\zeta = \frac{1}{R\Xi} \log(2 T\sqrt{\rho^2 + (1+\kappa)Td} \|\vb_{sec}\|^3 n \rho^2/C)$.
        \item the label margin for clean sample induced by $\vb_{(r, R)}/r$ in \textit{SVM} is at least $(1-\gamma)\Gamma_{sec}$, where $\gamma = \frac{2\sqrt{\rho^2 + (1+\kappa)Td}}{\Gamma_{sec} \exp((1 - \zeta) R\Xi)}$.
    \end{itemize}
\end{theorem}

\begin{proof}[Proof of Theorem \ref{thrm: finite_converge_multi}]
    From Proposition \ref{prop: optimal_token_multi}, we have that for all $\pb$ that $\exists i \in \itc, \tau \in [2, T], \la \pb, \bmu_i - \bxi_{i, \tau} \ra < \|\pb\|_2/\|\pb_{sec}\|_2$, the label margin \(1/\|\vb(\pb)\|\) is at most
    $$
    \Gamma_{sec} - \frac{C}{\|\vb_{sec}\|_2^3 n \rho^2} \cdot \max\limits_{i \in \itc} (1 - s_{i 1}).
    $$
   Recall that $\bs_i = \mathbb{S}(\bX_i {\pb})$ is the softmax probability vector.
   We define $q_i^{\pb} = 1 - s_{i 1}$ for $i \in \itc$ to measure the amount of non-optimality (attention on non-optimal token).

    We use contradiction to prove the convergence of $\pb_{(r, R)}$. Denote $\pb^{sec}_R = R \pb_{sec}/\|\pb_{sec}\|$ which has the same norm as $\pb_{(r, R)}$ and the direction of $\pb_{sec}$. Suppose the margin induced by $\pb_{(r, R)}/R$ is at most $(1-\zeta)\Xi$, i.e. $\min\limits_{\tau \in [2, T]} \pb_{(r, R)}^{\top} (\bmu_k -  \bxi_{i,\tau}) \le (1 - \zeta)\Xi R, \forall i \in [n]$.

    According to Lemma \ref{lemma: p_sec_norm_if_multi}, we have
    \[
    \Xi = \|\pb_{sec}\|_2^{-1} = \Theta((\eta n/d + 1/\rho^2)^{-1/2}).
    \]

    Following the definition of $q_i^{\pb}$ above, we set $\hat{q}_{max} = \sup_{i \in [n]} q_i^{\pb_{(r, R)}}$ and $q^*_{max} = \sup_{i \in [n]} q_i^{\pb^{sec}_R}$ to be the worst non-optimality in $\pb_{(r, R)}$ and $\pb^{sec}_R$. Then we have
    \begin{align*}
        q_i^{\pb^{sec}_R} &= \frac{\sum\limits_{t \neq 1}\exp(\bx_i^{(t)\top} \pb^{sec}_R)}{\sum_{t \in [T]} \exp(\bx_i^{(t)\top} \pb^{sec}_R)} \le \frac{\sum\limits_{t \neq 1}\exp(\bx_i^{(t)\top} \pb^{sec}_R)}{\exp(\bx_i^{(1)\top} \pb^{sec}_R)} \le T\exp(-R\Xi).
    \end{align*}
    The last inequality is from the definition of $\pb_{sec}$ that $\pb_{sec}^{\top}(\bx_i^{(1)} - \bx_i^{(t)}) \ge 1$ for $t \neq 1$, so $\pb^{sec \top}_R(\bx_i^{(1)} - \bx_i^{(t)}) \ge R/\|\pb_{sec}\| = R\Xi$.
    Thus, $q^*_{max} = \sup_{i \in [n]} q_i^{\pb_{sec}} \le \exp(-R\Xi)$.
    Then denote the output of attention layer $\rb_i = \bX_i^{\top}\mathbb{S}(\bX_i \pb^{sec}_R)$. Define $\epsilon_i = \|\rb_i - \bx_{i 1}\|$, we have $y_i \cdot \rb_i^{\top} \vb_{sec} \ge y_i \cdot \bx_{i1}^{\top}\vb_{sec} - \|\rb_i-\bx_{i 1}\|\cdot\|\vb_{sec}\|  \ge 1 - \epsilon_i/\Gamma_{sec}$. So if we set $\epsilon_{max} = \sup_{i \in [n]} \epsilon_i$, $\vb_{sec}$ achieves a label margin of at least $\Gamma_{sec} - \epsilon_{max}$ on $(y_i, \rb_i)_{i \in [n]}$. To better estimate $\epsilon_{max}$, we define $M = \sup_{i \in [n], t \in [T]} \|\bmu_i - \sum_{t\ in [2, T]} \bxi_{i, t}\| \le \sqrt{\rho^2 + (1+\kappa)Td}$, then we have
    \begin{align}
        \epsilon_{max} = M \cdot q^*_{max} \le MT \exp(-R\Xi).\label{eq: epsilon_max_multi}
    \end{align}
    This implies the max-margin achieved by $(\pb_{(r, R)}^{sec}, \vb_{(r, R)}^{sec})$ is at least
    \begin{align}
        y_i f(\pb_{(r, R)}^{sec}, \vb_{(r, R)}^{sec}; \bx_i) = y_i \vb^{sec \top}_r \rb_i \ge r\Gamma_{sec} - r\epsilon_{max} \ge r\Gamma_{sec} - rMT\exp(-R\Xi).\label{eq: psec_vsec_margin}
    \end{align}
    The first inequality is from  $y_i \cdot \rb_i^{\top} \vb^{sec}_r \ge r(\Gamma_{sec} - \epsilon_i)$ and the last inequality is from (\ref{eq: epsilon_max}).

    Then we consider the case when $\min\limits_{i \in \itc_k, \tau \in [2, T]} \pb_{(r, R)}^{\top} (\bmu_k -  \bxi_{i,\tau}) \le (1 - \zeta)\Xi R$ the minimal margin constraint is $\zeta$-violated by $\pb_{(r, R)}$. Without losing generality we assume that $1 = \argmin\limits_{i \in \itc_k} [\min\limits_{\tau \in [T]} \pb_{(r, R)}^{\top} (\bmu_i -  \bxi_{i,\tau})]$. Then we have
    \begin{align*}
        1 - \hat{q}_{max} &= \frac{\exp(\bmu_1^{\top} \pb_{(r, R)})}{\sum_{t \in [T]} \exp(\bx_1^{(t)\top} \pb_{(r, R)})} = \frac{1}{1+\sum\limits_{\tau \in [2, T]} \exp(\la \bxi_{1, \tau} - \bmu_1, \pb_{(r, R)} \ra}\\
        &\le \frac{1}{1+ \exp(\max\limits_{\tau \in [2, T]}\exp(\la \bxi_{1, \tau} - \bmu_1, \pb_{(r, R)} \ra)} \le \frac{1}{1+\exp(-(1-\zeta)R\Xi)}.
    \end{align*}
    This indicates $\hat{q}_{max} \ge \frac{1}{1 + \exp((1-\zeta)R\Xi)} \ge \frac{1}{2}\exp(-(1-\zeta)R\Xi)$.
    From Proposition \ref{prop: optimal_token_multi}, optimizing \textit{v-SVM} on $(y_i, \hat{\rb}_i)_{i \in [n]}$ can achieve the max-margin at most 
    \begin{align}
        \min\limits_{i \in [n]} y_i f(\vb_{(r, R)}, \pb_{(r, R)}; \bx_i) \le \Gamma_{sec} r - \frac{Cr}{2\|\vb_{sec}\|_2^3 n \rho^2} \cdot e^{-(1-\zeta)R\Xi}. \label{eq: pR_vr_margin_multi}
    \end{align}

    And from the definition $\zeta = \frac{1}{R\Xi} \log(2MT \|\vb_{sec}\|^3 n \rho^2/C)$, we have
    \begin{align*}
        \frac{C}{2\|\vb_{sec}\|_2^3 n \rho^2}\exp(-(1-\zeta)R\Xi) > MT\exp(-R\Xi),
    \end{align*}
    for sufficiently large \(R\), which implies 
    \begin{equation*}
        \min\limits_{i \in [n]} y_i \cdot f(\vb_{(r, R)}, \pb_{(r, R)}; \bx_i) < \min\limits_{i \in [n]} y_i \cdot f(\vb^{sec}_r, \pb^{sec}_R; \bx_i).
    \end{equation*}
    This contradicts with the problem definition (\ref{eq: problem_def}) to maximize the margin.

    Then we prove for the second statement. When $\min\limits_{\tau \in [2, T]} \pb_{(r, R)}^{\top} (\bmu_i -  \bxi_{i,\tau}) \le (1 - \zeta)\Xi R, \forall i \in [n]$, we can use the proof above to derive a contradiction, so $(\bmu_i -  \bxi_{i,\tau})^{\top} \pb_{(r, R)} \ge (1-\zeta)R\Xi$ must hold for $\forall i \in \itc, \tau \in [2, T]$. Then set $\hat{\rb}_i = \bX_i^{\top} \mathcal{S}(\bX_i \pb_{(r, R)})$ for $i \in \itc$, assume $\vb_{(r, R)}$ achieves the label margin at most $(1-\gamma)\Gamma_{sec} r$ on clean samples, we have that
    \begin{align*}
        \min_{i \in \itc} y_i \vb_{(r, R)}^{\top}\hat{\rb}_i &\le \min_{i \in \itc} y_i \vb_{(r, R)}^{\top}\bmu_i + \sup_{i \in \itc} |\vb_{(r, R)}^{\top} (\hat{\rb}_i - \bmu_i)|\\
        &\le (1-\gamma)\Gamma_{sec} r+ M \exp(-(1-\zeta)R\Xi) r\\
        &\le (1-\gamma/2)\Gamma_{sec} r.
    \end{align*}
    The second inequality is from previous analysis that $(\bmu_i -  \bxi_{i,\tau})^{\top} \pb_{(r, R)} \ge (1-\zeta)R\Xi$, so $|\hat{\rb}_i - \bmu_i| \le M \exp(-(1-\zeta)R\Xi)$; The last inequality is from our definition $\gamma = \frac{2M}{\Gamma_{sec} \exp((1 - \zeta) R\Xi)}$.

    Therefore, combining with (\ref{eq: psec_vsec_margin}), we have
    \begin{align*}
        \gamma \Gamma_{sec} r/2 > rM\exp(-R\Xi) ,
    \end{align*}
    which implies
    \begin{align*}
        \min\limits_{i \in [n]} y_i \cdot f(\vb_{(r, R)}, \pb_{(r, R)}; \bx_i) < \min\limits_{i \in [n]} y_i \cdot f(\pb^{sec}_R, \vb^{sec}_r; \bx_i).
    \end{align*}
    Again this contradicts with the problem definition (\ref{eq: problem_def}).
\end{proof}

Then we have the following lemma to bound the derivation $\zeta$ and $\gamma$:
\begin{lemma}
\label{lemma: zeta_bound_multi}
Under Condition \ref{cond: main_cond_f}, consider the same setting in Theorem \ref{thrm: finite_converge_multi}, we have $\zeta < 0.2, \gamma \le 0.1$.
\end{lemma}

\begin{proof}[Proof of Lemma \ref{lemma: zeta_bound_multi}]

From the definition of $\zeta$ in Theorem \ref{thrm: finite_converge_multi}, we have
\begin{align*}
    \zeta &= \frac{\log(2MT\|\vb_{sec}\|^3n\rho^2/C)}{R\Xi} = C_1\frac{\sqrt{\eta n / d + 1/\rho^2}}{R} \log(M T\|\vb_{sec}\|^3 n\rho^2)\\
    &\le C_2 \frac{\sqrt{\eta n / d + 1/\rho^2}}{R} \log\bigg(\frac{n^2 T (\rho^2 + d) (\rho^2 \eta n + d)^{3}}{\rho^2 d^3}\bigg) < 0.2.
\end{align*}
Here $C_1, C_2 = \Theta(1)$. The first inequality is from the upper bound of $\|\vb_{sec}\|$ in Lemma \ref{lemma: v_sec_norm} and the last inequality is from the definition of $R$ in condition \ref{cond: main_cond_f}.

And for $\gamma$, we have
\begin{align*}
    \gamma &= \frac{2M}{\Gamma_{sec} \exp((1 - \zeta) R\Xi)}
    = C'_1 \frac{M \|\vb_{sec}\|}{\exp(R/\|\vb_{sec}\|)}
    \le C'_2 \frac{\sqrt{(\rho^2+d)(\eta n/d + 1/\rho^2)}}{\exp(R/\sqrt{\eta n/d + 1/\rho^2})} < 0.1.
\end{align*}
Here $C'_1, C'_2 = \Theta(1)$. The first inequality is from the lower and upper bound of $\|\vb_{sec}\|$ in Lemma \ref{lemma: v_sec_norm} and the last inequality is from the definition of $R$ in condition \ref{cond: main_cond_f}.
\end{proof}
Then we can estimate $\la \pb_{(r, R)}, \bmu \ra$ with the following lemma:
\begin{lemma}
\label{lemma: inner_product_signal_finite_multi}
    Suppose that Assumption \ref{cond: main_cond_f} holds, on a good run, $\pb_{(r, R)}$ should satisfy
    \begin{align*}
        0.5 (1 - \zeta)R\Xi \le \la \pb_{(r, R)}, \bmu_j \ra \le R \rho
    \end{align*}
    for $j \in \{1,2\}$.
\end{lemma}

\begin{proof}[Proof of Lemma \ref{lemma: inner_product_signal_finite_multi}]
    The upper bound is given by 
\[
\la \pb_{(r, R)}, \bmu_j \ra \le \|\pb_{(r, R)}\|\|\bmu_j\| = R\rho.
\]
     Then we use contradiction to prove for the lower bound. From Theorem \ref{thrm: finite_converge_multi}, $\pb_{(r, R)}$ satisfies
    \begin{align}
        \min \limits_{\tau \in [2, T]} \pb_{(r, R)}^{\top}(\bmu_i - \bxi_{i, \tau}) &\ge (1 - \zeta)R\Xi, i \in \itc, t \in [2, T] \label{eq: ATT_SVM_cond_finite_multi}
    \end{align}
    If $\la \pb_{(r, R)}, \bmu_j \ra \le 0.5(1 - \zeta)R\Xi$, denote $\tau_i = \argmin\limits_{\tau \in [2, T]}\pb_{(r, R)}^{\top}(\bmu_i - \bxi_{i, \tau})$, then for every clean sample from cluster j we must have $\la \pb_{(r, R)}, \bxi_{i, \tau_i} \ra \le -0.5(1 - \zeta)R\Xi$ and thus
    \begin{align*}
        \la \pb_{(r, R)}, \sum\limits_{i \in \itc_j} \bxi_i \ra = \sum\limits_{i \in \itc_j} \la \pb_{(r, R)}, \bxi_{i, \tau_i} \ra \le -0.5(1 - \zeta)R\Xi n_{1j}.
    \end{align*}
    So we could estimate $\|\pb_{(r, R)}\|$ as follows
    \begin{align*}
        \|\pb_{(r, R)}\| &\ge 0.5(1 - \zeta)R\Xi \cdot n_{1j} \frac{1}{\|\sum\limits_{i \in \itc_j} \bxi_{i, \tau_i}\|}
        = 0.5(1 - \zeta)R\Xi \cdot n_{1j} \frac{1}{\sqrt{\sum\limits_{i \in \itc_j} \|\bxi_{i, \tau_i}\|^2 + \sum\limits_{i, j \in \itc_j} \la \bxi_{i, \tau_i}, \bxi_{j, \tau_j} \ra}}\\
        &\ge 0.5(1 - \zeta)R\Xi \cdot n_{1j} \frac{1}{\sqrt{2 \cdot n_{1j} \cdot (1+\kappa')d}}
        \ge 0.4 R\Xi \cdot \frac{\sqrt{n_{1j}}}{\sqrt{2(1+\kappa')d}}.
    \end{align*}
    The first inequality is from the property of innerproduct; The second inequality is from Lemma \ref{lemma: new_noise_balance} and the definition of d in Assumption \ref{cond: main_cond_f}; The last inequality is from Lemma \ref{lemma: zeta_bound_multi}.
    Meanwhile, from Lemma \ref{lemma: p_mm_norm_if} we have $\|\pb_{sec}\| \le \sqrt{8/\rho^2 + 17 \eta n/d}$. Recall that $\Xi = \|\pb_{sec}\|^{-1}$. Therefore, we further have
    \begin{align*}
        \|\pb_{(r, R)}\| &\ge 0.4 R\Xi \cdot \frac{\sqrt{n_{1j}}}{\sqrt{2(1+\kappa')d}} \ge \sqrt{\frac{0.4^2 n_{1j}}{(8/\rho^2+17 \eta n/d) \cdot 2(1+\kappa')d}} \cdot R\\
        &\ge \sqrt{\frac{0.04 (n-\eta n - O(\sqrt{n}))}{(8/\rho^2+17 \eta n/d) \cdot (1+\kappa')d}} \cdot R > R. 
    \end{align*}
    The second inequality is from Lemma \ref{lemma: p_sec_norm_if_multi}; The third inequality is from Lemma \ref{lem: concen_samplesize} and the last inequality is from Assumption \ref{cond: main_cond_f} about SNR and $\eta$. This leads to a contradiction.
\end{proof}

Now we can estimate the output of attention layer for some test sample $(\bX, y)$.
\begin{lemma}
\label{lemma: test_attention_finite_multi}
    Under Condition \ref{cond: main_cond_f}, for the test sample $(\bX, y) \sim \mathcal{D}_{clean}$, $\bX = (\bmu', \bxi')$, with probability at least $1 - \exp\big(-\frac{1}{2}(\frac{1}{2} (1 - \zeta)\Xi - K/R)^2 \big)$ we have
    \begin{align*}
        \la \pb_{(r, R)}, \bmu' \ra - \la \pb_{(r, R)}, \bxi' \ra 
        \ge K,
    \end{align*}
    where $K \le \frac{1}{2} (1 - \zeta)R\Xi$ and $\zeta, \Xi$ are defined in Theorem \ref{thrm: finite_converge_multi}.
\end{lemma}

\begin{proof}[Proof of Lemma \ref{lemma: test_attention_finite_multi}]
    Note that $\pb^{\top} \bxi'$ follows Gaussian distribution $\mathcal{N}(0, R^2)$, we have
    \begin{align*}
    \P(\la \pb_{(r, R)}, \bmu' \ra - \la \pb_{(r, R)}, \bxi' \ra < K) &= \P(\la \pb_{(r, R)}, \bxi' \ra > \la \pb_{(r, R)}, \bmu' \ra - K) \le \mathbb{P}(\pb_{(r, R)}^{\top} \bxi' > \frac{1}{2} (1 - \zeta)R\Xi - K)\\
        &\le \exp\Big(-\frac{1}{2}( \frac{1}{2}(1 - \zeta)\Xi - K/R)^2 \Big).
    \end{align*}
    The first inequality is from Lemma \ref{lemma: inner_product_signal_finite_multi} and the second inequality comes from the property of Gaussian tail probability.
\end{proof}

Then we can follow the proof of Lemma \ref{lemma: vr_represent} to prove that $\vb_{(r, R)}$ can be expressed as the sum of signal and noise tokens.
\begin{lemma}
\label{lemma: vr_represent_multi}
    The solution of constrained optimization problem (\ref{eq: problem_def}) $\vb_{(r, R)}$ can be expressed in the form that
    $$
    \vb_{(r, R)} = \lambda_1 \bmu_1 + \lambda_2 \bmu_2 + \sum_{i=1}^n \sum\limits_{\tau \in [2, T]} \theta_{i, \tau} \bxi_{i, \tau}.
    $$
\end{lemma}

Based on this representation, we can then bound the parameters in $\vb_{(r, R)}$:
\begin{lemma}
\label{lemma: v_r_parameters_multi}
    Under Condition \ref{cond: main_cond_f}, denote $\vb_{(r, R)} = \lambda_1 \bmu_1 + \lambda_2 \bmu_2 + \sum_{i=1}^n \sum\limits_{\tau \in [2, T]} \theta_{i, \tau} \bxi_{i, \tau}$. Then we have
     \begin{align*}
         \lambda_1 &\ge (1-\gamma)\Gamma r/\rho^2,\\
         \lambda_2 &\le -(1-\gamma)\Gamma r/\rho^2,\\
         |\theta_{i, \tau}| &\le 2\sqrt{1/\rho^2 + 5\eta n/d}\cdot \Gamma_{sec} r/\sqrt{d}.
     \end{align*}
\end{lemma}
\begin{proof}[Proof of Lemma \ref{lemma: v_r_parameters_multi}]
    The first two statements are obvious because from Theorem \ref{thrm: finite_converge_multi} we have
    \begin{align*}
        y_i \vb^{\top}_{(r, R)} \bmu_i \ge (1-\gamma)\Gamma_{sec} r,
    \end{align*}
    for $\forall i \in \itc$. This implies $|\lambda_j| \ge (1-\gamma)\Gamma_{sec} r/\rho^2$ for $j \in \{1, 2\}$. Meanwhile, we decompose $\vb_{(r, R)} = \vb_{\bmu} + \vb_{\bxi}$ where $\vb_{\bmu} = \lambda_1 \bmu_1 + \lambda_2 \bmu_2$ and $\vb_{\bxi} = \sum\limits_{i \in [n]} \theta_i \bxi_i$. And we can upper bound $\|\vb_{\bxi}\|$ as
    \begin{align*}
        \|\vb_{\bxi}\|^2 &= \|\vb_{(r, R)}\|^2 - \|\vb_{\bmu}\|^2
        \le r^2 - \lambda_1^2 \rho^2 - \lambda_2^2 \rho^2
        \le r^2 ( 1 - 2(1-\gamma)^2\Gamma_{sec}^2/\rho^2).
    \end{align*}
    The first inequality is from $\|\vb\| \le r$ and the second inequality is from the first two statements we just proved. Therefore, denote $j, \tau_j = \argmax\limits_{i \in [n], \tau \in [2, T]} \theta_{i, \tau}$, we have
    \begin{align*}
        \theta_{j, \tau_j}^2 \|\bxi_{j, \tau_j}\|^2 \le \|\vb_{\bxi}\|^2 \le r^2 ( 1 - 2(1-\gamma)^2\Gamma_{sec}^2/\rho^2).
    \end{align*}
    Then we can upper bound $|\theta_{j, \tau_j}|$ as
    \begin{align*}
        \theta_{j, \tau_j}^2 &\le r^2 ( 1 - 2(1-\gamma)^2\Gamma_{sec}^2/\rho^2)/\|\bxi_{j, \tau_j}\|^2
        \le r^2 ( 1 - 2(1-\gamma)^2\Gamma_{sec}^2/\rho^2)/(1-\kappa)d\\
        &= r^2\bigg(1 - \frac{2(1-\gamma)^2}{\|\vb_{sec}\|^2 \rho^2}\bigg)/(1-\kappa)d
        \le r^2 \bigg(1-\frac{1}{(2/\rho^2 + 5 \eta n/d) \rho^2}\bigg)/(1-\kappa)d\\
        &= \frac{1 + 5\eta n \rho^2/d}{2 + 5\eta n \rho^2/d} \cdot \frac{r^2}{(1-\kappa)d}
        \le \bigg(\frac{1}{\rho^2} + \frac{5\eta n}{d}\bigg) \cdot \frac{\Gamma_{sec}^2 r^2}{2d}.
    \end{align*}
    The second inequality is from Lemma \ref{lemma: noise_balance}; The third inequality is from Lemma \ref{lemma: v_sec_norm} that $\|\vb_{sec}\| \le \sqrt{2/\rho^2 + 5\eta n/d}$ and our definition of $\gamma = \frac{2\sqrt{\rho^2 + (1+\kappa)Td}}{\Gamma_{sec} \exp((1 - \zeta) R\Xi)}$; The last inequality is from $\Gamma_{sec} = \|\vb_{sec}\|^{-1} \ge (2/\rho^2 + 5\eta n/d)^{-1}$. Thus, we can bound $|\theta_{j, \tau_j}|$ as
    \begin{align*}
        |\theta_{j,\tau_j}| \le 2\sqrt{1/\rho^2 + 5\eta n/d}\cdot \Gamma_{sec} r/\sqrt{d}.
    \end{align*}
\end{proof}

Therefore, we can prove the main theorem.
\begin{proof}[Proof of Theorem \ref{thrm: main_thrm_f_multi}]
    First we show that the model can perfectly classify all training samples. From the construction of 
    $\pb_{sec}, \vb_{sec}$, we have 
    \begin{align*}
        y_i \vb^{\top}_{(r, R)} \rb_i \ge y_i \la \vb^{sec}_r, \rb^{sec}_i \ra > 0
    \end{align*}
    for $\forall i \in [n]$. Here $\rb^{sec}$ is the token selected by $\pb^{sec}_R$. Thus $y_i = \sign(f(\bX_i; \vb_{(r, R)}, \pb_{(r, R)}))$ for all $i \in [n]$.
    Then we bound the test error. Given a test sample \(\bX, y\), where \(\bX = (\bmu^{\star}, \bxi^{\star})\), \(\bmu^{\star}\) can be \(\bmu_1\) or \(\bmu_2\). With probability at least 
    $1-\delta$ a good run will occur. Similar to the proof of Lemma \ref{lemma: noise_balance}, with probability at least \(1-\exp(-\Omega(d/n^2))\),
    \begin{equation}\label{ineq:test_concentration_multi}
        |\la \bxi^{\star}, \bxi_{i, \tau} \ra| \le 2\sqrt{d\log(6n^2/\delta)}.
    \end{equation}
    According to Lemma \ref{lemma: test_attention_finite_multi}, with probability at least $1 - \exp\big(-\frac{1}{2}(\frac{1}{2} (1 - \zeta)\Xi - K/R)^2 \big)$, we have
    \begin{align}\label{ineq:test_margin_lowerbound_multi}
        y \cdot f(\vb_{(r, R)}, \pb_{(r, R)}; \bX) &\ge \frac{\la y\vb_{(r, R)},  e^K \bmu^{\star} + \bxi^{\star} \ra}{e^K+1} \ge \frac{e^K(1-\gamma)\Gamma_{sec} r \|\bmu^{\star}\|^2}{\rho^2(e^K + 1)} - \frac{1}{e^K + 1}\sum_{i\in [n]} \sum_{\tau \in [2, T]} |\theta_{i, \tau}|\cdot|\la \bxi_{i, \tau}, \bxi^{\star} \ra|.
\end{align}
Let $K = \log(n T \sqrt{1/\rho^2 + \eta n/d}) + \log(\log(6n^2/\delta)) + C < \frac{1}{2} (1 - \zeta)R\Xi$. By uniform bound, we have that with probability at least \(1 - \exp(-\Omega(d/n^2)) - \exp\big(-\frac{1}{2}(\frac{1}{2} (1 - \zeta)\Xi - K/R)^2 \big)\),
\begin{align*}
      y \cdot f(\vb_{(r, R)}, \pb_{(r, R)}; \bX)
        &\ge \frac{e^{K} (1-\gamma)\Gamma_{sec} r - n T \sqrt{d \log(6n^2/\delta)} \cdot 2\sqrt{1/\rho^2 + \eta n/d}\cdot \Gamma_{sec} r/\sqrt{d}}{1 + e^{K}}\\
        &\ge \frac{0.9 e^{K}\Gamma_{sec} r - n T \sqrt{d \log(n^2/\delta)} \cdot 2\sqrt{1/\rho^2 + \eta n/d}\cdot \Gamma_{sec} r/\sqrt{d}}{1 + e^{K}}\\
        &> 0,
    \end{align*}
    where the first inequality uses \eqref{ineq:test_concentration_multi}, \eqref{ineq:test_margin_lowerbound_multi} and Lemma \ref{lemma: v_r_parameters_multi}; The second inequality is from Lemma \ref{lemma: zeta_bound_multi} and the last inequality is from Assumption \ref{cond: main_cond_f} and our selection of $K$. Therefore,
    \begin{align*}
       \mathbb{P}(y \neq f(\vb_{(r, R)}, \pb_{(r, R)}; \bX)) \le  \exp\Big(-\frac{1}{2}(\frac{1}{2} (1 - \zeta)\Xi - \frac{K}{R})^2 \Big) + \exp(-\Omega(d/n^2)), 
    \end{align*}
    where $\zeta = \frac{\log(2MT\|\vb_{sec}\|^3n\rho^2/C)}{R\Xi} = \Theta\Big(\frac{\sqrt{\eta n/d + 1/\rho^2}}{R} \log\Big(\frac{n^2 T (\eta n + d/\rho^2)^3(\rho^2+d)}{\rho^2d^3}\Big)\Big)$, $K =\log(n T \sqrt{1/\rho^2 + \eta n/d}) + \log(\log(6n^2/\delta)) + C = \Theta(\log(n T \sqrt{1/\rho^2 + \eta n /d}\log(6n^2/\delta))$ and $\Xi = \|\pb_{sec}\|_2^{-1} = \Theta((\eta n/d + 1/\rho^2)^{-1/2})$. Plugging in the order of \(\Xi\) and \(K\), we have
    $$
    \mathbb{P}_{(\bX, y) \sim \mathcal{D}_{clean}}(y \neq \sign(f(\bX; \vb_{(r, R)}, \pb_{(r, R)}))) \le \exp(-\Omega(d/n^2)) + \exp\Big(-\Omega\big(\frac{(1 - \zeta)}{\sqrt{\eta n/d + 1/\rho^2}} - \frac{\log(n)}{R}\big)^2 \Big),
    $$
    where $\zeta = \Theta\Big(\frac{\sqrt{\eta n/d + 1/\rho^2}}{R} \log\Big(\frac{n^2 T (\eta n + d/\rho^2)^3(\rho^2+d)}{\rho^2d^3}\Big)\Big)$. This completes the proof.
\end{proof}

\subsubsection{Proof of Thm. \ref{thm: bias-of-max-margin-joint-constrained-gamma}}
\begin{lemma} \label{lemma: max-margin-joint-constrained-t} 
Consider the next joint-constrained max margin solution:
\begin{align} \label{problem: ridge-constrained-solution-parametrize-by-t-main}
    &\left(\vb_{t},\pb_t\right) = \argmax_{\norm{\vb}^2 + \norm{\pb}^2 \leq t}   \min_i y_if(\bX_i; \vb,\pb).
\end{align}
Let $r_t := \norm{\vb_{t}}$ and $R_t := \norm{\vb_{t}}$, then $\left(\vb_{t},\pb_t\right) = \left(\vb_{(r_t,R_t)},\pb_{(r_t,R_t)}\right)$, where $\left(\vb_{(r_t,R_t)},\pb_{(r_t,R_t)}\right)$ is a solution to Problem \ref{eq: problem_def}. Moreover, under Assumption \ref{cond: main_cond_f} (items 1-3), with probability at least $1 - \delta$ over the random data generation, we have that
$r_t \rightarrow \infty, R_t \rightarrow \infty$ as $t \rightarrow \infty$.
\end{lemma}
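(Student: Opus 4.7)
\textbf{Proof proposal for Lemma \ref{lemma: max-margin-joint-constrained-t}.}

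The first claim is essentially a convex-duality/equivalence observation. Since $r_t^2+R_t^2 \leq t$ by the definition of $r_t,R_t$, any pair $(\vb,\pb)$ with $\|\vb\|\leq r_t$ and $\|\pb\|\leq R_t$ automatically satisfies $\|\vb\|^2+\|\pb\|^2 \leq r_t^2+R_t^2 \leq t$, so it is feasible for the joint problem \eqref{problem: ridge-constrained-solution-parametrize-by-t-main}. Because $(\vb_t,\pb_t)$ is joint-optimal and itself belongs to the smaller separate-feasible set, it must also solve the separate max-margin problem \eqref{eq: problem_def} with radii $(r_t,R_t)$. (As a side remark, scaling $\vb$ by $c>1$ multiplies the margin by $c$, so the joint constraint is active, i.e. $r_t^2+R_t^2=t$; we will use this below.)

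For the second claim I will combine a trivial upper bound on the margin with the achievability result of Theorem \ref{thrm: finite_converge}. The upper bound is
\[
y_i f(\bX_i;\pb_t,\vb_t) \;=\; y_i\,\vb_t^{\top} \bX_i^{\top}\SM(\bX_i\pb_t)\;\leq\; \|\vb_t\|\cdot\max_{i,j}\|\bx_i^{(j)}\| \;=\; O(\sqrt{d})\,r_t,
\]
where I used that a softmax output is a convex combination of the two token vectors. For achievability, pick the candidate $\vb = \sqrt{t/2}\,\vb_{mm}/\|\vb_{mm}\|$, $\pb = \sqrt{t/2}\,\pb_{mm}/\|\pb_{mm}\|$, which is joint-feasible and, for $t$ large enough to meet the hypotheses of Theorem \ref{thrm: finite_converge}, yields a margin $\geq (1-\gamma)\Gamma\sqrt{t/2} = \Omega(\sqrt{t})$. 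Comparing the two bounds forces $r_t \geq \Omega(\sqrt{t}/\sqrt{d}) \to \infty$.

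The more delicate step is $R_t\to\infty$, which I will show by contradiction. Assume along a subsequence that $R_t\leq M'$ for some constant $M'$. Then $\|\pb_t\|\leq M'$, so for every $i$ the score gap satisfies $|(\bx_{i\alpha_i}-\bx_{it_i})^{\top}\pb_t|\leq 2\sqrt{d}\,M'$, which yields a \emph{uniform lower bound} $\max_i(1-s^t_{i\alpha_i}) \geq \varepsilon := 1/(1+\exp(2\sqrt{d}\,M'))>0$. Invoking Proposition \ref{prop: optimal_token} with this $\varepsilon$, the label margin induced by $\pb_t$ is at most $\Gamma - c$ where $c := C\varepsilon/(\|\vb_{mm}\|^3 n\rho^2) >0$ is a strictly positive constant independent of $t$. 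Since $r_t \leq \sqrt{t}$, the joint optimum satisfies
\[
\min_i y_i f(\bX_i;\pb_t,\vb_t)\;\leq\; r_t(\Gamma-c)\;\leq\;\sqrt{t}\,(\Gamma-c).
\]
To obtain a contradiction I compare with an alternative in which $R$ grows slowly: set $R' = K\log t$ for a constant $K$ large enough that $R'\Xi \gg \log(\sqrt{d}\|\vb_{mm}\|\,c^{-1})$ eventually, and $r' = \sqrt{t-R'^2} = \sqrt{t}(1-o(1))$. Then $R',r'\to\infty$, so Theorem \ref{thrm: finite_converge} applies and gives a margin at least $(1-\gamma(R'))\Gamma r'$ with $\gamma(R') = O(\sqrt{d}/\Gamma)\exp(-R'\Xi) \to 0$. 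For $t$ large enough the alternative's margin exceeds $\sqrt{t}(\Gamma - c)$ and hence the joint optimum, a contradiction. Hence $R_t \to \infty$.

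The main obstacle is the last step: making the gap from Proposition \ref{prop: optimal_token} (which scales like $\varepsilon\cdot (\|\vb_{mm}\|^3 n \rho^2)^{-1}$ and can be very small when $d$ is large) actually beat the residual error $\gamma(R')\Gamma$ from Theorem \ref{thrm: finite_converge}. This is only possible because the latter decays \emph{exponentially} in $R'$, so choosing $R' = \Theta(\log t)$ (with sufficiently large implied constant) simultaneously keeps $r'=\sqrt{t}(1-o(1))$ and drives $\gamma(R')$ below the fixed gap $c$.
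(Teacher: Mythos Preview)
Your proposal is correct and follows essentially the same approach as the paper: both proofs reduce to Proposition~\ref{prop: optimal_token} to cap the achievable margin when $\|\pb_t\|$ stays bounded (yielding a fixed deficit $c>0$ below $\Gamma$), and then exhibit a joint-feasible alternative with growing $\|\pb\|$ whose margin approaches $\Gamma\cdot r$, forcing a contradiction. The differences are cosmetic: the paper builds its alternative by shaving one unit off $\|\vb\|$ (so $\|\tilde\pb_{mm}\|^2 = 2r_t-1+R_0^2$, i.e.\ $\|\tilde\pb_{mm}\|\asymp t^{1/4}$) and bounds its margin directly, whereas you take $R'=K\log t$ and invoke Theorem~\ref{thrm: finite_converge}; both choices drive the softmax non-optimality to zero while keeping $r'=\sqrt{t}(1-o(1))$. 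Your explicit argument for $r_t\to\infty$ (upper bound $O(\sqrt{d})\,r_t$ versus achievable $\Omega(\sqrt{t})$) is more detailed than the paper's one-line remark, and your closing observation that the argument hinges on the \emph{exponential} decay of $\gamma(R')$ beating the small but fixed gap $c$ is exactly right---this is implicit in the paper's proof but not spelled out. One minor imprecision: Theorem~\ref{thrm: finite_converge} as stated describes properties of the \emph{optimizer} $(\vb_{(r,R)},\pb_{(r,R)})$, not directly the margin of your explicit candidate; what you actually need is the lower bound $r\Gamma - rM\exp(-R\Xi)$ on the margin of the scaled $(\vb_{mm},\pb_{mm})$ pair, which appears inside that theorem's proof (eq.~\eqref{eq: pmm_vmm_margin}) and is precisely what the paper uses.
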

\begin{proof}

     Observe that as the norm of $\vb$ increases, the margin increases; thus, it's easy to verify that $\norm{\vb_t} \rightarrow \infty$ as $t \rightarrow \infty$. We argue that also $\norm{\pb_t} \rightarrow \infty$ as
    $t \rightarrow \infty$. To see that, assume by contradiction that $\norm{\pb_t} = R_t \leq R_0$ for some arbitrary large $t$ that will be determined later. Set $\Gamma = 1/\norm{\vb_{sec}}, \norm{\vb_t} = r_t$, $\Tilde{\vb}_{sec} = (r_t-1)\Gamma\vb_{sec}$. Hence $t \leq r_t^2 + R_0^2$ and $\norm{\Tilde{\vb}_{sec}}^2 = (r_t-1)^2$. The idea is that by decreasing $\norm{\vb_t}$ by $1$, we can choose $\pb$ with $\norm{\pb}^2 + (r_t-1)^2 = t = r_t^2 + R_t^2$, i.e., $\norm{\pb}^2 = 2r_t - 1 + R_t^2$, which can be arbitrary large (compared to $R_t$) for large enough $t$.  
    Set $\Pi := 1/\norm{\pb_{sec}}$ and  $\Tilde{\pb}_{sec} := \sqrt{2r_t - 1 + R_t^2} \Pi \pb_{sec}$.
    The proof strategy is obtaining a contradiction by proving that $(\Tilde{\vb}_{sec},\Tilde{\pb}_{sec})$ is a strictly better solution compared to $(\vb_t,\pb_t)$. 
    Define $q_i^{\pb} = 1 - s_{i 1}$ for $i \in \itc$ to be the amount of non-optimality softmax probability for clean samples.
    Then we have that 
    \begin{align*}
        \max_i q_i^{\pb_t} \geq \kappa
    \end{align*}
    where $\kappa > 0$ is a constant that depends just on $R_0$ and data parameters (e.g. $n,d,\rho,\delta$), but not in $t$. On the other hand, for every $\epsilon > 0$, we have that
    \begin{align*}
        q^* = \max_i q_i^{\Tilde{\pb}_{sec}} \leq \epsilon,
    \end{align*}
    for large enough $r_t$ i.e. large enough $t$. 
    Therefore, By Proposition \ref{prop: optimal_token_multi}, we can upper bound the margin induced by $\vb_t$ on $(Y_i,\rb_i)$ for $\rb_i = \bX_i^\top\SM(\bX_i\pb_t)$ by 
    \begin{align*}
        \min_{i \in [n]} y_i\vb_t^\top \rb_i \leq r_t(\Gamma-c\kappa),
    \end{align*}
    for some constant $c > 0$ (i.e. $c$ does not depend on $t$). We note that $\Gamma$ in the above equation is the label margin induced by $\bv_{sec}$ (see Definition \ref{def: labelmargin_idea_multi}). 
    On the other hand, the margin induced by $\Tilde{\vb}_{sec}$ on 
     $(Y_i,\rb_i)$ for $\rb_i = \bX_i^\top \SM(\bX_i\Tilde{\pb}_{sec})$ is at least 
    \begin{align*}
        \min_i y_ir_i^{\top}\Tilde{\vb}_{sec} &\geq \min_i y_ix_{i\alpha_i}^{\top}\Tilde{\vb}_{sec} - q^*T \max_\tau \norm{\bx_i^{(\tau)} }\norm{\Tilde{\vb}_{sec}} \\ 
        &\geq (\rb_t-1)(\Gamma -M\epsilon),
    \end{align*}
    where $M =T \max_\tau \norm{\bx_i^{(\tau)}}$ and $\alpha_i = 1$ for $i \in \itc$ and $\alpha_i = 2$ for $i \in \itn$. Observe that this lower bound is bigger than the previous upper bound whenever 
\begin{align*}
   &(r_t-1) (\Gamma -M\epsilon) >   r_t (\Gamma -c\kappa) \\
    &M\epsilon < -(\Gamma-M\epsilon)/r_t + c\kappa.
\end{align*}
Choose large enough $t$ such that $(\Gamma-M\epsilon)/r_t < c\kappa/2$ and $M\epsilon < c\kappa/2$, gives us the  desired contradiction. Recall that $R_t := \norm{p_t}$ and $r_t := \norm{v_t}$. Since $r_t^2 + R_t^2 \leq t$, we have that $(\vb_t, \pb_t$ is a solution to Problem \ref{eq: problem_def} with $r = r_t, R = R_t$, and  $(\vb_{(r_t,R_t)},\pb_{(r_t,R_t)})$ is a solution to Problem \ref{problem: ridge-constrained-solution-parametrize-by-t-main}.
    
\end{proof}

\begin{proof} [Proof of Thm. \ref{thm: bias-of-max-margin-joint-constrained-gamma}]
By Thm. \ref{thrm: main_thrm_f_multi}, with probability at least $1-\delta$, the training set is feasible, i.e. exists $(\vb, \pb)$ such that $\min_{i \in [n]} y_if(\bX_i; \vb, \pb) > 0$. Therefore, for any $\gamma > 0$, with probability at least $1-\delta$, we have that $\min_{i \in [n]} y_if(\bX_i; \vb_{\gamma}, \pb_{\gamma}) \geq \gamma > 0 $, which proves the first part of the Thm.
Next, we show that the classifier $\sign(f(\bX; \vb_\gamma, \pb_\gamma))$ generalizes well, for large enough $\gamma$.  Recall the next joint-constrained max margin solution:
\begin{align} \label{problem: ridge-constrained-solution-parametrize-by-t}
    &\left(v_{t},p_t\right) = \argmax_{\norm{\vb}^2 + \norm{\pb}^2 \leq t}   \min_i y_if(\bX_i; \vb,\pb), 
\end{align}
which was introduced in Lemma \ref{lemma: max-margin-joint-constrained-t}.  Fix $\gamma > 0$, and let $\left(\vb_{\gamma},\pb_{\gamma}\right)$ be the solution of Problem \ref{problem: optimization_problem_joint_convergence_minimize_norm}. Define $t(\gamma) := \norm{\vb_{\gamma}}^2 + \norm{\pb_{\gamma}}^2$. We argue that $\left(\vb_{\gamma},\pb_{\gamma}\right)$ is a solution to Problem \ref{problem: ridge-constrained-solution-parametrize-by-t} for $t= t(\gamma)$. Indeed, let 
    \begin{align*}
        m := \max_{\norm{\vb}^2 + \norm{\pb}^2 \leq t(\gamma)}   \min_{i\in[n]} y_if(\bX_i; \vb,\pb) 
    \end{align*}
  be the maximum margin for Problem \ref{problem: ridge-constrained-solution-parametrize-by-t} with $t= t(\gamma)$. Assume by contradiction that 
  \begin{align*}
       \min_{i\in[n]} y_if(\bX_i; \pb_{\gamma},\vb_{\gamma})  < m,
  \end{align*}
  which implies that 
  \begin{align*}
      \gamma \leq  \min_{i\in[n]} y_if(\bX_i; \pb_{\gamma},\vb_{\gamma})  < m.
  \end{align*} Let $(\vb^*,\pb^*)$ be a solution to Problem \ref{problem: ridge-constrained-solution-parametrize-by-t} with $t= t(\gamma)$ i.e. $\norm{\vb^*}^2 + \norm{\pb^*}^2 = t(\gamma)$ and  $\min_{i\in[n]} y_if(\bX_i; \pb^*,\vb^*) = m > \gamma$. Write $\vb' := (\gamma /m)\cdot \vb^*$. We remind that $f(\bX; \vb, \pb) = \vb^{\top} \bX^{\top} \mathbb{S}(\bX \pb)$ and overall we get that 
  \begin{itemize}
      \item $\norm{\vb'}^2 + \norm{\pb^*}^2 = (\gamma /m)^2\norm{\vb^*}^2 + \norm{\pb^*}^2 < \norm{\vb^*}^2 + \norm{\pb^*}^2 = t(\gamma)$
      \item $\min_{i\in[n]} y_if(\bX_i; \pb^*,\vb') = \frac{\gamma}{m}\min_{i\in[n]} y_if(\bX_i; \pb^*,\vb^*) = \frac{\gamma}{m} \cdot m = \gamma$,
  \end{itemize}
  which contradicts the optimality of $\left(\vb_{\gamma},\pb_{\gamma}\right)$ to Problem \ref{problem: optimization_problem_joint_convergence_minimize_norm}. We conclude that $\left(\vb_{\gamma},\pb_{\gamma}\right)$ is a solution to Problem \ref{problem: ridge-constrained-solution-parametrize-by-t} for $t= t(\gamma)$, i.e. $\left(\vb_{\gamma},\pb_{\gamma}\right) = \left(\vb_{t(\gamma)},\pb_{t(\gamma)}\right)$, where $\left(\vb_{t(\gamma)},\pb_{t(\gamma)}\right)$ is a solution for Problem \ref{problem: ridge-constrained-solution-parametrize-by-t} with $t= t(\gamma)$. Let $r_{t(\gamma)} := \norm{\vb_{t(\gamma)}}$ and $R_{t(\gamma)} :=  \norm{\pb_{t(\gamma)}}$. By Lemma \ref{lemma: max-margin-joint-constrained-t} we have
  \begin{align}
      \left(\vb_{\gamma},\pb_{\gamma}\right) = \left(\vb_{t(\gamma)},\pb_{t(\gamma)}\right) = \left(\vb_{(r_{t(\gamma)},R_{t(\gamma)})},\pb_{(r_{t(\gamma)},R_{t(\gamma)})}\right) \label{eq: asymptotic-gamma-to-r},
  \end{align}
  and that $r_{t(\gamma)} \rightarrow \infty, R_{t(\gamma)} \rightarrow \infty$ as $t(\gamma) \rightarrow \infty$. Clearly $t(\gamma) \rightarrow \infty$ as $\gamma \rightarrow \infty$.
  By Thm. \ref{thrm: main_thrm_f_multi}, The classifier $\sign(f(\bX; \vb_{(r, R)}, \pb_{(r, R)}))$ generalizes well on test data:
  \begin{align*}
      &\mathbb{P}_{(\bX, y) \sim \mathcal{D}}(y \neq \sign(f(\bX; \vb_{(r,R)}, \pb_{(r,R)}))) \\ 
      &= \eta + \exp(-\Omega(d/n^2)) +\exp\Big(-\Theta\big(\frac{(1 - \zeta)}{\sqrt{\frac{\eta n}{d} + \frac{1}{\rho^2}}} - \frac{\log(d)}{R}\big)^2 \Big)
  \end{align*}
  
In particular, there exists $r_0, R_0$ such that for any $r \geq r_0, R \geq R_0$, the above probability can be upper bound by  \(\eta + \exp(-\Omega(d/n^2)) + \exp(-\Theta((1/\rho^2 + \eta n/d)^{-1}))\) (see Remark \ref{remark: asymptotic_max_margin}). Choose large enough $\gamma_0$ such that for any $\gamma \geq \gamma_0$ we have that $r_{t(\gamma)} \geq r_0$ and $R_{t(\gamma)} \geq R_0$. Then we conclude
\begin{align*}
         &\mathbb{P}_{(\bX, y) \sim \mathcal{D}}\left(y \neq \sign(f(\bX; \vb_{\gamma}, \pb_{\gamma}))\right) \\
         &= \mathbb{P}_{(\bX, y) \sim \mathcal{D}}\left(y \neq \sign\left(f(\bX; \vb_{(r_{t(\gamma)},R_{t(\gamma)})}, \pb_{(r_{t(\gamma)},R_{t(\gamma)})})\right)\right) \\
         & \leq \eta + \exp(-\Omega(d/n^2)) + \exp(-\Theta((1/\rho^2 + \eta n/d)^{-1})),
\end{align*}
where the first equality is from Eq. \ref{eq: asymptotic-gamma-to-r}, as required.
\end{proof}

\subsubsection{Proof of Thm. \ref{thrm: main_thrm_mm_s}}
\paragraph{Proof Sketch}
\hspace*{\fill} \\
First we prove that in this case, only by selecting the noise token for every sample can we achieve the largest margin in the downstream task, 
\begin{equation}
\label{eq: best_token_s}
    \rb^*_i = \bxi_i, \forall i \in [n]
\end{equation}

Similarly, we define the respective max-margin solution for \(\pb\) and \(\vb\) in this case.
\begin{definition}[p-SVM, negative case]
\label{def: p_SVM_s}

$\pb$ should satisfy
$$
    \pb_{mm}(\alpha) = \argmin\limits_p \|\pb\|
$$
    subjected to
\begin{align}
     \pb^{\top}(\bxi_i - \bmu_i) \ge 1, \label{eq: p_SVM_cond_s}
\end{align}
for all $1 \le i \le n$. $\Xi = 1/\|\pb_{mm}\|$ is the margin induced by $\pb_{mm}$.
\end{definition}

\begin{definition}[v-SVM, negative case]
\label{def: labelmargin_s}
\begin{equation}
    \vb(\pb) = \argmin_{\vb \in \mathbb{R}^d} \|\vb\|  \text{ s.t. } y_i\cdot \vb^{\top}\rb_i \geq 1, \quad \text{ for all } i \in [n]. \label{eq: v-SVM_cond_s}
\end{equation}
\(\Gamma(\pb) = 1/\|\vb(\pb)\|\)
is the \textbf{label margin} induced by $\vb$ and \(\pb\). When \(\rb_i = \bxi_i, i \in [n]\),
\begin{equation}
    \vb_{mm} = \argmin_{\vb \in \mathbb{R}^d} \|\vb\|  \text{ s.t. } y_i\cdot \vb^{\top} \bxi_i \geq 1, \quad \text{ for all } i \in [n]. \label{def: vnm}
\end{equation}
\(\Gamma = 1/\|\vb_{mm}\|\)
is the label margin induced by $\vb_{mm}$.
\end{definition}

To prove this token selection is optimal, we need to explain that the optimality of the token choice is strict in the sense that mixing other tokens will shrink the label margin. We formalize this into the following proposition:

\begin{restatable}[Optimal Token Condition]{proposition}{Propoptimaltokens}
\label{prop: optimal_token_s}
    Suppose that Assumption \ref{cond: main_cond_s} holds, with probability at least $1-\delta$ on the training dataset, for all $\pb$, the token selection under $\pb$ results in a label margin of at most $\Gamma - c \cdot \max\limits_{i \in [n]} (1 - s_{i 2})$.
\end{restatable}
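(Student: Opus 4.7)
The plan is to mirror the structure of the proof of Proposition \ref{prop: optimal_token}, but with two key simplifications afforded by the low-SNR regime. First, the optimal token in \eqref{eq: best_token_s} is uniform across samples (always the noise token), so there is only one type of deviation to analyze: putting some softmax mass on the signal token. Second, under Assumption \ref{cond: main_cond_s}, signal contributions are of order $\rho^2 = O(d/n)$, which can be absorbed by the noise-driven terms rather than exploited for classification. I would begin by establishing an analogue of Lemma \ref{lemma: v_mm_norm}: under the selection \eqref{eq: best_token_s}, the constraints $y_i \vb^\top \bxi_i \geq 1$ force a KKT representation $\vb_{mm} = \sum_i y_i \theta_i \bxi_i$ with $\theta_i = \Theta(1/d)$ (by a near-identical argument to Lemma \ref{lemma: noise_factor_balance}), yielding $\|\vb_{mm}\|^2 = \Theta(n/d)$ and $\Gamma = \Theta(\sqrt{d/n})$.

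Next, for an arbitrary $\pb$ inducing $\rb_i = (1-s_{i,2})\bmu_{\sigma(i)} + s_{i,2}\bxi_i$, let $\vb(\pb)$ be the v-SVM solution of \eqref{eq: v-SVM_cond_s} on $(y_i, \rb_i)_{i \in [n]}$, and denote the attained label margin by $\Gamma' = 1/\|\vb(\pb)\|$. The goal is to show
\begin{equation*}
\|\vb(\pb)\|^2 - \|\vb_{mm}\|^2 \;\geq\; \Omega\!\left(\tfrac{1}{d}\cdot \max_{i\in[n]}(1-s_{i,2})\right),
\end{equation*}
which, combined with $\Gamma - \Gamma' \geq (\|\vb(\pb)\|^2 - \|\vb_{mm}\|^2)/(2\Gamma\|\vb_{mm}\|^2\|\vb(\pb)\|^2)$ and $\|\vb_{mm}\| = \Theta(\sqrt{n/d})$, yields the required bound with an absolute constant $c > 0$. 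I would split into the same flavor of cases as in Proposition \ref{prop: optimal_token}: (i) at least one sample in each of the two label classes remains ``pure'' ($s_{i,2} \approx 1$), and (ii) an entire label class has all samples mixed. In case (i), let $i^\star = \argmin_{i \in [n]}s_{i,2}$, set $\beta := s_{i^\star,2}$, and analyze the tightened constraint $\beta \cdot y_{i^\star}\vb^\top \bxi_{i^\star} + (1-\beta) y_{i^\star}\vb^\top \bmu_{\sigma(i^\star)} \geq 1$. By the KKT stationarity for $\vb(\pb)$ and the estimates of Lemma \ref{lemma: noise_balance}, the induced coefficient on $\bxi_{i^\star}$ must grow by $\Omega((1-\beta)/d)$ relative to $\vb_{mm}$ (after accounting for the $O(1/\rho^2) = O(n/d)$ contribution from signal components, which is dominated since $\rho = O(\sqrt{d/n})$), producing the claimed gap. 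In case (ii), I would introduce a relaxed condition paralleling Condition \ref{cond: C_Cond_all2}, show that the relaxed optimum $\hat{\vb}$ already differs from $\vb_{mm}$ by $\Omega((1-\beta)/d)$ in squared norm (this time via direct computation of $\sum_{i} \hat\theta_i^2 \|\bxi_i\|^2$ using an estimate analogous to Lemma \ref{lemma: noise_factor_balance4}), and conclude $\|\vb(\pb)\|^2 \geq \|\hat{\vb}\|^2$.

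The main obstacle will be handling case (ii), because the assumption $\eta \in (0,1/2)$ is now arbitrary (not small), so one cannot rely on the ``dominant clean set'' intuition used heavily in the high-SNR proof. In particular, when both label classes are fully mixed, neither $\lambda_1 \|\bmu_1\|^2 \geq 1$ nor $-\lambda_2\|\bmu_2\|^2 \geq 1$ need hold, and the comparison to $\vb_{mm}$ must be driven purely by the new effective constraints $\theta_i \|\bxi_i\|^2 + \sum_{j \neq i}y_iy_j\theta_j \la \bxi_i,\bxi_j\ra \geq (1-\beta_i q)/(1-\beta_i)$ with $q = \lambda_i\|\bmu_i\|^2$ potentially much smaller than $1$. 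Here the low-SNR assumption works in our favor: since $\|\bmu\|^2 = \rho^2 = O(d/n)$, any attempt to compensate by enlarging $\lambda_1, \lambda_2$ to make $q \geq 1$ would cost $\Omega(1/\rho^2) = \Omega(n/d)$ in $\|\vb\|^2$, which is already of the same order as $\|\vb_{mm}\|^2$ and thus incompatible with the assumption $\|\vb(\pb)\| \leq 2\|\vb_{mm}\|$ that one may make WLOG (as in the original proof). This forces $q < 1$ strictly, and the relaxation-plus-rescaling argument of Case 3.4 then goes through essentially unchanged.
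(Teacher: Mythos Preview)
Your overall plan is sound and captures the main ideas: establish $\|\vb_{mm}\|^2 = \Theta(n/d)$, derive a squared-norm gap $\|\vb(\pb)\|^2 - \|\vb_{mm}\|^2 = \Omega((1-\beta)/d)$ where $\beta = \min_i s_{i,2}$, and convert this into the claimed margin gap. This is essentially what the paper does.

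The organization differs, and your version is in one respect cleaner. The paper splits first by whether the mixed samples are clean or noisy (three ``Situations'', which are symmetric), and then within Situation~1 splits on whether $\lambda_1' \|\bmu_1\|^2 < 1$ or $\geq 1$: in the former case mixing tightens each constraint and the argument reduces to a single mixed sample (Lemma~\ref{lemma: compare_v_norm_s}, giving the $\Omega((1-\beta)/d)$ gap); in the latter case mixing relaxes constraints but the signal cost $\lambda_1'^2 \rho^2 \geq 1/\rho^2$ itself yields a gap of order $(1-\beta)/\rho^2$ (Lemma~\ref{lemma: compare_v_norm3_s}). Your observation that the WLOG restriction $\|\vb(\pb)\| \leq 2\|\vb_{mm}\|$ together with $\rho^2 \leq d/(Cn)$ forces $|\lambda_j|\rho^2 < 1$ for both $j$ is correct (for $C$ large enough), and it makes the paper's second sub-case vacuous: once $|\lambda_j|\rho^2 < 1$, every mixed constraint is a strict tightening of $y_i\vb^\top \bxi_i \geq 1$, so one can always reduce to the single-worst-sample analysis. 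This is a genuine simplification.

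Two points where your write-up is looser than it should be. First, your case split by ``label classes'' is not the relevant axis: what governs whether mixing tightens or relaxes a constraint is the product $y_i\lambda_{\sigma(i)}\rho^2$, which depends on the signal index $\sigma(i)$, not on $y_i$ alone. Once your WLOG argument gives $|\lambda_j|\rho^2 < 1$ for both $j$, your case~(ii) becomes unnecessary; the reference to ``Case~3.4'' and the worry about $\eta$ not being small are red herrings here, since in the low-SNR setting the clean/noisy partition plays no structural role (every sample has the same optimal token). Second, in your case~(i) sketch you say the coefficient on $\bxi_{i^\star}$ must grow by $\Omega((1-\beta)/d)$, but the actual comparison needs more care: the paper constructs an explicit feasible point $\underline{\vb}$ for the unmixed problem from the mixed optimum $\vb'$ (by subtracting the excess in $\theta_{k^\star}'$ and rescaling the remaining $\theta_i'$ by a factor $1 - O((\alpha-1)/\sqrt{d})$), and then bounds $\|\vb'\|^2 - \|\underline{\vb}\|^2$ termwise. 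Tracking a single coefficient is not sufficient, because the cross-terms $\sum_{i\neq j} y_i y_j\theta_i'\theta_j'\la\bxi_i,\bxi_j\ra$ must also be controlled to order $o((1-\beta)/d)$.
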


Then we derive the convergence direction of $\pb$ and $\vb$ by Theorem \ref{thrm: finite_converge}. Note that as $\|\pb\| \rightarrow \infty$, the attention is more focused on the noise token for every training sample. Therefore, the output of signal token is upper bounded by a small value.

Consider a test sample $(\bX, y), \bX = (\bmu', \bxi')$. As $\|\pb\|$ increasing, the noise token $\bxi'$ will will dominate the overall output if $\pb^{\top}_{(r, R)} \bxi' \ge 0$, which indicates the output of attention layer will close to the noise token, $\rb' \rightarrow \bxi'$. Meanwhile, we can prove that $\pb_{(r, R)}$ and $\vb_{(r, R)}$ are near orthogonal, so $\pb^{\top}_{(r, R)} \bxi'$ and $\vb_{(r, R)}^{\top} \bxi'$ are nearly independent variables subjected to Gaussian distribution. Therefore, the probability that $y_i \vb^{\top}_{(r, R)} \bxi' < 0$ is at least constant order. 

\paragraph{Optimal Token Condition}
 \hspace*{\fill} \\
 First we find the optimal token selection in this case.
\Propoptimaltokens*

\begin{proof}[Proof of Proposition \ref{prop: optimal_token_s}]
    Similar as above, we consider the following three situations:
    \begin{enumerate}
        \item $p \neq 0, k-p = 0$. (All wrong token selections come from clean set)
        \item $p = 0, k-p \neq 0$. (All wrong token selections come from noisy set)
        \item $p \neq 0, k-p \neq 0$. (Wrong token selections are from both sets)
    \end{enumerate}
    We will discuss each situation specifically and prove that Proposition \ref{prop: optimal_token} holds in every possible case.
    \\ \hspace*{\fill} \\
    \textbf{Situation 1: $p \neq 0, k-p = 0$}
    
    First, let's see the condition under the optimal choice of tokens:
    \begin{condition}[Original Condition]
    \label{cond: Original Condition_s}
    $$ 
    y_i \vb^{\top} \bxi_i \ge 1, i \in [n]
    $$
    \end{condition}
    Similarly, $\vb_{mm}$ also satisfies the KKT conditions of the max-margin problem \eqref{eq: v-SVM_cond} in this case, so we could write $\vb$ as
    \begin{align}
        \vb = \lambda_1 \bmu_1 + \lambda_2 \bmu_2 + \sum\limits_{i \in [n]} y_i \theta_i \bxi_i. \label{eq: vmm_decompose_s}
    \end{align}
    Plugging \eqref{eq: vmm_decompose_s} in the condition \ref{cond: Original Condition_s}, we can rewrite these conditions as:
    $$
        \theta_i \cdot \|\bxi_i\|^2 + \sum\limits_{i' \neq i} y_i y_{i'} \theta_{i'} \la\bxi_i, \bxi_{i'} \ra \ge 1, i \in [n].
    $$
    Then we introduce a lemma to estimate the parameters of optimal solution under this condition:
    \begin{lemma}[Balanceing noise factor for KKT point]
    \label{lemma: noise_factor_balance_s}
        Suppose that Assumption \ref{cond: main_cond_s} holds, under Condition \ref{cond: Original Condition_s}, we have
        \begin{align*}
            \max\limits_{i \in [n]} \theta_i &\le \frac{1}{(1-\kappa)d - 2 n \sqrt{d \log(6n^2/\delta)}},\\
            \min\limits_{i \in [n]} \theta_i &\ge \frac{(1-\kappa)d - 4 n \sqrt{d \log(6n^2/\delta)}}{(1+\kappa)d((1-\kappa)d - 2 n \sqrt{d \log(6n^2/\delta)})}.
        \end{align*}
    \end{lemma}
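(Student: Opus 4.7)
The plan is to mirror the proof of Lemma~\ref{lemma: noise_factor_balance} essentially verbatim, with the sole modification that the sum over $\itn$ is replaced by a sum over all of $[n]$ and the occurrences of $n_2$ are replaced by $n$. This is because, in the small-SNR regime, Condition~\ref{cond: Original Condition_s} imposes a constraint $y_i \vb^\top \bxi_i \ge 1$ for every $i \in [n]$ (not just the label-flipped indices), so the KKT stationarity representation $\vb = \lambda_1 \bmu_1 + \lambda_2 \bmu_2 + \sum_{i \in [n]} y_i \theta_i \bxi_i$ has dual variables $\theta_i$ that are symmetric across clean and noisy indices.

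I would start with the upper bound, by contradiction. Let $j = \argmax_{i \in [n]} \theta_i$. Substituting the representation of $\vb$ into the $j$-th constraint and using $|\theta_i| \le \theta_j$ together with Lemma~\ref{lemma: noise_balance} to lower-bound $\|\bxi_j\|^2 \ge (1-\kappa)d$ and upper-bound $|\la \bxi_i, \bxi_j \ra| \le 2\sqrt{d \log(6n^2/\delta)}$, I obtain
\[
y_j \vb^\top \bxi_j \;\ge\; \theta_j\bigl[(1-\kappa)d - 2n\sqrt{d\log(6n^2/\delta)}\bigr].
\]
If $\theta_j$ strictly exceeded the claimed upper bound, then $y_j \vb^\top \bxi_j > 1$, and KKT complementary slackness would force $\theta_j = 0$, a contradiction with $\theta_j = \max_i \theta_i \ge 0$ (the latter because the all-zero vector is infeasible for the SVM).

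For the lower bound, I would use the constraint directly for an arbitrary $j \in [n]$:
\[
1 \;\le\; \theta_j \|\bxi_j\|^2 + \sum_{i \neq j} y_i y_j \theta_i \la \bxi_i, \bxi_j \ra \;\le\; \theta_j(1+\kappa)d + n\,\max_i \theta_i \cdot 2\sqrt{d\log(6n^2/\delta)},
\]
and then substitute the just-derived upper bound on $\max_i \theta_i$ to isolate $\theta_j$, yielding the stated lower bound after a routine rearrangement.

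I do not anticipate a main obstacle: the argument is mechanical and the probabilistic content (joint Gaussian concentration over all $n^2$ inner products and all $n$ norms) is already absorbed into the $\log(6n^2/\delta)$ factor supplied by Lemma~\ref{lemma: noise_balance}, which is invoked once at the outset to condition on the good event. The only point worth verifying is that the feasibility of the SVM \eqref{eq: v-SVM_cond_s} implicit in Condition~\ref{cond: Original Condition_s} is not vacuous in the small-SNR regime; this follows because the noise tokens are nearly orthogonal in the high-dimensional setting of Assumption~\ref{cond: main_cond_s} (item~\ref{assumptionmm2: dimension_large}), so the choice $\theta_i = \Theta(1/d)$ (and $\lambda_1 = \lambda_2 = 0$) is feasible, exactly as in the analogous step of Lemma~\ref{lemma: v_mm_norm}.
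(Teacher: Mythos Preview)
Your proposal is correct and follows essentially the same approach as the paper: both argue the upper bound by contradiction via complementary slackness at $j=\argmax_i\theta_i$, then feed that upper bound into the $j$-th constraint to extract the lower bound, with Lemma~\ref{lemma: noise_balance} supplying the concentration inputs. The only cosmetic difference is that you write $|\theta_i|\le\theta_j$ where the paper uses $\theta_i\le\theta_j$ (both valid since the KKT multipliers are nonnegative), and your feasibility remark is extra context the paper handles separately in Lemma~\ref{lemma: v_mm_norm_s}.
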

        \begin{proof}[Proof of Lemma \ref{lemma: noise_factor_balance_s}]
            First we prove the upper bound.
            Denote $j = \argmax\limits_{i \in [n]} \theta_i$, we have
            \begin{align*}
                y_j \vb^{\top} \bxi_j &= \sum\limits_{i \in [n]} y_i y_j \theta_i \la \bxi_i, \bxi_j \ra = \theta_j \|\bxi_j\|_2^2 + \sum\limits_{i \neq j, i \in [n]} y_i y_j \theta_i \la \bxi_i, \bxi_j \ra\\
                &\ge \theta_j \cdot (1-\kappa)d - n \theta_j \cdot 2\sqrt{d \log(6n^2/\delta)}
            \end{align*}
            The last inequality is because Lemma \ref{lemma: noise_balance} and the definition of j. Consider the contrary case when $\theta_j > \frac{1}{(1-\kappa)d - 2 n \sqrt{d \log(6n^2/\delta)}}$, we have
            \begin{align*}
                y_j \vb^{\top} \bxi_j &>  \frac{1}{(1-\kappa)d - 2 n \sqrt{d \log(6n^2/\delta)}} \cdot ((1-\kappa)d - n \cdot 2\sqrt{d \log(6n^2/\delta)}) = 1.
            \end{align*}
            By the KKT conditions, if $y_j \vb^{\top} \bxi_j > 1$ then we must have $\theta_j = 0$, and thus we reach a contradiction.  

            Then we prove the lower bound. For $\forall j \in [n]$ we have
            \begin{align*}
                1 &\le \theta_j \|\bxi_j\|_2^2 + \sum\limits_{i \neq j, i \in [n]} y_i y_j \theta_i \la \bxi_i, \bxi_j \ra \le \theta_j \cdot (1 + \kappa)d +  n \max\limits_{i \in [n]} \theta_i \cdot 2\sqrt{d \log(6n^2/\delta)}\\
                &\le \theta_j \cdot (1 + \kappa)d + \frac{ n}{(1-\kappa)d - 2 n \sqrt{d \log(6n^2/\delta)}} \cdot 2\sqrt{d \log(6n^2/\delta)}.
            \end{align*}
            The second inequality is due to Lemma \ref{lemma: noise_balance} and the last inequality is from the upper bound we just get. Therefore, we have
            \begin{align*}
                \theta_j \ge \frac{(1-\kappa)d - 4 n \sqrt{d \log(6n^2/\delta)}}{(1+\kappa)d((1-\kappa)d - 2 n \sqrt{d \log(6n^2/\delta)})}
            \end{align*}
            This completes the proof.  
            
\end{proof}
         
         As for the signal parameters $\lambda_1$ and $\lambda_2$, to achieve the minimal norm for $\vb$, it is obvious that $\lambda_1 = \lambda_2 = 0$. Then we can estimate $\|\vb_{mm}\|$ in this case:

\begin{restatable}[Norm of $\vb_{mm}$]{lemma}{vmmNormOrders}\label{lemma: v_mm_norm_s}
    Suppose that Assumption \ref{cond: main_cond_s} holds, with probability at least $1-\delta$ on the training dataset, for the solution $\vb_{mm}$ of \eqref{eq: v-SVM_cond} under the token selection (\ref{eq: best_token_s}), we have
    \begin{align*}
        \frac{n}{2d} \le \|\vb_{mm}\|^2 \le \frac{5 n}{d}.
    \end{align*}
    This implies
    $$
    \|\vb_{mm}\| = \Theta\bigg(\sqrt{\frac{n}{d}}\bigg).
    $$
\end{restatable}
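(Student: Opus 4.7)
The plan is to mirror the proof of Lemma~\ref{lemma: v_mm_norm} almost verbatim, adapting it to the negative-SNR regime in which Condition~\ref{cond: Original Condition_s} imposes the constraint $y_i \vb^{\top} \bxi_i \ge 1$ for every $i \in [n]$ (not only the noisy indices) and no constraint involves the signal directions $\bmu_1, \bmu_2$. The KKT stationarity decomposition \eqref{eq: vmm_decompose_s} therefore forces $\lambda_1 = \lambda_2 = 0$ at the minimum-norm point, so $\vb_{mm}$ lies entirely in the span of the noise tokens.

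For the lower bound, I would expand
\[
\|\vb_{mm}\|^2 \;=\; \sum_{i \in [n]} \theta_i^2 \|\bxi_i\|^2 \;+\; \sum_{i \neq j} y_i y_j\, \theta_i \theta_j \la \bxi_i, \bxi_j\ra,
\]
apply Lemma~\ref{lemma: noise_factor_balance_s} to get $\theta_i = \Theta(1/d)$ uniformly in $i$, and apply Lemma~\ref{lemma: noise_balance} to bound $\|\bxi_i\|^2 \ge (1-\kappa)d$ and $|\la\bxi_i,\bxi_j\ra| \le 2\sqrt{d\log(6n^2/\delta)}$. The diagonal contribution is then $\Omega(n/d)$ while the off-diagonal cross term is of order $n^2 \sqrt{\log(n/\delta)}/d^{3/2}$, which is negligible under Assumption~\ref{cond: main_cond_s} item~\ref{assumptionmm2: dimension_large} ($d \ge C n^2 \log(n/\delta)$). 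A careful choice of constants then yields $\|\vb_{mm}\|^2 \ge n/(2d)$.

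For the upper bound I would exhibit the explicit candidate
\[
\tilde{\vb} \;=\; \sum_{i \in [n]} \tfrac{2}{d}\, y_i\, \bxi_i,
\]
and verify feasibility: for each $i$,
\[
y_i \tilde{\vb}^{\top} \bxi_i \;=\; \tfrac{2}{d}\|\bxi_i\|^2 + \tfrac{2}{d}\sum_{j \neq i} y_i y_j \la \bxi_i, \bxi_j\ra \;\ge\; 2(1-\kappa) - \tfrac{4 n \sqrt{d\log(6n^2/\delta)}}{d} \;\ge\; 1,
\]
by Lemma~\ref{lemma: noise_balance} and the high-dimension assumption. Computing the norm,
\[
\|\tilde{\vb}\|^2 \;\le\; \tfrac{4n(1+\kappa)}{d} + \tfrac{4 n^2 \cdot 2\sqrt{d\log(6n^2/\delta)}}{d^2} \;\le\; \tfrac{5n}{d},
\]
again using item~\ref{assumptionmm2: dimension_large} of Assumption~\ref{cond: main_cond_s} to absorb the cross term. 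Since $\vb_{mm}$ minimizes the norm over feasible vectors, $\|\vb_{mm}\| \le \|\tilde{\vb}\|$ finishes the argument.

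The proof is essentially routine given Lemma~\ref{lemma: noise_factor_balance_s}; the only delicate point is verifying sharpness of the prefactors $1/2$ and $5$ once the off-diagonal corrections from near-orthogonality of noise tokens are absorbed. This is precisely the role played by the assumption $d \ge C n^2 \log(n/\delta)$ with $C$ sufficiently large, mirroring the role it plays in Lemma~\ref{lemma: v_mm_norm}. The only structural difference from that earlier lemma is the absence of the $2/\rho^2$ term in both bounds, which reflects that no signal-direction constraints are active here.
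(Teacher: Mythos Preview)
Your proposal is correct and follows essentially the same approach as the paper: both argue that the KKT decomposition forces $\lambda_1=\lambda_2=0$, obtain the lower bound from Lemma~\ref{lemma: noise_factor_balance_s} together with the near-orthogonality estimates of Lemma~\ref{lemma: noise_balance}, and obtain the upper bound by exhibiting the identical feasible candidate $\tilde\vb = \sum_{i\in[n]} \tfrac{2}{d} y_i \bxi_i$. The computations and use of Assumption~\ref{cond: main_cond_s} to absorb cross terms are the same.
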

\begin{proof}[Proof of Lemma \ref{lemma: v_mm_norm_s}]
    As $\vb_{mm}$ is the max-margin solution and satisfies KKT condition, it can be represented as
\begin{align}
        \vb_{mm} = \lambda_1 \bmu_1 + \lambda_2 \bmu_2 + \sum\limits_{i \in \itc}y_i \theta_i \bxi_i + \sum\limits_{i \in [n]}y_i \theta_i \bxi_i.
\end{align}
As there is no constraint on $\lambda_1, \lambda_2$, both of them can take 0 to achieve max-margin. So we could lower bound $\|\vb_{mm}\|$ as
\begin{align*}
    \|\vb_{mm}\|^2 &\ge \sum\limits_{i \in [n]} \theta_i^2 \|\bxi_i\|^2 + \sum\limits_{i \in [n]} \sum\limits_{j \in [n]} y_i y_j \theta_i \theta_j \la \bxi_i, \bxi_j \ra \ge O\bigg(\frac{n^2}{d^{3/2}}\bigg)
    \ge \frac{n}{2d}.
\end{align*}
The second inequality is from Lemma \ref{lemma: noise_factor_balance_s} that $\theta_i = \Theta(1/d)$ for $i \in [n]$ and the last inequality is from Assumption \ref{cond: main_cond_s}.

Then to upper bound $\|\vb_{mm}\|$, consider the following possible solution $\tilde \vb$ 
\begin{align*}
    \tilde \vb = \sum\limits_{i \in [n]} 2 y_i \bxi_i /d.
\end{align*}
For $i \in [n]$, we have
\begin{align*}
    y_i \tilde \vb^{\top} \rb_i &= y_i \tilde \vb^{\top} \bxi_i
    = 2 \|\bxi_i\|^2/d + \sum\limits_{j \in [n], j \neq i} 2y_i y_j \la \bxi_i, \bxi_j \ra / d\\
    &\ge 2(1-\kappa) - 2 n \sqrt{\log(6n^2/\delta)/d}
    \ge 1.
\end{align*}
The first inequality is from Lemma \ref{lemma: noise_balance} and the second inequality is from Assumption \ref{cond: main_cond_s}. Therefore, $\tilde \vb$ is a possible solution of SVM problem \ref{def: labelmargin} when $\pb$ converges to $\pb_{mm}$. So we have
\begin{align*}
    \|\vb_{mm}\|^2 &\le \|\tilde \vb\|^2
    = \sum\limits_{i \in [n]} 4\|\bxi_i\|^2/d^2 + \sum\limits_{i \in [n]} \sum\limits_{j \in [n]} 4y_i y_j \la \bxi_i, \bxi_j \ra/d^2
    \le \frac{5 n}{d}.
\end{align*}
The last inequality is from Lemma \ref{lemma: noise_balance}, Lemma \ref{lem: concen_samplesize} and  Assumption \ref{cond: main_cond_s}. Combine the results above, we have $\|\vb_{mm}\|^2 = \Theta(\frac{n}{d})$.

\end{proof}

        Denote the mixed samples as $k_1, k_2, ..., k_p$. And for every mixed sample $k_i$, we have $\rb_{k_i} = (1-\beta_i) \bmu_{k_i} + \beta_i \bxi_{k_i}$. Without losing generality, we assume that $y_{k_i} = +1$ for all $i \in [p]$. Then 
        the conditions under \textit{Situation 1} become
        \begin{condition}[\(p\) clean samples violating optimal token selection]
        \label{cond: C_Cond_p_s}
            $$
        \begin{cases}
            &y_i \vb^{\top} \bxi_i \ge 1, i \in [n] \backslash [p] \\
            &\vb^{\top} \rb_{k_i} \ge 1, i \in [p]
        \end{cases}
        $$
        \end{condition}
        Denote the max-margin solution under this condition as $\vb'$ with parameters $\lambda'_1, \lambda'_2, \theta'_i$. Plugging this representation into the condition \ref{cond: C_Cond_p_s}, we have:
        $$
        \begin{cases}
            &\theta_i' \cdot \|\bxi_{i'}\|^2 + \sum\limits_{i' \neq i} y_i y_{i'} \theta'_{i'} \la\bxi_i, \bxi_{i'} \ra \ge 1, i \in [n] \backslash [p]\\
        & (1-\beta_i) \lambda_1' \cdot \|\bmu_1\|^2 + \beta_i (\theta'_{k_i} \cdot \|\bxi_{k_i}\|^2 + \sum\limits_{i' \neq k_i} y_{i'} \theta'_{i'} \la\bxi_{k_i}, \bxi_{i'} \ra) \ge 1, i \in [p]
        \end{cases}
        $$
        We consider two cases: $\lambda'_1 \|\bmu_1\|^2 < 1$ and $\lambda'_1 \|\bmu_1\|^2 \ge 1$. First when $\lambda'_1 \|\bmu_1\|^2 < 1$, the condition for mixed clean sample becomes:
        \begin{align*}
            \theta'_{k_i} \cdot \|\bxi_{k_i}\|^2 + \sum\limits_{i' \neq k_i} y_{i'} \theta'_{i'} \la\bxi_{k_i}, \bxi_{i'} \ra \ge \frac{1 - (1-\beta_i)\lambda'_1 \|\bmu_1\|^2}{\beta_i} > 1,
        \end{align*}
        which indicates that the condition for $\theta'_{k_i}$ is strengthened. So mixing 1 more clean sample is equal
        to strengthening 1 constraint in the original setting. Therefore, mixing p samples will not result
        in a better solution than only mixing 1 clean sample. Then we can simplify this case to mixing only 1 clean sample and denote this sample as $k_*$, $r_{k_*} = (1-\beta)\bmu_1 + \beta \bxi_{k_*}$. Now the condition becomes:
        \begin{condition}[1 clean sample violating optimal token selection]
        \label{cond: C_cond_1_s}    
        $$
        \begin{cases}
            &\theta_i' \cdot \|\bxi_{i'}\|^2 + \sum\limits_{i' \neq i} y_i y_{i'} \theta'_{i'} \la\bxi_i, \bxi_{i'} \ra \ge 1, i \in [n] \backslash \{k_*\}\\
        & (1-\beta) \lambda_1' \cdot \|\bmu_1\|^2 + \beta (\theta'_{k_*} \cdot \|\bxi_{k_i}\|^2 + \sum\limits_{i' \neq k_*} y_{i'} \theta'_{i'} \la\bxi_{k_*}, \bxi_{i'} \ra) \ge 1
        \end{cases}
        $$
        \end{condition}
        Similarly, we introduce the following lemma which estimates the parameters in $\vb'$.  We define
    \[
    \alpha = \frac{1 - (1-\beta)\lambda'_1\|\bmu_1\|^2}{\beta}
    \]
    for the convenience of the following proof.
        \begin{lemma}
\label{lemma: noise_factor_balance2_s}
        Suppose that Assumption \ref{cond: main_cond_s} holds, under condition \ref{cond: C_cond_1_s}, with probability at least $1-\delta$ on the training dataset, we have
        \begin{align*}
             \theta'_{k_*} &\le \frac{\alpha}{(1-\kappa)d-2n \sqrt{d \log(6n^2/\delta)}},\\
              \theta'_{k_*} &\ge \frac{\alpha}{(1+\kappa)d} \bigg(1 - \frac{2 n \sqrt{d \log(6n^2/\delta)}}{(1-\kappa)d - 2 n \sqrt{d \log(6n^2/\delta)}}\bigg),\\
              \max\limits_{i \in [n]\backslash\{k_*\}} \theta'_i &\le \frac{(1-\kappa)d + 2(\alpha-n)\sqrt{d\log(6n^2/\delta)}}{((1-\kappa)d-2n \sqrt{d \log(6n^2/\delta)})^2},\\
            \min\limits_{i \in [n]\backslash\{k_*\}} \theta'_i &\ge \frac{1}{(1+\kappa)d} \cdot \bigg(1 - \frac{2 n \alpha \sqrt{d \log(6n^2/\delta)}}{(1-\kappa)d - 2 n \sqrt{d \log(6n^2/\delta)}}\bigg).
        \end{align*}
    \end{lemma}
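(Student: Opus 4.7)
The plan is to mirror the proof strategy used for Lemma \ref{lemma: noise_factor_balance2} and Lemma \ref{lemma: noise_factor_balance3} (the nearly identical ``noisy-sample-mixed'' analysis in Case 2.1 of the high-SNR regime), adapting it to the low-SNR Condition \ref{cond: C_cond_1_s}. The only structural difference is that here the constraint on mixed sample $k_*$ comes from a clean sample whose signal strength is small, and the ``unmixed'' constraints are now over all $i \in [n] \setminus \{k_*\}$ rather than just over the label-flipped indices; apart from that, the algebra is the same, and the bookkeeping parameter $\alpha = (1-(1-\beta)\lambda_1'\|\bmu_1\|^2)/\beta$ plays the exact role of the corresponding $\alpha$ in Lemma \ref{lemma: noise_factor_balance3}.

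First, I would prove the upper bound on $\theta_{k_*}'$. From the complementary slackness condition applied to the mixed-sample constraint, we obtain $\theta'_{k_*} \|\bxi_{k_*}\|^2 + \sum_{i \neq k_*} y_{k_*} y_i \theta'_i \langle \bxi_i, \bxi_{k_*}\rangle \geq \alpha$ whenever $\theta'_{k_*} > 0$. Letting $j = \arg\max_{i \in [n]} \theta'_i$, I would lower-bound $y_j \vb'^\top \bxi_j$ by $\theta'_j\bigl((1-\kappa)d - 2n\sqrt{d\log(6n^2/\delta)}\bigr)$ via Lemma \ref{lemma: noise_balance}, and then argue by contradiction: if $\theta'_j$ exceeded $\alpha/((1-\kappa)d - 2n\sqrt{d\log(6n^2/\delta)})$, the resulting inner product would strictly exceed $\alpha$, forcing $\theta'_j = 0$ by KKT, which contradicts $j$ being the argmax (when $\alpha \geq 1$; the case $\alpha < 1$ similarly reduces to the bound for ordinary noisy constraints). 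This gives the upper bound on $\theta'_{k_*}$ since $\theta'_{k_*} \leq \theta'_j$.

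Next, to bound $\max_{i \in [n] \setminus \{k_*\}} \theta'_i$, I would repeat the argmax argument but restricted to $[n]\setminus\{k_*\}$, isolating the contribution of the already-bounded $\theta'_{k_*}$ via $|\langle \bxi_i, \bxi_{k_*}\rangle| \leq 2\sqrt{d\log(6n^2/\delta)}$. The analogous contradiction with the unit-margin constraint for $y_j \vb'^\top \bxi_j \geq 1$ then yields the stated upper bound, namely the given expression involving $(\alpha - n)$ in the numerator. The lower bounds on $\theta'_{k_*}$ and on $\min_{i \neq k_*} \theta'_i$ follow directly by plugging the upper bounds back into the respective constraints (the mixed-sample one at value $\alpha$, and the unmixed ones at value $1$) and solving for $\theta'_j(1+\kappa)d$, again exactly mirroring the lower-bound portion of Lemma \ref{lemma: noise_factor_balance3}.

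The main obstacle is essentially bookkeeping rather than a new idea: one must track whether $\alpha \geq 1$ or $\alpha < 1$ (which determines whether the mixed constraint is ``stronger'' than the unmixed ones) to ensure that the argmax $j$ indeed falls in the appropriate index set so that the KKT contradiction applies, and one must keep careful control of the $n$-vs-$\eta n$ prefactors so that the off-diagonal noise-correlation terms remain negligible under the low-SNR Assumption \ref{cond: main_cond_s} (items \ref{assumptionmm2: dimension_large} and \ref{assumptionmm2: SNR_small}). Since this is purely the low-SNR analogue of calculations already executed in detail for Lemma \ref{lemma: noise_factor_balance3}, I would present the proof as a direct adaptation, explicitly flagging the two places where the constraint set differs.
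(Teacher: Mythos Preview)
Your proposal is correct and follows essentially the same approach as the paper: the argmax-plus-KKT-contradiction for the upper bounds and the plug-back-in technique for the lower bounds, mirroring the Case~2.1 analysis with $n$ in place of $n_2$. One small note: the case split on $\alpha \lessgtr 1$ you flag as an obstacle does not actually arise here, since Condition~\ref{cond: C_cond_1_s} is introduced precisely under the hypothesis $\lambda_1'\|\bmu_1\|^2 < 1$, which forces $\alpha > 1$; the paper's proof therefore proceeds without that distinction.
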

        \begin{proof}[Proof of Lemma \ref{lemma: noise_factor_balance2_s}]
        Denote $j = \argmax\limits_{i \in [n]} \theta'_i$, we have
        \begin{align*}
            y_j \vb'^{\top} \bxi_j &= \theta'_j\|\bxi_j\|^2 + \sum\limits_{i \in [n], i \neq j} y_i y_j \theta'_i \la \bxi_i, \bxi_j \ra\\
            &\ge \theta'_j (1-\kappa)d - n \max\limits_{i \in [n]} \theta'_i \cdot 2\sqrt{d \log(6n^2/\delta)}\\
            &= \theta'_j((1-\kappa)d - n \cdot 2\sqrt{d \log(6n^2/\delta)}).
        \end{align*}
        The first inequality is due to Lemma \ref{lemma: noise_balance} and the last equation is from our definition of j. Consider the contrary case when $\theta'_j > \frac{\alpha}{(1-\kappa)d-2n \sqrt{d \log(6n^2/\delta)}}$, we have
        \begin{align*}
            y_j \vb'^{\top} \bxi_j &> \alpha.
        \end{align*}
         By the KKT conditions, if $y_j \vb'^{\top} \bxi_j > \frac{1 + \lambda'_1(1-\beta)\|\bmu_1\|^2}{\beta}$ then we must have $\theta'_j = 0$, and thus we reach a contradiction. Therefore, $\theta'_{k_{\star}} \le \theta'_j \le \frac{\alpha}{(1-\kappa)d-2n \sqrt{d \log(6n^2/\delta)}}$. Then denote $j' = \argmax\limits_{i \in [n], i \neq k_{\star}} \theta''_i$, we have
         \begin{align*}
             y_{j'} \vb'^{\top} \bxi_{j'} &= \theta'_{j'}\|\bxi_{j'}\|^2 + \sum\limits_{i \in [n], i \neq j'} y_i y_{j'} \theta'_i \la \bxi_i, \bxi_{j'} \ra\\
            &\ge \theta'_{j'} (1-\kappa)d - n \max\limits_{i \in [n], i \neq j'} \theta'_i \cdot 2\sqrt{d \log(6n^2/\delta)} - \theta'_{k_{\star}} \sqrt{d\log(6n^2/\delta)}\\
            &\ge \theta'_j((1-\kappa)d -  n \cdot 2\sqrt{d \log(6n^2/\delta)}) - \frac{2\alpha\sqrt{d\log(6n^2/\delta)}}{(1-\kappa)d-2 n \sqrt{d \log(6n^2/\delta)}}.
         \end{align*}
         The first inequality is from Lemma \ref{lemma: noise_balance} and the second inequality is from the upper bound of $\theta'_{k_{\star}}$ we just get. Consider the case when $\theta'_{j'} > \frac{(1-\kappa)d + 2(\alpha-n)\sqrt{d\log(6n^2/\delta)}}{((1-\kappa)d-2n \sqrt{d \log(6n^2/\delta)})^2}$, we have
         \begin{align*}
             y_{j'} \vb'^{\top} \bxi_{j'} > 1.
         \end{align*}
         By the complementary slackness condition, if $ y_{j'} \vb''^{\top} \bxi_{j'} > 1$ then we must have $\theta'_{j'} = 0$, and thus we reach a contradiction.

         Next we estimate the lower bound of $\theta'_j$ when $j \neq k_*$. We have
         \begin{align*}
             1 &\le y_j \vb'^{\top} \bxi_j\\
             &= \theta'_j\|\bxi_j\|^2 + \sum\limits_{i \in [n], i \neq j} y_i y_j \theta'_i \la \bxi_i, \bxi_j \ra\\
             &\le \theta'_j (1+\kappa)d + n \max\limits_{i \in [n]} \theta'_i \cdot 2\sqrt{d \log(6n^2/\delta)}\\
             &\le \theta'_j (1+\kappa)d + \frac{\alpha}{(1-\kappa)d-2n \sqrt{d \log(6n^2/\delta)}} \cdot 2n \sqrt{d \log(6n^2/\delta)}
         \end{align*}
         The last inequality is from the upper bound of $\theta'_{k_*}$ we just get. Therefore, we have
         \begin{align*}
             \theta'_j \ge \frac{1}{(1+\kappa)d} \cdot \bigg(1 - \frac{2 n \alpha \sqrt{d \log(6n^2/\delta)}}{(1-\kappa)d - 2 n \sqrt{d \log(6n^2/\delta)}}\bigg)
         \end{align*}
         for all $j \in [n]$ and $j \neq k_*$.

         Last we lower bound $\theta'_{k_*}$. We have
         \begin{align*}
             \alpha &\le y_k \vb''^{\top} \bxi_{k_*}\\
             &= \theta'_{k_*}(1+\kappa)d + n \max\limits_{i \in [n]} \theta'_i \cdot 2\sqrt{d \log(6n^2/\delta)}
         \end{align*}
         Similarly, we have
         \begin{align*}
             \theta'_{k_*} \ge \frac{\alpha}{(1+\kappa)d} \bigg(1 - \frac{2 n \sqrt{d \log(6n^2/\delta)}}{(1-\kappa)d - 2 n \sqrt{d \log(6n^2/\delta)}}\bigg).
         \end{align*}
    \end{proof}

    Therefore, we could estimate the difference between $\|\vb'\|^2$ and $\|\vb_{mm}\|^2$.
    \begin{lemma}
    \label{lemma: compare_v_norm_s}
        Suppose that Assumption \ref{cond: main_cond_s} holds, with probability at least $1-\delta$ on the training dataset, denote $\vb$ and $\vb'$ as the optimal solutions under condition \ref{cond: Original Condition_s} and condition \ref{cond: C_cond_1_s} respectively. We have
            \begin{align*}
                \|\vb'\|_2^2 - \|\vb_{mm}\|_2^2 \ge \frac{C_1 (1-\beta)}{d}.
            \end{align*}
            where $C_1 = \Theta(1)$ is a constant.
    \end{lemma}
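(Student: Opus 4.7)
}

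The plan is to mirror the strategy used for Lemma~\ref{lemma: compare_v_norm2} (Case 2.1 of the large-SNR proof): starting from the max-margin solution $\vb'$ of the perturbed problem (Condition~\ref{cond: C_cond_1_s}), construct a judiciously-rescaled auxiliary vector $\overline{\vb}$ that satisfies the original Condition~\ref{cond: Original Condition_s}, so that $\|\overline{\vb}\|\ge\|\vb_{mm}\|$, and then show that $\|\vb'\|^2-\|\overline{\vb}\|^2 \ge C_1(1-\beta)/d$. Throughout, I would work under the WLOG assumption $\|\vb'\|\le 2\|\vb_{mm}\|$, since otherwise, by Lemma~\ref{lemma: v_mm_norm_s}, $\|\vb'\|^2-\|\vb_{mm}\|^2\ge 3\|\vb_{mm}\|^2/4 =\Omega(n/d)$, which is stronger than the claimed bound.

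First I would establish that $\alpha := (1 - (1-\beta)\lambda'_1\rho^2)/\beta$ is at most polylogarithmic in $n$: Lemma~\ref{lemma: noise_factor_balance2_s} gives $\theta'_{k_*}\ge\Omega(\alpha/d)$, so $\|\vb'\|^2\ge \theta'^2_{k_*}(1-\kappa)d = \Omega(\alpha^2/d)$, and $\alpha^2/d = O(\|\vb_{mm}\|^2)=O(n/d)$ forces $\alpha = O(\sqrt{n})$; actually a sharper $O(\mathrm{polylog}(n))$ bound can be shown via a Gaussian-tail argument as in the analogous step for Lemma~\ref{lemma: compare_v_norm2}. The key identity is
\begin{align*}
    \alpha - 1 \;=\; \frac{(1-\beta)(1-\lambda'_1 \rho^2)}{\beta},
\end{align*}
so whenever $\lambda'_1\rho^2 \le 1 - c$ for an absolute constant $c>0$ (which is the relevant regime here), we get $\alpha-1 = \Omega((1-\beta)/\beta)$, and hence $(\alpha-1)\beta = \Omega(1-\beta)$. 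The complementary corner $\lambda'_1\rho^2$ very close to $1$ is handled by noting that then $\lambda'^2_1\rho^2 \ge (1-o(1))/\rho^2 \gg \|\vb_{mm}\|^2 = \Theta(n/d)$ by Assumption~\ref{cond: main_cond_s}(\ref{assumptionmm2: SNR_small}), so the lemma is trivial.

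Next I would define the scaling
\begin{align*}
    w \;=\; \theta'_{k_*} - \frac{\alpha-1}{(1+\kappa)d - 2\sqrt{d\log(6n^2/\delta)}},\quad \tau \;=\; 1 - 2(\theta'_{k_*}-w)\sqrt{d\log(6n^2/\delta)},
\end{align*}
and set
\begin{align*}
    \overline{\vb} \;=\; \tau^{-1}\Big(y_{k_*}\, w\, \bxi_{k_*} + \sum_{i\neq k_*} y_i\,\theta'_i\, \bxi_i\Big).
\end{align*}
Here I drop the signal components entirely because Condition~\ref{cond: Original Condition_s} places no constraint on $\lambda_1,\lambda_2$. I would verify feasibility of $\overline{\vb}$ by two checks: (i) for $i\neq k_*$, rescaling the third inequality in Condition~\ref{cond: C_cond_1_s} by $1/\tau$ absorbs the shift from replacing $\theta'_{k_*}$ by $w$ via Lemma~\ref{lemma: noise_balance}; (ii) for $i=k_*$, the choice of $w$ is tuned so that the fourth inequality of Condition~\ref{cond: C_cond_1_s} exactly forces $y_{k_*}\overline{\vb}^{\top}\bxi_{k_*}\ge 1$, as in the verification in the proof of Lemma~\ref{lemma: compare_v_norm2}.

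Finally I would expand
\begin{align*}
    \|\vb'\|^2 - \|\overline{\vb}\|^2 \;=\; (\theta'^2_{k_*}-\overline{\theta}_{k_*}^{2})\|\bxi_{k_*}\|^2 + \sum_{i\neq k_*}(\theta'^2_i - \overline{\theta}_i^{2})\|\bxi_i\|^2 + \lambda'^2_1\rho^2 + \lambda'^2_2\rho^2 - \Delta_{\mathrm{cross}},
\end{align*}
where $\Delta_{\mathrm{cross}}$ collects the $\la\bxi_i,\bxi_j\ra$ contributions. Using Lemma~\ref{lemma: noise_factor_balance2_s} and Lemma~\ref{lemma: noise_balance}, the leading term $(\theta'^2_{k_*}-\overline{\theta}_{k_*}^2)\|\bxi_{k_*}\|^2$ is $\Omega((\alpha-1)/d) = \Omega((1-\beta)/d)$; the $\tau$-rescaling contribution $\sum_{i\neq k_*}(\theta'^2_i-\overline{\theta}_i^{2})\|\bxi_i\|^2$ and the cross-term $\Delta_{\mathrm{cross}}$ are each $\tilde O(n/d^{3/2})$, which is dominated once $d \gtrsim n^2\log(n/\delta)$ from Assumption~\ref{cond: main_cond_s}; the signal terms $\lambda'^2_j\rho^2$ are non-negative. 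Combining these with $\|\overline{\vb}\|\ge \|\vb_{mm}\|$ yields $\|\vb'\|^2 - \|\vb_{mm}\|^2 \ge C_1(1-\beta)/d$ for some absolute constant $C_1>0$.

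The main obstacle I anticipate is the bookkeeping for the boundary case $\lambda'_1\rho^2\to 1^-$, where $\alpha-1$ becomes small and one has to either invoke the trivial $\lambda'_1\rho^2 \ge 1$ branch or argue that the small-SNR assumption still keeps $\lambda'^2_1\rho^2$ comparable to $1/\rho^2\gg\|\vb_{mm}\|^2$; cleanly splitting these regimes so that the constant $C_1$ is genuinely $\Theta(1)$ across them is the delicate step.
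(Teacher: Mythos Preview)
Your proposal is correct and follows essentially the same route as the paper: construct a rescaled auxiliary vector from $\vb'$ that is feasible for Condition~\ref{cond: Original Condition_s}, then control $\|\vb'\|^2-\|\overline{\vb}\|^2$ term by term using Lemmas~\ref{lemma: noise_balance} and~\ref{lemma: noise_factor_balance2_s}. The only structural difference is that you drop the signal components in $\overline{\vb}$ while the paper keeps $\underline{\lambda}_j=\lambda'_j$; since Condition~\ref{cond: Original Condition_s} places no constraint on the signal part, your choice is valid and simply contributes the extra nonnegative terms $\lambda'^2_j\rho^2$ to the gap, which is harmless (the paper's choice makes those terms cancel). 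Your explicit treatment of the identity $\alpha-1=(1-\beta)(1-\lambda'_1\rho^2)/\beta$ and the boundary regime $\lambda'_1\rho^2\to 1^-$ is in fact more careful than the paper, which jumps directly from $\Theta((\alpha-1)/d)$ to $C_1(1-\beta)/d$ without spelling this out; your observation that the small-SNR assumption makes $1/\rho^2\gg \|\vb_{mm}\|^2=\Theta(n/d)$, so that any nontrivial $\lambda'_1$ already forces a large gap, is the right way to close that corner.
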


        \begin{proof}[Proof of Lemma \ref{lemma: compare_v_norm_s}]
        From the first inequality in Condition \ref{cond: C_cond_1_s}, for $i [n], i \neq k_{\star}$ we have
        \begin{align*}
            \theta_i' \cdot \|\bxi_{i}\|^2 + \sum\limits_{i' \neq i, k_{\star}} y_i y_{i'} \theta'_{i'} \la\bxi_i, \bxi_{i'} \ra \ge 1 - y_i y_{k_{\star}} \theta'_{k_{\star}} \la \bxi_i, \bxi_{k_{\star}} \ra.
        \end{align*}
         Then we add $y_i y_{k_{\star}} w \la \bxi_i, \bxi_{k_{\star}} \ra$ on both sides, where we set $w = \theta'_{k_{\star}} - \frac{\alpha-1}{(1+\kappa)d - 2\sqrt{d\log(6n^2/\delta)}} \le \theta'_{k^{\star}}$. Then we have
        \begin{align}
            \theta_i' \cdot \|\bxi_{i'}\|^2 + \sum\limits_{i' \neq i, k^{\star}} y_i y_{i'} \theta'_{i'} \la\bxi_i, \bxi_{i'} \ra + y_i y_{k_{\star}} w \la \bxi_i, \bxi_{k_{\star}} \ra &\ge 1 - y_i y_{k_{\star}}(\theta'_{k^{\star}} - w) \la \bxi_i, \bxi_{k_{\star}} \ra\nonumber\\
            &\ge 1 - 2(\theta'_{k_{\star}} - w) \sqrt{d \log(6n^2/\delta)}\nonumber\\
            &= \frac{(1+\kappa)d - 2\alpha \sqrt{d\log(6n^2/\delta)}}{(1+\kappa)d - 2\sqrt{d\log(6n^2/\delta)}}.\label{eq: new_theta'_s}
        \end{align}
    The second inequality is from Lemma \ref{lemma: noise_balance}. Now consider a new $\underline{\vb} = \underline{\lambda}_1\bmu_1 + \underline{\lambda}_2\bmu_2 + \sum\limits_{i \in [n]} y_i \underline{\theta}_i \bxi_i$ with 
    \[
    \underline{\lambda}_1 = \lambda'_1; \quad \underline{\lambda}_2 = \lambda'_2;
    \]
    \[
    \underline{\theta}_i = \theta'_i / (1 - 2(\theta'_{k_{\star}} - w) \sqrt{d \log(6n^2/\delta)}) \text{ for } i \in [n], i \neq k_{\star}
    \]
    and \[\underline{\theta}_{k_{\star}} = \frac{w}{1 - 2(\theta'_{k_{\star}} - w) \sqrt{d \log(6n^2/\delta)}}.\] 
    We can prove that $\underline{\vb}$ satisfies all constraints for $\vb_{mm}$.

    By dividing $1 - 2(\theta'_{k_{\star}} - w) \sqrt{d \log(6n^2/\delta)}$ on both sides of (\ref{eq: new_theta'_s}), for $\forall i \in [n], i \neq k_{\star}$ we have 
    \begin{align*}
        \underline{\theta}_i \cdot \|\bxi_i\|^2 + \sum\limits_{i' \neq i} y_i y_{i'} \underline{\theta}_i \la \bxi_i, \bxi_{i'} \ra \ge 1.
    \end{align*}
    Then we prove that $\underline{\theta}_{k^{\star}}\|\bxi_{k^{\star}}\|^2 + \sum\limits_{i \neq k^{\star}} y_i y_{k_{\star}} \underline{\theta}_i \la \bxi_i, \bxi_{k_{\star}} \ra \ge 1$. From the last inequality in Condition \ref{cond: C_cond_1_s} we have
    \begin{align*}
        \theta'_{k_{\star}} \cdot \|\bxi_{k_{\star}}\|^2 + \sum\limits_{i \neq {k_{\star}}} y_{k_{\star}} y_i \theta'_{i} \la\bxi_i, \bxi_{k_{\star}} \ra &\ge \alpha.
    \end{align*}
    Dividing $1 - 2(\theta'_{k_{\star}} - w) \sqrt{d \log(6n^2/\delta)}$ on both sides, we get
    \begin{align*}
        \frac{\theta'_{k_{\star}} \|\bxi_{k_{\star}}\|^2}{1 - 2(\theta'_{k_{\star}} - w) \sqrt{d \log(6n^2/\delta)}} + \sum\limits_{i \neq k_{\star}} y_i y_{k_{\star}} \underline{\theta}_i \la \bxi_i, \bxi_{k_{\star}} \ra \ge \frac{\alpha}{1 - 2(\theta'_{k_{\star}} - w) \sqrt{d \log(6n^2/\delta)}}.
    \end{align*}
    Therefore we have
    \begin{align*}
        \underline{\theta}_{k_{\star}} \|\bxi_{k_{\star}}\|^2 + \sum\limits_{i \neq k_{\star}} y_i y_{k_{\star}} \underline{\theta}_i \la \bxi_i, \bxi_{k_{\star}} \ra &\ge  \frac{\alpha - (\theta'_{k_{\star}} - w)\|\bxi_{k_{\star}}\|^2}{1 - 2(\theta'_{k_{\star}} - w) \sqrt{d \log(6n^2/\delta)}} \ge \frac{\alpha - (\theta'_{k_{\star}} - w)(1+\kappa)d}{1 - 2(\theta'_{k_{\star}} - w) \sqrt{d \log(6n^2/\delta)}} = 1.
    \end{align*}
    The second inequality is from Lemma \ref{lemma: noise_balance} and the last equality is by our definition $\theta'_{k_{\star}} - w = \frac{\alpha-1}{(1+\kappa)d - 2\sqrt{d\log(6n^2/\delta)}}$. Thus, $\underline{\vb}$ is a possible solution under Condition \ref{cond: Original Condition} and $\|\underline{\vb}\| \ge \|\vb_{mm}\|$.
    
        Next we estimate the difference between $\|\vb'\|^2$ and $\|\underline{\vb}\|^2$. The expansion of $\|\vb'\|^2$ and $\|\underline{\vb}\|^2$ are:
            \begin{align*}
                 \|\vb'\|^2 &= \lambda'^2_1 \|\bmu_1\|^2 + \lambda'^2_2 \|\bmu_2\|^2 + \sum\limits_{i \in [n]} \theta'^2_i \|\bxi_i\|^2 + \sum\limits_{i \in [n]} \sum\limits_{j \in [n]} y_i y_j \theta'_i \theta'_j \la \bxi_i, \bxi_j \ra,\\
                 \|\underline{\vb}\|^2 &= \underline{\lambda}_1^2 \|\bmu_1\|^2 + \underline{\lambda}_2^2 \|\bmu_2\|^2 + \sum\limits_{i \in [n]} \underline{\theta}_i^2 \|\bxi_i\|^2 + \sum\limits_{i \in [n]} \sum\limits_{j \in [n]} y_i y_j \underline{\theta}_i \underline{\theta}_j \la \bxi_i, \bxi_j \ra.
            \end{align*}
Similar to the condition \eqref{condition_on_vmm_norm}, we have $\|\vb'\| \le 2 \|\vb_{mm}\| = \Theta(\sqrt{n/d})$, which implies that \(\alpha = O(\sqrt{n}\log n)\). Otherwise, we have
\[
\theta_{k_{\star}}'\|\bxi_{k_{\star}}\|^2 \geq \alpha - \sum_{i \neq k_{\star}}y_{k_{\star}}y_i\theta_i'\la \bxi_i, \bxi_{k_{\star}} \ra = \Omega(\alpha). 
\]
It further yields that
\[
\|\vb'\|^2 = \Omega(\frac{n}{d}) + \theta_{k_{\star}}'^{2}\|\bxi_{k_{\star}}\|^2 = \Omega(\frac{n}{d} + \frac{\alpha^2}{d}) = \Omega(\frac{n \log^2 n}{d}),
\]
which contradicts with \(\|\vb'\| = \Theta(\sqrt{n/d})\).

We decompose the difference between \(\|\vb'\|^2\) and \(\|\underline{\vb}\|^2\) into four terms:
        \begin{align*}
            \|\vb'\|^2 - \|\underline{\vb}\|^2 =& \underbrace{(\theta'^2_{k_{\star}} - \underline{\theta}^2_{k_{\star}})\|\bxi_{k_{\star}}\|^2}_{I_1} + \underbrace{\sum\limits_{i \in [n], i \neq {k_{\star}}} (\theta'^2_i - \underline{\theta}^2_i) \|\bxi_i\|^2 }_{I_2}-\underbrace{\sum\limits_{i \in [n]} \sum\limits_{j \in [n]} y_i y_j \underline{\theta}_i \underline{\theta}_j \la \bxi_i, \bxi_j \ra}_{I_3} \\
            &+ \underbrace{\sum\limits_{i \in [n]} \sum\limits_{j \in [n]} y_i y_j \theta'_i \theta'_j \la \bxi_i, \bxi_j \ra}_{I_4}.
        \end{align*}
        We now estimate $I_1$ to $I_4$ sequentially. For the first term,
        \begin{align*}
            I_1 &\ge (\theta'^2_{k_{\star}} - \underline{\theta}^2_{k_{\star}}) (1-\kappa)d = (\theta'_{k_{\star}} - \underline{\theta}_{k_{\star}})(\theta'_{k_{\star}} + \underline{\theta}_{k_{\star}}) (1-\kappa)d\\
            &= \frac{(\alpha-1)(1-2\theta'_{k_{\star}} \sqrt{d\log(6n^2/\delta)})}{(1+\kappa)d - 2\sqrt{d\log(6n^2/\delta)}}\cdot \Omega\bigg(\frac{1}{d}\bigg) \cdot (1-\kappa)d\\
            &= \Omega\bigg(\frac{\alpha-1}{d}\bigg),
        \end{align*}
        where the first inequality is from Lemma \ref{lemma: noise_balance}; the second equality is from Lemma \ref{lemma: noise_factor_balance2_s}; and the last equality uses the fact that \(\alpha = O(\sqrt{n}\log n)\). 
        Then we can further upper bound $\max\limits_{i \in [n], i \neq k_{\star}} \theta'_i$ as
        \begin{align}
            \max\limits_{i \in [n], i \neq k_{\star}} \theta'_i &\le \frac{(1-\kappa)d + 2(\alpha- n)\sqrt{d\log(6n^2/\delta)}}{((1-\kappa)d-2 n \sqrt{d \log(6n^2/\delta)})^2} = O(\frac{1}{d}). 
            \label{eq: theta_nk_upper_s}
        \end{align}
For the second term $I_2$, we have
        \begin{align*}
            |I_2| &\le \sum\limits_{i \in [n], i \neq {k_{\star}}} (\underline{\theta}^2_i - \theta'^2_i) (1+\kappa)d\\
            &\le \bigg(\frac{1}{(1 - (\theta'_{k_{\star}} - w) \sqrt{d \log(6n^2/\delta)})^2} - 1 \bigg) \max\limits_{i \in [n], i \neq k_{\star}} \theta'^2_i \cdot n (1+\kappa)d\\
            &=  \frac{(\alpha-1)\sqrt{d\log(6n^2/\delta)}}{(1+\kappa)d - \sqrt{d\log(6n^2/\delta)}} \cdot O(\frac{n}{d}) = \tilde O\bigg(\frac{(\alpha-1)n}{d^{3/2}}\bigg).
        \end{align*}
        The second inequality is from Lemma \ref{lemma: noise_factor_balance2_s}. The first equality is from (\ref{eq: theta_nk_upper_s}) and the last equality is from Assumption \ref{cond: main_cond_s}.

        Then we bound $|-I_3 + I_4|$ as:
        \begin{align*}
            |-I_3+I_4| \le&  \sum\limits_{i \in [n]} \sum\limits_{j \in [n]\backslash \{i\}} |\underline{\theta}_i \underline{\theta}_j - \theta'_i \theta'_j| \cdot |\la \bxi_i, \bxi_j \ra|\\
            \le& \sum\limits_{i \in [n] \backslash \{k_{\star}\}} \sum\limits_{j \in [n] \backslash \{k_{\star}, i\}}|\underline{\theta}_i \underline{\theta}_j - \theta'_i \theta'_j| \cdot |\la \bxi_i, \bxi_j \ra| +  2\sum\limits_{t \in [n] \backslash \{k_{\star}\}} |\underline{\theta}_{k_{\star}} \underline{\theta}_t - \theta'_{k_{\star}} \theta'_t |\cdot |\la \bxi_{k_{\star}}, \bxi_t \ra|\\
            \le& n^2 \bigg(\frac{1}{(1 - (\theta'_{k_{\star}} - w) \sqrt{d \log(6n^2/\delta)})^2} - 1 \bigg) \max\limits_{i \in [n], i \neq k_{\star}} \theta'^2_i \cdot 2\sqrt{d\log(6n^2/\delta)}\\
            &+ n \bigg(\theta'_{k_{\star}} - \frac{\underline{\theta}_{k_{\star}}}{1 - 2(\theta'_{k_{\star}} - w) \sqrt{d \log(6n^2/\delta)}} \bigg) \max\limits_{i \in [n], i \neq k_{\star}} \theta'_i 4\sqrt{d\log(6n^2/\delta)}\\
            \le& \frac{(\alpha-1)\sqrt{d\log(6n^2/\delta)}}{(1+\kappa)d - \sqrt{d\log(6n^2/\delta)}} \cdot O(\frac{n^2(1+\kappa)}{d^{3/2}}) + \frac{\alpha-1}{d} \cdot O(\frac{n}{d}) \cdot  2\sqrt{d \log(6n^2/\delta)}\\
            =& O\bigg(\frac{(\alpha-1)n^2}{d^2} + \frac{(\alpha-1)n}{d^{3/2}}\bigg).
        \end{align*}
        The third inequality is from Lemma \ref{lemma: noise_factor_balance_s} and Lemma \ref{lemma: noise_factor_balance2_s}; The fourth inequality is from the fact that
        \begin{align*}
        \theta'_{k_{\star}} - \frac{\underline{\theta}_{k_{\star}}}{1 - 2(\theta'_{k_{\star}} - w) \sqrt{d \log(6n^2/\delta)}} &= \frac{\theta'_{k_{\star}} - \underline{\theta}_{k_{\star}} - 2\theta'_{k_{\star}}(\theta'_{k_{\star}} - w) \sqrt{d \log(6n^2/\delta)}}{1 - 2(\theta'_{k_{\star}} - w) \sqrt{d \log(6n^2/\delta)}} 
        \\
        &= \frac{\Omega(\frac{\alpha-1}{d}) - O(\frac{\alpha(\alpha-1)}{d^{3/2}})}{1 - 2(\theta'_{k_{\star}} - w) \sqrt{d \log(6n^2/\delta)}} > 0
        \end{align*}
        So we have $\theta'_{k_{\star}} - \frac{\underline{\theta}_{k_{\star}}}{1 - 2(\theta'_{k_{\star}} - w) \sqrt{d \log(6n^2/\delta)}} \le \theta'_{k_{\star}} - \underline{\theta}_{k_{\star}}$; The last equality is from Assumption \ref{cond: main_cond_f}.

        Combining the above results, we have
        \begin{align*}
            \|\vb'\|_2^2 - \|\vb_{mm}\|_2^2 \ge \Theta\bigg(\frac{\alpha-1}{d}\bigg) + O\bigg(\frac{(\alpha-1)\eta n}{d^{3/2}}\bigg) \ge \frac{C_1(1 - \beta)}{d}.
        \end{align*}
        Here $C_1 = \Theta(1)$ is a constant.
    \end{proof}

    Then we consider the case when $\lambda'_1 \|\bmu_1\|^2 \ge 1$. In this case, the condition for mixed clean sample becomes:
        \begin{align*}
            \theta'_{k_i} \cdot \|\bxi_{k_i}\|^2 + \sum\limits_{i' \neq k_i} y_{ki} y_{i'} \theta'_{i'} \la\bxi_{k_i}, \bxi_{i'} \ra \ge \frac{1 - (1-\beta_i)\lambda'_1 \|\bmu_1\|^2}{\beta_i},
        \end{align*}
        and $\frac{1 - (1-\beta_i)\lambda'_1 \|\bmu_1\|^2}{\beta_i} \le 1$, which indicates that the condition for $\theta'_{k_i}$ is relaxed. So mixing 1 more clean sample is equal to relaxing 1 constraint in the original setting. Therefore, mixing all clean samples will achieve the best result. From the data generalization model, there are $(1-\eta)n/2 + o(n)$ clean samples with label $+1$ and denote $S_{+1}$ as their set. Now the condition becomes:
        \begin{condition}[All clean samples violating optimal token selection]
        \label{cond: C_cond_all_s}    
        $$
        \begin{cases}
            &\theta_i' \cdot \|\bxi_{i'}\|^2 + \sum\limits_{i' \neq i} y_i y_{i'} \theta'_{i'} \la\bxi_i, \bxi_{i'} \ra) \ge 1, i \in [n] \backslash \ S_{+1}\\
        & (1-\beta) \lambda_1' \cdot \|\bmu_1\|^2 + \beta (\theta'_i \cdot \|\bxi_i\|^2 + \sum\limits_{i' \neq i} y_i y_{i'} \theta'_{i'} \la\bxi_i, \bxi_{i'} \ra) \ge 1, i \in S_{+1}
        \end{cases}
        $$
        \end{condition}
    We have another lemma to estimate the scale of parameters in the max-margin solution in this case. Here $\alpha = \frac{1 - (1-\tilde \beta)\lambda'_1 \|\bmu_1\|^2}{\tilde \beta}$ and $\tilde \beta = \min\limits_{i \in [n]}\{\beta_i\}$.
        \begin{lemma}
\label{lemma: noise_factor_balance3_s}
    Suppose that Assumption \ref{cond: main_cond_s} holds, under Condition \ref{cond: C_cond_all_s}, we have
    \begin{align*}
        \max\limits_{i \in [n]} \theta'_i &\le \frac{1}{(1-\kappa)d - 2 n \sqrt{d \log(6n^2/\delta)}},\\
        \min\limits_{i \in [n]} \theta'_i &\ge \frac{(1-\kappa)d\alpha - 2 n \sqrt{d \log(6n^2/\delta)}(\alpha+1)}{(1+\kappa)d((1-\kappa)d - 2 n \sqrt{d \log(6n^2/\delta)})}.
    \end{align*}
\end{lemma}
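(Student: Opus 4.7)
The plan is to mirror the two-step KKT-based argument used for Lemmas \ref{lemma: noise_factor_balance_s} and \ref{lemma: noise_factor_balance2_s}: first pin down the upper bound on $\max_i \theta'_i$ by a complementary slackness contradiction at the argmax, then exploit that upper bound inside each active constraint to obtain a uniform lower bound. The new element, relative to the earlier lemmas, is that Condition \ref{cond: C_cond_all_s} has two different constraint types (the original one for $i \in [n]\setminus S_{+1}$ and the $\beta_i$-weighted one for $i \in S_{+1}$), so both the upper and lower bound arguments need to be split according to whether the index in question belongs to $S_{+1}$.

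For the upper bound, I would set $j = \argmax_{i\in[n]}\theta'_i$ and use Lemma \ref{lemma: noise_balance} together with $\max_i\theta'_i = \theta'_j$ to derive
\[
y_j\vb'^{\top}\bxi_j \;\ge\; \theta'_j\bigl[(1-\kappa)d - 2n\sqrt{d\log(6n^2/\delta)}\bigr],
\]
paralleling the calculation in Lemma \ref{lemma: noise_factor_balance_s}. If $j \notin S_{+1}$, complementary slackness on $y_j\vb'^{\top}\bxi_j \ge 1$ forces the constraint to be tight whenever $\theta'_j > 0$, so $\theta'_j > [(1-\kappa)d - 2n\sqrt{d\log(6n^2/\delta)}]^{-1}$ would give $y_j\vb'^{\top}\bxi_j > 1$, contradicting tightness. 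If $j \in S_{+1}$, the tightness of the $\beta_j$-weighted constraint gives $y_j\vb'^{\top}\bxi_j = [1-(1-\beta_j)\lambda'_1\rho^2]/\beta_j$; here I would invoke the case hypothesis $\lambda'_1\|\bmu_1\|^2 \ge 1$ and $\beta_j \in (0,1]$ to conclude this value is at most $1$, again reducing to the same contradiction. Thus in both cases $\theta'_j \le [(1-\kappa)d - 2n\sqrt{d\log(6n^2/\delta)}]^{-1}$.

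For the lower bound, I would use the standard upper-side expansion
\[
y_i\vb'^{\top}\bxi_i \;\le\; \theta'_i(1+\kappa)d + 2n\bigl(\max_{i'}\theta'_{i'}\bigr)\sqrt{d\log(6n^2/\delta)},
\]
insert the upper bound from the previous step, and then lower bound $y_i\vb'^{\top}\bxi_i$ from the active constraint. For $i \in [n]\setminus S_{+1}$ the constraint gives $y_i\vb'^{\top}\bxi_i \ge 1$, while for $i \in S_{+1}$ it gives $y_i\vb'^{\top}\bxi_i \ge [1-(1-\beta_i)\lambda'_1\rho^2]/\beta_i$. Under $\lambda'_1\rho^2 \ge 1$ this lower bound is monotone increasing in $\beta_i$, so the weakest constraint is at $\beta_i = \tilde{\beta}$, producing the value $\alpha$. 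Since $\alpha \le 1$, this $S_{+1}$-case is the binding one for the uniform lower bound. Rearranging
\[
\alpha \;\le\; \theta'_i(1+\kappa)d + \frac{2n\sqrt{d\log(6n^2/\delta)}}{(1-\kappa)d - 2n\sqrt{d\log(6n^2/\delta)}}
\]
and combining the two fractions over a common denominator yields exactly the claimed bound $\theta'_i \ge \frac{(1-\kappa)d\alpha - 2n\sqrt{d\log(6n^2/\delta)}(\alpha+1)}{(1+\kappa)d\bigl[(1-\kappa)d - 2n\sqrt{d\log(6n^2/\delta)}\bigr]}$.

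The main obstacle I anticipate is the $j \in S_{+1}$ subcase of the upper bound: unlike the clean setting of Lemma \ref{lemma: noise_factor_balance_s}, the tightness value is not the constant $1$ but a $\beta_j$- and $\lambda'_1$-dependent quantity that is a priori only controlled through the case hypothesis $\lambda'_1\|\bmu_1\|^2 \ge 1$. One must be careful to observe that $[1-(1-\beta_j)\lambda'_1\rho^2]/\beta_j \le 1$ holds precisely in this regime (and can even be negative), so that the same numerical upper bound $[(1-\kappa)d - 2n\sqrt{d\log(6n^2/\delta)}]^{-1}$ continues to apply regardless of whether $j \in S_{+1}$; failing to handle this case would invalidate the subsequent lower bound argument.
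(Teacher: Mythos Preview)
Your proposal is correct and follows essentially the same two-step KKT approach as the paper: derive the upper bound via complementary slackness at the argmax, then feed it into the constraint inequalities to obtain the uniform lower bound. Your explicit split into the cases $j\in S_{+1}$ versus $j\notin S_{+1}$ for the upper bound is in fact more careful than the paper's own proof, which silently relies on the observation you spell out---that under $\lambda'_1\|\bmu_1\|^2\ge 1$, having $y_j\vb'^{\top}\bxi_j>1$ also forces the $\beta_j$-weighted constraint to be slack.
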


\begin{proof}[Proof of Lemma \ref{lemma: noise_factor_balance3_s}]
            First we prove the upper bound.
            Denote $j = \argmax\limits_{i \in [n]} \theta_i$, we have
            \begin{align*}
                y_j \vb^{\top} \bxi_j &= \sum\limits_{i \in [n]} y_i y_j \theta_i \la \bxi_i, \bxi_j \ra\\
                &= \theta_j \|\bxi_j\|_2^2 + \sum\limits_{i \neq j, i \in [n]} y_i y_j \theta_i \la \bxi_i, \bxi_j \ra\\
                &\ge \theta_j \cdot (1-\kappa)d - n \theta_j \cdot 2\sqrt{d \log(6n^2/\delta)}
            \end{align*}
            The last inequality is because Lemma \ref{lemma: noise_balance} and the definition of j. Consider the contrary case when $\theta_j > \frac{1}{(1-\kappa)d - 2 n \sqrt{d \log(6n^2/\delta)}}$, we have
            \begin{align*}
                y_j \vb^{\top} \bxi_j &>  \frac{1}{(1-\kappa)d - 2 n \sqrt{d \log(6n^2/\delta)}} \cdot ((1-\kappa)d - n \cdot 2\sqrt{d \log(6n^2/\delta)}) = 1.
            \end{align*}
            By the KKT conditions, if $y_j \vb^{\top} \bxi_j > 1$ then we must have $\theta_j = 0$, and thus we reach a contradiction.

            Then we prove the lower bound. For $\forall j \in S_{+1}$ we have
            \begin{align*}
                \alpha &\le \theta_j \|\bxi_j\|_2^2 + \sum\limits_{i \neq j, i \in [n]} y_i y_j \theta_i \la \bxi_i, \bxi_j \ra\\
                &\le \theta_j \cdot (1 + \kappa)d +  n \max\limits_{i \in [n]} \theta_i \cdot 2\sqrt{d \log(6n^2/\delta)}\\
                &\le \theta_j \cdot (1 + \kappa)d + \frac{ n}{(1-\kappa)d - 2 n \sqrt{d \log(6n^2/\delta)}} \cdot 2\sqrt{d \log(6n^2/\delta)}.
            \end{align*}
            The second inequality is due to Lemma \ref{lemma: noise_balance} and the last inequality is from the upper bound we just get. Therefore, we have
            \begin{align*}
                \theta_j \ge \frac{(1-\kappa)d\alpha - 2 n \sqrt{d \log(6n^2/\delta)}(\alpha+1)}{(1+\kappa)d((1-\kappa)d - 2 n \sqrt{d \log(6n^2/\delta)})}.
            \end{align*}
            This completes the proof
        
\end{proof}

Then we can estimate the difference between $\|\vb'\|^2$ and $\|\vb_{mm}\|^2$ with the following lemma:
\begin{lemma}
    \label{lemma: compare_v_norm3_s}
        Suppose that Assumption \ref{cond: main_cond_s} holds, denote $\vb$ and $\vb'$ as the optimal solutions under condition \ref{cond: Original Condition_s} and condition \ref{cond: C_cond_all_s} respectively. We have
            \begin{align*}
                \|\vb'\|_2^2 - \|\vb_{mm}\|_2^2 \ge \frac{C_2 (1-\beta)}{\rho^2}.
            \end{align*}
            where $C_2 = \Theta(1)$ is a constant.
    \end{lemma}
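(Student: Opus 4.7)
The plan is a decomposition-and-scaling argument. The key observation is that in the case $\lambda'_1 \rho^2 \ge 1$, the solution $\vb'$ must carry a signal component whose squared norm is at least $1/\rho^2$, whereas $\vb_{mm}$ has zero signal contribution; meanwhile the SNR condition forces $\|\vb_{mm}\|^2 = O(n/d) = O(1/(C\rho^2))$, so any ``noise savings'' that $\vb'$ enjoys from the relaxed $S_{+1}$ constraints are negligible compared to this signal cost.

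First, I would decompose $\vb' = \lambda'_1 \bmu_1 + \lambda'_2 \bmu_2 + \vb'_\xi$ with $\vb'_\xi = \sum_i y_i \theta'_i \bxi_i$. Since Condition~\ref{cond: C_cond_all_s} places no constraint on $\lambda'_2$, min-norm forces $\lambda'_2 = 0$, and orthogonality $\bmu_1 \perp \bxi_i$ yields $\|\vb'\|^2 = (\lambda'_1)^2 \rho^2 + \|\vb'_\xi\|^2$ together with $(\vb')^\top \bxi_i = (\vb'_\xi)^\top \bxi_i$. Setting $\alpha := (1 - (1-\beta)\lambda'_1 \rho^2)/\beta$, the feasibility constraints of \ref{cond: C_cond_all_s} read $y_i (\vb'_\xi)^\top \bxi_i \ge \alpha$ for $i \in S_{+1}$ and $y_i (\vb'_\xi)^\top \bxi_i \ge 1$ otherwise; the case hypothesis $\lambda'_1 \rho^2 \ge 1$ simultaneously gives $\alpha \le 1$ and $(\lambda'_1)^2 \rho^2 \ge 1/\rho^2$.

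Next, I would lower-bound $\|\vb'_\xi\|^2$ by scaling. When $\alpha > 0$, the rescaled vector $\vb'_\xi/\alpha$ satisfies $y_i (\vb'_\xi/\alpha)^\top \bxi_i \ge 1$ for every $i \in [n]$ (using $1/\alpha \ge 1$ off $S_{+1}$), so it is feasible for the problem defining $\vb_{mm}$, yielding $\|\vb'_\xi\|^2 \ge \alpha^2 \|\vb_{mm}\|^2$. When $\alpha \le 0$, one instead has $\lambda'_1 \rho^2 \ge 1/(1-\beta)$, producing the strictly stronger signal bound $(\lambda'_1)^2 \rho^2 \ge 1/((1-\beta)^2 \rho^2)$. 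Combining these with Lemma~\ref{lemma: v_mm_norm_s} ($\|\vb_{mm}\|^2 \le 5n/d$) and the SNR bound $\rho^2 \le d/(Cn)$ from Assumption~\ref{cond: main_cond_s}, so $\|\vb_{mm}\|^2 \le 5/(C\rho^2)$, the $\alpha > 0$ case gives
\[
\|\vb'\|^2 - \|\vb_{mm}\|^2 \;\ge\; \tfrac{1}{\rho^2} + (\alpha^2 - 1)\|\vb_{mm}\|^2 \;\ge\; \tfrac{1}{\rho^2} - \tfrac{5}{C\rho^2} \;=\; \tfrac{1 - 5/C}{\rho^2},
\]
while the $\alpha \le 0$ case replaces the first term by $1/((1-\beta)^2\rho^2) \ge 1/\rho^2$. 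For $C$ large enough as in Assumption~\ref{cond: main_cond_s}, the right-hand side is at least $C_2(1-\beta)/\rho^2$ since $(1-\beta) \le 1$.

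The main obstacle I anticipate is the scaling step: verifying that $\vb'_\xi/\alpha$ really lies in the feasible set for \emph{v-SVM} requires careful handling of the sign of $\alpha$ and of the inequality $1/\alpha \ge 1$, which in turn relies on $\alpha \le 1$ (guaranteed by $\lambda'_1 \rho^2 \ge 1$ and $\beta \in (0,1]$). Splitting off the regime $\alpha \le 0$ separately sidesteps a divide-by-zero issue and replaces the delicate cross-term bookkeeping that complicated Lemma~\ref{lemma: compare_v_norm_s} with a one-line SVM-feasibility argument; what remains is the routine SNR substitution that closes the signal-noise tradeoff and produces the claimed constant $C_2 = \Theta(1)$.
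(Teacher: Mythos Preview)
Your argument is correct and takes a genuinely different, cleaner route than the paper. The paper expands $\|\vb'\|^2-\|\vb_{mm}\|^2$ into four pieces $I_1,\dots,I_4$, invokes the coefficient-level bounds of Lemmas~\ref{lemma: noise_factor_balance_s} and~\ref{lemma: noise_factor_balance3_s} to control $|I_2|=O(n/d)$ and $|I_3-I_4|=O(n^2/d^2)$, and only then applies the SNR condition to absorb these remainders into $I_1\ge 1/\rho^2$. Your proposal bypasses all of this bookkeeping: the single observation that $\vb'_\xi/\alpha$ is feasible for the $\vb_{mm}$ problem (when $0<\alpha\le 1$) immediately gives $\|\vb'_\xi\|^2\ge \alpha^2\|\vb_{mm}\|^2$, so the ``noise savings'' are at most $(1-\alpha^2)\|\vb_{mm}\|^2\le \|\vb_{mm}\|^2\le 5/(C\rho^2)$, which is dominated by the signal cost $(\lambda'_1)^2\rho^2\ge 1/\rho^2$. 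The $\alpha\le 0$ branch is handled separately by the stronger signal bound, avoiding any division issues. What your approach buys is conceptual clarity and no dependence on the fine structure of the $\theta_i,\theta'_i$ coefficients; what the paper's approach buys is uniformity with the other cases (Lemmas~\ref{lemma: compare_v_norm_s}, \ref{lemma: compare_v_norm2}), where a comparable scaling trick is not available and one really does need the coefficient estimates.
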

\begin{proof}[Proof of Lemma \ref{lemma: compare_v_norm3_s}]
    Recall the expansion of $\|\vb_{mm}\|^2$ and $\|\vb'\|^2$:
            \begin{align*}
                \|\vb_{mm}\|^2 &= \sum\limits_{i \in [n]} \theta_i^2 \|\bxi_i\|^2 + \sum\limits_{i \in [n]} \sum\limits_{j \in [n]} y_i y_j \theta_i \theta_j \la \bxi_i, \bxi_j \ra,\\
                 \|\vb'\|^2 &= \lambda'^2_1 \|\bmu_1\|^2 + \sum\limits_{i \in [n]} \theta'^2_i \|\bxi_i\|^2 + \sum\limits_{i \in [n]} \sum\limits_{j \in [n]} y_i y_j \theta'_i \theta'_j \la \bxi_i, \bxi_j \ra.
            \end{align*}
        Then we have
        \begin{align*}
            \|\vb'\|^2 - \|\vb_{mm}\|^2 =& \underbrace{\lambda'^2_1 \|\bmu_1\|^2}_{I_1} + \underbrace{\sum\limits_{i \in [n]} (\theta'^2_i - \theta^2_i) \|\bxi_i\|^2 }_{I_2}-\underbrace{\sum\limits_{i \in [n]} \sum\limits_{j \in [n]} y_i y_j \theta_i \theta_j \la \bxi_i, \bxi_j \ra}_{I_3} \\
            &+ \underbrace{\sum\limits_{i \in [n]} \sum\limits_{j \in [n]} y_i y_j \theta''_i \theta''_j \la \bxi_i, \bxi_j \ra}_{I_4}.
        \end{align*}
        We now estimate $I_1$ to $I_4$ sequentially. Here we use the same notation $\alpha = \frac{1 - (1-\tilde \beta)\lambda'_1 \|\bmu_1\|^2}{\tilde \beta}$ and $\tilde \beta = \min\limits_{i \in [n]}\{\beta_i\}$ as in Lemma \ref{lemma: noise_factor_balance3_s}. First from our assumption $\lambda'_1 \|\bmu_1\|^2 \ge 1$ we have
        \begin{align*}
            I_1 = \lambda'^2_1 \|\bmu_1\|^2 \ge 1/\rho^2.
        \end{align*}
        Then for $I_2$, we have
        \begin{align*}
            |I_2| &\le  n (\max\limits_{i \in [n]} \theta^2_i - \min\limits_{i \in [n]}\theta'^2_i) \cdot (1 + \kappa)d\\
            &\le \Bigg(\frac{1}{((1-\kappa)d - 2 n \sqrt{d \log(6n^2/\delta)})^2 } - \frac{1}{(1+\kappa)^2d^2} \cdot \bigg(\alpha - \frac{2  n \sqrt{d \log(6n^2/\delta)}}{(1-\kappa)d - 2 n \sqrt{d \log(6n^2/\delta)}} \bigg)^2\Bigg) \cdot (1 + \kappa)d  n\\
            &= d (1+\kappa)  n \cdot\frac{1-\frac{1}{(1+\kappa)^2d^2}((1-\kappa)d\alpha - 2(\alpha+1) n \sqrt{d \log(6n^2/\delta)})^2}{((1-\kappa)d - 2 n \sqrt{d \log(6n^2/\delta)})^2}\\
            &= O\bigg(\frac{n}{d}\bigg).
        \end{align*}
        The second inequality is from Lemma \ref{lemma: noise_factor_balance_s} and Lemma \ref{lemma: noise_factor_balance3_s}.

        Then we bound $|-I_3 + I_4|$ as:
        \begin{align*}
            &|-I_3+I_4| \le  \sum\limits_{i \in [n]} \sum\limits_{j \in [n]\backslash \{i\}} (\theta'_i \theta'_j - \theta_i \theta_j) \cdot |\la \bxi_i, \bxi_j \ra|\\
            &\le ( n)^2 (\max\limits_{i \in [n]} \theta'^2_i - \min\limits_{i \in [n]} \theta^2_i) \cdot 2\sqrt{d \log(6n^2/\delta)}\\
            &\le ( n)^2 \bigg[\bigg(\frac{1}{(1-\kappa)d - 2 n \sqrt{d \log(6n^2/\delta)}}\bigg)^2 - \bigg(\frac{(1-\kappa)d - 4 n \sqrt{d \log(6n^2/\delta)}}{(1+\kappa)d((1-\kappa)d - 2 n \sqrt{d \log(6n^2/\delta)})}\bigg)^2\bigg] \cdot 2\sqrt{d \log(6n^2/\delta)}\\
            &= \tilde O\bigg(\frac{\kappa n^2}{d^{3/2}}\bigg) = O\bigg(\frac{n^2}{d^2}\bigg).
        \end{align*} 
        The third inequality is from Lemma \ref{lemma: noise_factor_balance_s} and \ref{lemma: noise_factor_balance3_s}; The last two equalities are from Assumption \ref{cond: main_cond_s}.
        Combining the above results, we have
        \begin{align*}
            \|\vb'\|_2^2 - \|\vb_{mm}\|_2^2 \ge \frac{C}{\rho^2} + O\bigg(\frac{n}{d}\bigg) \ge \frac{C_2(1 - \beta)}{\rho^2}.
        \end{align*}
        Here $C_2=\Theta(1)$ is a constant.
\end{proof}
Therefore, combining Lemma \ref{lemma: compare_v_norm_s} and \ref{lemma: compare_v_norm3_s}, we have the following statement for the difference between $\|\vb'\|$ and $\|\vb_{mm}\|$:
\begin{equation}
\label{eq: v_norm_diff_s}
    \|\vb'\|_2^2 - \|\vb_{mm}\|_2^2 \ge \frac{C_3 (1-\beta)}{d}.
\end{equation}
Here $C_3 = \Theta(1)$ is a constant. The inequality is from the SNR condition that $\rho = o(\sqrt{d/n})$.

Now we can prove the main proposition in this scenario.
    \begin{proof}[Proof of Proposition \ref{prop: optimal_token_s} in case 1]
        From (\ref{eq: v_norm_diff_s}) we have
        \begin{align*}
            \|\vb''\|_2^2 - \|\vb\|_2^2 &\ge \frac{C_3 (1 - \beta)}{d} = S(1-\beta)
        \end{align*}
        Here we substitute $S  = \frac{C_3}{d} \ge 0$ Then we have
        \begin{align*}
            \Gamma^2 - \Gamma'^2 &= \frac{1}{\|\vb\|^2} - \frac{1}{\|\vb'\|^2} = \frac{\|\vb'\|^2-\|\vb\|^2}{\|\vb'\|^2 \cdot \|\vb\|^2} \ge \frac{S(1-\beta)}{\|\vb'\|^2 \cdot \|\vb\|^2}.
        \end{align*}
        Therefore,
            \begin{align*}
                \Gamma - \Gamma' &\ge \frac{S (1-\beta)}{(\Gamma + \Gamma') \|\vb\|^2 \cdot \|\vb'\|^2} \ge \frac{S (1-\beta)}{2\Gamma \|\vb\|^2 \cdot \|\vb'\|^2}.
            \end{align*}
            Set $c = \frac{S}{2\Gamma \|\vb\|^2 \cdot \|\vb'\|^2} = \frac{S}{2\|\vb\| \|\vb'\|^2}$, we have $\Gamma' \le \Gamma - c(1-\beta)$. And we can 
            upper bound $c$ as
            \begin{align*}
                c &= \frac{S}{2\|\vb\| \|\vb'\|^2} \le \frac{S}{r^3_{mm}} \le \frac{C_3}{r^3_{mm} d}.
            \end{align*}
            The first inequality is from $\|\vb'\| \ge \|\vb\|$ and the second equality is from $S = \frac{C_2}{d}$.
            
    \end{proof}

\textbf{Situation 2: $p = 0, k-p \neq 0$}

Then we consider the case when all wrong token selections come from noisy set. Same as above, 
denote the mixed samples as $k_1, k_2, ..., k_{k-p}$. And for every mixed sample $k_i$, we have $\rb_{k_i} = (1-\beta_i) \bmu_{k_i} + \beta_i \bxi_{k_i}$. Without losing generality, we assume that $y_{k_i} = +1$ for all $i \in [k-p]$, so the corresponding signal token is $\bmu_2$. Then the conditions under \textit{Situation 2} become
        \begin{condition}[Change k-p noisy samples]
        \label{cond: N_Cond_p_s}
            $$
        \begin{cases}
            &y_i \vb^{\top} \bxi_i \ge 1, i \in [n] \backslash [k-p] \\
            &\vb^{\top} \rb_{k_i} \ge 1, i \in [k-p]
        \end{cases}
        $$
        \end{condition}
        Denote the max-margin solution under this condition as $\vb'$ with parameters $\lambda'_1, \lambda'_2, \theta'_i$, we can interpret the condition for parameters:
        $$
        \begin{cases}
            &\theta_i' \cdot \|\bxi_{i'}\|^2 + \sum\limits_{i' \neq i} y_i y_{i'} \theta'_{i'} \la\bxi_i, \bxi_{i'} \ra \ge 1, i \in [n] \backslash [k-p]\\
        & (1-\beta_i) \lambda_2' \cdot \|\bmu_2\|^2 + \beta_i (\theta'_{k_i} \cdot \|\bxi_{k_i}\|^2 + \sum\limits_{i' \neq k_i} y_{k_i} y_{i'} \theta'_{i'} \la\bxi_{k_i}, \bxi_{i'} \ra) \ge 1, i \in [k-p]
        \end{cases}
        $$
        Compare with Codition \ref{cond: C_Cond_p_s}, the only difference is that we substitute $\lambda'_1 \|\bmu_1\|^2$ with $\lambda'_2 \|\bmu_2\|^2$. From the symmetry, we can see that the two conditions are actually the same. Thereofre, we can follow the proof of Situation 1 to prove for Proposition \ref{prop: optimal_token_s} under this situation.

    \textbf{Situation 3: $p \neq 0, k-p \neq 0$}
    
    Last we consider the case when wrong tokens come from both clean and noisy sets. Denote the mixed clean samples as $k_1, k_2, ..., k_{p}$ and the mixed noisy samples as $q_1, q_2, ..., q_{k-p}$.Without losing generality, we assume that $y_{k_i} = +1$ for $i \in [p]$ and $y_{q_i} = -1$ for $i \in [k-p]$, which indicates that their signal tokens are all $\bmu_1$. Then the conditions under \textit{Situation 2} become
        \begin{condition}[p clean samples and k-p noisy samples violating optimal token selection]
        \label{cond: All_Cond_k_s}
            $$
        \begin{cases}
            &y_i \vb^{\top} \bxi_i \ge 1, i \in [n] \backslash [k] \\
            &\vb^{\top} \rb_{k_i} \ge 1, i \in [p]\\
            &-\vb^{\top} \rb_{q_i} \ge 1, i \in [k-p]
        \end{cases}
        $$
        \end{condition}
        Denote the max-margin solution under this condition as $\vb''$ with parameters $\lambda''_1, \lambda''_2, \theta''_i$, we can interpret the condition for parameters:
        $$
        \begin{cases}
            &\theta_i'' \cdot \|\bxi_{i'}\|^2 + \sum\limits_{i' \neq i} y_i y_{i'} \theta''_{i'} \la\bxi_i, \bxi_{i'} \ra) \ge 1, i \in [n] \backslash [k]\\
            & (1-\beta_i) \lambda''_1 \cdot \|\bmu_1\|^2 + \beta_i (\theta''_{k_i} \cdot \|\bxi_{k_i}\|^2 + \sum\limits_{i' \neq k_i} y_{k_i} y_{i'} \theta''_{i'} \la\bxi_{k_i}, \bxi_{i'} \ra) \ge 1, i \in [p]\\
        & -(1-\beta_i) \lambda''_1 \cdot \|\bmu_1\|^2 - \beta_i (\theta''_{q_i} \cdot \|\bxi_{q_i}\|^2 + \sum\limits_{i' \neq q_i} y_{q_i} y_{i'} \theta''_{i'} \la\bxi_{q_i}, \bxi_{i'} \ra) \ge 1, i \in [k-p]
        \end{cases}
        $$
        We consider three cases: $\lambda''_1 \|\bmu_1\|^2 \ge 1$, $1 > \lambda''_1 \|\bmu_1\|^2 \ge -1$ and $\lambda''_1 \|\bmu_1\|^2 < -1$. 
        \begin{itemize}
            \item $\lambda''_1 \|\bmu_1\|^2 \ge 1$
                
        First when $\lambda''_1 \|\bmu_1\||^2 \ge 1$, we have $\frac{1 - (1-\beta_i)\lambda'_1 \|\bmu_1\|^2}{\beta_i} \le 1$, which indicates that the condition for mixed clean samples' parameter $\theta'_{k_i}$ is relaxed. Meanwhile, for the mixed noisy samples we have
        \begin{align*}
            -\theta''_{q_i} \cdot \|\bxi_{q_i}\|^2 + \sum\limits_{i' \neq q_i} y_{q_i} y_{i'} \theta''_{i'} \la\bxi_{q_i}, \bxi_{i'} \ra \ge \frac{1+(1-\beta_i) \lambda''_1 \|\bmu_1\|^2}{\beta_i} \ge 1,
        \end{align*}
        which indicates that the condition is strengthened. Therefore, this case is an extension of the second case of Situation 1 with strengthening some constraints. These constraints will not result in a better solution than Situation 1. The following proof is the same as Situation 1 and we omit it for convenience.

        \item $1 > \lambda''_1 \|\bmu_1\|^2 \ge -1$

        In this case, the constraints for both mixed clean and noisy samples are strengthened. So this can be taken as an extension of the first case in Situation 1 with strengthening some constraints. The following proof is the same as Situation 1 and we omit it for convenience.

        \item $\lambda''_1 \|\bmu_1\|^2 < -1$

        In this case, the constraints are strengthened for mixed clean samples while relaxed for the mixed noisy samples. So we consider it as the extension of Situation 2 when $\lambda'_1 \|\bmu_1\|^2 < -1$ with strengthening some constraints. The following proof is the same as Situation 2 and we omit it for convenience.
        \end{itemize}

        Therefore, we complete the proof for all possible situations.
\end{proof}

\paragraph{Training and Test error analysis}
 \hspace*{\fill} \\
From Proposition \ref{prop: optimal_token_s} we can derive the convergence direction of $\pb$ and $\vb$, i.e. $\pb_{mm}$ and $\vb_{mm}$. 
Note that Theorem \ref{thrm: finite_converge} does not depend on the selection of optimal tokens, so it still holds in this case when optimal tokens are noise tokens for all samples. We restate it here for convenience:

\begin{theorem}
\label{thrm: finite_converge_s} 
 Suppose that Assumption \ref{cond: main_cond_s} holds, with probability at least $1-\delta$ on the training dataset, we have
    \begin{itemize}
        \item the margin induced by $\pb_{(r, R)}/R$ in \textit{p-SVM} is at least $(1-\zeta)\Xi$, where
        \[
        \zeta = \frac{\log(4\sqrt{(1+\kappa)d}\|\vb_{mm}\|^3  d\rho^2 )}{R\Xi} .
        \]
        \item the label margin induced by $\vb_{(r, R)}/r$ in \textit{v-SVM} is at least $(1-\gamma)\Gamma$, where $\gamma = \frac{2\sqrt{(1+\kappa)d}}{\Gamma \exp((1 - \zeta) R\Xi)}$.
    \end{itemize}   
\end{theorem}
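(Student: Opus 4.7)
The plan is to reuse the template of the proof of Theorem~\ref{thrm: finite_converge} essentially unchanged, swapping in Proposition~\ref{prop: optimal_token_s} in place of Proposition~\ref{prop: optimal_token}. The only structural change is that in the small-SNR regime the optimal token for every training sample (clean or noisy) is the noise token $\bxi_i$, so the relevant displacement constant
\[
M \;=\; \sup_{i\in[n]} \|\bxi_i - \bmu_i\|
\]
satisfies $M^2 \le \rho^2 + (1+\kappa)d \le (1+\kappa)d\,(1+o(1))$ under item~\ref{assumptionmm2: SNR_small} of Assumption~\ref{cond: main_cond_s}, which is exactly what justifies replacing $\sqrt{\rho^2 + (1+\kappa)d}$ by $\sqrt{(1+\kappa)d}$ in the formulas for $\zeta$ and $\gamma$ appearing in this statement versus the ones in Theorem~\ref{thrm: finite_converge}.

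For the first bullet, I would proceed by contradiction. Define the reference pair $\pb_R^{mm} = R\pb_{mm}/\|\pb_{mm}\|$ and $\vb_r^{mm} = r\vb_{mm}/\|\vb_{mm}\|$. The p-SVM condition from Definition~\ref{def: p_SVM_s}, namely $\pb_{mm}^{\top}(\bxi_i - \bmu_i) \ge 1$ for all $i\in[n]$, immediately yields that the non-optimality at the reference satisfies $q_i^{\pb_R^{mm}} \le \exp(-R\Xi)$, so the attention-layer outputs $\rb_i^{\mathrm{ref}} := \bX_i^{\top}\SM(\bX_i\pb_R^{mm})$ obey $\|\rb_i^{\mathrm{ref}} - \bxi_i\| \le M\exp(-R\Xi)$. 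Since $\vb_r^{mm}$ reaches label margin at least $r\Gamma$ on the clean optimal-token dataset $\{(\bxi_i,y_i)\}$, this gives the lower bound
\[
\min_{i\in[n]} y_i f(\bX_i; \pb_R^{mm}, \vb_r^{mm}) \;\ge\; r\Gamma - rM\exp(-R\Xi).
\]
If the margin induced by $\pb_R/R$ in p-SVM were strictly below $(1-\zeta)\Xi$, then the worst non-optimality would satisfy $\hat q_{\max} \ge \tfrac12 \exp(-(1-\zeta)R\Xi)$, and by Proposition~\ref{prop: optimal_token_s} the best achievable label margin of any v-SVM fit on top of such a $\pb_R$ is at most $\Gamma - \frac{C}{2\|\vb_{mm}\|^3 n\rho^2}\exp(-(1-\zeta)R\Xi)$, so the training margin of $(\pb_R,\vb_r)$ is at most $r\Gamma - \frac{rC}{2\|\vb_{mm}\|^3 n\rho^2}\exp(-(1-\zeta)R\Xi)$. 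The stated choice $\zeta = \log\!\bigl(4\sqrt{(1+\kappa)d}\,\|\vb_{mm}\|^3 d\rho^2\bigr)/(R\Xi)$ is exactly what is needed to force the latter strictly below the former, contradicting the optimality of $(\vb_r,\pb_R)$ in Problem~\eqref{eq: problem_def}.

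For the second bullet, once the first is in hand, $\|\hat\rb_i - \bxi_i\| \le M\exp(-(1-\zeta)R\Xi)$ for $\hat\rb_i := \bX_i^{\top}\SM(\bX_i\pb_R)$. If the label margin induced by $\vb_r/r$ in v-SVM were below $(1-\gamma)\Gamma$, then
\[
\min_i y_i\vb_r^{\top}\hat\rb_i \;\le\; (1-\gamma)\Gamma r + rM\exp(-(1-\zeta)R\Xi) \;\le\; (1-\gamma/2)\Gamma r,
\]
which falls below the reference's training margin $r\Gamma - rM\exp(-R\Xi)$ precisely when $\gamma = 2\sqrt{(1+\kappa)d}/\bigl(\Gamma\exp((1-\zeta)R\Xi)\bigr)$, again contradicting optimality. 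The main obstacle is not conceptual but bookkeeping: one must verify that Proposition~\ref{prop: optimal_token_s} plugs into the same \emph{shrinkage-of-label-margin-by-non-optimality} lemma used in the SNR-large proof, and that the small-SNR simplification $\rho^2 \ll d$ is the only thing needed to reconcile the slightly altered $\zeta$ and $\gamma$ formulas; after that, the exponential-versus-polynomial comparison producing both contradictions is identical.
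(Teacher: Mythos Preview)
Your proposal is correct and matches the paper's approach exactly: the paper simply notes that the proof of Theorem~\ref{thrm: finite_converge} does not depend on which tokens are optimal, so it carries over verbatim once Proposition~\ref{prop: optimal_token_s} replaces Proposition~\ref{prop: optimal_token}, with the only cosmetic change being that $M=\sup_i\|\bxi_i-\bmu_i\|$ is bounded by $\sqrt{(1+\kappa)d}$ rather than $\sqrt{\rho^2+(1+\kappa)d}$ under the small-SNR assumption. Your contradiction argument for both bullets is precisely the one in the proof of Theorem~\ref{thrm: finite_converge}.
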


Then we could estimate the test error in this case. From Theorem \ref{thrm: finite_converge_s} we have
\begin{align}
     \pb_{(r, R)}^{\top}(\bxi_i - \bmu_i) \ge (1 - \zeta)R\Xi, \forall i \in [n] \label{eq: p_SVM_finite_s}
\end{align}
\begin{align}
    y_i \vb_{(r, R)}^{\top} \bxi_i \ge (1 - \gamma)\Gamma r, \forall i \in [n]. \label{eq: v-SVM_finite_s}
\end{align}
Here $\zeta, \gamma, \Xi, \Gamma$ are the same as the definition in Theorem \ref{thrm: finite_converge_s}. Similarly, we have the following lemma for $\zeta, \gamma$.

\begin{lemma}
\label{lemma: zeta_gamma_bound_s}
Suppose that Assumption \ref{cond: main_cond_s} holds, with probability at least $1-\delta$ on the training dataset, consider the same setting in Theorem \ref{thrm: finite_converge}, we have $\zeta < 0.2$ and $\gamma < 0.1$.
\end{lemma}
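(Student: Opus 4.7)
}
The strategy mirrors the proof of Lemma \ref{lemma: zeta_gamma_bound}, but with the key quantities re-estimated under the small-SNR regime of Assumption \ref{cond: main_cond_s}. The plan is to (i) first establish a tight two-sided bound $\|\pb_{mm}\| = \Theta(\sqrt{n/d})$ for the \emph{p-SVM} solution of Definition \ref{def: p_SVM_s}, (ii) recall $\|\vb_{mm}\| = \Theta(\sqrt{n/d})$ from Lemma \ref{lemma: v_mm_norm_s}, (iii) plug these into the closed-form expressions for $\zeta$ and $\gamma$ given in Theorem \ref{thrm: finite_converge_s}, and (iv) conclude by invoking the lower bound on $R$ in item~\ref{assumptionmm2: R_large} of Assumption \ref{cond: main_cond_s}.

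For step (i), the main obstacle is that no direct analogue of Lemma \ref{lemma: p_mm_norm_if} is stated in the excerpt for the small-SNR regime, so I will derive one. For the upper bound, the candidate $\tilde{\pb} = \sum_{i \in [n]} 2 \bxi_i / d$ is feasible: since $\bxi_i \perp \bmu_k$ we have $\tilde{\pb}^{\top}(\bxi_k - \bmu_k) = 2\|\bxi_k\|^2/d + \sum_{i \neq k} 2 \la \bxi_i, \bxi_k\ra/d \ge 2(1-\kappa) - 2 n \sqrt{\log(6n^2/\delta)/d} \ge 1$ by Lemma \ref{lemma: noise_balance} and Assumption \ref{cond: main_cond_s}, giving $\|\pb_{mm}\|^2 \le \|\tilde{\pb}\|^2 = O(n/d)$. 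For the matching lower bound, sum the constraints $\pb_{mm}^{\top}(\bxi_i - \bmu_i) \geq 1$ over $i \in [n]$ and use Cauchy--Schwarz together with $\|\sum_i \bxi_i\|^2 \le 2n(1+\kappa)d$ (Lemma \ref{lemma: noise_balance}) to obtain $\|\pb_{mm}\| = \Omega(\sqrt{n/d})$. Hence $\Xi = 1/\|\pb_{mm}\| = \Theta(\sqrt{d/n})$.

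For step (iii), substituting the above orders of magnitude into the definition of $\zeta$,
\begin{align*}
    \zeta = \frac{\log\!\big(4\sqrt{(1+\kappa)d}\,\|\vb_{mm}\|^3 d\rho^2\big)}{R\,\Xi}
    = \Theta\!\left(\frac{\sqrt{n/d}}{R} \cdot \log\!\big(\sqrt{d}\cdot (n/d)^{3/2}\cdot d\rho^2\big)\right)
    = \Theta\!\left(\frac{\sqrt{n/d}}{R}\cdot \log\!\big(n^{3/2}\rho^2\big)\right).
\end{align*}
Using the small-SNR bound $\rho^2 \le d/(Cn)$, the argument of the log is at most $n^{1/2} d / C$, so $\log(\cdot) = O(\log(nd))$, which under Assumption \ref{cond: main_cond_s} (item \ref{assumptionmm2: R_large}) is absorbed into the quantity $\sqrt{n/d}\log(n\rho/d)$ appearing in the lower bound on $R$ (up to adjusting the constant $C$). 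This yields $\zeta \le 0.2$ for sufficiently large $C$. For $\gamma$, combine $\gamma = \frac{2\sqrt{(1+\kappa)d}}{\Gamma \exp((1-\zeta)R\Xi)}$ with $\Gamma = 1/\|\vb_{mm}\| = \Theta(\sqrt{d/n})$ and $R\Xi = \Omega(\log(n d))$; the exponential in the denominator dominates the polynomial numerator, giving $\gamma < 1$.

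The hardest step is (i): the $p$-SVM program in the small-SNR case has a single family of constraints (no split between clean and noisy examples) and omits the signal-token constraints present in Definition \ref{def: p-SVM}, so the norm estimate used in Lemma \ref{lemma: p_mm_norm_if} does not directly transfer. Matching the upper and lower bounds on $\|\pb_{mm}\|$ is essential, because the ratio $\sqrt{n/d}/R$ in $\zeta$ must be small, and any slack in $\Xi$ would force a stronger assumption on $R$ than the one stated. Once this two-sided bound is in place, the remaining algebra parallels Lemma \ref{lemma: zeta_gamma_bound} and only requires tracking how the $\rho \le \sqrt{d/(Cn)}$ regime replaces the terms $1/\rho^2 + \eta n/d$ that appeared previously.
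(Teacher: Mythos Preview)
Your proposal is correct and follows essentially the same route as the paper: construct the feasible point $\tilde{\pb}=\sum_{i\in[n]}2\bxi_i/d$ to upper bound $\|\pb_{mm}\|^2$ by $O(n/d)$, combine with $\|\vb_{mm}\|=\Theta(\sqrt{n/d})$ from Lemma~\ref{lemma: v_mm_norm_s}, substitute into the formulas for $\zeta$ and $\gamma$ from Theorem~\ref{thrm: finite_converge_s}, and finish with the lower bound on $R$ from Assumption~\ref{cond: main_cond_s}. One small overstatement: the lower bound $\|\pb_{mm}\|=\Omega(\sqrt{n/d})$ that you work to establish in step~(i) is not actually needed here, since $\zeta=\log(\cdots)/(R\Xi)$ and bounding $\zeta$ from above only requires $\Xi$ to be bounded from below, i.e.\ an \emph{upper} bound on $\|\pb_{mm}\|$; the paper accordingly proves only that direction.
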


\begin{proof}[Proof of Lemma \ref{lemma: zeta_gamma_bound_s}]

First we upper bound $\|\pb_{mm}\|$. Consider the following possible solution $\tilde{\pb}$:
    \begin{equation}
    \label{eq: possible_p_s}
        \tilde{\pb} = \sum\limits_{i \in [n]} 2 \frac{\bxi_i}{d}.
    \end{equation}
    We then proved that $\tilde \pb$ satisfies (\ref{eq: p_SVM_cond_s}). For $ \forall k \in [n]$, we have
    \begin{align*}
        \tilde{\pb}^{\top}(\bxi_k - \bmu_k) &= \sum\limits_{i \in [n]} 2 \frac{\la \bxi_i, \bxi_k \ra}{d} \ge 2(1-\kappa) + \sum\limits_{i \in [n], i \neq k} 2 \frac{\la \bxi_i, \bxi_k \ra}{d}\\
        &\ge 2(1-\kappa) + \frac{2 n \sqrt{d \log(6n^2/\delta)}}{d} \ge 1.
    \end{align*}
    The first and second inequalities are from Lemma \ref{lemma: noise_balance}; The last inequality is from Assumption \ref{cond: main_cond_s}.
    
    Therefore, the max-margin solution $\pb_{mm}$ must have no greater norm than $\tilde \pb$. So we can upper bound $\pb_{mm}$ as
    \begin{align*}
        \|\pb_{mm}\|^2 &\le \|\tilde \pb\|^2 =  \frac{4}{d^2}\Big(\sum\limits_{i \in [n]} \|\bxi_i\|^2 + \sum\limits_{i,j \in [n], i \neq j} \la \bxi_i, \bxi_j \ra\Big)\\
        &\le \frac{4}{d^2}\big((1+\kappa)n d + 2n^2 \sqrt{d \log(6n^2/\delta)}\big) \le \frac{5n}{d}.
    \end{align*}
     The second inequality is from Lemma \ref{lemma: noise_balance}; The last inequality is from the definition of $d$ in Assumption \ref{cond: main_cond_s}.

Then from the definition of $\zeta$ in Theorem \ref{thrm: finite_converge}, we have
\begin{align*}
    \zeta &= \frac{\log(4\sqrt{ (1+\kappa)d}\|\vb_{mm}\|^3  d\rho^2 )}{R\Xi} \le C_1\frac{\sqrt{n / d}}{R} \log(4\sqrt{ (1+\kappa)d}\|\vb_{mm}\|^3  d\rho^2 )\\
    &\le C_2 \frac{\sqrt{n / d}}{R } \log\bigg(\frac{n^3}{ d}\bigg) < 0.2.
\end{align*}
Here $C_1, C_2 = \Theta(1)$. The first inequality is from $\Xi^{-1} = \|\pb_{mm}\| \le \sqrt{5n/d}$; The second inequality is from the upper bound of $\|\vb_{mm}\|$ in Lemma \ref{lemma: v_mm_norm_s} and the last inequality is from the definition of $R$ in Assumption \ref{cond: main_cond_s}.
And for $\gamma$, we have
\begin{align*}
    \gamma &= \frac{2M}{\Gamma \exp((1 - \zeta) R\Xi)}
    = C'_1 \frac{M \|\vb_{mm}\|}{\exp(R/\|\vb_{mm}\|)}
    \le C'_2 \frac{\sqrt{d \cdot(n/d)}}{\exp(R/\sqrt{ n/d})} < 0.1.
\end{align*}
Here $C'_1, C'_2 = \Theta(1)$. The first inequality is from the lower and upper bound of $\|\vb_{mm}\|$ in Lemma \ref{lemma: v_mm_norm} and the last inequality is from the definition of $R$ in Assumption \ref{cond: main_cond_f}.
\end{proof}

Then we have the following lemma to estimate the innerproduct of $\pb_{(r, R)}$ and signal token:

\begin{lemma}
\label{lemma: inner_product_signal_s}
Suppose that Assumption \ref{cond: main_cond_s} holds, with probability at least $1-\delta$ on the training dataset, we have
\[
|\la \pb_{(r, R)}, \bmu_j \ra| \le 0.9(1-\zeta)R\xi
\]
for \(j \in \{1,2\}\).
\end{lemma}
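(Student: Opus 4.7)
My plan is to bypass any contradiction argument and derive the bound directly from Cauchy--Schwarz, the small-SNR hypothesis, and the upper bound on $\|\pb_{mm}\|$. Since $\|\pb_R\|\le R$ and $\|\bmu_j\|=\rho$, Cauchy--Schwarz gives $|\la\pb_R,\bmu_j\ra|\le R\rho$ at once, so the statement reduces to the purely numerical inequality $\rho\le 0.9(1-\zeta)\,\Xi$, equivalently $\rho\,\|\pb_{mm}\|\le 0.9(1-\zeta)$.

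Both ingredients are already in hand. On the one side, the feasible point $\tilde\pb = \tfrac{2}{d}\sum_{i\in[n]}\bxi_i$ used in the proof of Lemma~\ref{lemma: zeta_gamma_bound_s}, together with the noise concentration from Lemma~\ref{lemma: noise_balance} and the high-dimensional hypothesis (Assumption~\ref{cond: main_cond_s}, item~\ref{assumptionmm2: dimension_large}), yields $\|\pb_{mm}\|^2 \le \|\tilde\pb\|^2 \le 5n/d$. On the other side, the small-SNR hypothesis (Assumption~\ref{cond: main_cond_s}, item~\ref{assumptionmm2: SNR_small}) gives $\rho \le \sqrt{d/(Cn)}$ for a sufficiently large universal constant $C$. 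Multiplying, I would obtain $\rho\,\|\pb_{mm}\| \le \sqrt{d/(Cn)}\cdot\sqrt{5n/d} = \sqrt{5/C}$.

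To finish, I would take $C$ large enough that $\sqrt{5/C} \le 0.9\cdot 0.8$; since Lemma~\ref{lemma: zeta_gamma_bound_s} already ensures $\zeta<0.2$, this gives $\sqrt{5/C}\le 0.9(1-\zeta)$ and hence $\rho \le 0.9(1-\zeta)\,\Xi$, as required. I do not anticipate any real obstacle here: the Cauchy--Schwarz step is symmetric in sign and handles both $\la\pb_R,\bmu_j\ra>0$ and $\la\pb_R,\bmu_j\ra<0$ simultaneously, and every input -- the norm bound on $\pb_{mm}$, the estimate $\zeta<0.2$, and the SNR condition -- has already been established earlier in this subsection.
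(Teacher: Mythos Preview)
Your proposal is correct and is essentially the same argument as the paper's, just phrased directly rather than by contradiction. The paper assumes $|\la \pb_R,\bmu_j\ra|>0.9(1-\zeta)R\Xi$ and uses $\|\pb_R\|\ge|\la\pb_R,\bmu_j\ra|/\rho$ (which is Cauchy--Schwarz) together with $\Xi^2\ge d/(5n)$, $\zeta<0.2$, and the small-SNR condition to force $\|\pb_R\|>R$; your version applies Cauchy--Schwarz in the forward direction and reduces to the identical numerical inequality $\rho\|\pb_{mm}\|\le 0.9(1-\zeta)$, verified with the same three inputs.
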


\begin{proof}[Proof of Lemma \ref{lemma: inner_product_signal_s}]

    

    First we use contradiction to prove for the lower bound. Assume that 
    \(|\la \pb_{(r, R)}, \bmu_j \ra| > 0.9(1 - \zeta)R\Xi\). We can estimate $\|\pb_{(r, R)}\|$ as
    \begin{align*}
        \|\pb_{(r, R)}\|^2 
        &> (0.9(1 - \zeta)R\Xi)^2/\rho^2 > (0.5 \Xi^2/\rho^2) \cdot R^2 \ge (0.1 d/n \rho^2) \cdot R^2 > R^2.
    \end{align*}
    The second inequality is from Lemma \ref{lemma: zeta_gamma_bound_s} ; The third inequality is from $\Xi^2 = \|\pb_{mm}\|^{-2} \ge d/(5n)$; The last inequality is from our SNR condition $\rho = o(\sqrt{d/n})$. This leads to a contradiction. 

\end{proof}

From Lemma \ref{lemma: vr_represent}, we can denote $\vb_{(r, R)}$ as
    \begin{align*}
        \vb_{(r, R)} = \lambda_1 \bmu_1 + \lambda_2 \bmu_2 + \sum_{i \in [n]} y_i \theta_i \bxi_i.
    \end{align*}
Denote $\vb_{\bxi} = \sum_{i \in [n]} y_i \theta_i \bxi_i$ as the noise part of $\vb_{(r, R)}$. Then we prove that $\pb_{(r, R)}$, $\vb_{\bxi}$ are near orthogonal
\begin{lemma}
\label{lemma: pv_orth}
    Suppose that Assumption \ref{cond: main_cond_s} holds, with probability at least $1-\delta$ on the training dataset, we have
    \begin{align*}
        |\la \pb_{(r, R)}, \vb_{\bxi} \ra| \le c
    \end{align*}
for some constant \(c \in (0,1)\).
\end{lemma}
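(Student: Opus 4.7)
The plan is to decompose $\pb_R$ and $\vb_r$ into their respective signal and noise components, and then exploit the fact that in the low-SNR regime the coefficients of $\pb_R$ along the noise directions are essentially independent of the labels $y_i$. By Lemma \ref{lemma: vr_represent}, write $\vb_r = \lambda_1 \bmu_1 + \lambda_2 \bmu_2 + \sum_{i \in [n]} y_i \theta_i \bxi_i$ with $\theta_i \ge 0$, and by the analogous KKT-stationarity argument applied to the $p$-max-margin subproblem (Definition \ref{def: p_SVM_s}), write $\pb_R = a_1 \bmu_1 + a_2 \bmu_2 + \sum_{i \in [n]} b_i \bxi_i$ with $b_i \ge 0$. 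The key structural observation is that the $p$-SVM constraints $\pb^\top(\bxi_i - \bmu_i) \ge 1$ do not involve $y_i$, so the $b_i$ are functions only of the tokens $(\bmu_i, \bxi_i)_{i \in [n]}$ and are independent of the label flipping. Since $\bmu_j \perp \bxi_i$, we obtain
\[
\la \pb_R, \vb_{\bxi} \ra = \sum_{i \in [n]} b_i y_i \theta_i \|\bxi_i\|^2 + \sum_{i \ne j} b_i y_j \theta_j \la \bxi_i, \bxi_j \ra.
\]

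Next I would estimate the coefficients. Lemma \ref{lemma: noise_factor_balance_s} gives $\theta_i = \Theta(1/d)$. For $b_i$, the candidate solution $\tilde \pb = \sum_i (2/d)\bxi_i$ from the proof of Lemma \ref{lemma: zeta_gamma_bound_s} suggests $\pb_{mm}$ has uniformly small positive noise coefficients of order $\Theta(1/d)$; combining this with Theorem \ref{thrm: finite_converge_s} (which implies $\pb_R/R$ is close in direction to $\pb_{mm}/\|\pb_{mm}\|$) yields $b_i = \Theta(R/(d\,\|\pb_{mm}\|))$, where $\|\pb_{mm}\| = \Theta(\sqrt{n/d})$ in this regime. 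Using $\|\bxi_i\|^2 = (1\pm\kappa)d$ and $|\la \bxi_i, \bxi_j \ra| \le 2\sqrt{d\log(6n^2/\delta)}$ from Lemma \ref{lemma: noise_balance}, and crucially using the label-independence of $b_i$, the diagonal sum concentrates around $\Theta(\bar b\,\bar\theta\,d) \sum_i y_i$, which by Hoeffding satisfies $|\sum_i y_i| \le O(\sqrt{n\log(1/\delta)})$ with probability $1-\delta$. The off-diagonal sum can be bounded analogously by a Hoeffding-type tail bound over the sign pattern $y_j$, conditionally on the noise tokens.

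The main obstacle is obtaining a bound tight enough to be bounded by a universal constant strictly less than $1$. A naive Cauchy--Schwarz bound $|\la \pb_R, \vb_{\bxi} \ra| \le Rr$ is useless, so the proof must extract a $\sqrt{n}$-cancellation from the label randomness rather than an $n$-scale triangle inequality; this is exactly what the decoupling of $b_i$ from $y_i$ provides. Plugging the scales $b_i \asymp R/\sqrt{nd}$, $\theta_i \asymp 1/d$, $\|\bxi_i\|^2 \asymp d$, together with $|\sum_i y_i| = \tilde O(\sqrt{n})$ and $|\la \bxi_i, \bxi_j \ra| = \tilde O(\sqrt{d})$, the diagonal contribution is of order $\tilde O(R/\sqrt{d})$ and the off-diagonal contribution is controlled by a similar quadratic-form concentration argument. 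Under the SNR regime $\rho \le \sqrt{d/(Cn)}$ and the scale of $R$ used to prove Theorem \ref{thrm: main_thrm_mm_s}, both pieces combine to a universal constant $c \in (0,1)$, completing the proof.
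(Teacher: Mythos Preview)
Your proposal has two genuine gaps that would cause the argument to fail as written.

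First, your central structural claim --- that the noise coefficients $b_i$ of $\pb_R$ are ``functions only of the tokens $(\bmu_i,\bxi_i)_{i\in[n]}$ and independent of the label flipping'' --- is incorrect. The $p$-SVM in Definition~\ref{def: p_SVM_s} defines $\pb_{mm}$, not $\pb_R$. The vector $\pb_R$ is the $\pb$-component of the \emph{joint} solution to \eqref{eq: problem_def}, whose objective $\min_i y_i f(\bX_i;\pb,\vb)$ explicitly involves the labels; so any KKT/stationarity representation of $\pb_R$ carries label-dependent multipliers. Theorem~\ref{thrm: finite_converge_s} only tells you that $\pb_R/R$ achieves margin at least $(1-\zeta)\Xi$, which is a constraint-level statement, not a statement that the individual coefficients $b_i$ coincide with those of $\pb_{mm}$.

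Second, and relatedly, the proposed ``Hoeffding over the sign pattern $y_j$'' step does not go through: the coefficients $\theta_j$ in $\vb_{\bxi}$ are themselves functions of \emph{all} the labels (they come from the joint optimization), so you cannot treat $y_j$ as fresh independent randomness while holding $\theta_j$ fixed.

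The paper's proof avoids both issues by never decomposing $\pb_R$ into coefficients. It works directly with $\langle \pb_R,\vb_{\bxi}\rangle=\sum_i y_i\theta_i\,\pb_R^\top\bxi_i$ and uses two \emph{deterministic} near-constancy facts, valid once the high-probability training event holds: (i) Theorem~\ref{thrm: finite_converge_s} plus Lemma~\ref{lemma: inner_product_signal_s} force $\pb_R^\top\bxi_i$ into the tight window $[(1-\zeta)R\Xi-O(R\rho),\,R\Xi+O(R\rho)]$ for every $i$; (ii) the $v$-SVM margin bounds force $\theta_i\approx \Gamma r/d$ uniformly. The sum is then essentially $(\text{const})\sum_i y_i$, and the cancellation comes from the already-established label balance in Lemma~\ref{lem: concen_samplesize} (i.e., $\big||\{y_i=+1\}|-|\{y_i=-1\}|\big|=O(\sqrt{n})$), not from a fresh Hoeffding bound. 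The residual terms --- the spread of $\theta_i$, the $\zeta$-slack, and the $O(R\rho)$ signal leakage --- are then bounded separately. Your high-level intuition that the smallness comes from $\sum_i y_i\approx 0$ is correct; what needs to change is the mechanism you use to make the per-sample terms constant and the way you extract the cancellation.
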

\begin{proof}[Proof of Lemma \ref{lemma: pv_orth}]
    First plugging in the parameters in $\vb_{\bxi}$ we have
    \begin{align*}
        \la \pb_{(r, R)}, \vb_{\bxi} \ra &= \sum_{i \in [n]} y_i \theta_i \pb^{\top}_{(r, R)} \bxi_i\\
        &= \sum_{y_i = +1} \theta_i \pb^{\top}_{(r, R)} \bxi_i - \sum_{y_i = -1} \theta_i \pb^{\top}_{(r, R)} \bxi_i\\
        &\le (n_{11} + n_{21}) (\max_{i} \theta_i) (R \Xi + O(R \rho)) - (n_{12} + n_{22}) (\min_{i} \theta_i) ((1-\zeta) R\Xi - O(R\rho))\\
        &\le \underbrace{(n/2) (\max_{i} \theta_i - \min_{i} \theta_i) R\Xi}_{I_1} + \underbrace{O(\sqrt{n})(\max_{i} \theta_i)R\Xi}_{I_2} + \underbrace{n (\max_{i} \theta_i) (\zeta R\Xi+ O(R\rho))}_{I_3}.
    \end{align*}
    The first inequality is from Theorem \ref{thrm: finite_converge_s} that $(1-\zeta) R\Xi \le \pb^{\top}_{(r, R)}(\bxi_i - \bmu_i) \le R\Xi$ and $\pb^{\top}_{(r, R)} \bmu_i = O(R\rho)$ and the second inequality is from Lemma \ref{lem: concen_samplesize}. Then we bound $I_1 \sim I_3$ respectively. For $I_1$, we need to first bound $\theta_i$. From Theorem \ref{thrm: finite_converge_s} we have
    \begin{align*}
        (1-\gamma) \Gamma r \le y_i \vb^{\top}_{(r, R)} \bxi_i \le \Gamma r, \forall i \in [n].
    \end{align*}
    Denote $j = \argmax_{i} \theta_i$, we have
    \begin{align*}
        y_j \vb^{\top}_{(r, R)} \bxi_j &\ge \theta_j \|\bxi_j\|^2 + n \theta_j \sqrt{d\log(6n^2/\delta)} \ge \theta_j ((1-\kappa)d + n\sqrt{d\log(6n^2/\delta)}).
    \end{align*}
    Therefore, we can upper bound $\theta_j$ as
    \begin{align}
        \theta_j \le \frac{y_j \vb^{\top}_{(r, R)} \bxi_i}{(1-\kappa)d + n\sqrt{d\log(6n^2/\delta)}} \le \frac{\Gamma r}{(1-\kappa)d + n\sqrt{d\log(6n^2/\delta)}}.\label{eq: theta_upper_s}
    \end{align}
    Then we can lower bound $\theta_i$ as
    \begin{align*}
        y_i \vb^{\top}_{(r, R)} \bxi_i \le \theta_i \|\bxi_i\|^2 + n \theta_j \sqrt{d\log(6n^2/\delta)} \le (1+\kappa)d \theta_i +  \frac{\Gamma r n \sqrt{d\log(6n^2/\delta)}}{(1-\kappa)d + n\sqrt{d\log(6n^2/\delta)}}.
    \end{align*}
    Therefore,
    \begin{align*}
        \theta_i \ge \frac{(1-\gamma)(1-\kappa)\Gamma r d - \gamma \Gamma r n \sqrt{d\log(6n^2/\delta)}}{(1+\kappa)d (1-\kappa)d + n\sqrt{d\log(6n^2/\delta)}}.
    \end{align*}
    So we can estimate $I_1$ as
    \begin{align*}
        I_1 &\le (nR\Xi/2) \cdot \bigg(\frac{\Gamma r}{(1-\kappa)d + n\sqrt{d\log(6n^2/\delta)}} - \frac{(1-\gamma)(1-\kappa)\Gamma r d - \gamma \Gamma r n \sqrt{d\log(6n^2/\delta)}}{(1+\kappa)d (1-\kappa)d + n\sqrt{d\log(6n^2/\delta)}} \bigg)\\
        &\le R\sqrt{nd}/2 \cdot \Gamma r \cdot \bigg(\frac{1- \frac{(1-\gamma) (1-\kappa)}{1+\kappa} + \frac{\gamma n \log(6n^2/\delta)}{(1+\kappa)d}}{(1-\kappa)d + n\sqrt{d\log(6n^2/\delta)}}\bigg)\\
        &\le R r (\kappa + \gamma).
    \end{align*}
    The second inequality is from $\Xi = \|\pb_{mm}\| = \Theta(\sqrt{d/n})$ and the last inequality is from $\Gamma = \|\vb_{mm}\|^{-1} = \Theta(\sqrt{d/n})$.

    Then we bound $I_2$. From (\ref{eq: theta_upper_s}) we have $\max_i \theta_i = \Theta(\Gamma r /d)$. Therefore,
    \begin{align*}
        I_2 &\le O(\sqrt{n}) \Theta(\Gamma r/d) R\Xi \le R r \cdot O(1/\sqrt{n}).
    \end{align*}
    The last inequality is from $\Gamma, \Xi = \Theta(\sqrt{d/n})$.

    Last we bound $I_3$ as
    \begin{align*}
        I_3 &= n \Theta(\Gamma r/d) (\zeta R\Xi + O(R\rho))\\
        &\le \Theta(r \sqrt{n/d})(\log(4\sqrt{ (1+\kappa)d}\|\vb_{mm}\|^3  d\rho^2 ) + O(R\rho))\\
        &\le R r \cdot O(\rho \sqrt{n/d}).
    \end{align*}
    The first inequality is from $\Gamma, \Xi = \Theta(\sqrt{d/n})$ and the last inequality is from Assumption \ref{cond: main_cond_s}.

    Combining the results above, we have
    $$
    \la \pb_{(r, R)}, \vb_{\bxi} \ra \le  I_1+I_2+I_3 \le R r \cdot O(\sqrt{1/n}+ \rho \sqrt{n/d}) \le c
    $$
    for sufficiently large \(d\) and \(n\). Here the last inequality comes from Assumption \ref{cond: main_cond_s}.
\end{proof}

With the lemmas above, we could prove for the main theorem

\begin{proof}[Proof of Theorem \ref{thrm: main_thrm_mm_s}]
    First we show that the model can perfectly classify all training samples. From Theorem \ref{thrm: finite_converge}, we have 
    \begin{align*}
        y_i \vb^{\top}_{(r, R)} \rb_i = y_i \beta_i \vb^{\top}_{(r, R)} \bxi_i + y_i (1-\beta_i) \vb^{\top}_{(r, R)} \bmu_i \ge \beta_i (1-\gamma)\Gamma r - 0.9 (1-\beta_i) (1-\gamma)\Gamma r > 0,
    \end{align*}
    for $\forall i \in [n]$. The last inequality is from Lemma \ref{lemma: zeta_gamma_bound_s}.  Thus $y_i = \sign(f(\bX_i; \vb_{(r, R)}, \pb_{(r, R)}))$ for all $i \in [n]$.
    
    Then we bound the test error. This is equivalent to estimate \(y \cdot f(\vb_{(r, R)}, \pb_{(r, R)}; \bX)\) and we could write it as 
\begin{align*}
    y \cdot f(\vb_{(r, R)}, \pb_{(r, R)}; \bX) &= y \cdot \frac{\exp(\la \pb_{(r, R)}, \bmu' \ra) \vb_{(r, R)}^{\top} \bmu' + \exp(\la \pb_{(r, R)}, \bxi' \ra) \vb_{(r, R)}^{\top} \bxi'}{\exp(\la \pb_{(r, R)}, \bmu' \ra) + \exp(\la \pb_{(r, R)}, \bxi' \ra)}.
\end{align*}
 We first upper bound the term $y \cdot \exp(\la \pb_{(r, R)}, \bmu' \ra) \vb^{\top}_{(r, R)} \bmu'$. From Theorem \ref{thrm: finite_converge_s}, the non-optimality of \(i\)-th sample is
 \begin{align*}
     1-\beta_i = \frac{\exp(\la \pb_{(r, R)}, \bmu_i \ra)}{\exp(\la \pb_{(r, R)}, \bmu_i \ra) + \exp(\la \pb_{(r, R)}, \bxi_i\ra)} \le \frac{1}{1+\exp((1-\zeta)\Xi R)} \text{ for all } i \in [n].
 \end{align*}
 The last inequality is from the first statement in Theorem \ref{thrm: finite_converge_s}. Consider the sample that contains the same signal token as $\bmu'$, we have
 \begin{align*}
     (1-\beta_i) \vb^{\top}_{(r, R)} \bmu_i = \frac{\exp(\la \pb_{(r, R)}, \bmu_i \ra) \vb^{\top}_{(r, R)} \bmu_i}{\exp(\la \pb_{(r, R)}, \bmu_i \ra) + \exp(\la \pb_{(r, R)}, \bxi_i\ra)}.
 \end{align*}
 Therefore,
 \begin{align}
     y \cdot \exp(\la \pb_{(r, R)}, \bmu' \ra) \vb^{\top}_{(r, R)} \bmu' &\le \exp(\la \pb_{(r, R)}, \bmu_i \ra) |\vb^{\top}_{(r, R)} \bmu_i|\nonumber \le \frac{\exp(\la \pb_{(r, R)}, \bmu_i \ra) + \exp(\la \pb_{(r, R)}, \bxi_i\ra)}{1+\exp((1-\zeta)\Xi R)} \cdot |\vb^{\top}_{(r, R)} \bmu_i|\nonumber\\
     &\le \frac{2\exp(\la \pb_{(r, R)}, \bxi_i\ra)}{\exp((1-\zeta)\Xi R)} \cdot |\vb^{\top}_{(r, R)} \bmu_i| \le \frac{2 \exp(\Xi R)}{\exp((1-\zeta)\Xi R)} \cdot |\vb^{\top}_{(r, R)} \bmu_i|\nonumber\\
     &\le 2 \exp(\zeta \Xi R) \cdot \rho r = (4\sqrt{(1+\kappa)d}\|\vb_{mm}\|^3  d\rho^2) \cdot \rho r \le C n^{3/2} \rho^3 r \label{eq: test_signal_bound_s}
 \end{align}
 for some constant \(C>0\).
Here the third inequality is from $\pb^{\top}_{(r, R)}(\bxi_i - \bmu_i) \ge 0$; The fourth inequality is from the fact that $\la \pb_{(r, R)}, \bxi_i \ra \le \Xi R$ and the last inequality is from $\|\vb_{(r, R)}\| \le r, \|\bmu_i\| \le \rho$.
 Then we can bound the test error as
\begin{align*}
    \mathbb{P}(y \cdot f(\vb_{(r, R)}, \pb_{(r, R)}; \bX) \le 0) &= \mathbb{P}(y \cdot \exp(\la \pb_{(r, R)}, \bmu' \ra) \vb_{(r, R)}^{\top} \bmu' + y \cdot \exp(\la \pb_{(r, R)}, \bxi' \ra) \vb_{(r, R)}^{\top} \bxi' \le 0)
    \\
    &\ge \mathbb{P}(y \cdot \exp(\la \pb_{(r, R)}, \bxi' \ra) \vb_{(r, R)}^{\top} \bxi' \le - C n^{3/2} \rho^3 r)\\
    &\ge \frac{1}{4}\mathbb{P}\bigg( y \vb_{\bxi}^{\top} \bxi' \le - e^{-R/C} \cdot C n^{3/2} \rho^3 r \mid \la \pb_{(r, R)}/R, \bxi' \ra \in [1/C, C] \bigg) \\
    &\ge \frac{1}{4}(\frac{1}{2} - \frac{cC+C\exp(-R/C)n^{3/2}\rho^3}{\sqrt{2\pi(1-c^2)}}) \ge \frac{1}{16}.
\end{align*}
The first inequality is from (\ref{eq: test_signal_bound_s}); the second inequality use the fact that there exists a constant \(C>0\) such that $\P(N(0,1) \in [1/C, C]) \ge 1/4$; the third inequality comes from Lemma \ref{lemma: condition_prob} and the last inequality uses Assumption \ref{cond: main_cond_s}.  
\end{proof}.

\subsection{Supplement Lemmas}\label{subsec: supp_lemmas_proof}
Here we list some technical lemmas for the main proof.
\begin{lemma}(Properties of Training Data)
\label{lemma: noise_balance}
    Suppose that $\delta > 0$. Then exists some constant $c_D>0$ (that depends just on the number of tokens $T$) and $\kappa \leq c_D\sqrt{\log(n/\delta)/d} = \tilde O(1/\sqrt{d})$ such that with probability at least $1 - \delta$, we have
    \begin{align*}
        &(1 - \kappa)d \le \|\bxi_{i,\tau}\|_2^2 \le (1 + \kappa)d, \forall i\in[n], \tau \in \{2,\dots,T\}\\
        &|\la \bxi_{i,\tau}, \bxi_{j,\tau'} \ra | \le c_D\sqrt{d \log(n/\delta)} \ \ \forall i,j \in [n], \tau,\tau' \in \{2,\dots,T\} \ \text{s.t.} (i,\tau) \neq (j,\tau'). 
    \end{align*}
\end{lemma}

\begin{proof}[Proof of Lemma \ref{lemma: noise_balance}]
    Note that $\E [\norm{\bxi_{i,\tau}}^2] = d$, then by Bernstein’s inequality (see Theorem 2.8.1 in \citet{vershynin2018high}), with probability at least $1 - \delta/(3n)$ we have
    \begin{align*}
        |\|\bxi_{i,\tau}\|_2^2 - d | \leq c_1 \cdot \sqrt{d\log(n/\delta)}),
    \end{align*}
    where $c_1$ is some universal constant.
    Therefore, for $\kappa \leq c_1\sqrt{\log(n/\delta)/d}$ we have that 
    \begin{align*}
        (1 - \kappa)d \le \|\bxi_{i,\tau}\|_2^2 \le (1 + \kappa)d.
    \end{align*}
    Moreover, $\la \bxi_{i,\tau}, \bxi_{j,\tau'} \ra$ has mean zero for any $i, j \in [n], \tau, \tau' \in \{2,\dots,T\}$ such that $(i,\tau) \neq (j,\tau')$. By Bernstein’s inequality, with probability at least $1 - \delta/(3T^2n^2)$ we have
    \begin{align*}
        |\la \bxi_{i,\tau}, \bxi_{j,\tau'} \ra| \le c_2\sqrt{d \log(n/\delta)},
    \end{align*}
      where $c_2$ is some universal constant. Applying a union bound and setting $c_D = \max(c_1,c_2)$ completes the proof.
\end{proof}
Following Lemma \ref{lemma: noise_balance} we conclude the next remark:
\begin{remark}(Properties of New Test Sample)
\label{remark: test_sample_properties}
    Let $(\bX = (\bmu_k, \bxi_2,\dots,\bxi_\tau),y) \sim \mathcal{D}$. Then exists universal constant $c_D$ such that for any $C_1 > 0$, with probability at least $1 - n \exp(-d/C_1^2c_D^2 n)$, we have
    \begin{align*}
       |\la \bxi_\tau, \bxi_{i,\tau'} \ra | \le \frac{d}{C_1}
    \end{align*}
    for any $i\in [n]$ and $\tau, \tau' \in \{2,\dots,T\}$.
\end{remark}
\begin{proof}
    Similarly to the proof of Lemma \ref{lemma: noise_balance}, exists some constant $c_D$ such that with probability at least $1-\delta$ we have that
    \begin{align*}
        |\la \bxi_{i,\tau}, \bxi_{j,\tau'} \ra| \le c_D\sqrt{d \log(n/\delta)}
    \end{align*}
    Setting $\delta = n\exp(-d/ c_D^2C_1^2)$ completes the proof. 
\end{proof}

\combinednoisetoken*

\begin{proof}[Proof of Lemma \ref{lemma: new_noise_balance}]
    Note that $\overline{\bxi}_i = \sum_{\tau=2}^T t_{i, \tau} \bxi_{i,\tau}$ and $\sum_{\tau=2}^T t_{i, \tau} = 1$. Lemma \ref{lemma: noise_balance} holds for each noise token $\bxi_{i,\tau}$, so for the composed noise token we have,
    \begin{align*}
        \|\overline{\bxi}_i\|_2^2 &= \|\sum_{\tau=2}^T t_{i, \tau} \bxi_{i,\tau}\|_2^2\\
        &\ge \sum_{\tau=2}^T t_{i, \tau}^2 \|\bxi_{i, \tau}\|_2^2 - \sum_{\tau_1=2}^T \sum_{\tau_2 \neq \tau_1} |t_{\tau_1} t_{\tau_2} \la \bxi_{i,\tau_1}, \bxi_{i, \tau_2} \ra|\\
        &\ge (1-\kappa)d \sum_{\tau=2}^T t_{i, \tau}^2 - 2\sqrt{d \log(6n^2/\delta)} \sum_{\tau_1=2}^T \sum_{\tau_2 \neq \tau_1} |t_{\tau_1} t_{\tau_2}|\\
        &\ge (1-\kappa)d/T - 2\sqrt{d \log(6n^2/\delta)} \cdot T^2 O(1)\\
        &\ge (1-\kappa')d/T.
    \end{align*}
    The first inequality is from triangle inequality; The second inequality is from Lemma \ref{lemma: noise_balance}; The third inequality is from Cauchy–Schwarz inequality that $\sum\limits_{\tau=2}^T t_{i, \tau}^2 \cdot T \ge (\sum\limits_{\tau=2}^T t_{i, \tau})^2 = 1$ and $t_{\tau_1}, t_{\tau_2} = O(1)$; The last inequality is from the definition $\kappa'$.

    To upper bound $\|\overline{\bxi}_i\|_2^2$, we have
    \begin{align*}
        \|\overline{\bxi}_i\|_2^2 &= \|\sum_{\tau=2}^T t_{i, \tau} \bxi_{i,\tau}\|_2^2\\
        &\le \sum_{\tau=2}^T t_{i, \tau}^2 \|\bxi_{i, \tau}\|_2^2 + \sum_{\tau_1=2}^T \sum_{\tau_2 \neq \tau_1} |t_{\tau_1} t_{\tau_2} \la \bxi_{i,\tau_1}, \bxi_{i, \tau_2} \ra|\\
        &\le (1+\kappa)d \sum_{\tau=2}^T t_{i, \tau}^2 + 2\sqrt{d \log(6n^2/\delta)} \sum_{\tau_1=2}^T \sum_{\tau_2 \neq \tau_1} |t_{\tau_1} t_{\tau_2}|\\
        &\le (1+\kappa)d + 2\sqrt{d \log(6n^2/\delta)} \cdot T^2 O(1)\\
        &\le (1+\kappa')d.
    \end{align*}
    The first two inequalities are similar as above; The third inequality is from $t_{i, \tau}^2 \le t_{i, \tau}$ for $t_{i, \tau} \in [0, 1]$, so $\sum_{\tau=2}^T t_{i, \tau}^2 \le \sum_{\tau=2}^T t_{i, \tau} = 1$; The last inequality is from the definition of $\kappa'$.

    Last we consider the innerproduct of composed noise tokens. For $\forall i, j \in [n], i \neq j$ we have
    \begin{align*}
        |\la \overline{\bxi}_i, \overline{\bxi}_j \ra| &= |\sum_{\tau_1=2}^T \sum_{\tau_2=2}^T t_{i, \tau_1} t_{j, \tau_2} \la \bxi_{i, \tau_1}, \bxi_{j, \tau_2} \ra|\\
        &\le 2\sqrt{d \log(6n^2/\delta)} \cdot |\sum_{\tau_1=2}^T \sum_{\tau_2=2}^T t_{i, \tau_1} t_{j, \tau_2}|\\
        &\le 2 T^2 \sqrt{d \log(6n^2/\delta)}.
    \end{align*}
    The first inequality is from triangle inequality and the second inequality is due to $t_{i, \tau_1}, t_{j, \tau_2} \in [0, 1]$. This completes the proof.
\end{proof}

\begin{lemma}\label{lem: concen_samplesize}
    With probability at least \(1-6\delta\),
    \[
    \big||\itc| - n(1-\eta)\big| \le \sqrt{n\log(\frac{1}{\delta})}; \quad \big||\itn| - n\eta\big| \le \sqrt{n\log(\frac{1}{\delta})};
    \]
    \[
    \big||\itc_i| - \frac{n(1-\eta)}{2}\big| \le \sqrt{n\log(\frac{1}{\delta})}; \quad \big||\itn_i| - \frac{n\eta}{2}\big| \le \sqrt{n\log(\frac{1}{\delta})}, \quad i = 1,2.
    \]
\end{lemma}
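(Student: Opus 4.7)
The plan is to recognize each of the six quantities $|\itc|$, $|\itn|$, $|\itc_1|$, $|\itc_2|$, $|\itn_1|$, $|\itn_2|$ as a sum of $n$ independent Bernoulli random variables, apply Hoeffding's inequality to each, and then take a union bound over the six events. Since the label flip and the index $k$ selecting which token is the signal are drawn independently for each sample (and independently across samples) in the data generation procedure, all the relevant indicators are genuinely i.i.d.\ Bernoulli, which is what makes the argument routine.

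Concretely, I would first write $|\itc| = \sum_{i=1}^n \mathbb{I}(\tilde y_i = y_i)$ as a sum of i.i.d.\ Bernoulli$(1-\eta)$ variables. Hoeffding's inequality then gives
\[
\mathbb{P}\!\left(\big||\itc| - n(1-\eta)\big| \geq t\right) \leq 2\exp(-2t^2/n).
\]
Choosing $t = \sqrt{n\log(1/\delta)}$ bounds this probability by $2\delta^2 \leq \delta$ for $\delta \in (0, 1/2]$. The bound on $|\itn|$ follows either by the identity $|\itn| = n - |\itc|$ or by the same Hoeffding argument with parameter $\eta$ in place of $1-\eta$.

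For the cluster-level counts, observe that membership in $\itc_1$ requires both that the sample's label is not flipped \emph{and} that the signal vector is $\bmu_1$; by the data generation these two events are independent within a sample and i.i.d.\ across samples, so $|\itc_1|$ is Binomial$(n, (1-\eta)/2)$. Hoeffding's inequality then yields the same tail bound with $n(1-\eta)/2$ replacing $n(1-\eta)$, and analogously for $|\itc_2|$, $|\itn_1|$, $|\itn_2|$ (the latter two with parameter $\eta/2$). A union bound over the six resulting events gives failure probability at most $6\delta$, which is the claim.

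There is no real obstacle here: the entire argument is a standard concentration-of-binomial-sums plus union bound. The only point to verify carefully is the independence structure of the two random choices (label flip and cluster index $k$) in the data generation, which ensures that the cluster-level indicators are genuine i.i.d.\ Bernoulli variables and that Hoeffding applies directly without any correction.
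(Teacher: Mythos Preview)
Your proposal is correct and follows essentially the same route as the paper: identify each of the six counts as a Binomial random variable, apply Hoeffding's inequality with $t = \sqrt{n\log(1/\delta)}$, and union bound over the six events. One minor terminological slip: the relevant independence for the cluster-level counts is between the label flip and the original label $\tilde y$ (which determines whether the signal is $\bmu_1$ or $\bmu_2$), not the token-position index $k$; but this does not affect your argument.
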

\begin{proof}
    Note that \(|\itc| \sim \text{Binom}(n, 1-\eta)\). Applying Hoeffding's inequality, we have
    \[
    \P \big(\big| |\itc|-(1-\eta)n \big| > t\big) \le 2\exp(-\frac{2t^2}{n}). 
    \]
    Let \(t = \sqrt{n\log(1/\delta)}\). We have that with probability at least \(1-\delta\),
    \[
    \big||\itc| - (1-\eta)n\big| \le \sqrt{n\log(\frac{1}{\delta}}).
    \]
    Similarly, note that \(|\itn| \sim \text{Binom}(n, \eta), |\itco| \sim \text{Binom}(n, (1-\eta)/2), |\itct| \sim \text{Binom}(n, (1-\eta)/2), |\itno| \sim \text{Binom}(n, \eta/2)\) and \(|\itnt| \sim \text{Binom}(n, \eta/2)\), we have that each of the following events holds with probability at least \(1 - \delta\):
    \[
    \big||\itc| - n(1-\eta)\big| \le \sqrt{n\log(\frac{1}{\delta})}; \quad \big||\itn| - n\eta\big| \le \sqrt{n\log(\frac{1}{\delta})};
    \]
    \[
    \big||\itc_i| - n(1-\eta)/2\big| \le \sqrt{n\log(\frac{1}{\delta})}, \quad i = 1,2 ;
    \]
    \[
    \big||\itn_i| - n\eta/2\big| \le \sqrt{n\log(\frac{1}{\delta})}, \quad i = 1,2.
    \]
\end{proof}

\begin{lemma}
\label{lemma: condition_prob}
    Suppose \(X \sim N(0,\bI_d)\), and \(\bv,\bp \in \mathbb{R}^d\) are two vectors with \(\|\vb\|=\|\pb\|=1, \vb^{\top}\pb \le c\) for some constant \(c\in (0,1)\). Given some constant \(C>1\), for \(z < 0\),
    \[
    \P(\bv^{\top}X < z | \bp^{\top}X \in [1/C, C]) \ge \frac{1}{2} - \frac{1}{\sqrt{2\pi}}\frac{cC - z}{\sqrt{1-c^2}}.
    \]
\end{lemma}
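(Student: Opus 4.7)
The plan is to reduce this to a one-dimensional conditional-Gaussian calculation. Let $U = \bv^\top X$ and $V = \bp^\top X$. Since $X \sim N(\zero, \bI_d)$, the pair $(U,V)$ is jointly Gaussian with zero mean, unit variances (because $\|\bv\|=\|\bp\|=1$), and correlation $\rho := \bv^\top \bp$, which satisfies $|\rho| \le c$ under the hypothesis (reading $\bv^\top\bp \le c$ together with the way the lemma is invoked, via Lemma~\ref{lemma: pv_orth}, as a two-sided bound). By the standard Gaussian conditioning formula,
\[
U \mid V = v \;\sim\; N\!\left(\rho v,\, 1 - \rho^2\right),
\]
so $\P(U < z \mid V = v) = \Phi\!\left(\frac{z - \rho v}{\sqrt{1-\rho^2}}\right)$.

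The next step is a pointwise lower bound that is uniform in $v \in [1/C, C]$. For any such $v$, $|\rho v| \le cC$, so $z - \rho v \ge z - cC$; moreover $|\rho| \le c$ gives $\sqrt{1-\rho^2} \ge \sqrt{1-c^2}$. Since $z<0$ and $cC > 0$, the numerator $z - cC$ is negative, and dividing a negative quantity by a larger positive number only increases it. Hence
\[
\frac{z - \rho v}{\sqrt{1-\rho^2}} \;\ge\; \frac{z - cC}{\sqrt{1-c^2}}.
\]
Applying the elementary tail bound $\Phi(t) \ge \tfrac{1}{2} + \tfrac{t}{\sqrt{2\pi}}$ for $t < 0$ (which follows from $\phi(s) \le 1/\sqrt{2\pi}$ and $\Phi(t) = \tfrac{1}{2} - \int_t^0 \phi(s)\,ds$), we get
\[
\P(U < z \mid V = v) \;\ge\; \frac{1}{2} + \frac{1}{\sqrt{2\pi}} \cdot \frac{z - cC}{\sqrt{1-c^2}} \;=\; \frac{1}{2} - \frac{1}{\sqrt{2\pi}}\,\frac{cC - z}{\sqrt{1-c^2}}.
\]

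Finally, since this lower bound does not depend on $v$, integrating against the conditional density of $V$ given $\{V \in [1/C, C]\}$ preserves it, yielding the claimed inequality. There is no real obstacle here beyond keeping the signs straight when $z < 0$ and noting that the condition on $\bv^\top\bp$ must be used in its two-sided form so that $1 - \rho^2 \ge 1 - c^2$; this is the only place the hypothesis $c < 1$ enters, ensuring the denominator is well-defined.
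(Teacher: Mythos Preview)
Your proof is correct and follows essentially the same approach as the paper: both reduce to the joint Gaussian pair $(\bv^\top X,\bp^\top X)$, use the conditional law (the paper writes it as the equivalent decomposition $x_v \stackrel{d}{=} c_0 x_p + \sqrt{1-c_0^2}\,r$), and then apply the linear lower bound $\Phi(t)\ge \tfrac12 + t/\sqrt{2\pi}$. Your observation that the hypothesis must really be read as $|\bv^\top\bp|\le c$ (so that $1-\rho^2 \ge 1-c^2$) is a valid point that the paper's proof also silently relies on.
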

\begin{proof}[Proof of Lemma \ref{lemma: condition_prob}]
    Denote $x_v = v^{\top}X \sim N(0,1), x_p = \bp^{\top}X \sim N(0,1)$. Then we have $x_v, x_p \sim \mathcal{N}(0, 1)$. Denote the covariance between $x_v, x_p$ by \(c_0\), then we have
    $$
   c_0 = \text{Cov}(x_v, x_p) = \vb^{\top} \text{Cov}(X) \pb = \vb^{\top} \pb \le c.
   $$
   Note that
   \[
   x_v \stackrel{d}{=} c_0 x_p + \sqrt{1-c_0^2} r,
   \]
   where \(r \sim N(0,1)\) is independent of \(x_p\). It follows that
   \[
    \P(x_v < z | x_p \in [\frac{1}{C},C]) =  \P( r < \frac{z - c_0 x_p}{\sqrt{1-c_0^2}} | x_p \in [\frac{1}{C},C]) \ge \P(r < \frac{z - cC}{\sqrt{1-c^2}}) \ge \frac{1}{2} - \frac{1}{\sqrt{2\pi}}\frac{cC - z}{\sqrt{1-c^2}}.
    \]
    
\end{proof}

\subsection{Additional Experiments} \label{sec: additional_experiments}
In this section, we present additional experiments, including various architectures (different from Eq.~ \eqref{eq:model}), alternative relationships between parameters (e.g., deviating from Assumption \ref{assumption: gd}), and gradient descent with weight decay, and more.

In Figure \ref{fig:accuracies_small_step_size} we use the same settings and parameters as in the main paper (i.e. as in Figure \ref{fig:gd_two_steps}), but with a smaller step size. In contrast to Theorem~\ref{thm: gd-after-2-iteration} and Figure~\ref{fig:gd_two_steps}, benign overfitting occurs after about $150$ iterations. This provides empirical validation for Remark \ref{rem:GD.behavior}.

In Figure \ref{fig:differents_dimmension_d_vs_n}, we explore the relationship between the number of samples $n$ and the input dimension $d$. We observe that even when $d \approx n$, the model exhibits benign overfitting. For intermediate values of $d$, the model demonstrates harmful overfitting, where the test error increases. For smaller values of $d$, the model fails to fit the data.

In Figure \ref{fig:self_att}, we examine \textbf{self-attention with respect to the first token}, as in \cite{vasudeva2024implicit}. Here, the model exhibits benign overfitting with behavior similar to that observed under self-attention with respect to tunable token (Remark \ref{rem:GD.behavior}). Additionally, the attention probabilities are consistent with those in the tunable token attention model (see the resemblance to Figure \ref{fig:gd_two_steps}).

In Figure \ref{fig:multiple_tokens}, we set the number of tokens to $T = 5$. The results show benign overfitting after two iterations. Furthermore, for clean samples, the softmax probability of the signal token, $s_{j,1}^{t}$, dominates the overall attention. In contrast, for noisy samples, the softmax probabilities of the noise tokens, $\sum_{\tau = 2}^T s_{j,\tau}^{t}$, dominate. This align with Thm. \ref{thm: gd-after-2-iteration}.

In Figure \ref{fig:gaussian_init}, we use the same settings and parameters as in Figure \ref{fig:gd_two_steps}, but with \textbf{Gaussian initialization} instead of zero initialization. The results remain consistent, providing empirical validation for Remark \ref{remark: zero_init} that states that zero initialization is without loss of generality.

In figure \ref{fig:Multi-layer}, we consider a \textbf{four-layer attention model} defined as $f(\bX) = f_1(f_2(f_3(f_4(\bX))))$, where $f_i:\R^{(T+1) \times d} \rightarrow \R^{(T+1) \times d}$ for $i \in \{2,3,4\}$ is defined in Eq.~\eqref{eq: cross_attention} and $f_1: \R^{(T+1) \times d} \rightarrow \R$ is defined in Eq.~\eqref{eq:model}. The results show benign overfitting after roughly 20 iterations. Moreover, the softmax probabilities in the first layer align with the behavior observed in the single-layer model.

In Figure \ref{fig:gd_wd}, we examine \textbf{GD with weight decay}, which encourages norm minimization (or margin maximization). Here, benign overfitting occurs after approximately  150 iterations. Also here the attention mechanism continues to separate signal tokens from noise tokens.

\begin{figure}[hb]
\begin{subfigure}[b]{0.49\textwidth}
 \centering
    \includegraphics[width=\columnwidth]{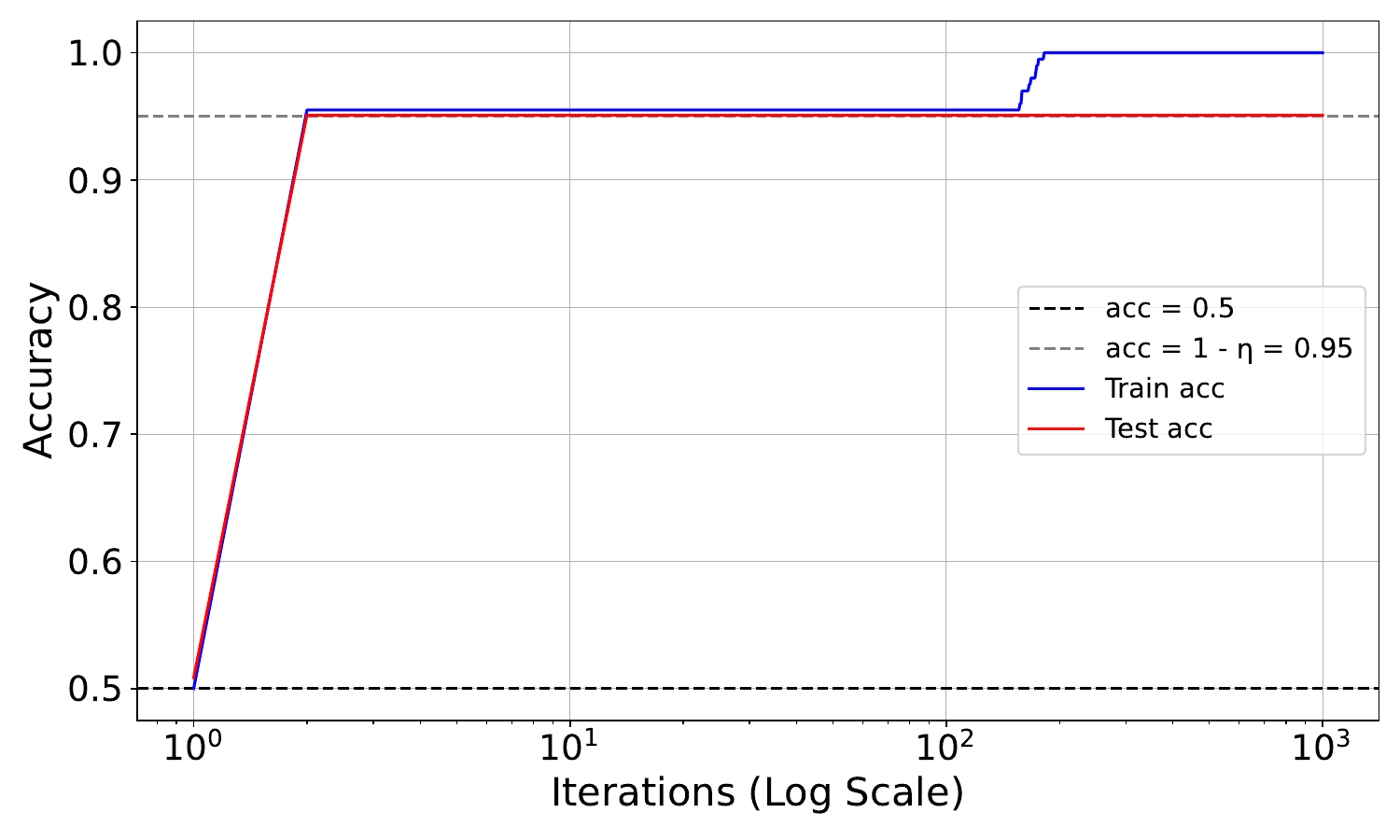}
    \vskip 0.2in 
       \label{fig:small_ss_acc}
    \caption{train and test accuracy}
\end{subfigure}
 \hfill
 \begin{subfigure}[b]{0.49\textwidth}
 \centering
    \includegraphics[width=\columnwidth]{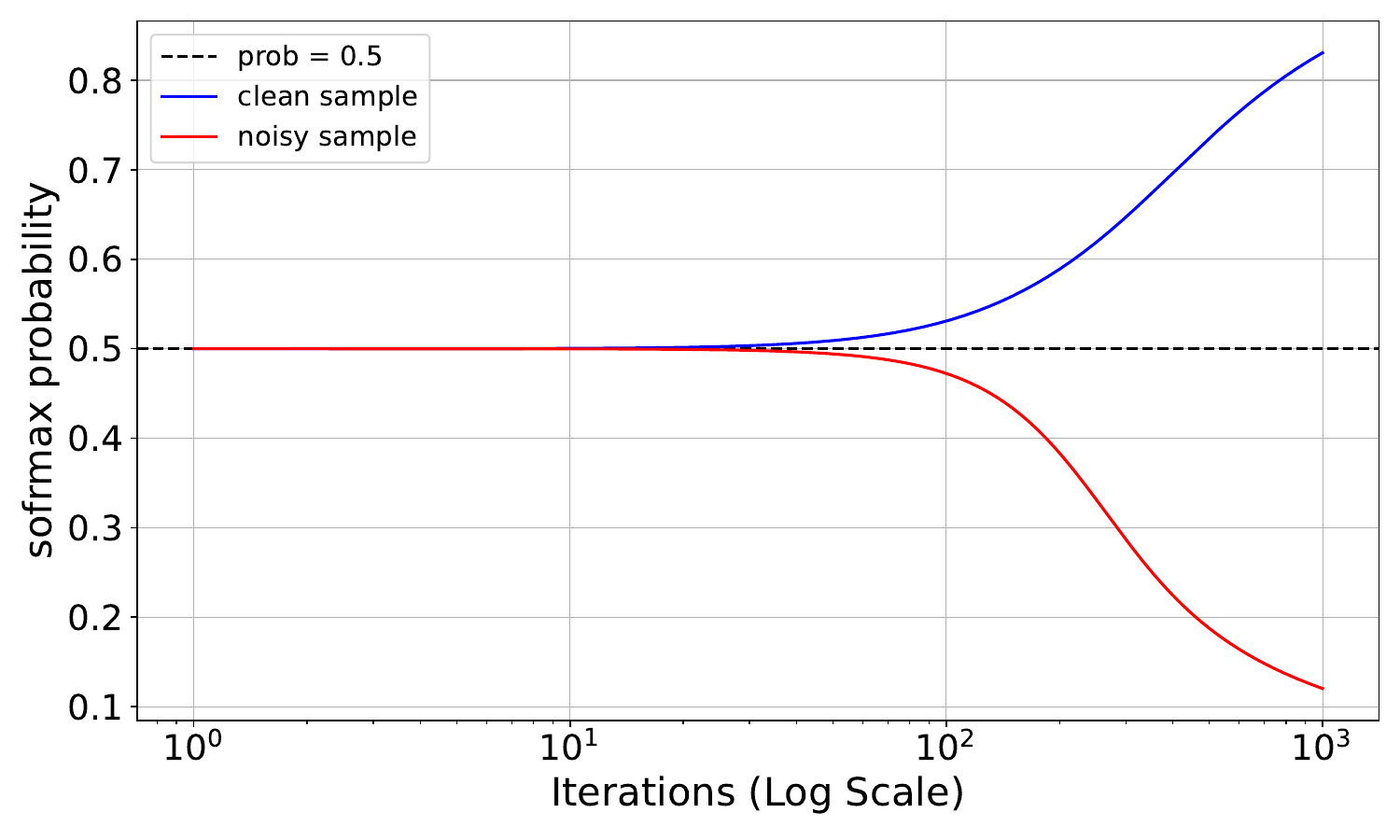}
      \label{fig:small_ss-sm-prob}
        \caption{attention weights on signal token}
    \end{subfigure}
    \vspace{5pt}
\caption{The left panel shows train and test accuracies during training with a small step size. The clean training samples are correctly classified already after one iteration, but in contrast to Theorem~\ref{thm: gd-after-2-iteration} and Figure~\ref{fig:gd_two_steps}, benign overfitting occurs after about $150$ iterations.
In the right panel, we see that the attention starts separating signal and noise tokens shortly before benign overfitting occurs.
Parameters: $n=200, d=40000, T=2, \beta = 0.0001, \rho = 30, \eta = 0.05, \text{test sample size} = 2000$.}
\label{fig:accuracies_small_step_size}
\end{figure}

\begin{figure}[ht]
        \centering
        \includegraphics[width=0.7\textwidth]{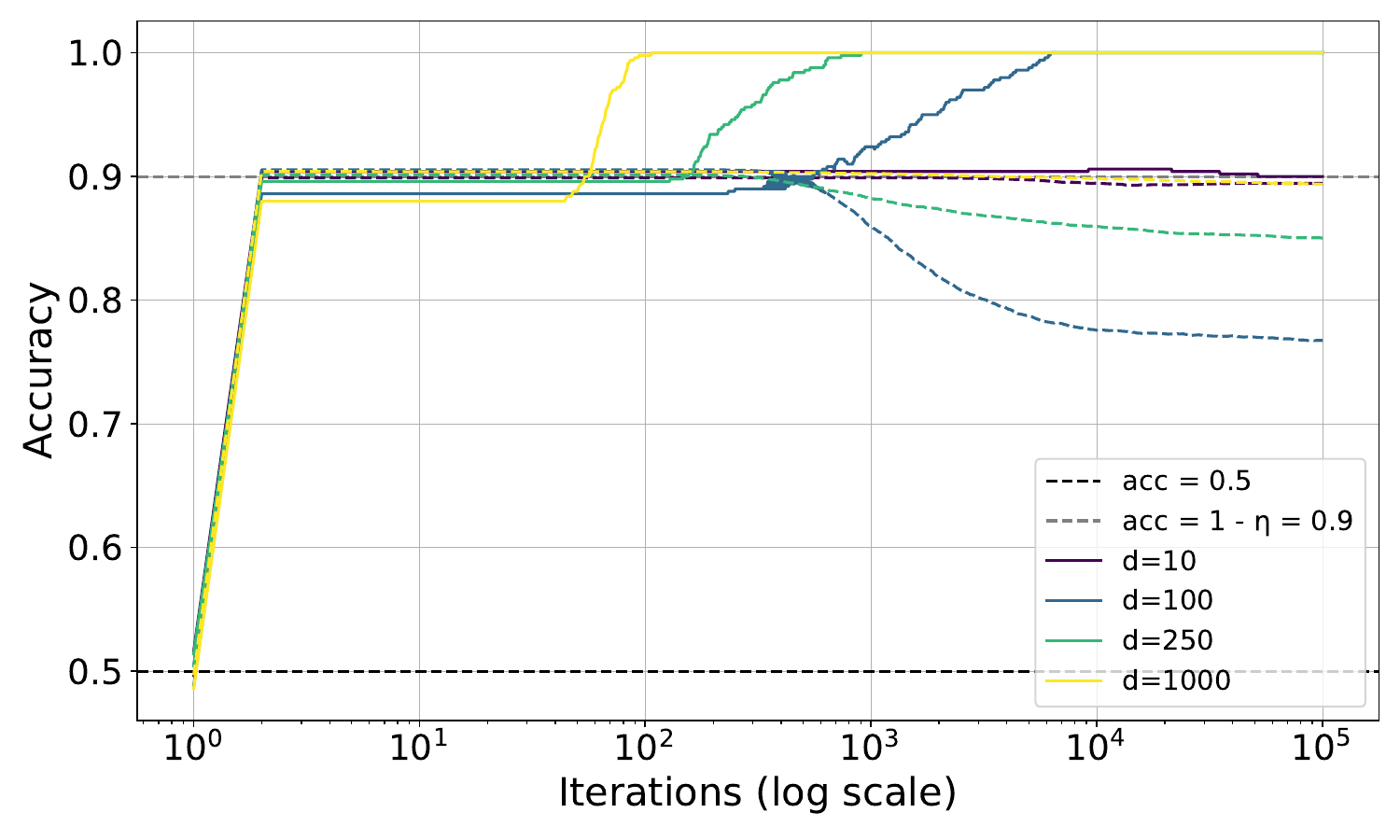}
    \vspace{-5pt}
   \caption{Comparing train (solid lines) and test (dashed lines) accuracies with different dimensions. Here, we see that for small $d$ (purple line), the model is unable to fit the data (at least in the first $10^5$ first iterations), and both the train and test accuracies are at the noise-rate level. For intermediate values of $d$ (green and blue lines), the model exhibits harmful overfitting, and for larger $d$ (yellow line) 
    the model exhibits benign overfitting. We note that benign overfitting occurs here for $d=2n \ll n^2$, which suggests that the assumptions on $d$ in our theorems are loose.
   Parameters: $ n=500, \beta = 0.02, T=5, \rho = 30, \eta = 0.1, \text{test sample size} = 10000$.}
    \label{fig:differents_dimmension_d_vs_n}
    \vspace{-5pt}
\end{figure}

\begin{figure}[t]
    \centering
    \begin{subfigure}[b]{0.49\textwidth}
        \centering
        \includegraphics[width=\textwidth]{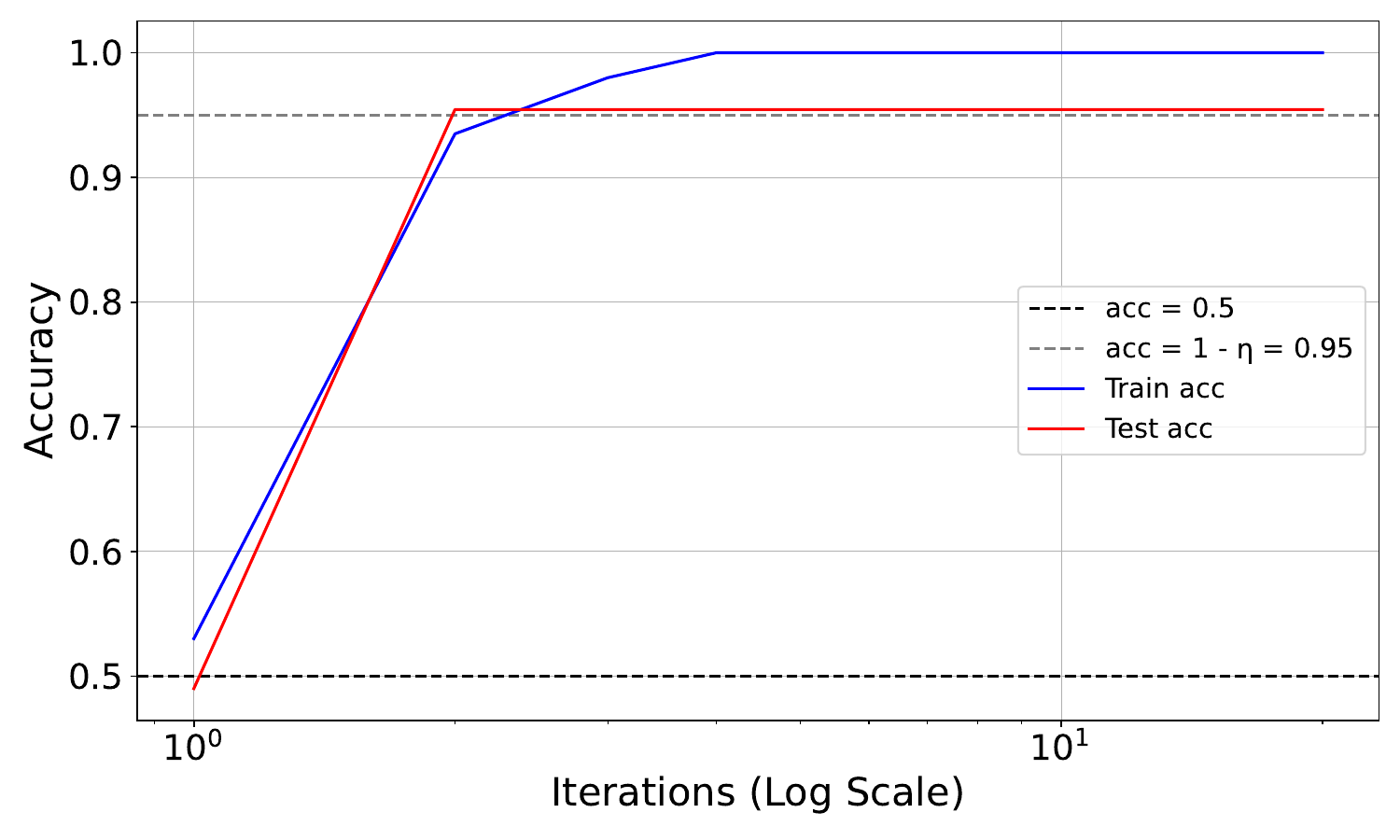}
        \label{fig:self-acc}
        \caption{train and test accuracy}
    \end{subfigure}
    \hfill
    \begin{subfigure}[b]{0.49\textwidth}
        \centering
        \includegraphics[width=\textwidth]{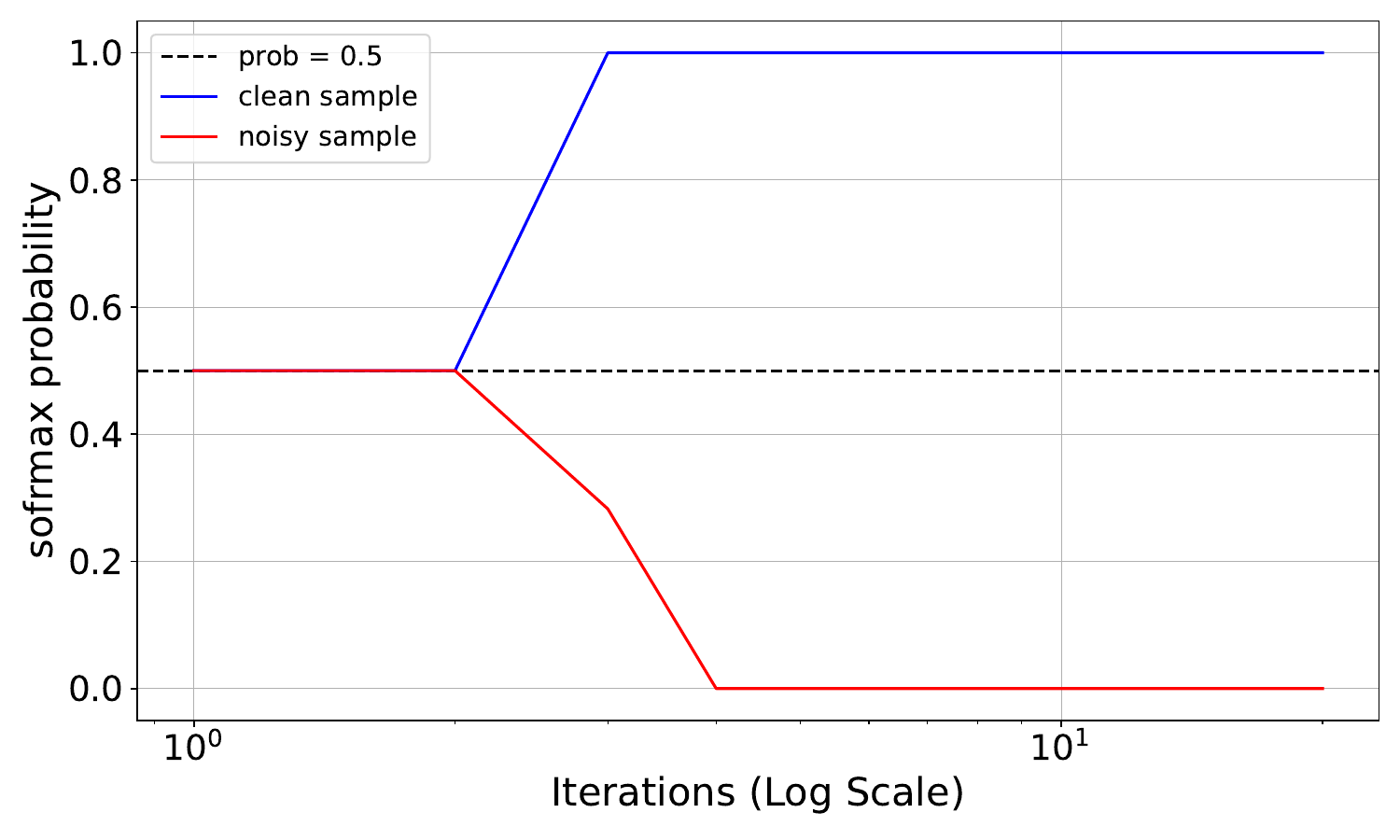}
        \label{fig:self-sm-prob}
        \caption{attention weights on signal token}
    \end{subfigure}
    \vspace{-5pt}
    \caption{\textbf{Self-attention experiments}. The model: $\bX \rightarrow \bv^{\top}\bX^T \SM (\bX\bW\bx^{(1)})$, same as \citet{vasudeva2024implicit}. The left panel shows the train and test accuracies during training. It shows that benign overfitting also occurs after $2$ iterations. In the right panel, we show the softmax probability of the signal token for clean and noisy samples (average of the softmax probabilities $s_{j,1}^{t}$ over $\clean$ and $\noise$ respectively).
    We see that after $2$ iterations, the attention focuses on signal tokens for clean examples, and on noise tokens for noisy examples. 
    This
    indicates that our results also capture the behavior in a self-attention mechanism. 
    Parameters: $n=200, d=40000, $T=2$, \beta = 0.025, \rho = 20, \eta = 0.05, \text{test sample size} = 2000$.}
    \label{fig:self_att}
\end{figure}

\begin{figure}[t]
    \centering
    \begin{subfigure}[b]{0.32\textwidth}
        \centering
        \includegraphics[width=\textwidth]{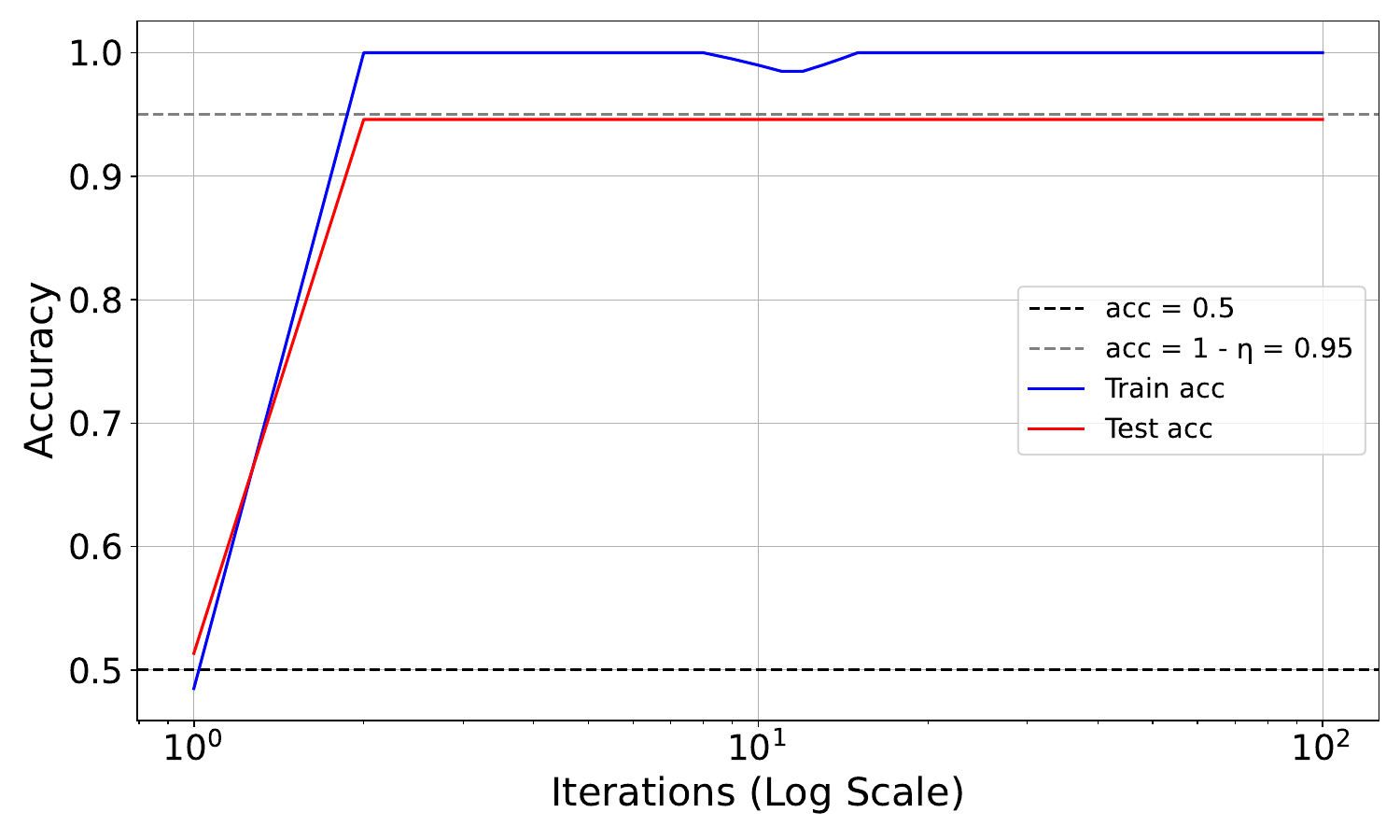}
        \label{fig:multi_token_acc}
        \caption{train and test accuracy}
    \end{subfigure}
    \hfill
    \begin{subfigure}[b]{0.32\textwidth}
        \centering
        \includegraphics[width=\textwidth]{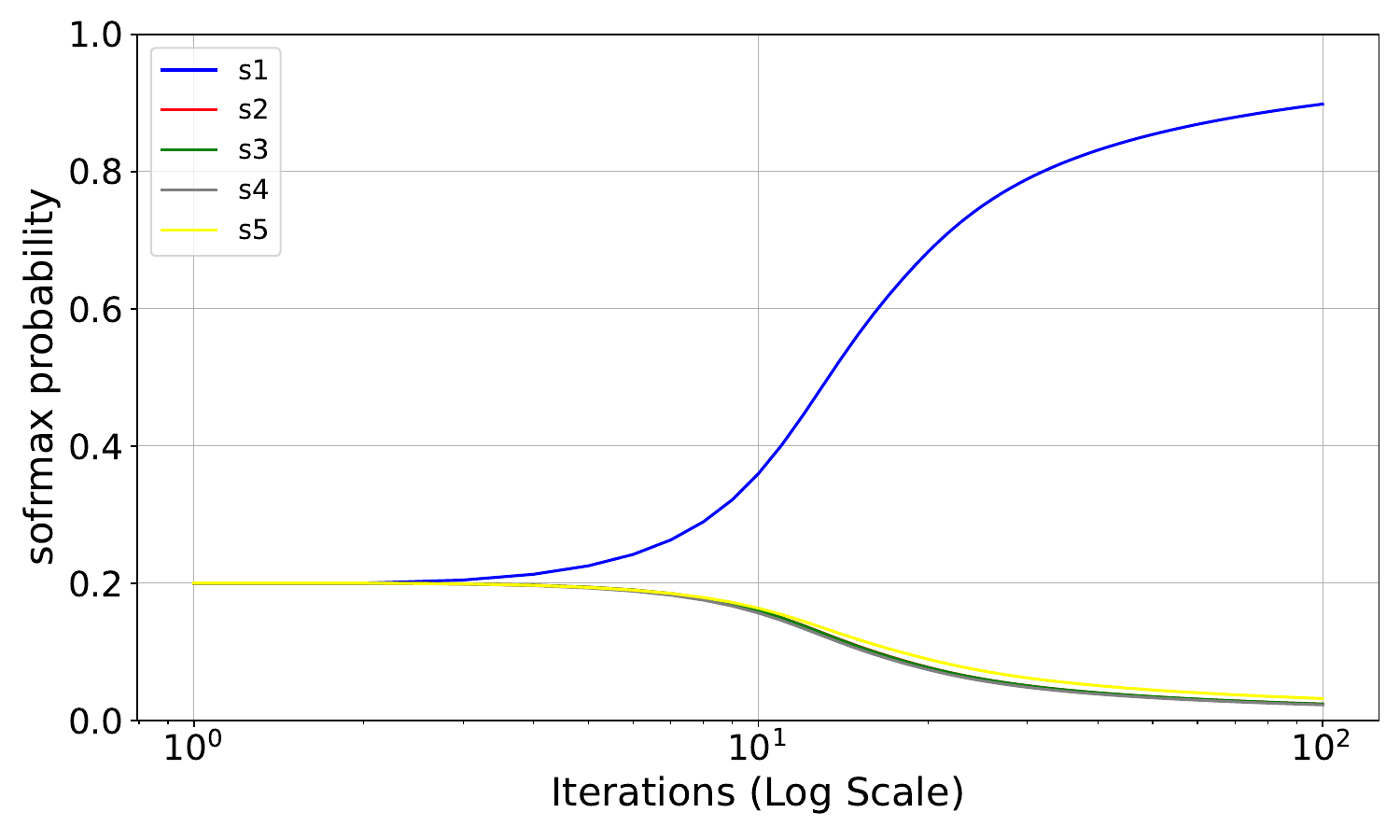}
        \label{fig:multi_token_clean}
        \caption{clean sample attention weights}
    \end{subfigure}
    \hfill
    \begin{subfigure}[b]{0.32\textwidth}
        \centering
        \includegraphics[width=\textwidth]{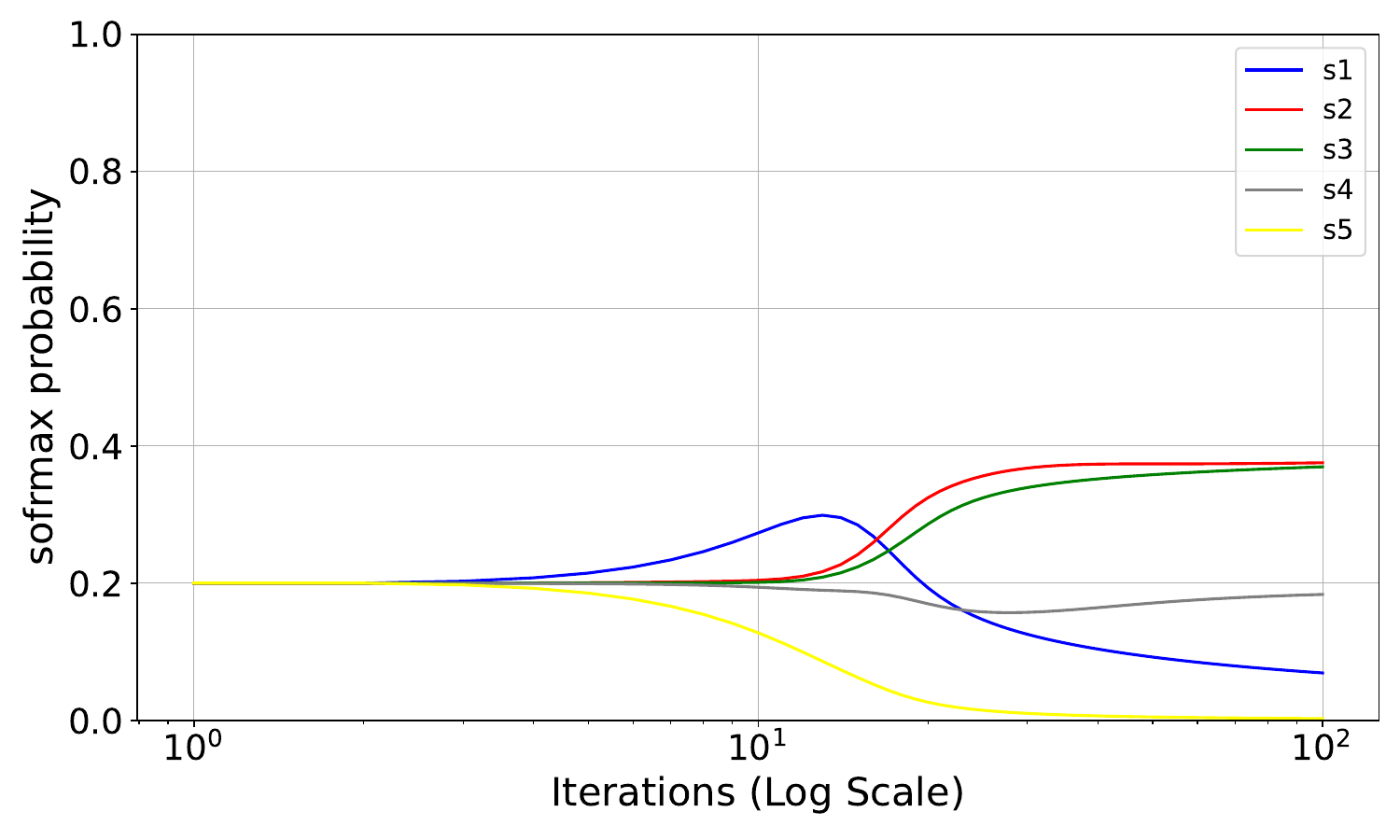}
        \label{fig:multi_token_noisy}
        \caption{noisy sample attention weights}
    \end{subfigure}
    \vspace{-5pt}
    \caption{Multi-token experiments. The left panel shows the train and test accuracies during training. It shows that benign overfitting also occurs after $2$ iterations. In the middle panel, we show that for clean samples the softmax probability of the signal token $s_{j,1}^{t}$ dominates the overall attention. In the right panel, we show that for noisy samples the softmax probabilities of noise tokens dominate. 
    Parameters: $T=5, n=200, d=10000, \beta = 0.025, \rho = 15, \eta = 0.05, \text{test sample size} = 2000$.}
    \label{fig:multiple_tokens}
\end{figure}

\begin{figure}[t]
    \centering
    \begin{subfigure}[b]{0.49\textwidth}
        \centering
        \includegraphics[width=\textwidth]{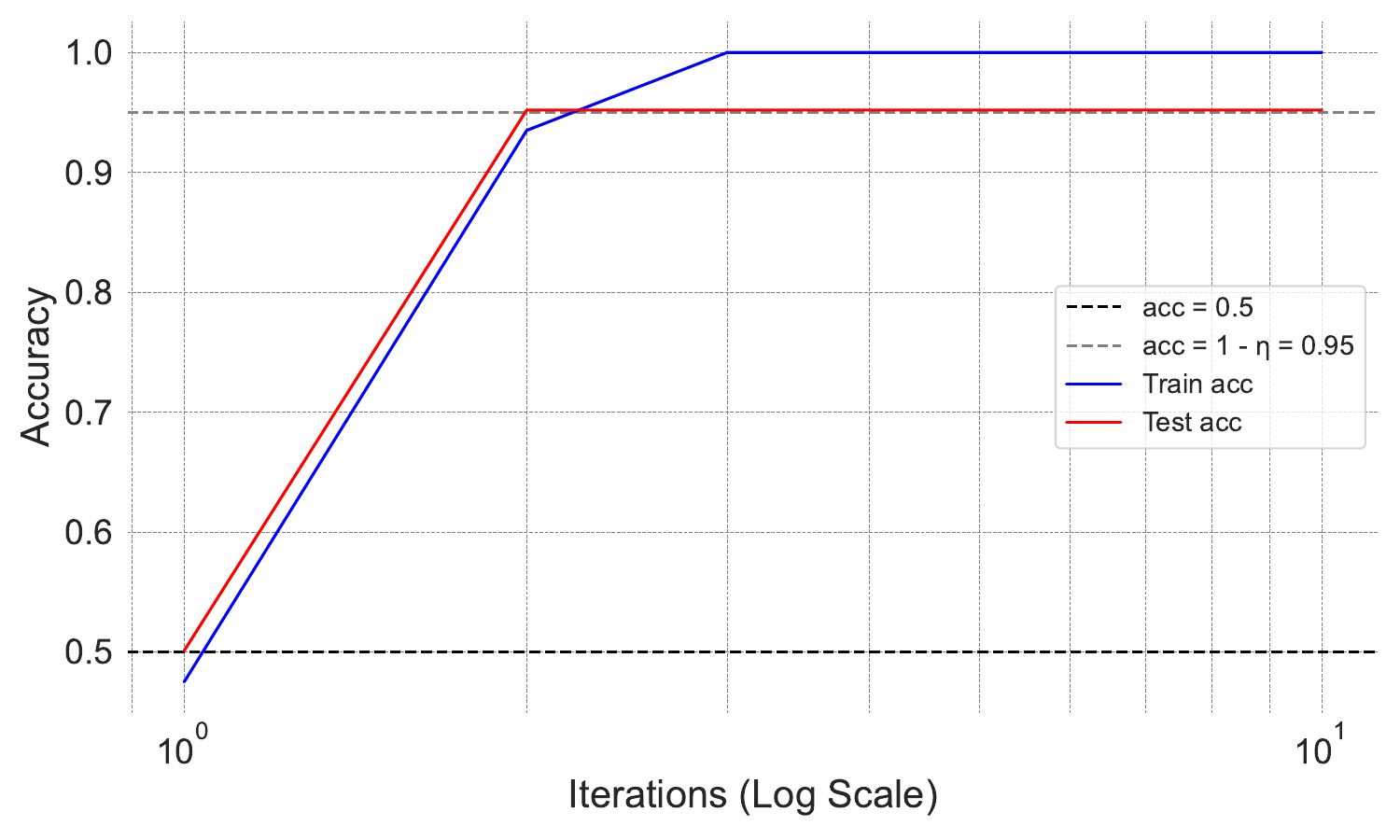}
        \label{fig:gaussian_init_small-ss-acc}
        \caption{train and test accuracy}
    \end{subfigure}
    \hfill
    \begin{subfigure}[b]{0.49\textwidth}
        \centering
        \includegraphics[width=\textwidth]{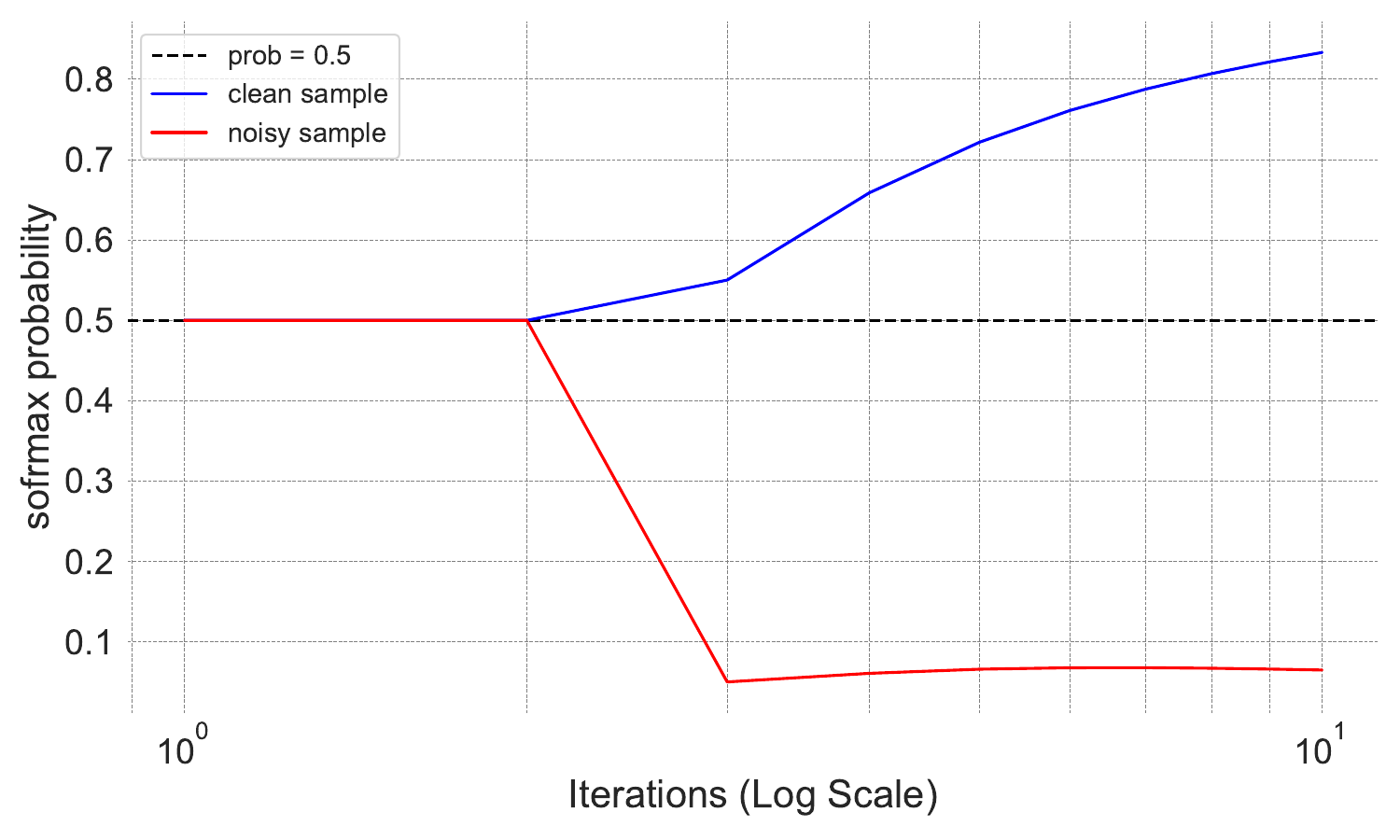}
        \label{fig:gaussian_init_small-ss-sm-prob}
        \caption{attention weights in signal token}
    \end{subfigure}
    \vspace{-5pt}
    \caption{The left panel shows the train and test accuracies during training (with \textbf{Gaussian initialization}, where each entry has variance $0.01$). As in Figure \ref{fig:gd_two_steps}, It shows that benign overfitting occurs after $2$ iterations. After the first iteration, the model correctly classifies the clean training examples, but not the noisy ones. In the right panel, we show the softmax probability of the signal token for clean and noisy samples (average of the softmax probabilities $s_{j,1}^{t}$ over $\clean$ and $\noise$ respectively).
    We see that after $2$ iterations, the attention focuses on signal tokens for clean examples, and on noise tokens for noisy examples. 
    This
    aligns with Theorem~\ref{thm: gd-after-2-iteration}. 
    Parameters: $n=200, d=40000, \beta = 0.025, \rho = 30, \eta = 0.05, \text{test sample size} = 2000$.}
    \label{fig:gaussian_init}
\end{figure}

\begin{figure}[t]
    \centering
    \begin{subfigure}[b]{0.49\textwidth}
        \centering
        \includegraphics[width=\textwidth]{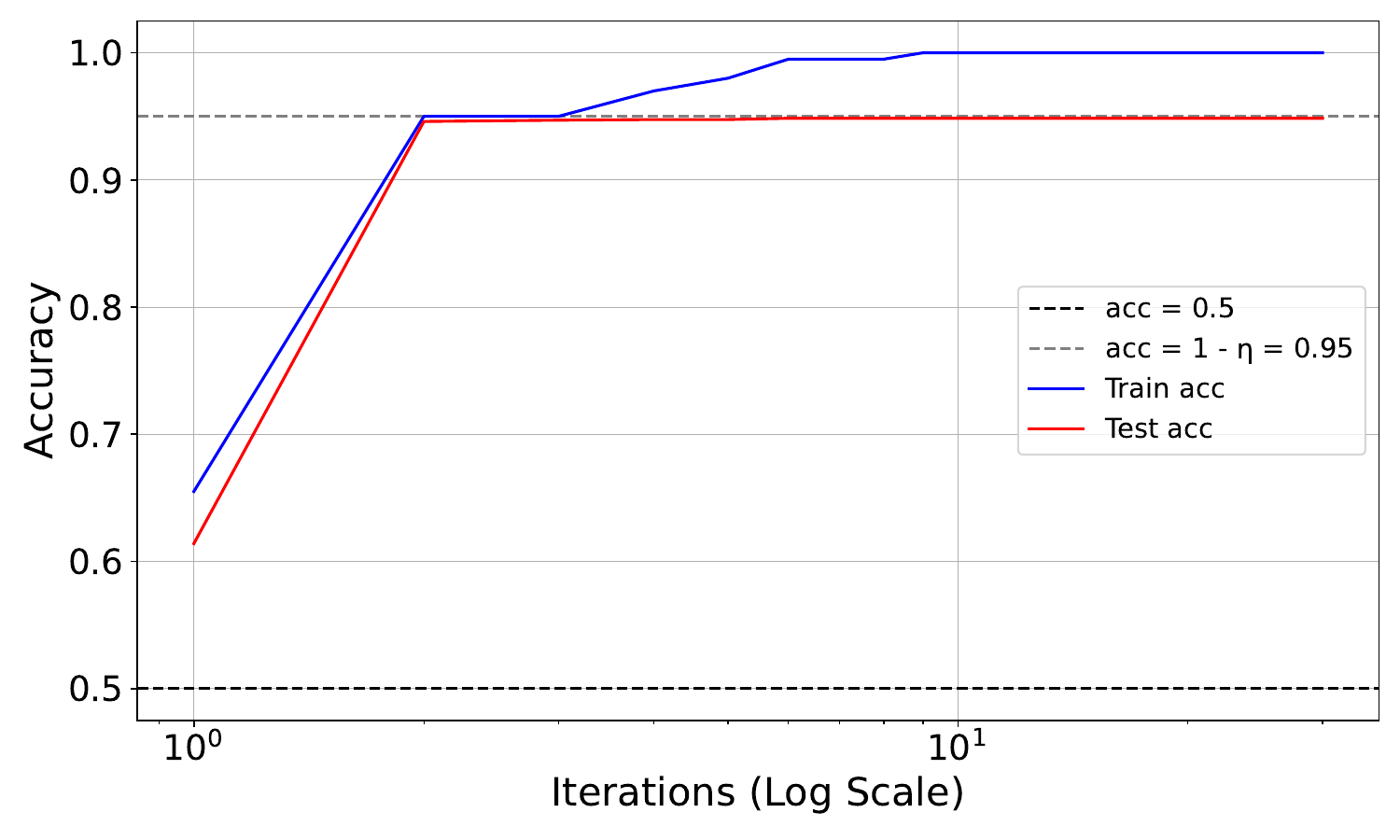}
        \label{fig:multi_layer-acc}
        \caption{train and test accuracy}
    \end{subfigure}
    \hfill
    \begin{subfigure}[b]{0.49\textwidth}
        \centering
        \includegraphics[width=\textwidth]{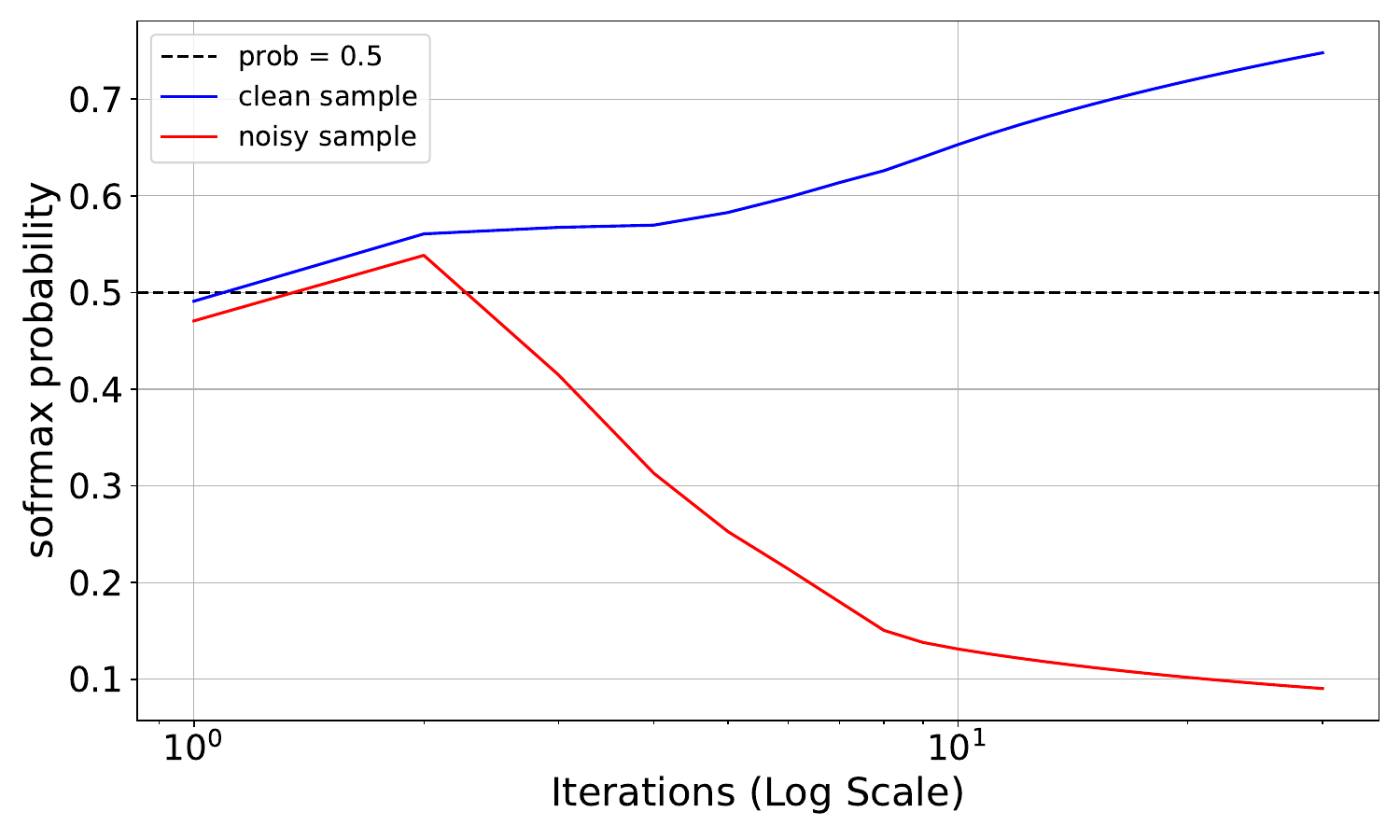}
        \label{fig:multi_layer-sm-prob}
        \caption{attention weights on signal token}
    \end{subfigure}
    \vspace{-5pt}
    \caption{\textbf{Multi-layer experiments}. The left panel shows the train and test accuracies during training in a 4-layer single-head attention model. It shows that benign overfitting occurs after roughly $20$ iterations. After the first iteration, the model correctly classifies the clean training examples, but not the noisy ones. In the right panel, we show the softmax probability of the signal token for clean and noisy samples (average of the softmax probabilities $s_{j,1}^{t}$ over $\clean$ and $\noise$ respectively) in the first layer.
    We see that the attention focuses on signal tokens for clean examples, and on noise tokens for noisy examples. 
    This indicates that our results essentially capture the behavior also in multi-layer models. 
    Parameters: $n=200, d=10000, T=2, \beta = 0.025, \rho = 20, \eta = 0.05, \text{test sample size} = 2000$.}
    \label{fig:Multi-layer}
\end{figure}

\begin{figure}[ht]
    \centering
    \begin{subfigure}[b]{0.49\textwidth}
        \centering
        \includegraphics[width=\textwidth]{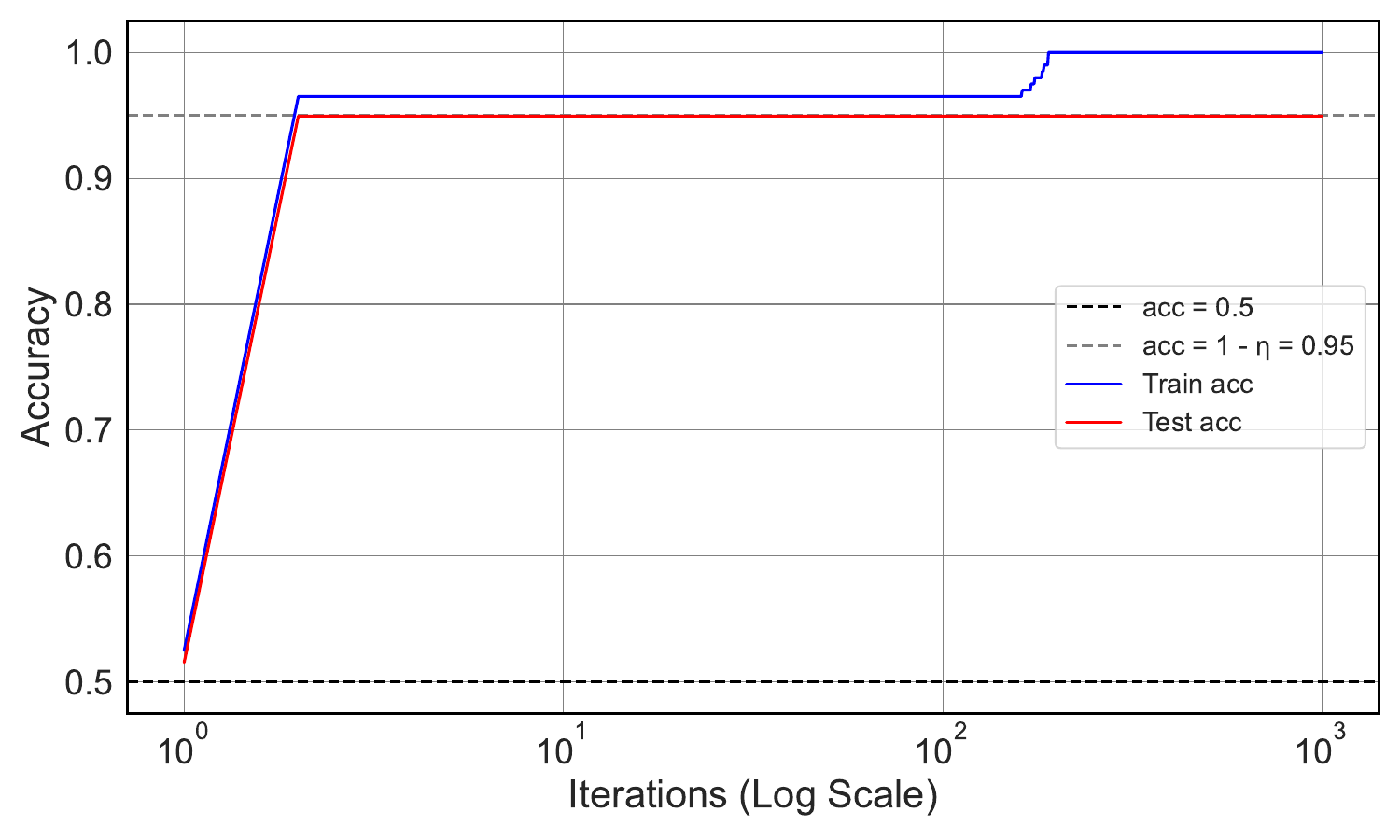}
        \caption{Accuracy during iterations}
        \label{fig:wd_large-ss-acc}
    \end{subfigure}
    \hfill
    \begin{subfigure}[b]{0.49\textwidth}
        \centering
        \includegraphics[width=\textwidth]{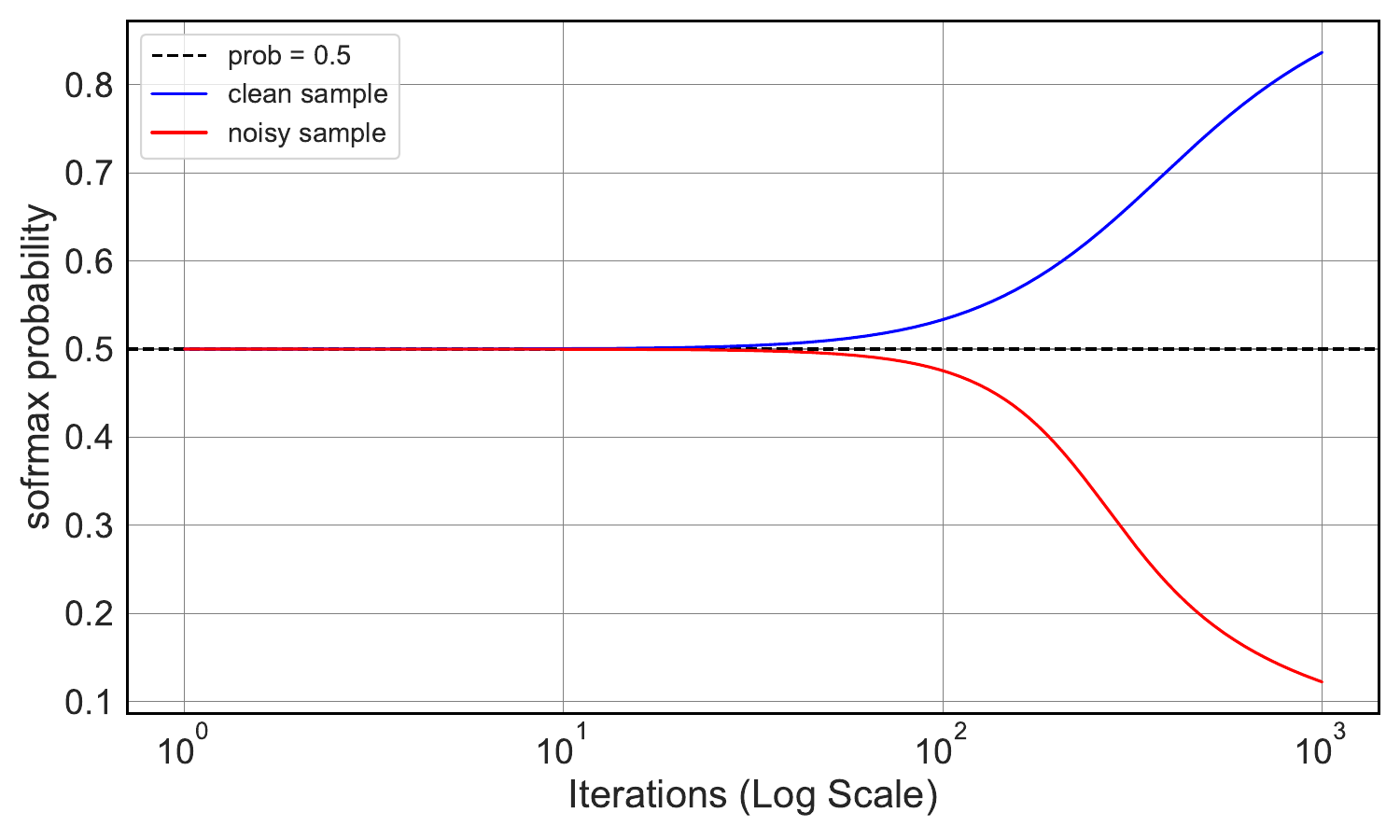}
        \caption{Softmax probability for signal (first) token}
        \label{fig:wd_large-ss-sm-prob}
    \end{subfigure}
    \caption{The left panel shows train and test accuracies during training with GD with \textbf{weight decay}.
    The clean training samples are correctly classified already after one iteration, 
    and benign overfitting occurs after about $150$ iterations.
    In the right panel, we see that the attention starts separating signal and noise tokens shortly before benign overfitting occurs.
    Parameters: weight decay = $0.01$, $n=200, d=40000, T=2, \beta = 0.0001, \rho = 30, \eta = 0.05, \text{test sample size} = 2000$.
    }
    \label{fig:gd_wd}
    \vspace{-10pt}
\end{figure}

\end{document}